\bibliographystyle{apalike}

\pdfoutput=1
\documentclass[11pt]{article}
\usepackage{bbm}
\usepackage[margin=1in]{geometry}
\usepackage{nicefrac} 
\usepackage{amsmath,amsfonts,bm}
\usepackage{hyperref} 
\usepackage[capitalize,nameinlink]{cleveref}

\usepackage[framemethod=TikZ]{mdframed}
\mdfsetup{%
backgroundcolor=white, 
roundcorner=4pt,
linewidth=1pt}

\hypersetup{
  colorlinks = true, 
  pdfpagemode=UseNone,
  urlcolor = {blueGrotto},
  linkcolor = {royalBlue},
  citecolor = {navyBlue},
  pdfstartview=FitW
}
\usepackage[
  backref=true,
  backend=biber,
  natbib=true,
  style=alphabetic,
  sorting=alphabeticlabel,
  sortcites=true,
  minbibnames=3,
  maxbibnames=999,
  mincitenames=1,
  maxcitenames=4,
  minalphanames=1,
  maxalphanames=6
]{biblatex}

\DeclareSortingTemplate{alphabeticlabel}{
  \sort[final]{%
    \field{labelalpha} 
  }
  \sort{%
    \field{year}   
  }
  \sort{%
    \field{title}  
  }
}

\AtBeginRefsection{\GenRefcontextData{sorting=ynt}}
\AtEveryCite{\localrefcontext[sorting=ynt]}
\addbibresource{bib.bib}



\newcommand{\E}{\mathbb{E}}

\newcommand{\R}{\mathbb{R}}

\newcommand{\Law}{\mathrm{Law}}
\newcommand{\Ent}{\mathrm{Ent}}
\newcommand{\Var}{\mathrm{Var}}



\DeclareMathOperator*{\argmin}{arg\,min}

\DeclareMathOperator{\Tr}{Tr}

\renewcommand{\Pr}{\mathbb{P}}

\newcommand{\poly}{\textnormal{poly}}
\newcommand{\polylog}{\textnormal{polylog}}


\newcommand{\ints}{\mathbb Z}

\newcommand{\reals}{\mathbb R}

\newcommand{\eps}{\epsilon}


\newcommand{\calM}{\customcal{M}}



\def\<{\langle}
\def\>{\rangle}


\newcommand{\Z}{\mathbb Z}

\def\wt{\widetilde}
\def\wh{\widehat}


%
%


\newcommand{\yesnum}{\addtocounter{equation}{1}\tag{\theequation}} 
\makeatletter
\newcommand{\tagnum}[2]{%
    \refstepcounter{equation}%
    \tag{#1) \ (\theequation}%
    \protected@write \@auxout {}{%
        \string \newlabel {#2}{{\theequation}{\thepage}{}{equation.\theequation}{}}%
    }%
}
\makeatother

\makeatletter
\renewcommand{\eqref}[1]{\textup{\eqrefform@{\ref{#1}}}}
\let\eqrefform@\tagform@

\newcommand{\changetag}[1]{%
  \renewcommand\tagform@[1]{\maketag@@@{(\ignorespaces#1\unskip\@@italiccorr)}}%
}
\makeatother

\newcommand{\Stackrel}[2]{\stackrel{\mathmakebox[\widthof{\ensuremath{#2}}]{#1}}{#2}}

\newcommand{\quadtext}[1]{\quad\text{#1}\quad}
\newcommand{\qquadtext}[1]{\qquad\text{#1}\qquad}
 
\newcommand{\quadand}{\quadtext{and}}
\newcommand{\qquadand}{\qquadtext{and}}

\newcommand{\quadwhere}{\quadtext{where}}



\def\abs#1{\left| #1 \right|}

\newcommand{\sinparen}[1]{(#1)}
\newcommand{\binparen}[1]{\big(#1\big)}

\newcommand{\inbrace}[1]{{\left\{#1\right\}}}

\newcommand{\inparen}[1]{{\left(#1\right)}}

\newcommand{\inangle}[1]{\left\langle#1\right\rangle}

\let\norm\relax
\newcommand{\norm}[1]{\ensuremath{\left\lVert #1 \right\rVert}}







\newcommand{\tv}[2]{\operatorname{d}_{\mathsf{TV}}(#1,#2)}

\DeclareMathOperator{\KL}{\mathsf{KL}}
\newcommand{\mmid}{\;\|\;}


\newcommand{\sfrac}[2]{{#1/#2}} 
\newcommand{\nfrac}[2]{\nicefrac{#1}{#2}}

\newcommand{\iid}{i.i.d.}
\newcommand{\unif}{{\rm Unif}}



\renewcommand{\epsilon}{\varepsilon}
\makeatletter
\newcommand*{\tran}{{\mathpalette\@tran{}}}
\newcommand*{\@tran}[2]{\raisebox{\depth}{$\m@th#1\intercal$}}
\makeatother

\mathchardef\NABLA"272
\newcommand*{\Nabla}{\boldsymbol\NABLA}
\let\nabla\Nabla

\renewcommand{\hat}{\widehat}



\renewcommand{\tilde}{\widetilde}

\newcommand{\customcal}[1]{\euscr{#1}} 

\newcommand{\cC}{\customcal{C}}

\newcommand{\cF}{\customcal{F}}

\newcommand{\cM}{\customcal{M}}
\newcommand{\cN}{\customcal{N}}

\newcommand{\cR}{\customcal{R}}

\newcommand{\cX}{\customcal{X}}


\DeclareMathAlphabet{\mathdutchcal}{U}{dutchcal}{m}{n}
\SetMathAlphabet{\mathdutchcal}{bold}{U}{dutchcal}{b}{n}
\DeclareMathAlphabet{\mathdutchbcal}{U}{dutchcal}{b}{n}

\DeclareMathAlphabet\urwscr{U}{urwchancal}{b}{n}%
\DeclareMathAlphabet\rsfscr{U}{rsfso}{m}{n}
\DeclareMathAlphabet\euscr{U}{eus}{m}{n}
\DeclareFontEncoding{LS2}{}{}
\DeclareFontSubstitution{LS2}{stix}{m}{n}
\DeclareMathAlphabet\stixcal{LS2}{stixcal}{m} {n}

\renewcommand{\paragraph}[1]{\medskip \noindent\textbf{#1}~}

\newcommand{\ie}{\textit{i.e.}}
\newcommand{\eg}{\textit{e.g.}}


\newcommand{\hypo}[1]{\mathdutchcal{#1}}

\newcommand{\hyH}{\hypo{H}}

\newcommand{\hyP}{\hypo{P}}


\renewcommand{\d}{{\rm d}}

\newcommand\indep{\mathrel{\text{\scalebox{1.07}{$\perp\mkern-10mu\perp$}}}}

\usepackage{array}
\newcolumntype{L}[1]{>{\raggedright\let\newline\\\arraybackslash\hspace{0pt}}m{#1}}
\newcolumntype{C}[1]{>{\centering\let\newline\\\arraybackslash\hspace{0pt}}m{#1}}
\newcolumntype{R}[1]{>{\raggedleft\let\newline\\\arraybackslash\hspace{0pt}}m{#1}}

\newcommand{\deq}{\coloneqq}
\newcommand{\normal}[2]{\cN(#1,#2)}

\DeclareMathOperator{\dive}{div}

\renewcommand{\d}{\mathrm{d}}

\newcommand{\bbP}{\mathbb{P}}
\newcommand{\DDPM}{\hat\theta_n^{\,\rm DDPM}}
\newcommand{\err}{\mathsf{err}}
\newcommand{\MLE}{\hat\theta_n^{\,\rm MLE}}
\newcommand\simiid{\overset{\text{i.i.d.}}{\sim}}
\newcommand\todist[1]{\xrightarrow[#1]{\mathsf{d}}}

\usepackage{url}
\usepackage{enumerate}
\usepackage{enumitem}
\usepackage{mathtools}
\usepackage{mathrsfs}
\usepackage{amssymb,mathrsfs,amsthm}
\usepackage{microtype}
\renewcommand{\cref}{\Cref}

\usepackage{xcolor}
\usepackage{color}
\definecolor{niceRed}{RGB}{190,38,38}
\definecolor{niceYellow}{HTML}{f5b400}
\definecolor{blueGrotto}{HTML}{059DC0}
\definecolor{royalBlue}{HTML}{057DCD}
\definecolor{navyBlue}{HTML}{0B579C}
\definecolor{limeGreen}{HTML}{81B622}
\definecolor{nicePurple}{HTML}{9c27b0}
\definecolor{lightRoyalBlue}{HTML}{def2ff}  
\definecolor{gold}{HTML}{ffa300}

\theoremstyle{plain} 
\newtheorem{theorem}{Theorem}[section]
\newtheorem{corollary}{Corollary}
\newtheorem{proposition}{Proposition}
\newtheorem{lemma}{Lemma}
\newtheorem{fact}[theorem]{Fact}

\newtheorem{claim}{Claim}

\newtheorem{assumption}{Assumption}

\newtheorem{inftheorem}{Informal Theorem}

\newtheorem{definition}{Definition}

\newtheorem*{definition*}{Definition}

\theoremstyle{definition} 

\theoremstyle{remark} 

\newtheorem{remark}{Remark}
\AfterEndEnvironment{definition}{\noindent\ignorespaces}
\AfterEndEnvironment{infdefinition}{\noindent\ignorespaces}
\AfterEndEnvironment{example}{\noindent\ignorespaces}
\AfterEndEnvironment{assumption}{\noindent\ignorespaces}
\AfterEndEnvironment{lemma}{\noindent\ignorespaces}
\AfterEndEnvironment{theorem}{\noindent\ignorespaces}
\AfterEndEnvironment{proposition}{\noindent\ignorespaces}
\AfterEndEnvironment{fact}{\noindent\ignorespaces}
\AfterEndEnvironment{question}{\noindent\ignorespaces}
\AfterEndEnvironment{corollary}{\noindent\ignorespaces}
\AfterEndEnvironment{claim}{\noindent\ignorespaces}
\AfterEndEnvironment{model}{\noindent\ignorespaces}
\AfterEndEnvironment{remark}{\noindent\ignorespaces}
\AfterEndEnvironment{proof}{\noindent\ignorespaces}
\AfterEndEnvironment{fact}{\noindent\ignorespaces}
\AfterEndEnvironment{minftheorem}{\noindent\ignorespaces}
\AfterEndEnvironment{inftheorem}{\noindent\ignorespaces}
\AfterEndEnvironment{maintheorem}{\noindent\ignorespaces}
\AfterEndEnvironment{restatable}{\noindent\ignorespaces}
\AfterEndEnvironment{observation}{\noindent\ignorespaces}
\AfterEndEnvironment{restatable}{\noindent\ignorespaces}

\crefname{section}{Section}{Sections}
\crefname{theorem}{Theorem}{Theorems}
\crefname{theorem*}{Theorem}{Theorems}
\crefname{inftheorem}{Informal Theorem}{Informal Theorems}
\crefname{assumption}{Assumption}{Assumptions}
\crefname{definition}{Definition}{Definitions}
\crefname{infdefinition}{Informal Definition}{Informal Definitions}
\crefname{conjecture}{Conjecture}{Conjectures}
\crefname{corollary}{Corollary}{Corollaries}
\crefname{construction}{Construction}{Constructions}
\crefname{conjecture}{Conjecture}{Conjectures}
\crefname{claim}{Claim}{Claims}
\crefname{observation}{Observation}{Observations}
\crefname{proposition}{Proposition}{Propositions}
\crefname{fact}{Fact}{Facts}
\crefname{question}{Question}{Questions}
\crefname{problem}{Problem}{Problems}
\crefname{remark}{Remark}{Remarks}
\crefname{example}{Example}{Examples}
\crefname{equation}{Equation}{Equations}
\crefname{appendix}{Appendix}{Appendices}
\crefname{algorithm}{Algorithm}{Algorithms}
\crefname{model}{Model}{Models}
\crefname{figure}{Figure}{Figures}
\crefname{condition}{Condition}{Conditions}
\crefname{lemma}{Lemma}{Lemmas}

\usepackage{tcolorbox}

\newlist{asmpenum}{enumerate}{1} 
\setlist[asmpenum]{label={\arabic*.},ref=\theassumption.{\arabic*}}
\crefname{asmpenumi}{Assumption}{Assumptions}
 
\newlist{infasmpenum}{enumerate}{1} 
\setlist[infasmpenum]{label={\arabic*.},ref=\theassumption.{\arabic*},leftmargin=20pt}
\crefname{infasmpenumi}{Informal Assumption}{Informal Assumptions}


\usepackage{thm-restate}




\newcommand{\thetaStar}{\ensuremath{\theta^\star}}

\renewcommand{\d}{\mathrm{d}}

\newcommand{\DGS}{\ensuremath{\mathsf{DGS}}}
\newcommand{\GapSVP}{\ensuremath{\mathsf{GapSVP}}}
\newcommand{\LWE}{\ensuremath{\mathsf{LWE}}}
\newcommand{\CLWE}{\ensuremath{\mathsf{CLWE}}}
\newcommand{\hCLWE}{\ensuremath{\mathsf{hCLWE}}}

\allowdisplaybreaks{}

\AfterEndEnvironment{definition}{\noindent\ignorespaces}
\AfterEndEnvironment{assumption}{\noindent\ignorespaces}
\AfterEndEnvironment{lemma}{\noindent\ignorespaces}
\AfterEndEnvironment{theorem}{\noindent\ignorespaces}
\AfterEndEnvironment{proposition}{\noindent\ignorespaces}
\AfterEndEnvironment{fact}{\noindent\ignorespaces}
\AfterEndEnvironment{question}{\noindent\ignorespaces}
\AfterEndEnvironment{corollary}{\noindent\ignorespaces}
\AfterEndEnvironment{model}{\noindent\ignorespaces}
\AfterEndEnvironment{remark}{\noindent\ignorespaces}
\AfterEndEnvironment{proof}{\noindent\ignorespaces}
\AfterEndEnvironment{fact}{\noindent\ignorespaces}
\AfterEndEnvironment{minftheorem}{\noindent\ignorespaces}
\AfterEndEnvironment{inftheorem}{\noindent\ignorespaces}
\AfterEndEnvironment{maintheorem}{\noindent\ignorespaces}
\AfterEndEnvironment{restatable}{\noindent\ignorespaces}
\AfterEndEnvironment{infassumption}{\noindent\ignorespaces}
\AfterEndEnvironment{conjecture}{\noindent\ignorespaces}
\AfterEndEnvironment{claim}{\noindent\ignorespaces}

\title{  
    {DDPM Score Matching and Distribution Learning}
}

\date{Yale University}
\author{Sinho Chewi \and Alkis Kalavasis \and Anay Mehrotra \and Omar Montasser}

\newcommand{\msf}[1]{\mathsf{#1}}

\begin{document}

\maketitle 
    
\begin{abstract}
    Score estimation is the backbone of score-based generative models (SGMs), and particularly denoising diffusion probabilistic models (DDPMs).
    A fundamental theoretical result in this area is that, given access to accurate score estimates, SGMs can efficiently generate from any realistic data distribution 
    (Chen, Chewi, Li, Li, Salim, and Zhang, ICLR'23; Lee, Lu, and Tan, ALT'23). 
    {This can be viewed as a result on distribution learning, where the learned distribution is implicit as the law of the output of a sampler. However, it is unclear how score estimation relates to more classical forms of distribution learning, such as parameter estimation and density estimation.}

    The contribution of this paper is a framework that {reduces from score estimation to the other two forms of distribution learning.}
    This framework has various implications in statistical and computational learning theory:

    \begin{enumerate}
        \item[$\blacktriangleright$] (Parameter Estimation)~~ Recent work has shown that for estimation in parametric models, a variant of score matching known as implicit score matching is provably statistically inefficient for multimodal densities that are common in practice (Koehler, Heckett, and Risteski, ICLR'23).
        In contrast, under mild conditions, we show that denoising score matching in DDPMs is asymptotically efficient, \ie{}, the DDPM estimator is asymptotically normal with a covariance matrix given by the inverse Fisher information. 
        \item[$\blacktriangleright$] (Density Estimation)~~ Given the reduction from generation to score estimation, there is a large volume of work providing statistical and computational guarantees for learning the score of a distribution. Using our framework, we can lift the estimated scores to a $(\epsilon,\delta)$-PAC density estimator, \ie{}, a function that $\epsilon$-approximates the target log-density in all but a $\delta$-fraction of the space. 
        To illustrate our framework, we provide two results: (i) minimax rates for density estimation {over H\"older classes of densities} in the standard $L^1$ risk and (ii) a quasi-polynomial PAC density estimation algorithm for the {classical} Gaussian location mixture model.
        {For the latter result, our result builds on and answers an open problem in the recent work of Gatmiry, Kelner, and Lee (arXiv'24).}

        \item[$\blacktriangleright$] (Lower Bounds for Score Estimation)~~ Our framework provides the first principled way to prove computational lower bounds for score estimation for general families of distributions. As an application, we prove cryptographic lower bounds for score estimation of general Gaussian mixture models, conceptually recovering the results of Song (NeurIPS'24) and making important progress to Song's key open problem. 
    \end{enumerate}
\end{abstract}

\thispagestyle{empty}
\clearpage

\thispagestyle{empty}

\tableofcontents
\thispagestyle{empty}

\clearpage
\pagenumbering{arabic}

\section{Introduction}

    Score-based generative models (SGMs), also known as diffusion models, have emerged as a popular approach to generate samples from complex data distributions.
    These models leverage learned score functions---that is, the logarithmic gradients of the probability density---to progressively transform white noise into samples from the target data distribution by following a stochastic differential equation (SDE).
    The remarkable empirical success of SGMs has not only led to impressive practical applications but has also spurred significant interest within the {theoretical computer science (\eg{}, \cite{pabbaraju2023provableBenefits,shah2023learning,chen2024learninggeneralgaussianmixtures,gatmiry2024learning}) and the {machine learning and} statistics communities {(\eg{}, \cite{dou2024optimalscorematchingoptimal,chen2023sampling,wibisono2024optimalscoreestimationempirical,koehler2024sampling,oko2023minimalDiffusion,chen2023ode,qin2024fit,koehler2023statistical})}} toward establishing rigorous theoretical foundations for SGMs.

    More specifically, SGMs evolve the data distribution along a noising process, producing a family of distributions $(P_t)_{t\in [0,T]}$.
    In this work, for concreteness, we focus exclusively on the Ornstein--Uhlenbeck (OU) process, for which
    \begin{align*}
        P_t \deq \Law(X_t) \deq \Law(e^{-t}\,X_0 + (1-e^{-2t})\,Z_t)\,, \qquad X_0 \sim P\,, \; Z_t \sim \normal{0}{{\mathrm{Id}}}\,, \; X_0 \indep Z_t\,.
    \end{align*}
    Then, to generate a sample from $P = P_0$, SGMs numerically discretize a certain SDE (obtained as the time reversal of the noising process), the implementation of which crucially requires estimation of the \emph{score functions} $\{\nabla \log P_t\}_{t\in [0,T]}$.
    {We refer the reader to}~\Cref{appendix:ddpm} for background.
    
    A central component underlying this procedure is \emph{score estimation}~\citep{hyvarinen2005scoreMatching}, which transforms the problem of learning the score function into a regression objective amenable to first-order optimization.
    It relies on the following key identity, valid for any vector field $s_t\colon \R^d\to\R^d$:
    \begin{align}\label{eq:sm_identity}
        \E\bigl[\|s_t(X_t) - \nabla \log P_t(X_t)\|^2 \bigm\vert X_0\bigr]
        &= \E\Bigl[\|s_t(X_t)\|^2 + \frac{2}{\sqrt{1-e^{-2t}}}\,\langle s_t(X_t), Z_t\rangle \Bigm\vert X_0\Bigr] + C(P, X_0)\,,
    \end{align}
    where $C(P,X_0)$ is a constant that does not depend on $s_t$.\footnote{The expectation above is often replaced with $\E[\|s_t(X_t) + \sinparen{\sfrac{Z_t}{\sqrt{1-\exp(-2t)}}}\|^2 \mid X_0]$, which is formally equivalent by completing the square.
    However, we prefer to write the identity as above since $\int_0^T \E[\|\sfrac{Z_t}{\sqrt{1-\exp(-2t)}}\|^2]\,\d t = \infty$.}
    Since we can freely generate $Z_t$ (and thus $X_t$) given $X_0$, the right-hand side of the identity above is readily turned into an empirical loss that can be minimized over the choice of $s_t$.

    Despite a number of recent works investigating its efficacy {\cite{qin2024fit, koehler2024efficiently, koehler2024sampling}}, a complete statistical understanding of score matching remains to be developed.
    In this work, we aim to clarify the relationships between this problem and the well-studied problem of distribution learning \cite{haussler1992decision,kearns1994learnability}.
    Concretely, we study the following general question.
    \begin{mdframed}
    \textbf{Main Question:} Given a family $\hyP$ of probability distributions and samples from $P \in \hyP$, how does the complexity (both computational and statistical) of learning the score functions $\{\nabla \log P_t\}_{t\in [0,T]}$ relate to the complexity of learning the distribution $P$?
    \end{mdframed}
    \noindent In order to make this question precise, we must specify the sense in which we learn the distribution.
    Classically, the three most common forms of distribution learning are the following.
    \begin{itemize}
        \item[$\blacktriangleright$] \textbf{Parameter estimation:} If $\hyP = \{P_\theta : \theta \in\Theta\}$ is indexed by a finite-dimensional parameter space $\Theta \subseteq \R^p$, then we can ask to output an estimate of the true value of the parameter.
        \item[$\blacktriangleright$] \textbf{Density estimation:} We can also ask to output an \emph{evaluator}, that is, a function (or algorithm) $\hat P$ which, given an input $x$, yields an estimate $\hat P(x)$ of the density of $P$ evaluated at $x$.
        When computational considerations are at play, we further ask that $\hat P$ runs in polynomial time (with respect to various problem parameters).
        This setting is especially well-established in the statistics literature \citep{Tsy09Nonpar}.
        \item[$\blacktriangleright$] \textbf{Learning a sampler:} A more recent paradigm\footnote{We note, however, that the definition goes back at least to \cite{kearns1994learnability}.} instead asks to learn a \emph{generator}, that is, a function (or algorithm) which, given a random seed, outputs a sample $\hat X$ such that $\Law(\hat X) \approx P$.
        Again, when computational resources are a concern, we ask that the generator runs in polynomial time.\footnote{If we ignore computational aspects, then evaluators and generators are equivalent objects{: given a generator specified by, \eg{}, a circuit, one can estimate the relative frequency of each point in its support to sufficient accuracy to estimate the density at each point; and given an evaluator, one can generate a sample via rejection sampling}.}
    \end{itemize}
    SGMs most naturally correspond to the last paradigm since they are typically viewed as samplers.
    The prior works~\cite{chen2023sampling, LeeLuTan23} made this connection rigorous by showing that, provided the score functions for $\hyP$ are Lipschitz, \textbf{score estimation implies that one can learn a sampler}.
    More precisely, if $\nabla \log P_t$ is $L$-Lipschitz for all $t\ge 0$ and $P \in \hyP$, and we have score estimates $\{\hat s_t\}_{t\ge 0}$ satisfying the guarantee $\sup_{t\ge 0} \|\hat s_t - \nabla \log P_t\|_{L^2(P_t)}^2 \le \varepsilon^2$, then SGMs can learn a sampler up to error $\tilde O(\varepsilon)$ in total variation distance using a number of score evaluations and additional computation time that scales polynomially in $L$, the dimension $d$, and {the desired accuracy} $\nfrac{1}{\eps}$.\footnote{{See~\Cref{ssec:related} for further developments.}}

    \subsection{Framework and first main tool}

    Informally, the main message of this paper is that the other two main settings for distribution learning, namely parameter estimation and density estimation, can both be reduced to score estimation as well (see~\Cref{fig:main}).

   \begin{figure}[!ht]
       \centering
       \includegraphics[width=0.75\linewidth,clip,trim={1.5cm 0.5cm 0.75cm 0.75cm}]{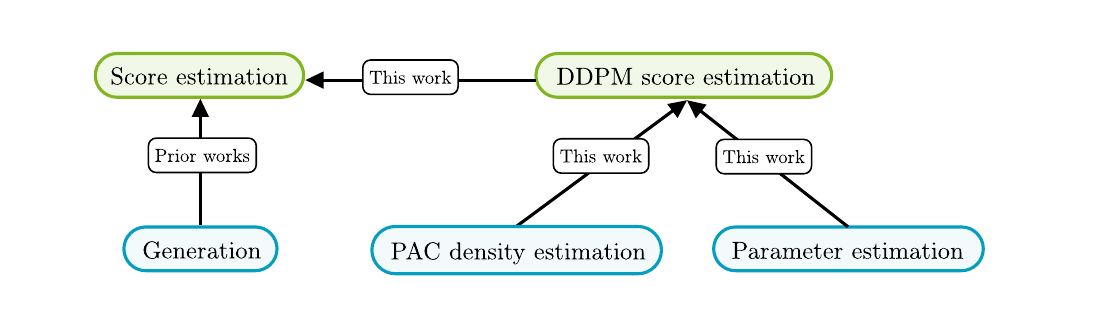}
       \caption{Landscape of reductions between score estimation and distribution learning (see \cref{sec:intro:pacDensity} for a definition of PAC density estimation). Prior to our work, the only known reduction was from {generation to score estimation}. Our work shows that score estimation has strong implications for parameter recovery and density estimation. We introduce the notion of DDPM score estimation in \Cref{def:ddpmMain} and provide relevant background in \Cref{appendix:ddpm}.}
       \label{fig:main}
   \end{figure}
   
    \noindent Given the above landscape of computationally efficient reductions from density estimation and parameter estimation to score estimation, we can obtain several new results in statistical and computational learning theory, which we review in the upcoming sections. Before that, we present the following tool, which is {a core part of} all of our results.

     \subsubsection{Likelihood identity}

         The starting point for this work is the following identity for the likelihood.
         This identity relates the log-likelihood of any point $x_0$ under the target density $P$ and {a certain integrated} score matching formula {corresponding to DDPM score matching where the} OU process is run until time $T$ starting from $x_0.$
         Here, we {use} $Q_{t|0}$ {to denote} the transition semigroup of the OU process, \ie{}, $Q_{t|0}(\cdot\mid x_0) 
         \coloneqq \cN(e^{-t}\,x_0,\; (1-e^{-2t})\,\mathrm{Id})$.

    \begin{lemma}[Likelihood identity]
    Let $P$ be a continuous density over $\R^d$ with finite second moment.
    Then, for all $x_0\in\R^d$,
    {
    \setlength{\abovedisplayskip}{2pt}
    \setlength{\belowdisplayskip}{2pt}
    \begin{align*}
        &\int \log P_{T}\,\d Q_{T|0}(\cdot \mid x_0) - \underbrace{~\log P(x_0)~}_{\textnormal{log-density at $x_0$}} \\
        &\qquad = 
        \underbrace{\int_0^T \int \bigl\{\|\nabla \log P_{t}\|^2 - 2\,\langle \nabla \log P_{t},\nabla \log Q_{t|0}(\cdot \mid x_0)\rangle\bigr\}\,\d Q_{t|0}(\cdot\mid x_0) \,\d t}_{\textnormal{integrated DDPM score matching objective at $x_0$}} 
        ~+~
        \underbrace{~~d \cdot T~~}_{\textnormal{known constant}}\,.
    \end{align*}
    }
    \label{lem:identity}
\end{lemma}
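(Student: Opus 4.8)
The plan is to view the left-hand side as the increment, over the noise interval $[0,T]$, of a single scalar function, and to evaluate its derivative by a short Fokker--Planck computation. Concretely, set $F(t) \deq \int \log P_t \, \d Q_{t|0}(\cdot\mid x_0)$ for $t\in(0,T]$; by definition $F(T)$ is the first term on the left. Since $Q_{0|0}(\cdot\mid x_0) = \delta_{x_0}$ and, by an approximate-identity argument using the continuity of $P$, $P_t(y)\to P(y)$ pointwise as $t\downarrow 0$, we will argue that $F(t)\to\log P(x_0)$ as $t\downarrow 0$, so that the left-hand side equals $F(T) - F(0^+) = \int_0^T F'(t)\,\d t$. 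It then suffices to prove that, for $t\in(0,T)$,
\[
F'(t) \;=\; d \;+\; \int \bigl\{\|\nabla\log P_t\|^2 - 2\,\langle\nabla\log P_t,\, \nabla\log Q_{t|0}(\cdot\mid x_0)\rangle\bigr\}\,\d Q_{t|0}(\cdot\mid x_0)\,;
\]
integrating this over $[0,T]$ (the constant $d$ contributes the term $d\cdot T$) and rearranging is exactly the asserted identity.

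\textbf{Computing $F'$.} Fix $t>0$, write $p_t$ and $q_t$ for the densities of $P_t$ and $Q_{t|0}(\cdot\mid x_0)$, and abbreviate $s_t \deq \nabla\log P_t$. For $t>0$ the density $p_t$ is a convolution of a rescaled copy of $P$ with a nondegenerate Gaussian, hence smooth and strictly positive with derivatives of at most polynomial growth, while $q_t$ is Gaussian; consequently all integrals below converge absolutely and all boundary terms in the integrations by parts vanish. Both $p_t$ and $q_t$ solve the OU Fokker--Planck equation $\partial_t u = \nabla\cdot(y\,u + \nabla u)$. Differentiating under the integral,
\[
F'(t) \;=\; \int \frac{\partial_t p_t}{p_t}\,q_t \;+\; \int (\log p_t)\,\partial_t q_t\,.
\]
For the first term, a direct expansion of $\nabla\cdot(y\,p_t + \nabla p_t)/p_t$, using $\Delta p_t/p_t = \Delta\log p_t + \|\nabla\log p_t\|^2$, gives $\partial_t p_t/p_t = d + y\cdot s_t + \nabla\cdot s_t + \|s_t\|^2$. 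For the second term, one integration by parts (moving $\nabla\cdot$ off $y\,q_t+\nabla q_t$ onto $\log p_t$) plus the identity $\nabla q_t = q_t\,\nabla\log q_t$ yields $\int (\log p_t)\,\partial_t q_t = -\int s_t\cdot(y\,q_t + \nabla q_t) = -\int (y\cdot s_t)\,q_t - \int\langle s_t, \nabla\log q_t\rangle\,q_t$. Adding the two contributions, the $y\cdot s_t$ terms cancel and we are left with $F'(t) = d + \int\bigl(\|s_t\|^2 + \nabla\cdot s_t - \langle s_t,\nabla\log q_t\rangle\bigr)\,q_t$. A final integration by parts, $\int(\nabla\cdot s_t)\,q_t = -\int s_t\cdot\nabla q_t = -\int\langle s_t, \nabla\log q_t\rangle\,q_t$, converts the divergence into the cross term and produces the displayed formula for $F'$, since $\nabla\log q_t = \nabla\log Q_{t|0}(\cdot\mid x_0)$.

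\textbf{Main obstacle.} The algebra is short; the real work is in the analytic justifications, and essentially all of it is concentrated at $t=0$. For $t$ bounded away from $0$, Gaussian smoothing makes $p_t$ as regular as needed and the Gaussian tails of $q_t$ both dominate for differentiation under the integral and kill every boundary term, so the manipulations above are routine. The delicate point is the limit $F(t)\to\log P(x_0)$ as $t\downarrow 0$: there $Q_{t|0}(\cdot\mid x_0)$ collapses to a point mass while $\|p_t\|_\infty\to\infty$, so one cannot simply invoke weak convergence. The clean route is to prove the identity with the lower limit $0$ replaced by an arbitrary $\varepsilon\in(0,T)$ — which is immediate from the two displays above — and then let $\varepsilon\downarrow 0$; this reduces everything to showing $F(\varepsilon)\to\log P(x_0)$. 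After the change of variables $y = e^{-t}x_0 + \sqrt{1-e^{-2t}}\,z$ (which pushes $Q_{t|0}(\cdot\mid x_0)$ forward to a standard Gaussian in $z$), this follows by combining the pointwise limit $\log p_t(e^{-t}x_0 + \sqrt{1-e^{-2t}}\,z)\to\log P(x_0)$ with a domination / uniform-integrability bound valid uniformly for small $t$, using continuity (and, where convenient, positivity) of $P$ near $x_0$.
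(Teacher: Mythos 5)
Your proof is correct, and it reaches the identity by a computation that mirrors the paper's, but executed purely at the PDE level rather than via stochastic calculus. The paper applies It\^o's formula to $\log P_t(X_t)$ along the OU path started at $x_0$, substitutes the Fokker--Planck identity $\partial_t \log P_t = \Delta\log P_t + \|\nabla\log P_t\|^2 + d + \langle\nabla\log P_t, x\rangle$, takes expectations (discarding the stochastic integral via a martingale argument justified by $\E[\|\nabla\log P_t(X_t)\|^2]=O(1/t^2)$), and then performs one integration by parts to convert $\int\Delta\log P_t\,\d Q_{t|0}$ into $-\int\langle\nabla\log P_t,\nabla\log Q_{t|0}\rangle\,\d Q_{t|0}$. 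You instead differentiate $F(t)=\int\log P_t\,\d Q_{t|0}(\cdot\mid x_0)$ directly, invoking the Fokker--Planck equation for both $P_t$ and $Q_{t|0}(\cdot\mid x_0)$: your cancellation of the $y\cdot s_t$ terms is the deterministic shadow of the $-\langle\nabla\log P_t(X_t),X_t\rangle$ drift in It\^o's formula cancelling against the matching term in $\partial_t\log P_t$, and your final integration by parts is exactly the paper's. What your route buys is that you never need the martingale property of the stochastic integral; what it costs is that you must instead justify differentiation under the integral sign and the vanishing of boundary terms in two integrations by parts --- which reduce to essentially the same integrability estimates on $\nabla\log P_t$ against the Gaussian $Q_{t|0}(\cdot\mid x_0)$ that the paper also leaves implicit. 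Both arguments handle the singular endpoint identically, by proving the identity on $[\varepsilon,T]$ and sending $\varepsilon\downarrow 0$; your discussion of why $F(\varepsilon)\to\log P(x_0)$ is in fact more explicit than the paper's, which asserts this limit without comment. One small caveat: your claim that $p_t$ has ``derivatives of at most polynomial growth'' is not automatic for an arbitrary continuous density with finite second moment, but this is a technical point at the same level of rigor as the paper's own treatment and does not affect the structure of the argument.
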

    Variants of this identity have been observed previously in the literature, \eg{}, in~\cite{li2024score} or in~\cite{DBLP:conf/iclr/ChenLT22} in the context of the diffusion Schr\"odinger bridge.
    In fact, a similar formula was already put forth in the work of~\citet{song2021MLE}, where it was presented as a variational lower bound on the log-likelihood, although the connection likely dates back even earlier to ideas by Jarzynski \citep{jarzynski1997nonequilibrium}.
    For completeness, we prove~\Cref{lem:identity} in~\Cref{sec:identity_pf}.
    In any case, while we do not claim novelty for~\Cref{lem:identity}, our contribution is to thoroughly explore its consequences for distribution learning.

    The power of the identity is that, by the convergence of the OU process, the first term on the left-hand side converges exponentially fast, as $T\to\infty$, to a known constant (see~\Cref{lem:err_term} below).
    Moreover, the integral on the right-hand side exactly corresponds to the score-matching loss from~\eqref{eq:sm_identity} {which is the} (single-sample version of the) DDPM objective evaluated at $x_0$ (for details, we also refer to \Cref{def:ddpmMain}). 
    We call this quantity the \emph{integrated DDPM score matching objective}.
    Therefore,~\Cref{lem:identity} shows that the negative log-likelihood $-\log P$ is precisely related, up to a known constant and a vanishing error, to the integrated score-matching objective. 

In the upcoming sections, we discuss some applications of \Cref{lem:identity} to parameter and density estimation.

\subsection{Applications to parameter estimation}\label{sec:applications-parameter}

    In this section, we assume that our class of distributions is a parametric family  $\hyP = \{P_\theta : \theta \in\Theta\}$  with {parameter space} $\Theta \subseteq \R^p$. 
    In this setting, the prior work of~\citet{koehler2023statistical} investigated {the performance of} a variant of score matching known as \emph{implicit score matching} (ISM):
    Given \iid{} samples $X^{(1)},\dotsc,X^{(n)}$  from $P_{\theta^\star}$, {for some parameter} $\theta^\star \in \Theta$, the ISM estimator is 
    \begin{align*}
        \hat\theta_n^{\,\rm ISM} \deq \argmin_{\theta\in\Theta} \frac{1}{n} \sum\nolimits_{i=1}^n \bigl\{
        \|
        \nabla \log P_\theta(X^{(i)}) \|^2 
        + 
        2\,\Delta \log P_\theta(X^{(i)})
        \bigr\}
        \,.
    \end{align*} 
    \noindent Under appropriate regularity conditions, they proved that $\wh{\theta}_n^{\,\rm ISM}$ is asymptotically normal, \ie, 
    $
        \sqrt n\,(\hat\theta_n^{\,\rm ISM} - \theta^\star)
        ~\todist{}~
        \cN(0, \Sigma^{\rm ISM}(\theta^\star)).
    $
    Moreover, they bounded the operator norm of $\Sigma^{\rm ISM}(\theta^\star)$ in terms of the asymptotic covariance of the maximum likelihood estimator (MLE)---\ie{}, the inverse Fisher information matrix---and the so-called restricted Poincaré constant of the model.
    This shows that the aforementioned parameter estimator based on minimizing the implicit score matching loss can achieve the asymptotic efficiency of MLE up to a constant factor for distributions whose restricted Poincaré constant is $O(1)$.
    While their result is insightful for many families, the dependency on the restricted Poincaré constant is not harmless: it can be arbitrarily large for multimodal distributions, which frequently arise in practice.
    Unfortunately, this dependence is also unavoidable, since~\citet{koehler2023statistical} also exhibited examples in which \textbf{implicit score matching is provably inefficient compared to MLE}; see also~\citet{Diao23ScoreMatching}. 
    
    Follow-up work by \citet{qin2024fit} generalized this asymptotic efficiency result {to} generalized (implicit) score matching estimators \citep{lyu2012interpretation} by establishing a connection between the \emph{mixing time} of broad classes of Markov processes, and the statistical efficiency of an appropriately chosen \emph{generalized score} matching loss (GISM). Under this framework, they managed to show that for Gaussian mixtures in $d$ dimensions, the generalized score estimator is asymptotically normal with covariance matrix $\Sigma^{\mathrm{GISM}}(\theta^\star)$ which has an operator norm that is, roughly speaking, at most $\poly(d)$ times the (squared) operator norm of the inverse Fisher information (bypassing the lower bounds of \cite{koehler2023statistical}).

    In short, both works \citep{koehler2023statistical,qin2024fit} indicated strong statistical properties of (generalized) ISM{. That said, they} still cannot match the performance of MLE or come within a constant factor of it {for general families $\hyP$, and they left open whether some diffusion-based estimator can achieve the statistical efficiency of MLE under mild assumptions on $\hyP$.}
    
    Here, we consider denoising diffusion probabilistic models (DDPMs)---arguably the most popular variant used in practice---which interestingly do not rely on {(generalized)} implicit score matching.
    Instead, DDPMs employ an alternative known as \emph{denoising score matching}~\citep{hyvarinen2008scoreMatching, pascal2011dsm} and extend the idea by applying score matching at many different noise levels~\citep{song2019generative, jonathan2020ddpm,ling2024SurveyDiffusion}, leading to the following risk:
        \[
            \E\Bigl[
                \int_0^T \bigl\{\|s_t(X_t)\|^2 + \frac{2}{\sqrt{1-e^{-2t}}}\,\langle s_t(X_t), Z_t\rangle\bigr\}\, \d t
            \Bigm\vert X_0\Bigr]\,.
            \yesnum\label{eq:ddpm_loss}
        \]

\subsubsection{DDPM is an asymptotically efficient parameter estimator}\label{sec:intro:efficiency}

    We consider the following DDPM estimator, which precisely amounts to minimizing the DDPM risk in \cref{eq:ddpm_loss} over samples from $P$.
    
    Below $P_{\theta, t}$ denotes the law of $X_t \deq {e^{-t}}\,X_0 + \sqrt{1-{e^{-2t}}}\,Z_t$, where $X_0 \sim P_\theta$ and $Z_t \sim \cN(0, \mathrm{Id})$ are independent.
    We provide relevant background on the DDPM objective in \Cref{appendix:ddpm}.

    \begin{restatable}[]{definition}{ddpmEstimator}\label{def:ddpmMain}
    Fix a terminal time $T>0$. Given samples $X_0^{(1)},\dotsc,X_0^{(n)}$ and a family $\hyP=\{P_\theta: \theta \in \Theta\subseteq \R^p\}$, the DDPM estimator is $\DDPM \coloneqq \argmin_{\theta\in \Theta} \hat{\cR}_n^{\rm DDPM}(\theta)$, where
    \[
        \hat{\cR}_n^{\rm DDPM}(\theta) \deq \frac{1}{n}\sum_{i=1}^n
        \int_{0}^T 
        \E\Bigl[\|\nabla \log P_{\theta,t}(X_t^{(i)})\|^2 
        +
        \bigl\langle\nabla \log P_{\theta,t}(X_t^{(i)}),\frac{2Z_t^{(i)}}{\sqrt{1-{e^{-2t}}}}\bigr\rangle \Bigm\vert X_0^{(i)}\Bigr] \, \d t
    \]
    and for each $i \in [n]$ and $t\in [0,T]$, we draw $Z_t^{(i)} \sim \cN(0, \mathrm{Id})$
    independently from $X_0^{(i)}$ and define the noised sample $X_t^{(i)} \deq {e^{-t}}\,X_0^{(i)} + \sqrt{1-{e^{-2t}}}\,Z_t^{(i)}$.
    \end{restatable}
    Our main result {for this application} is that, under mild regularity assumptions on the distribution family $\hyP$ (essentially the same conditions needed for the asymptotic normality of the MLE, see~\Cref{ass:mle}) and by choosing the terminal time $T = T_n$ to grow sufficiently rapidly with the number of samples $n$ (namely, $T_n - \frac{1}{2}\log n \to \infty$), 
    the DDPM estimator $\DDPM$ converges in distribution to a Gaussian centered at $\theta^\star$ with covariance \emph{exactly} equal to the inverse Fisher information.
    Recall that the inverse Fisher information is also the asymptotic covariance of the MLE and is the best possible for any unbiased estimator (by the Cram\'er--Rao or information inequality) {\cite{Vaart98Asymptotic}}, so this statement can be interpreted as a form of asymptotic optimality; furthermore, by comparison of experiments, the MLE can be shown to be locally asymptotically minimax \cite{hajek1972local,Vaart98Asymptotic}.

    To state the result formally, 
     let $\DDPM$ denote the DDPM estimator as defined in~\Cref{def:ddpmMain} on $n$ \iid{} samples $X_0^{(1)},\dotsc,X_0^{(n)}\sim P_{\theta^\star}$.
     
        \begin{restatable}[DDPM is asymptotically efficient; informal, see~\Cref{thm:efficiency}]{inftheorem}{thmEfficiency}\label{thm:normalityEfficiency}
            Under standard assumptions,
            \[
                \sqrt{n}\,\bigl(
                    \DDPM - \thetaStar
                \bigr)
                \todist{}
                \cN(0, I(\thetaStar)^{-1})
                \qquadtext{as}
                n\to\infty\,,
            \]
            where $I(\thetaStar)$ denotes the Fisher information matrix at $\thetaStar$.
        \end{restatable}
    \Cref{thm:normalityEfficiency} provides a principled explanation for the statistical power of the DDPM estimator in the asymptotic regime and has immediate implications for parameter recovery. For more details, we refer to \Cref{sec:efficiency}.

    \subsection{Applications to density estimation}\label{sec:applications-density}

     A long line of works showed that, under minimal regularity assumptions, access to a score estimation oracle for $P$ is sufficient for learning to generate from the target density; see~\Cref{ssec:related} for a discussion of this literature.
    However, the precise connection between score estimation and density estimation remains elusive. One of the main results of our framework is an efficient reduction from a particular notion of density estimation to score estimation, which we define below.

\subsubsection{PAC density estimation}
    \label{sec:intro:pacDensity}

To formally define our notion of density estimation, we need the following evaluation oracle:

\begin{definition}[Evaluation oracle]
    \label{def:evalOracle}
    An evaluation oracle for a function $f\colon \R^d\to {\R}$ is a primitive that, given a point $x\in \R^d$, outputs {$f(x)$}.
\end{definition}
Density estimation is extremely well-studied in statistics and computer science. 
Prior works on density estimation (see \eg{}, \cite{kearns1994learnability,diakonikolas2016learning,barron1991minimum,feldman2006pac,kalai2010efficiently,moitra2010settling,suresh2014near,daskalakis2014faster,diakonikolas2019robust,li2017robust,ashtiani2018nearly,diakonikolas2020small,bakshi2022robustly}) focus on finding a model $\wh P$ (\ie{}, in the form of an evaluation oracle for the target density) such that, with high probability over the training set, 
it estimates the target density $P$ with high accuracy.
There are many variants of density estimation {which differ in the specific} evaluation metric {used}; \eg{}, it can be {the} total variation {distance} or KL divergence \cite{kearns1994learnability,diakonikolas2020small,diakonikolas2016learning}.

We are now ready to define the notion of probably approximately correct (PAC) density estimation, which is a slight relaxation of the above requirement.
    The goal in PAC density estimation is to output a model $\wh P$ that fits the target density $P$ everywhere except for {a} $\delta$-fraction of points {(according to the probability mass of the target distribution $P$)}, with high probability.

\begin{definition}
[PAC density estimation algorithm]\label{def:PACdensityEstimation}
    Let $\hyP$ be a class of distributions over $\R^d$. 
    An $(\eps,\delta)$-PAC density estimation algorithm for $\hyP$ is an algorithm which, for any $P \in \hyP,$ given $\eps,\delta>0$ and $\poly(d,\nfrac{1}{\eps},\nfrac{1}{\delta})$ i.i.d.\ samples drawn from  distribution $P$, outputs a representation (in the form of an evaluation oracle) of a possibly randomized function $\wh P$
    such that with probability at least $\nfrac{9}{10}$ over the randomness of the samples and the algorithm,
    \[
        \E~ P\{x\in\R^d : e^{-\eps}\,P(x) \le \wh P(x) \le e^{\eps}\,P(x)\} \geq 1- \delta\,,
    \]
    where $\E$ denotes the expectation over the randomness of $\wh P$.
\end{definition}
In the above definition, we often call $\epsilon$ the \emph{accuracy} of the algorithm and $\delta$ its \emph{coverage}.
If additionally the $(\epsilon,\delta)$-PAC learner runs in sample polynomial time, we call it \emph{efficient}.\footnote{As is standard, efficiency has two aspects: both (i) producing the output $\wh P$ and (ii) evaluating $\wh P(x)$ at any given point $x$ at inference time should run in time that is polynomial in the number of samples.}

Some further remarks are in order. {We do not require the function $\wh P$} to be a density. 
    In this sense, $\wh P$ {multiplicatively} estimates the density of $P$ on most of the domain, but is not a density itself.
    For parametric families, this means that PAC density estimation is weaker, perhaps strictly, than parameter estimation which would provide density estimation on the entire domain.
    Omitting this requirement is standard in both theoretical computer {science} (\eg{}, \cite{chan2014efficient,acharya2017sample}) and (non-parametric) statistics \cite{scott2015multivariate,devroye2001combinatorial} (\eg{}, kernel density {estimators can output functions that neither integrate to 1 nor have a non-negative range}). 
    The PAC density estimators that we design are not densities (\ie{}, they do not integrate to 1) but they approximate the target density $P$ everywhere except for a $\delta$-fraction of the points {and take non-negative values}.

     To further motivate the non-triviality of getting such a PAC guarantee, we give evidence that this problem is essentially as hard as standard density estimation for standard families: We show that for H\"older classes, existing minimax lower bounds also hold against PAC density estimation (\Cref{sec:smoothDensityEstimation}), and that PAC density estimation for Gaussian mixtures can be used to solve (C)LWE and is therefore cryptographically hard in some regime (\Cref{sec:clwe-pac-density}). 
    Such a reduction was known for the stronger notion of density estimation in total variation and is an important reason why density estimation is believed to be computationally challenging \cite{bruna2021continuous,gupte2022continuous}.
    
The success probability of $\nfrac{9}{10}$ in \cref{def:PACdensityEstimation} is arbitrary and can straightforwardly be boosted to achieve $(\eps,3\delta)$-PAC density estimation with a success probability $(\nfrac{9}{10})^k$ by computing the median of $k$ $\inparen{\eps,\delta}$-PAC density estimators with success probability $\nfrac{9}{10}$.

\subsubsection{Minimax optimal density estimation over H\"older classes}\label{sec:intro:densityEstimation}

    There is a long line of works that studied statistical rates for score estimation and thereby obtained minimax optimal generators. Using our tools, we can directly translate these results to PAC density estimators ({we briefly overview this below and refer the reader to \Cref{sec:smoothDensityEstimation} for more details}). 

    To see an application, we consider a representative result on score estimation coming from the {recent} work of \citet{dou2024optimalscorematchingoptimal}. Their work considered the H\"older class of densities (see \Cref{defn:holder}) and showed that the score $\nabla \log P_t$ can be approximated in $L^2(P_t)$ at a certain rate for any $P$ in that class. 
    Prior to our work, such a result had only implications to generation; the next result converts such a guarantee to a density estimation result. We mention that for this section, we actually achieve a stronger guarantee compared to PAC density estimation: we give minimax optimal rates for estimation in the standard $L^1$ risk using an estimator based on DDPM score matching.
    
    \begin{inftheorem}[Density estimation for H\"older classes; informal, see \Cref{thm:holder}]\label{infthm:minimax}
    Let $\hyH_s$ be the H\"older class of densities $P$ supported on $[-1,1]$ {with smoothness parameter $s > 0$}.
    Given a rate function $r : \mathbb N\to\R$ and a number $n\geq 1$, define the risk of an estimator $\wh P$ using $n$ samples to be
    {$\cR_n(\wh P, P) \coloneqq \int_{[-1, 1]} \E_P|\wh P(x_0) - P(x_0)| \, \d x_0$.
    Then,
    an estimator based on DDPM score estimation achieves the minimax risk $n^{-s/(2s+1)}$ over $\hyH_s$ up to a $\sqrt{\log n}$ factor.}
    \end{inftheorem}

\subsubsection{Algorithms for PAC density estimation via score matching}

    Apart from statistical implications, a key conceptual message for our density-to-score estimation reduction is that it is also \emph{computationally efficient.} 
    We believe that this can lead to novel algorithms for PAC density estimation. 
    To illustrate this, we demonstrate a perhaps surprising application to PAC density estimation for the following Gaussian location {mixture}, which goes back to \citet{kiefer1956consistency} and was studied from an algorithmic perspective in a recent work by  \citet{gatmiry2024learning}{: Given distribution $Q$, the corresponding Gaussian location {mixture} is}
    \[
    P = Q * \normal{0}{\sigma^2\,\mathrm{Id}}\,,
    \]
    for some variance parameter $\sigma^2$ (controlling the smoothness of the target density).
    For instance, when $Q$ is supported on $k$ discrete points, {then} $P$ is a spherical Gaussian mixture model. 
    In general, there is a long list of works where the mixing distribution $Q$ is non-parametric (see, \eg, \cite{kim2014minimax,genovese2000rates,ghosal2001entropies,sahaGuntuboyina2020gaussian,zhang2009generalized,polyanskiy2025nonparametric}).
    Following \citet{gatmiry2024learning}, it is assumed that $Q$ satisfies the following structural properties (see \cref{asmp:GLM:locality}):
    \begin{enumerate}
        \item The support of $Q$ is contained in $k$ {$\ell_2$}-balls of radius $R$.
        \item {The} {$\ell_2$-}ball {of radius $R$} around any point of the support of $Q$ has mass at least ${\nfrac{1}{k}}$.
        \item The support of $Q$ is {a subset of the $\ell_2$-ball} of radius $D$ centered at {the origin}.
    \end{enumerate}
    Our starting point is the important recent work by \citet{gatmiry2024learning} that provides a \emph{generator} for this family of multi-modal densities. 
    {Their algorithm} is based on score estimation, runs in time roughly $d^{\poly(\log(1/\epsilon), R/\sigma)}$ when $\epsilon < \min\inbrace{\nfrac{\sigma}{R},\nfrac{1}{D},\nfrac{1}{d},\nfrac{1}{k}}$, and learns a sampler for the target density, using the well-known reduction from {generation} to score estimation.  
    
   {There are several reasons why this is interesting.}
First, \cite{gatmiry2024learning} {learn} to generate from various non-parametric models for which no sub-exponential-in-$d$ algorithms were previously known.
    In particular, this applies to the case of Gaussian convolutions of distributions on low-dimensional manifolds or, more generally, sets with a small covering number.
    Moreover, their algorithm provides an alternative to the algebraic toolbox employed for the density estimation of spherical GMMs. 
    Indeed, the algorithm of \citet{diakonikolas2020small} outputs a density estimator in time $\poly(\nfrac{dk}{\eps}) + (\nfrac{k}{\eps})^{O(\log^2 k)}$ for mixtures of $k$ spherical Gaussians using tools from algebraic geometry (tailored to GMMs) and it does not seem to extend to the more general distribution class of Gaussian location mixtures. 
    Even providing a generator for such a general problem with qualitatively similar runtime when specialized to the GMM setting is a surprising algorithmic result. 
    
However, it does not have any implications for the density estimation problem studied by the majority of works on GMMs. 
\citet{gatmiry2024learning} mention that it may be possible to upgrade their algorithm's guarantee {from generation to} density estimation {but leave it as an open problem}. 
We make progress on their open question by establishing the following result.

\begin{inftheorem}
[PAC density estimation for Gaussian location mixtures; {see \cref{thm:GLM:DE} for the precise statement}]\label{infthm:glm}
    Let $\calM$ be an $(k,R,D)$-Gaussian location mixture over $\R^d$ with $\sigma\in (0,1]$. 
    Fix $\eps\leq\min\{\nfrac{1}{2}, \nfrac{\sigma}{R}, \nfrac{1}{D}, \nfrac{1}{d}, \nfrac{1}{k}\}$.
    There is an algorithm that, given 
        accuracy parameter $\varepsilon$, 
        instance parameters $(\sigma,k,R,D)$, and 
        sample access to $\cM$, 
    draws $N$ i.i.d.\ samples from $\calM$, runs in {$\poly(N)$} time, and, for any coverage parameter $\delta\in (0,1)$, returns an $(\nfrac{\eps}{\delta},\delta)$-PAC density estimator for $\cM$, where
    \[
        N =
        \Bigl(
            d\log{\frac{1}{\eps}}
        \Bigr)^{
                {\polylog({\nfrac{1}{\eps})}}
                +
                {
                    \poly(\sfrac{R}{\sigma}
                )}
        }\,.
    \]
    \end{inftheorem}
    In other words, we match {the sample complexity and runtime of} the algorithm of \cite{gatmiry2024learning}, but instead of a generator for the class of interest, we return a PAC density estimator.

\subsection{Applications to lower bounds for score estimation}\label{sec:app:LB}

{Up to now, we have shown that score estimation implies powerful results for distribution learning, both from a statistical and a computational lens, beyond the ability to generate samples.}
In this last application, we ask the following natural question: Do our reductions lead to impossibility results for score estimation itself?

\subsubsection{Cryptographic lower bounds for score estimation}

In this section, we show that our reduction, from PAC density estimation to score estimation, allows us to deduce computational bottlenecks to score estimation from the hardness of the latter.
Importantly, prior to this reduction, there was no principled method to demonstrate the hardness of score estimation; in particular, the existing reductions from score estimation to generation did not imply hardness for score estimation as we do not yet have tools for showing complexity-theory-based hardness for generators {(see Open Problem 5 in~\Cref{ssec:discussion})}.
Since our reduction from PAC density to score estimation is efficient, we obtain the following abstract tool to show computational hardness for score estimation.

\begin{tcolorbox}
    \begin{center}
        Blueprint for computational lower bounds for score estimation under certain condition $C$
    \end{center}
    \begin{enumerate}[itemsep=0pt]
        \item Show that PAC density estimation for $\hyP$ is computationally hard under condition $C$.
        \item Show that $\hyP$ has Lipschitz scores and bounded second moments.
    \end{enumerate}
\end{tcolorbox}

\noindent {As an illustration of the type of hardness results that can be obtained from our reduction, we show that it} implies the following hardness result, which (conceptually) recovers {the very recent} lower bound of \citet{song2024cryptographic}.\footnote{The work of \citet{song2024cryptographic} showed a reduction from distinguishing a Gaussian pancake and the standard Gaussian to score estimation of a Gaussian pancake. Since Gaussian pancakes are morally behind cryptographic lower bounds for GMMs \cite{bruna2021continuous}, one can obtain a series of cryptographic hardness results for GMM score estimation (which are not explicitly stated in \citet{song2024cryptographic}). We recover these cryptographic hardness results for GMM score estimation.} 
In particular, we show that, under standard hardness of Learning with Errors (\LWE{}), score estimation for general GMMs with at least $d^{\,\epsilon}$ components (or even $(\log d)^{1+\varepsilon}$ under a stronger condition, following \citet{gupte2022continuous}) requires super-polynomial time in the dimension for any constant $\epsilon$. 

In order to get this result, it suffices to apply our blueprint for GMMs: we must show that PAC density estimation for GMMs is computationally intractable under some standard complexity assumption. 
Following \citet{bruna2021continuous}, we show that \CLWE{} reduces to it. To complete the reduction, we have to show that our ``hard'' GMM instance satisfies the assumptions of our density-to-score estimation reduction, leading to the following result.

\begin{inftheorem}[Hardness of score estimation for Gaussian mixtures; see \cref{thm:cryptoHardnessGMM}]\label{infthm:scoreCrypto}
Under polynomial hardness of LWE, score estimation for Gaussian mixtures with $k \gtrsim d^{\,\epsilon}$ components for any constant $\epsilon > 0$ cannot be done in $\poly(d,k)$ time {with accuracy equal to $1/\sqrt{d\log d}$}.
\end{inftheorem}
A result similar in spirit to \Cref{infthm:scoreCrypto} is the main result of \citet{song2024cryptographic}, with a very different technique, specialized to the Gaussian pancakes distribution. A comparison with \cite{song2024cryptographic} appears in \Cref{ssec:related}. 
Our reduction, however, is rooted in a general blueprint for deriving lower bounds for score estimation, leaving open the possibility of extensions to other distribution families.

{Moreover, it already has implications for \citet{song2024cryptographic}'s open problem on} finding natural assumptions on data distributions that eliminate Gaussian pancakes while allowing for rich data distributions encountered in practice.
        Our results make significant progress {on this} by showing that $L^2$ score estimation is computationally hard for any family of distributions for which evaluating the density is computationally hard, in the sense of PAC density estimation. 

{The formal version} of \Cref{infthm:scoreCrypto} (\cref{thm:cryptoHardnessGMM}) and its proof appear in \Cref{sec:hardnessScoreEstimation}.

    \subsection{Second main tool: Reduction from score estimation to density estimation}\label{sec:reduction}

    In this section, we present our second main tool (apart from the likelihood identity {presented earlier in} \Cref{lem:identity}),  which is our key technical contribution and will be {used} for our applications in density estimation (\Cref{sec:applications-density}) and {computational hardness results for score estimation} (\Cref{sec:app:LB}). {In particular, we present} a reduction from PAC density estimation to score estimation under mild assumptions on the target density $P$, which is the analogue of the standard reduction from generation to score estimation~\citep{chen2023sampling,LeeLuTan23}. 

    Before delving into the details, let us first introduce the score estimation oracle.

    \begin{definition}
    [Score estimation oracle; informal, see \cref{def:scoreEstimation}] 
    Let $P$ be a density on $\R^d$. A score estimation oracle for $P$ gets as  input $t$ and outputs a model $\wh s_t$ with $\int \|\wh s_t - \nabla \log P_t \|^2\, \d P_t \leq \varepsilon_t^2$.
    \end{definition}
    We define the error of the oracle with early stopping $\tau > 0$ and terminal time $T$ as $\eps_*^2$, where
    \[
    \varepsilon_*^2 = \int_\tau^T \varepsilon_t^2\, \d t\,.
    \] 
    In our reduction, we obtain a PAC density estimator given access to a score estimation oracle. 
    
    \begin{inftheorem}
    [Score estimation to PAC density estimation; informal, see \Cref{thm:reduction,thm:early_stopping}]\label{infthm:reduction}
    Let $P$ be a distribution on $\R^d$, and let $\varepsilon > 0$ be the desired accuracy. Assume access to a score estimation oracle with early stopping $\tau$ and error $\varepsilon_*$.
    \begin{enumerate}
        \item
        Assume that $P$ has second moment bounded by $M_2  \le \poly(d)$ and $L$-Lipschitz score function.
        There is an algorithm that outputs a function $\wh P$ (in the form of an evaluation oracle) such that
        \[
        \int \E\bigl\lvert\log \frac{\wh P(x_0)}{P(x_0)}\bigr\rvert\, P(\d x_0) \lesssim \varepsilon\,.
        \]
        The algorithm makes $
        N = \wt{\Theta}(\nfrac{Ld^2}{\epsilon^2})$ calls to the score estimation oracle with accuracy $\varepsilon_* = \wt{\Theta}(\nfrac{\epsilon}{\sqrt{d \log L}})$ and runs in $\poly(N)$ time. The above hold when $\tau \lesssim \nicefrac{\varepsilon^2}{L d^2}.$

         \item 
        Assume only that $P$ has second moment bounded by $M_2 \le \poly(d)$. For any $0 < \tau \le 1$, there is an algorithm that outputs a function $\wh P_\tau$ (in the form of an evaluation oracle) such that
        \[
        \int \E\bigl\lvert\log \frac{\wh P_\tau(x)}{P_\tau(x)}\bigr\rvert\, P_\tau(\d x) \lesssim \varepsilon\,.
        \]
        The algorithm makes $
        N = \wt{\Theta}(\nfrac{(d^2 + 1/\tau)}{\epsilon^2})$ calls to the score estimation oracle with accuracy $\varepsilon_* = \wt{\Theta}(\nfrac{\epsilon}{\sqrt{d\log(1/\tau)}})$ and runs in $\poly(N)$ time. 
    \end{enumerate}
    \end{inftheorem}
    The details of the above key reductions appear in \Cref{sec:reductionScore}.
    
    Some remarks are in order.  The above result draws parallels with standard results reducing sample generation to score estimation. Item 1 in \Cref{infthm:reduction} requires bounded second moment and Lipschitz scores and can be seen as the analogue of the result of \cite{chen2023sampling,LeeLuTan23} in the context of density estimation. In Item 2, $P$ is only assumed to have a bounded second moment. Since it can even be discrete, we can only get guarantees slightly before time 0. Hence, we provide PAC density estimation guarantees for the early stopped distribution $P_\tau$, which is also common in the sample generation literature.

    The reduction is efficient: given access to the score estimation oracle, the algorithm makes polynomially many calls to the oracle and runs in polynomial time. Moreover, at inference time, given any point $x_0$, the estimation of the log-density at $x_0$ takes polynomial time. 
    We remind the reader that the outputs $\wh P$ (and $\wh P_\tau$) may not integrate to 1. Hence, the expected value is not an upper bound for a KL divergence. 

    The proof of Item 1 {proceeds in} two steps. 
    The starting point is the likelihood identity of \Cref{lem:identity} which, roughly speaking (for large enough $T)$ says that at any point $x_0 \in \R^d$:
    \begin{equation}
        -\log P(x_0) \approx \int_0^T \int \bigl\{\|\nabla \log P_{t}\|^2 - 2\,\langle \nabla \log P_{t},\nabla \log Q_{t|0}(\cdot \mid x_0)\rangle\bigr\}\,\d Q_{t|0}(\cdot\mid x_0) \,\d t + \mathrm{const}\,.
    \end{equation}
    Hence, if we could estimate the right-hand side of the above equation, ignoring the absolute constant term, we would be able to get an estimation for the log-density at $x_0.$ For details, we refer to \Cref{sec:computational:integrated_to_density}.
    We call the problem of estimating this double integral \emph{integrated (DDPM) score estimation} (see also \Cref{fig:main}). Converting score estimation to integrated score estimation is the most technical part of our reduction and appears in \Cref{ssec:score_to_integrated}. 
    The key technical ingredient for this reduction builds on the recent work of \citet{AltChe24SCI} and is likely of independent interest.
    Details about the proof appear in \Cref{ssec:score_sg}. 
    \begin{lemma}[Lipschitz score implies sub-Gaussian score]\label{lem:lip_score_implies_subG-main}
        Let $P$ be a distribution on $\R^d$ such that the score $\nabla \log P$ is $L$-Lipschitz.
        Then, for every $t\ge 0$, $\nabla \log P_t$ is $\sqrt{L_t}$-sub-Gaussian under $P_t$, where $L_t \le 2L$ is explicit (see~\Cref{lem:score_subG}).
    \end{lemma}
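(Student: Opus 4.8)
The plan is to derive the claim from one clean principle --- a one-sided Lipschitz bound on the score forces the score to have Gaussian exponential moments --- applied not to $P$ itself but to each noised density $P_t$, together with a uniform-in-$t$ control of how much the OU semigroup can degrade semi-log-concavity. The core fact I would establish first (this is where the toolkit of \citet{AltChe24SCI} enters) is: if $\mu\propto e^{-V}$ is a smooth density on $\R^d$ with $\nabla^2 V\preceq\beta\,\mathrm{Id}$, i.e.\ $\nabla\log\mu$ is ``$\beta$-Lipschitz from below,'' then $\E_\mu\exp\langle\lambda,\nabla\log\mu(X)\rangle\le\exp(\tfrac{\beta}{2}\|\lambda\|^2)$ for all $\lambda$; since $\E_\mu[\nabla\log\mu(X)]=0$ by integration by parts, this says $\nabla\log\mu(X)$ is $\sqrt\beta$-sub-Gaussian. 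The proof is one line: Taylor's theorem together with $\nabla^2 V\preceq\beta\,\mathrm{Id}$ gives the pointwise bound $-\langle\lambda,\nabla V(x)\rangle\le V(x)-V(x+\lambda)+\tfrac{\beta}{2}\|\lambda\|^2$, hence $\int e^{-V(x)}e^{-\langle\lambda,\nabla V(x)\rangle}\,\d x\le e^{\beta\|\lambda\|^2/2}\int e^{-V(x+\lambda)}\,\d x$, and the last integral is unchanged by the translation $x\mapsto x+\lambda$. An immediate corollary --- apply Cauchy--Schwarz to the integration-by-parts identity $\E_\mu[(\langle v,X\rangle-\E\langle v,X\rangle)\,\langle v,\nabla V(X)\rangle]=\|v\|^2$ and use that $\langle v,\nabla V(X)\rangle$ then has variance at most $\beta\|v\|^2$ --- is a ``reverse Cram\'er--Rao'' inequality $\Cov_\mu\succeq\beta^{-1}\,\mathrm{Id}$, valid with no convexity assumption.

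Next I would produce a lower bound on $\nabla^2\log P_t$ of order $L$, uniform in $t$. Write $a_t=e^{-t}$ and $\sigma_t^2=1-e^{-2t}$. For $t>0$ the density $P_t$ is smooth, and the second-order Tweedie identity reads $\nabla^2\log P_t(x)=-\sigma_t^{-2}\,\mathrm{Id}+a_t^2\sigma_t^{-4}\,\Cov(X_0\mid X_t=x)$. The posterior law of $X_0$ given $X_t=x$ has potential $V(\cdot)+\tfrac{a_t^2}{2\sigma_t^2}\|\cdot\|^2+(\text{affine in }x_0)$, whose Hessian is $\preceq(L+a_t^2/\sigma_t^2)\,\mathrm{Id}$ since $\nabla\log P$ is $L$-Lipschitz --- here only the one-sided bound $\nabla^2 V\preceq L\,\mathrm{Id}$ is used. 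Plugging this into the reverse Cram\'er--Rao bound from the previous step gives $\Cov(X_0\mid X_t=x)\succeq\tfrac{\sigma_t^2}{L\sigma_t^2+a_t^2}\,\mathrm{Id}$ for every $x$, and substituting back yields $\nabla^2\log P_t\succeq-\beta_t\,\mathrm{Id}$ with the explicit constant $\beta_t=\frac{L}{L\sigma_t^2+a_t^2}=\frac{L}{L(1-e^{-2t})+e^{-2t}}$. (This is exactly the OU analogue of the classical fact that heat flow improves semi-log-concavity at a prescribed rate, and could alternatively be read off from that fact via Mehler's formula.)

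To finish, apply the core fact to $\mu=P_t$ with $\beta=\beta_t$: then $\nabla\log P_t(X_t)$ is $\sqrt{\beta_t}$-sub-Gaussian under $P_t$, so we take $L_t:=\beta_t$. Assuming without loss of generality that $L\ge1$ (a Lipschitz constant may always be enlarged), the denominator equals $L-(L-1)e^{-2t}\ge1$, so $L_t=\beta_t\le L\le 2L$; in fact $L_t\in[1,L]$, with $L_t\to L$ as $t\to0$ and $L_t\to1$ as $t\to\infty$, which matches the Gaussian case and shows the estimate is essentially sharp.

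I expect the core fact in the first step to be the crux: it is short, but it is the precise mechanism that turns a local, second-order hypothesis (one-sided Lipschitzness) into control of the \emph{global} exponential moments of the score; everything else is bookkeeping with Gaussian convolutions. The only technical care needed is that $P$ itself need not be twice differentiable, so one applies the core fact to $P_t$ (automatically smooth for $t>0$) and to the posterior $P_{0\mid t}(\cdot\mid x)$ directly --- or mollifies $P$ and passes to the limit --- and one should record the harmless normalization $L\ge1$ used to pass from the sharp constant $\beta_t$ to the stated bound $2L$.
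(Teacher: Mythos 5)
Your proof is correct, and it takes a genuinely different route from the paper's. The paper factors the argument into two lemmas: first, Lipschitz score implies $\sqrt L$-sub-Gaussian score at $t=0$ (via essentially your translation/Taylor "core fact," quoted from the literature); second, and this is the real work, sub-Gaussianity of the \emph{initial score alone} propagates along the OU flow, proved via the equivalence between sub-Gaussian scores and the local gradient-entropy inequality of Altschuler--Chewi, combining a small-$t$ bound (semigroup commutation $\int\nabla f\,\d P_t = e^t\int\nabla Q_tf\,\d P$ plus an entropy decomposition) with a large-$t$ bound (joint convexity of $(a,b)\mapsto\|a\|^2/b$ plus the LGE inequality along the Gaussian channel), yielding $L_t=\min\{Le^{2t},\,1/(1-e^{-2t})\}$. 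You instead exploit the full strength of the Lipschitz hypothesis: second-order Tweedie plus the Cram\'er--Rao lower bound on the posterior covariance gives uniform semi-log-concavity $\nabla^2\log P_t\succeq-\beta_t\,\mathrm{Id}$ with $\beta_t=L/(L(1-e^{-2t})+e^{-2t})$, and then the translation argument applied directly to $P_t$ finishes. Your route is more elementary and self-contained (no LGE machinery), and your constant is strictly sharper --- $\beta_t\le\min\{Le^{2t},1/(1-e^{-2t})\}$ for $L\ge1$, with equality in the Gaussian case --- while the normalization $L\ge1$ you invoke for the final bound $\le 2L$ is also implicitly needed for the paper's constant (and is imposed in \cref{asmp:logLipschitz}). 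What the paper's approach buys in exchange is generality: its propagation lemma (\cref{lem:score_subG}) requires only a sub-Gaussian, not Lipschitz, initial score, which is essential for its other applications (e.g., Gaussian location mixtures, whose scores are sub-Gaussian by the mixture lemma but not Lipschitz); your argument would not cover that case, though it fully proves the statement as posed here.
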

    The above discussion sketches the main steps of the reduction for Item 1: the condition that the score is Lipschitz is used for the transformation from score to integrated score estimation, while the bound on the second moment is used to convert the integrated score to a PAC density estimator. For the proof of Item 2, it suffices to apply Item 1 with $P$ equal to $P_\tau$, since its score is sub-Gaussian with parameter $1/(1-e^{-2\tau})$
    (see \Cref{sec:earlyStop}).

    \subsection{Other related work}\label{ssec:related}

    \paragraph{Score estimation for generation.}
    {There is a vast literature on convergence guarantees for diffusion models, and here we provide a brief discussion on the implications for learning a sampler. The first works that obtained polynomial-time guarantees for generation from general data distributions are~\cite{chen2023sampling, LeeLuTan23}. These works assumed that the score functions are Lipschitz continuous uniformly in time and are learned accurately in $L^2$, and obtained TV guarantees. When the data distribution does not admit a Lipschitz score, one can still obtain TV guarantees for generating from an early stopped distribution with polynomial complexity. The subsequent works of~\cite{DBLP:conf/icml/ChenL023, Ben+24Diffusion, ConDurSil25Diffusion} sharpened the guarantees, replacing the assumption of Lipschitz scores with the assumption that the initial distribution has finite Fisher information relative to the Gaussian. The current state-of-the-art runtime guarantee is~\cite{LiYan25Diffusion}, although there have been extensions in numerous directions, \eg{}, deterministic samplers, low-dimensional adaptation, and parallelization, and we do not survey them all here.}
        
        \paragraph{Score estimation for parameter recovery.}
        Closest to our paper is the work of \citet{koehler2023statistical} that studied the implicit score-matching estimator, as discussed in the introduction.
        To the best of our knowledge, the first appearance of an objective such as~\Cref{def:ddpmMain} for the purpose of point estimation is the work of \citet{shah2023learning},
        {in the context of Gaussian mixture models, which also showed how to algorithmically minimize the DDPM objective (at carefully selected noise levels).}
        We are not aware of general statistical theory for $\DDPM$. Most works studying score estimation in DDPM instead considered estimating the score functions at different times separately (as opposed to $\DDPM$, which finds the value of the parameter that optimizes an objective using all of the scores).
        In particular, a line of work showed that score estimation can achieve minimax rates for density estimation; see~\Cref{sec:smoothDensityEstimation}.
        Finally, we note that variants of~\Cref{infthm} have appeared in the literature, \eg{},~\citet{song2021MLE} showed that the DDPM loss can be pointwise lower bounded in terms of the MLE loss, \citet{DBLP:conf/iclr/ChenLT22} proved an analogous result for the Schr\"odinger bridge, and \citet{li2024score} established essentially the same formula along a slightly different process.
        {Variants of \cref{infthm} also appeared in a line of works aiming at estimating partition functions \cite{doucet2022score,guo2025complexity} and is related to Jarzynski's equality from statistical physics \cite{jarzynski1997nonequilibrium,vaikuntanathan2008escorted,hartmann2017variational}.}

    \paragraph{Computational aspects of score estimation.}
        {Apart from statistical questions regarding score matching, there is increasing interest in \emph{computational aspects of score estimation.} In particular, \citet*{pabbaraju2023provableBenefits} gave an example of an exponential family of distributions such that implicit score matching is computationally efficient to optimize (\ie{}, finding $\wh \theta^{\mathrm{ISM}}_n$ can be done efficiently), and has a comparable statistical efficiency to MLE, while the MLE objective is intractable to optimize using gradient-based methods.} 
        {Meanwhile,~\citet{chen2024learninggeneralgaussianmixtures,gatmiry2024learning} used score estimation to establish new algorithmic results for generating samples from certain families of Gaussian mixtures.}
        
        In terms of lower bounds, 
        \citet{song2024cryptographic} reduced the problem of distinguishing Gaussian pancakes from a standard Gaussian to the problem of score estimation in the $L^2$-norm (with error $1/\sqrt{\log d}$ which is larger -- better -- than the error in our hardness result (\cref{infthm:scoreCrypto})). 
        Since the problem of distinguishing Gaussian pancakes from a Gaussian is cryptographically hard \cite{gupte2022continuous,bruna2021continuous}, this establishes the cryptographic hardness for $L^2$-score estimation for this specific family. 
        {Due to this and the fact that \citet{song2024cryptographic}'s result implies hardness for larger score estimation errors,} the above result is implied by the work of \citet{song2024cryptographic}, however, the techniques {in our work and theirs} are quite different: we obtain the result as a natural application of our general score-to-density framework, while the previous reduction is arguably ad hoc. 

        \paragraph{Other related work.}
    {We discuss further related works specific to each application in the respective sections, and we provide additional discussions in \cref{appendix:relatedwork}.}

    \subsection{Discussion and open problems}\label{ssec:discussion}

        A key contribution of our work is to bridge the extensive literature on score estimation, which has traditionally focused on generative modeling, with the literature on density and parameter estimation. 
        We believe that there is ample room to more thoroughly explore this connection
        and in light of {this perspective}, we identify and leave several open problems.

        \begin{description}
            \item[\textbf{Open problem 1.}] 
                {Using score estimation, is it possible to output a proper density estimator, \ie{}, a function $\wh P$ that is non-negative and integrates to $1$?}
        \end{description}

        \noindent In general, it seems intractable to evaluate $\int\wh P$ to normalize our estimator, and even if we could compute the integral, we cannot show that it is close to $1$ (hence, normalization may destroy the PAC density estimation guarantee). For specific families, such as mixtures of Gaussians, it may be more feasible to post-process our estimator to output a legitimate density.

        \begin{description}
            \item[\textbf{Open problem 2.}] 
                Our work focuses on a particular generative process, namely the Ornstein--Uhlenbeck process over finite-dimensional Euclidean spaces. 
                Are there analogous results in other domains, for example, in discrete domains, infinite-dimensional spaces, or manifolds, when the noising process is suitably adapted to the domain?
        \end{description}
        
    \noindent It is well-established that generative modeling via score matching extends to other (\eg{}, discrete) domains and processes. Generalizing our results to such settings could yield new implications for methods such as estimation via pseudo-likelihood~\cite{koehler2024efficiently}.
        
        \begin{description}
            \item[\textbf{Open problem 3.}] 
                Can score estimation lead to algorithms for PAC density estimation of well-conditioned Gaussian mixture models (in the sense of \citet{chen2024learninggeneralgaussianmixtures}) with minimum weight $\geq 1/\poly(k)$ that run in time singly exponential in $k$? 
        \end{description}

    \noindent Roughly speaking, the guarantee of~\cite{chen2024learninggeneralgaussianmixtures} is that learning the score of a well-conditioned (non-spherical) GMM with $k$ components to accuracy $\zeta$ can be done with $d^{\,\poly(k/\zeta)}$ samples and compute. Unfortunately, our reduction requires taking $\zeta \asymp \nfrac{\eps}{\sqrt d}$, which trivializes the runtime guarantee, so it seems that new ideas are needed.
        
        \begin{description}
            \item[\textbf{Open problem 4.}] 
                Can one boost the coverage probability $\delta$ in the definition of PAC density estimation?
        \end{description}
        
        \noindent A weakness of our reduction is that through the use of Markov's inequality in our $(\eps,\delta)$-PAC density estimation guarantee, $\eps$ scales polynomially with $1/\delta$, rather than with $\log(1/\delta)$. This could perhaps be mitigated by assuming access to a stronger score estimation oracle, or via a generic ``boosting'' procedure in analogy to classical learning theory. We believe the latter is unlikely to exist, but it would be useful to formalize this.

        \begin{description}
            \item[\textbf{Open problem 5.}] 
            We now know that both density estimation and score estimation are cryptographically hard for Gaussian mixtures with many components. Is it also cryptographically hard to learn a sampler?
        \end{description}
        
        \noindent More broadly, it would be interesting to prove computational hardness results for score estimation for other natural distributions, {utilizing} cryptographic tools {different} from Gaussian pancakes.

        \subsection{Notation}

        We focus on continuous distributions over $\R^d$ that are absolutely continuous with respect to the Lebesgue measure.
        Given a distribution $P$, for each point $x\in \R^d$, we abuse notation by using $P(x)$ to denote its Lebesgue density evaluated at $x$.
        We use standard definitions of distances and divergences between distributions.
        Namely, for two distributions $P$ and $Q$ over $\R^d$,
        {the total variation distance between $P$ and $Q$ is $\tv{P}{Q}\coloneqq  (1/2) \int\abs{\d P-\d Q}$,}
        the KL divergence of $P$ with respect to $Q$ is 
        $\KL(P \mmid Q) \coloneqq \int \log \frac{\d P}{\d Q}\,\d P$ (provided $P \ll Q$), and
        the $2$-Wasserstein distance between $P$ and $Q$ is $W_2(P, Q)=\inf_{\gamma \in \cC(\mu,\nu)} \,(\int \|x-y\|^2\,\gamma(\d x, \d y))^{1/2}$, where the infimum is over the set $\cC(\mu,\nu)$ of all couplings of $P$ and $Q$.
        We use $f \lesssim g$ to denote $f=O(g)$, $f\gtrsim g$ to denote $f=\Omega(g)$, and $f\asymp g$ to denote $f=\Theta(g)$.
        {We also use the notation $f = \widetilde O(g)$ to hide polylogarithmic factors, namely, $f = O(g \log^{O(1)} g)$.} We also use $\land$ and $\vee$ to denote $\min$ and $\max$ respectively. 

    We say that a random vector $X$ in $\R^d$ is $\sigma^2$-sub-Gaussian if for all vectors $v\in\R^d$, $\langle v, X\rangle$ is a $\sigma^2\,\|v\|^2$-sub-Gaussian random variable ({see~\cite[Definition 2.1]{Wai19Stats}}).
    {See \cref{sec:facts:subGaussian} for further discussion of sub-Gaussianity.}
        
    Given a probability measure $P$ over $\R^d$, we let $P_t$ denote the law at time $t$ of the Ornstein--Uhlenbeck {(OU)} process started at $P$, and $Q_{t|0}$ 
    the transition density of the OU process.

\section{Main tools}\label{sec:framework}

    {In this section, we present the formal statements and proofs of the two tools crucial to our applications establishing connections between score estimation and different notions of distribution learning.}

\subsection{{Connection between log-likelihood and DDPM score estimation} (Lemma~\ref{lem:identity})}\label{sec:identity_pf}

    {First, we give a proof of  \Cref{lem:identity} that establishes a precise link between the log-likelihood and a certain integrated score matching objective.}
        
    Recall that given a probability measure $P$ over $\R^d$, we let $P_t$ denote the law of the Ornstein--Uhlenbeck {(OU)} process started {from $P_0=P$} at time $t \in  [0,T]$.
    We further denote by $Q_{t|0}$ the transition density of {at time $t$}.

\begin{proof}[Proof of \cref{lem:identity}]
    Let ${(B_t)}_{t\ge 0}$ be standard Brownian motion and let ${(X_t)}_{t\ge 0}$ denote the OU process started at $X_0 = x_0$.
    By parabolic regularity (or direct computation with the OU semigroup), the mapping $(t,x) \mapsto P_t(x)$ is strictly positive and smooth on $\R_{>0} \times \R^d$, with $P_t \to P$ pointwise as $t\searrow 0$.
    Therefore, the Fokker{--}Planck equation implies
    \begin{align*}
        \partial_t \log P_{t}
        &= \frac{\Delta P_{t} + \dive(P_{t}\, x_t)}{P_{t}}
        = \Delta \log P_{t} + \|\nabla \log P_{t}\|^2 + d + \langle \nabla \log P_{t}, x_t\rangle\,.
    \end{align*}
    By It\^o's formula,
    \begin{align*}
        &\d \log P_{t}(X_t) \\
        &\qquad = \bigl\{\partial_t \log P_{t}(X_t) - \langle \nabla \log P_{t}(X_t), X_t\rangle + \Delta \log P_{t}(X_t) \bigr\}\,\d t +\sqrt 2\, \langle \nabla \log P_{t}(X_t),\d B_t\rangle \\
        &\qquad = \bigl\{ \|\nabla \log P_{t}(X_t)\|^2 + 2\,\Delta\log P_{t}(X_t) + d\bigr\}\,\d t+\sqrt 2\, \langle \nabla \log P_{t}(X_t),\d B_t\rangle\,.
    \end{align*}
    Integrating over time and taking expectations, for $\varepsilon > 0$,
    \begin{align*}
        \E\bigl[\log P_T(X_T) - \log P_\varepsilon(X_\varepsilon)\bigr] = d\,(T-\varepsilon) + \int_\varepsilon^T \E\bigl\{ \|\nabla \log P_{t}(X_t)\|^2 + 2\,\Delta\log P_{t}(X_t)\bigr\}\,\d t\,,
    \end{align*}
    where we used the fact that {${\{\int_\varepsilon^t \langle \nabla \log P_s(X_s),\d B_s\rangle\}}_{t\in [\varepsilon,T]}$ is a martingale which, in turn, can be deduced because $\E[\norm{\nabla \log P_t(X_t)}^2] = O(\nfrac{1}{t^2})$~\citep[cf.][]{OttVil01Comment}.}
 On the other hand, for any $t > 0$, we note that 
    \begin{align*}
        \int \langle \nabla \log P_{t}(x_t), \nabla \log Q_{t|0}(x_t \mid x_0)\rangle \,Q_{t|0}(\d x_t \mid x_0)
        &= \int \langle \nabla \log P_{t}(x_t), \nabla Q_{t|0}(x_t \mid x_0)\rangle \,\d x_t \\
        &= -\int \Delta \log P_{t}(x_t)\, Q_{t|0}(\d x_t \mid x_0)\,.
    \end{align*}
    Substituting this in and taking $\varepsilon\searrow 0$ completes the proof.
\end{proof}

\subsection{{Score estimation implies PAC density estimation}}\label{sec:reductionScore}

    A long line of works shows that, under minimal regularity assumptions, access to a score estimation oracle is sufficient for learning to sample; see~\Cref{appendix:relatedwork} for a discussion of {this} literature.
    However, the precise connection between score estimation and density estimation remains elusive.    
    {Next,} we prove that access to a score estimation oracle is sufficient for PAC density estimation, in the sense of~\Cref{def:PACdensityEstimation}, {under essentially the weakest regularity assumptions as required for generation obtained by the above line of works}.

    \subsubsection{Relevant oracles}

        We begin by introducing the two oracles relevant to our reduction.
        The first oracle formalizes our notion of score estimation.
   
    \begin{definition}
    [Score estimation oracle]\label{def:sxc}\label{def:scoreEstimation}
        A score estimation oracle for a density $P$ on $\R^d$ is a primitive that receives as inputs a time $t\geq 0$ and a point $x_t\in\R^d$, and outputs $\hat s_t(x_t) \in\R^d$.
        {The error of the oracle with early stopping $\tau > 0$ {and terminal time $T$} is defined to be $\eps_*^2 \deq \int_\tau^T \varepsilon_t^2\,\d t$, where for each $t\in [\tau,T]$,}
        \[
            \int 
                \norm{
                    \wh{s}_t(x_t) - \nabla\log{P_t}(x_t)
                }^2
                P_t(\d x_t)
                \leq 
                \eps_t^2
                \,.
        \]
    \end{definition}
    {This definition only requires  good score estimation for times bounded away from 0 (\ie{}, $t\geq \tau$).}
   {The latter is particularly useful since} the regularity of the score function typically degrades as $t\searrow 0$, so it becomes more difficult to estimate the score at small times.
    {Indeed, early} stopping is a commonly used device in the literature to circumvent this issue (see, \eg{},~\cite{chen2023sampling}).
    {This weakening only makes the  oracle easier to implement.}
    
    Our reduction from PAC density estimation {to score estimation} passes through the following intermediate oracle.
    
    \begin{definition}[Integrated score estimation oracle]\label{def:integratedOracle}
        An integrated score estimation oracle for a density $P$ on $\R^d$ is a primitive that receives as inputs a point $x_0 \in \R^d$ and a terminal time $T$, and outputs a {(possibly random)} value $\hat v(x_0)\in\R$. 
        {The oracle is said to have error $\eps$ if}
        \begin{align*}
            \int \E|\wh v(x_0) - v(x_0)|\,P(\d x_0) \le \varepsilon\,,
        \end{align*}
        where $\E$ is over the randomness of $\hat v$ and
        \[
            v(x_0)\deq \int_{0}^T\int 
                \inbrace{
                    \norm{\nabla \log{P_t}(x_t)}^2
                    - 
                    2\inangle{
                        \nabla{\log{P_t}(x_t)}, 
                        \nabla{\log{Q_{t|0}(x_t \mid x_0)}}
                    }
                }\, Q_{t|0}(\d x_t\mid x_0)\, \d t\,.
        \]
    \end{definition}
    The motivation for this oracle comes from \Cref{lem:identity}, which implies that for any point $x_0$, the output $\wh v(x_0)$ of the integrated score estimation oracle is close to the target log-density $-\log P(x_0)$.
    {Note that this oracle only requires a bound on the average error across the draw of the initial sample $x_0\sim P$.}
    {In contrast to the score estimation oracle, the above oracle does not allow early stopping.
    Nevertheless, we will show that under a mild regularity assumption (see \cref{ssec:score_sg}), the score estimation oracle \textit{with} early stopping is sufficient to implement the above integrated oracle.}
   
    \paragraph{Outline of this section.}
    First, in~\Cref{ssec:score_sg}, {we prove a result about the sub-Gaussianity of the score along the OU process. This result is a key technical ingredient in our subsequent reduction and likely of independent interest.}
    Next, in \cref{ssec:score_to_integrated}, we show that under mild assumptions on $P$, the score estimation oracle can be efficiently transformed to an integrated score estimation oracle (\cref{thm:scoreEstimationtoIntegrated}) {in polynomial time with polynomially many calls to the score estimation oracle}. 
    Then, in \cref{sec:computational:integrated_to_density}, we show how to transform the integrated score estimation oracle to a PAC density estimation oracle using \cref{lem:identity}.

\subsubsection{Sub-Gaussianity of the score}\label{ssec:score_sg}

    The key technical ingredient in the subsequent reduction is the following lemma, which ensures that the score function remains sub-Gaussian along the OU process.
    It builds upon diffusion estimates recently developed in~\cite{AltChe23SCII}.
    
\begin{lemma}[Sub-Gaussianity of the score]\label{lem:score_subG}
    Assume that for $X_0 \sim P_0$, the score $\nabla \log P(X_0)$ is $\sqrt L$-sub-Gaussian.
    Then, for all $t \ge 0$ and $X_t \sim P_t$, $\nabla \log P_t(X_t)$ is $\sqrt{L_t}$-sub-Gaussian, where
    \begin{align*}
        L_t \deq \min\Bigl\{L\exp(2t),\, \frac{1}{1-\exp(-2t)}\Bigr\}
        \le 2L\,.
    \end{align*}
\end{lemma}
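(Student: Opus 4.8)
The plan is to prove the bound on $L_t$ via two complementary routes, taking the minimum at the end. The second bound, $L_t \le \frac{1}{1-e^{-2t}}$, is the easy one: since $P_t = P_0 * \normal{0}{(1-e^{-2t})\mathrm{Id}}$, the density $P_t$ is a Gaussian convolution, and there is a standard identity expressing $\nabla \log P_t(x_t)$ as a (negative, rescaled) conditional expectation of the injected noise, namely $\nabla \log P_t(x_t) = -\frac{1}{1-e^{-2t}}\,\E[\,e^{-t}X_0 - x_t \mid X_t = x_t\,]$ up to sign conventions, equivalently $\nabla \log P_t = \frac{e^{-t}\,\E[X_0\mid X_t] - x_t}{1-e^{-2t}}$. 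The cleaner route: the score of a Gaussian convolution with variance $s^2=1-e^{-2t}$ satisfies $\nabla \log P_t(X_t) = \frac{1}{s^2}\,\E[\,G\mid X_t\,]$ where $G\sim\normal{0}{s^2\mathrm{Id}}$ is the injected noise; then $\langle v,\nabla\log P_t(X_t)\rangle = \frac{1}{s^2}\E[\langle v,G\rangle\mid X_t]$, and since conditional expectation is an $L^2$-contraction and in fact a contraction of sub-Gaussian norms (by Jensen applied to the MGF), $\langle v,\nabla\log P_t(X_t)\rangle$ is $\frac{1}{s^2}\cdot\|v\|$-sub-Gaussian in the $\psi_2$ sense... but I must be careful: $\langle v, G\rangle$ has variance proxy $s^2\|v\|^2$, so after dividing by $s^2$ and contracting, I get variance proxy $\frac{1}{s^2}\|v\|^2 = \frac{1}{1-e^{-2t}}\|v\|^2$. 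That gives the second bound directly and cleanly.

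For the first bound, $L_t \le L\,e^{2t}$, I would track how sub-Gaussianity of the score propagates along the OU flow, which is exactly the kind of estimate developed in \cite{AltChe23SCII}. The key structural fact is a differential inequality (or an exact evolution) for the score along the process. Writing $s_t \deq \nabla\log P_t$, the score obeys a (viscous) transport-type PDE along the OU semigroup, and there is a known commutation/Bakry–Émery-style identity: the conditional law of $X_0$ given $X_t$, call it $P_{0|t}$, satisfies a Poincaré/log-Sobolev-type inequality whose constant degrades by a controlled factor; and $\nabla\log P_t(X_t)$ is a smoothed pushforward of $\nabla\log P_0(X_0)$. Concretely, I would use the identity $\nabla \log P_t(x_t) = e^{-t}\,\E_{P_{0|t}}[\nabla\log P_0(X_0)\mid X_t = x_t]$ — this is the OU analogue of the heat-flow identity that the score at time $t$ is the conditional expectation of the score at time $0$, damped by $e^{-t}$ (it follows by differentiating under the integral in $P_t(x_t)=\int Q_{t|0}(x_t\mid x_0)P_0(dx_0)$ and using $\nabla_{x_t}\log Q_{t|0}(x_t\mid x_0) = -\frac{x_t - e^{-t}x_0}{1-e^{-2t}}$, then an integration-by-parts in $x_0$). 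Then for any unit $v$, $\langle v,\nabla\log P_t(X_t)\rangle = e^{-t}\,\E[\langle v,\nabla\log P_0(X_0)\rangle\mid X_t]$, which is a conditional expectation of an $L$-sub-Gaussian variable, hence $L$-sub-Gaussian, times $e^{-t}$ — giving variance proxy $L e^{-2t}\|v\|^2$. That's a bound of $L e^{-2t}$, which is even stronger than $Le^{2t}$ for $t>0$, so something is off and I need to recheck the damping direction.

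The resolution — and the step I expect to be the main obstacle — is getting the correct evolution identity and its direction. The identity "$\nabla\log P_t = e^{-t}\E[\nabla\log P_0\mid X_t]$" is not generally true for the OU process in that simple form because the conditional density $P_{0|t}(x_0\mid x_t)$ itself depends on $x_t$, so differentiating in $x_t$ produces an extra term. The correct statement (see \cite{AltChe23SCII}) is closer to: $\nabla\log P_t(x_t)$ equals $\frac{-1}{1-e^{-2t}}(x_t - e^{-t}\E[X_0\mid X_t=x_t])$, and by the Cramér–Rao / de Bruijn-type manipulation, $\Cov(\nabla\log P_t(X_t))$ and the sub-Gaussian constant are controlled by the conditional covariance $\Cov(X_0\mid X_t)$, for which one has the contraction estimate $\Cov(X_0\mid X_t) \preceq$ (something like) $e^{2t}\cdot$(Poincaré constant of $P_{0|t}$) — and when $\nabla\log P_0$ is $\sqrt L$-sub-Gaussian, a reverse-type bound gives the conditional fluctuations of the score are at most $Le^{2t}$. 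So the plan is: (i) derive the exact representation of $\nabla\log P_t$ in terms of conditional expectations of $X_0$ and of $\nabla\log P_0(X_0)$; (ii) invoke the diffusion/coupling estimates of \cite{AltChe23SCII} to bound, for each unit $v$, the MGF of $\langle v, \nabla\log P_t(X_t)\rangle - \E\langle v,\nabla\log P_t(X_t)\rangle$, obtaining variance proxy $\min\{Le^{2t}, (1-e^{-2t})^{-1}\}$; and (iii) check the centered-vs-uncentered convention in the paper's definition of sub-Gaussianity so the bound is stated correctly. The inequality $L_t \le 2L$ then follows since $Le^{2t}\le 2L$ for $t\le\frac12\log 2$ while $(1-e^{-2t})^{-1}\le 2\le 2L$ (WLOG $L\ge 1$, or handle small $L$ separately) for $t\ge\frac12\log 2$; the two regimes overlap so the minimum is always $\le 2L$.
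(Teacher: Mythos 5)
Your argument for the bound $\nfrac{1}{(1-e^{-2t})}$ is correct and clean: writing $\nabla \log P_t(X_t)$ as a conditional expectation of (a multiple of) the injected Gaussian noise and contracting the MGF via Jensen is exactly the mechanism behind the paper's mixture lemmas (Lemmas~\ref{lem:score_mixture} and~\ref{lem:subG_score_mixture}), applied to the mixture $P_t = \int Q_{t|0}(\cdot\mid x_0)\,P(\d x_0)$.

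For the bound $L e^{2t}$, however, there is a genuine gap, and it stems from a sign error that you then misdiagnose. The identity you abandon is in fact true — but with amplification, not damping: differentiating $P_t(x_t)=\int Q_{t|0}(x_t\mid x_0)\,P_0(x_0)\,\d x_0$ and using $\nabla_{x_t}Q_{t|0}(x_t\mid x_0) = -e^{t}\,\nabla_{x_0}Q_{t|0}(x_t\mid x_0)$ (the chain-rule factor $e^{-t}$ sits in the $x_0$-gradient) followed by integration by parts in $x_0$ gives
\begin{align*}
    \nabla \log P_t(x_t) \;=\; e^{t}\,\E\bigl[\nabla \log P_0(X_0) \bigm\vert X_t = x_t\bigr]\,,
\end{align*}
as one can confirm on a Gaussian example. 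With this corrected identity, the same Jensen argument you used for the other bound immediately yields the variance proxy $L e^{2t}\,\|v\|^2$, completing the lemma elementarily (the score is centered, so the paper's centered sub-Gaussian convention is fine, and $L_t \le L+1 \le 2L$ once $L\ge 1$ as in Assumption~\ref{asmp:logLipschitz}). As written, though, your proposal declares this identity false ("the conditional density depends on $x_t$, so differentiating produces an extra term" — it does not, after the integration by parts you yourself sketch) and replaces it with a vague appeal to conditional covariances, Cram\'er--Rao/de Bruijn manipulations, and unspecified estimates from the cited reference; bounding $\Cov(X_0\mid X_t)$ or the covariance of the score does not give sub-Gaussianity, so no proof of the $Le^{2t}$ bound is actually supplied. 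For comparison, the paper proves the lemma by a different route: it invokes the equivalence between sub-Gaussianity of the score and the local gradient-entropy inequality (Theorem 1.2 of the cited work) and verifies that inequality for $P_t$ with constant $\min\{Le^{2t},\,\nfrac{1}{(1-e^{-2t})}\}$ via two semigroup computations — an entropy decomposition for small $t$, and joint convexity of $(a,b)\mapsto \|a\|^2/b$ together with the gradient-entropy inequality along the OU semigroup for large $t$. Your (corrected) conditional-expectation route would be a simpler, self-contained alternative, but the step as you wrote it fails.
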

\begin{proof}
    The main fact that we use is that by~\cite[Theorem 1.2]{AltChe23SCII}, the score of $P$ is $\sqrt L$-sub-Gaussian under $P$ if and only if $P$ satisfies the local gradient-entropy (LGE) inequality with constant $L$:
    \begin{align*}
        \frac{\|\int \nabla f\,\d P\|^2}{\int f\,\d P} \le 2L\, \Ent_P(f) \qquad\text{for all smooth, compactly supported}~f\colon \R^d\to\R\,.
    \end{align*}
    Actually,~\cite[Theorem 1.2]{AltChe23SCII} only states one direction of this implication (namely, LGE implies sub-Gaussianity of the score), but one can see from the proof that it is an equivalence.
    In light of this, our goal is, therefore, to verify that $P_t$ satisfies the LGE inequality 
    with parameter $L_t$, where
    \[
    L_t = \min \Bigl\{ L \exp(2t),\, \frac{1}{1-\exp(-2t)} \Bigr\}\,.
    \]

    We combine two different bounds.
    The first bound is effective for small $t$.
    Let $f$ be such that $\int f\,\d P_t = 1$ and let ${(Q_t)}_{t\ge 0}$ denote the OU semigroup.
    Then,
    \begin{align*}
        \int \nabla f\,\d P_t
        &= \int Q_t\nabla f\,\d P
        = \exp(t)\int \nabla Q_t f\,\d P\,.
    \end{align*}
    By applying the LGE inequality for $P$, since $\int Q_t f\,\d P = \int f\,\d P_t = 1$,
    \begin{align*}
        \Bigl\lVert \int \nabla f\,\d P_t\Bigr\rVert^2
        &\le 2L\exp(2t) \,\Ent_P(Q_t f)
        \le 2L\exp(2t) \,\Ent_{P_t}(f)\,,
    \end{align*}
    where the last inequality follows from the entropy decomposition
    \begin{align*}
        \Ent_{P_t}(f)
        &= \Ent_P(Q_t f) + \int \Ent_{Q_{t|0}(\cdot \mid x_0)}(f)\,P(\d x_0)\,.
    \end{align*}
    Next, we consider a bound for large $t$.
    Since $(a,b) \mapsto \|a\|^2/b$ is jointly convex,
    \begin{align*}
        \frac{\|\int \nabla f\,\d P_t\|^2}{\int f\,\d P_t}
        &\le \int \frac{\|Q_t \nabla f\|^2}{Q_t f}\,\d P
        \le \frac{2}{1-\exp(-2t)} \int \Ent_{Q_{t|0}(\cdot \mid x_0)}(f)\,P(\d x_0) \\[0.25em]
        &\le \frac{2}{1-\exp(-2t)} \,\Ent_{P_t}(f)\,,
    \end{align*}
    where we applied the LGE inequality along the OU semigroup (see~\cite[Theorem 1.1]{AltChe23SCII}).

    Putting the two cases together, we have shown that
    \begin{align*}
        \frac{\|\int \nabla f\,\d P_t\|^2}{\int f\,\d P_t}
        &\le 2\min\Bigl\{L\exp(2t),\, \frac{1}{1-\exp(-2t)}\Bigr\}\,\Ent_{P_t}(f)
        \le 4L\,\Ent_{P_t}(f)\,. \qedhere
    \end{align*}
\end{proof}

In order to apply~\Cref{lem:score_subG} for our purposes, we must verify that the initial score is sub-Gaussian.
Toward that end, we provide two tools for checking this assumption.

\begin{lemma}[Lipschitz score implies sub-Gaussian score]\label{lem:lip_score_implies_subG}
    Let $P$ be a probability distribution over $\R^d$ such that the score $\nabla \log P$ is $L$-Lipschitz.
    Then, $\nabla \log P$ is $\sqrt L$-sub-Gaussian under $P$.
\end{lemma}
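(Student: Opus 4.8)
The plan is to prove the sharper quantitative statement that for every $v\in\R^d$ the mean‑zero scalar $\langle v,\nabla\log P(X)\rangle$, with $X\sim P$, satisfies $\E_P[\exp(\langle v,\nabla\log P\rangle)]\le\exp(\tfrac12 L\|v\|^2)$; by the paper's convention for sub‑Gaussian random vectors (after rescaling $v$) this is exactly the claim, and the centering is legitimate since $\E_P[\nabla\log P]=\int\nabla P=0$. By homogeneity I would fix a unit vector $v$ and study the log‑moment generating function $\psi(s)\coloneqq\log\E_P[\exp(s\langle v,\nabla\log P\rangle)]$ for $s\in\R$, with the goal $\psi(s)\le\tfrac12 L s^2$.

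The heart of the argument is a self‑referential identity for $\psi'$. Differentiating under the integral sign, $\psi'(s)=\E_{\mu_s}[\langle v,\nabla\log P\rangle]$, where $\mu_s$ is the tilted probability measure with density proportional to $\exp(s\langle v,\nabla\log P\rangle)\,P$. The key observation is that the score of this tilt reads off the Hessian of $\log P$: since $\nabla\langle v,\nabla\log P\rangle=\nabla^2\log P\,v$, we get $\nabla\log\mu_s=\nabla\log P+s\,\nabla^2\log P\,v$. Integration by parts (the mean of the score of any sufficiently decaying density vanishes) gives $\E_{\mu_s}[\nabla\log\mu_s]=0$, hence $\E_{\mu_s}[\nabla\log P]=-s\,\E_{\mu_s}[\nabla^2\log P\,v]$, and therefore
\[
    \psi'(s)=-s\,\E_{\mu_s}\!\bigl[\langle v,\nabla^2\log P\,v\rangle\bigr]\,.
\]
Because $\nabla\log P$ is $L$‑Lipschitz we have $\|\nabla^2\log P\|_{\mathrm{op}}\le L$ almost everywhere, so $|\langle v,\nabla^2\log P\,v\rangle|\le L$ and consequently $|\psi'(s)|\le L|s|$. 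Since $\psi(0)=0$, integrating gives $\psi(s)=\int_0^s\psi'(u)\,\d u$, whence $|\psi(s)|\le\tfrac12 Ls^2$, which is the desired bound. (Equivalently one can run this once in the full vector variable $w$: with $F(w)\coloneqq\log\E_P[\exp(\langle w,\nabla\log P\rangle)]$ one finds $\nabla F(w)=-A(w)\,w$ for a symmetric matrix $A(w)=\E_{\mu_w}[\nabla^2\log P]$ with $\|A(w)\|_{\mathrm{op}}\le L$, and integrating along the segment $[0,w]$ yields $F(w)\le\tfrac12 L\|w\|^2$.)

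To make this rigorous I would first reduce to the case where $P$ is smooth with enough tail decay to justify both the interchange of differentiation and integration and the vanishing of boundary terms in the integration by parts, e.g.\ by a standard mollification argument, noting that one only ever uses the almost‑everywhere Hessian bound $\|\nabla^2\log P\|_{\mathrm{op}}\le L$, which holds by Rademacher's theorem together with symmetry of the a.e.\ Jacobian of a Lipschitz gradient field. A mild point to address is that $\psi(s)$ could a priori be $+\infty$; however, on the open interval around $0$ where $\psi$ is finite the bound $|\psi'|\le L|\cdot|$ forces $\psi\le\tfrac12 L s^2$ there, and a monotone‑convergence/continuity argument then propagates both finiteness and the bound to all of $\R$. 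I expect the main obstacle to be precisely discharging these integrability and boundary‑term technicalities cleanly rather than the computation itself, which is short once one spots the tilt‑and‑integrate‑by‑parts device and uses only $\|\nabla^2\log P\|_{\mathrm{op}}\le L$, with no appeal to log‑concavity or to any moment assumption on $P$.
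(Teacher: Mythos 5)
The paper itself does not prove this lemma from first principles; it simply cites \cite{Neg22Thesis} and \cite[Remark 5.4]{AltChe23SCII}, where the argument is a one-line translation trick: since $\nabla\log P$ is $L$-Lipschitz, $\log P(x+v)\ge \log P(x)+\langle\nabla\log P(x),v\rangle-\tfrac{L}{2}\|v\|^2$ for all $x,v$, hence
\[
1=\int P(x+v)\,\d x \;\ge\; e^{-L\|v\|^2/2}\int e^{\langle \nabla\log P(x),v\rangle}\,P(x)\,\d x\,,
\]
which is exactly the claimed MGF bound (and $\E_P[\nabla\log P]=0$ takes care of centering). Your route is genuinely different: a Herbst-style tilting argument in which $\psi'(s)$ is rewritten, via $\E_{\mu_s}[\nabla\log\mu_s]=0$ and $\nabla\langle v,\nabla\log P\rangle=\nabla^2\log P\,v$, as $-s\,\E_{\mu_s}[\langle v,\nabla^2\log P\,v\rangle]$, and the a.e.\ Hessian bound then gives $|\psi'(s)|\le L|s|$. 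This core identity is correct, uses only $\|\nabla^2\log P\|_{\mathrm{op}}\le L$, and recovers the same constant; on the interior of the interval where $\psi$ is finite, the interchange of differentiation and integration and the integration by parts can indeed be justified (there $\mu_s$ is a $W^{1,1}$ density, so its score has zero mean).

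The gaps are in the two steps you defer. First, ``a standard mollification argument'' is not available here: convolving $P$ with a mollifier (or truncating it) does not preserve the hypothesis that $\nabla\log P$ is $L$-Lipschitz --- the score of a convolution is a conditional expectation of scores (cf.\ \cref{lem:score_mixture}), and controlling its Lipschitz constant is precisely the kind of nontrivial estimate that \cref{lem:score_subG} needs semigroup machinery for; so the reduction to ``smooth $P$ with tail decay'' needs an actual argument. Second, and more seriously, the finiteness issue is not a mild point, and the proposed fix does not work as stated: your differential inequality only runs on the open interval of finiteness of $\psi$, Fatou extends the bound $\psi(s)\le\tfrac12Ls^2$ to the closed endpoint of that interval, but no monotone-convergence or continuity argument propagates \emph{finiteness beyond} the endpoint --- a convex log-MGF can perfectly well be finite on $[0,s_+]$ and identically $+\infty$ afterwards, with left derivative blowing up at $s_+$, so the ODE cannot be continued past it. A priori finiteness of $\E_P[e^{s\langle v,\nabla\log P\rangle}]$ for all $s$ is exactly the nontrivial content of the lemma (the hypothesis does not even guarantee exponential moments of $\|X\|$, only of the score), and in your scheme it is effectively assumed. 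The cleanest repair is the translation inequality above, which bounds the integral with no a priori integrability; once you have it, however, the tilting argument becomes redundant, so if you want to keep your route you should present the translation bound as the a priori estimate that closes it.
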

\begin{proof}
    This fact was established in~\cite{Neg22Thesis}; the simple argument is reproduced as~\cite[Remark 5.4]{AltChe23SCII}.
    Alternatively, it follows from~\cite[Corollary 5.3 and Theorem 1.2]{AltChe23SCII}.
\end{proof}

\begin{lemma}[Score of a mixture]\label{lem:score_mixture}
    Let $\mu$ be a probability measure over a space $\cX$, and let $P$ be a Markov kernel from $\cX$ to $\R^d$.
    Let $X\sim \mu$ and conditionally on $X$, let $Y \sim P(X,\cdot)$; thus, the marginal law of $Y$ is the mixture $\mu P$.
    Then, the score of $\mu P$ can be expressed as
    \begin{align*}
        \nabla \log \mu P(y)
        &= \E[\nabla \log P(X,\cdot) \mid Y=y]\,.
    \end{align*}
\end{lemma}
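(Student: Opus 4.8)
The plan is to compute $\nabla \log \mu P(y)$ directly from the definition of the mixture density and recognize the resulting expression as a conditional expectation. First I would write the density of $\mu P$ at a point $y$ as
\[
    \mu P(y) = \int_{\cX} P(x, y)\, \mu(\d x)\,,
\]
where I abuse notation by writing $P(x,\cdot)$ both for the kernel and for its Lebesgue density on $\R^d$. Differentiating under the integral sign in $y$ (justified by standard dominated-convergence hypotheses, which I would assume implicitly as part of the regularity ensuring the scores exist), I get
\[
    \nabla_y \,\mu P(y) = \int_{\cX} \nabla_y P(x, y)\, \mu(\d x) = \int_{\cX} P(x,y)\, \nabla_y \log P(x,y)\, \mu(\d x)\,,
\]
using $\nabla_y P(x,y) = P(x,y)\,\nabla_y \log P(x,y)$ on the set where $P(x,y) > 0$.

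Next I would divide by $\mu P(y)$ to obtain the score:
\[
    \nabla \log \mu P(y) = \frac{\nabla_y \mu P(y)}{\mu P(y)} = \int_{\cX} \frac{P(x,y)}{\mu P(y)}\, \nabla_y \log P(x,y)\, \mu(\d x)\,.
\]
The key observation is that $x \mapsto \nfrac{P(x,y)\,\mu(\d x)}{\mu P(y)}$ is precisely the posterior law of $X$ given $Y = y$ (Bayes' rule): the joint law of $(X,Y)$ has density $P(x,y)$ with respect to $\mu \otimes \mathrm{Leb}$, so conditioning on $Y=y$ reweights $\mu$ by $P(\cdot, y)$ and normalizes by the marginal $\mu P(y)$. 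Therefore the integral above is exactly $\E[\nabla \log P(X,\cdot)(y) \mid Y = y]$, which is the claimed identity.

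There is no real obstacle here; the statement is essentially a restatement of Bayes' rule combined with differentiation under the integral. The only point requiring a modicum of care is the interchange of gradient and integral, which holds under the same mild smoothness and integrability conditions already implicitly in force throughout the paper (continuity of densities, finite moments), so I would simply remark on it rather than belabor it. One could alternatively present the argument purely in terms of measures without invoking densities, but the density-level computation is the cleanest exposition.
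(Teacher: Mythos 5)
Your proof is correct and follows exactly the same route as the paper: write $\mu P(y)=\int P(x,y)\,\mu(\d x)$, differentiate under the integral using $\nabla_y P = P\,\nabla_y\log P$, divide by $\mu P(y)$, and recognize the reweighted measure as the posterior of $X$ given $Y=y$. The paper states this computation even more tersely, leaving the interchange of gradient and integral implicit just as you suggest.
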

\begin{proof}
    Since
    \begin{align*}
        \mu P(y)
        &= \int P(x,y)\,\mu(\d x)\,,
    \end{align*}
    then
    \begin{align*}
        \nabla \log \mu P(y)
        &= \frac{\int \nabla_y \log P(x,y)\,P(x,y)\,\mu(\d x)}{\int P(x,y)\,\mu(\d x)}
        = \E[\nabla \log P(X,\cdot) \mid Y=y]\,. \qedhere
    \end{align*}
\end{proof}

\begin{lemma}[Sub-Gaussianity of the score of a mixture]\label{lem:subG_score_mixture}
    In the setting of~\Cref{lem:score_mixture}, suppose that for each $x\in\cX$, $\nabla \log P(x,\cdot)$ is $\sigma^2$-sub-Gaussian under $P(x,\cdot)$.
    Then, $\nabla \log \mu P$ is also $\sigma^2$-sub-Gaussian under $\mu P$.
\end{lemma}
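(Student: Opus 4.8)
The goal is to show that if each conditional score $\nabla \log P(x,\cdot)$ is $\sigma^2$-sub-Gaussian under $P(x,\cdot)$, then the mixture score $\nabla \log \mu P$ is $\sigma^2$-sub-Gaussian under $\mu P$. The key structural fact we already have is \Cref{lem:score_mixture}, which identifies the mixture score as a conditional expectation: $\nabla \log \mu P(Y) = \E[\nabla \log P(X,\cdot)(Y) \mid Y]$. So the natural strategy is to fix a test direction $v \in \R^d$, write $\langle v, \nabla \log \mu P(Y)\rangle = \E[\langle v, \nabla \log P(X,\cdot)(Y)\rangle \mid Y]$, and control the MGF $\E_{\mu P}\exp(\lambda \langle v, \nabla \log \mu P(Y)\rangle)$ by pushing the exponential inside the conditional expectation.

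\textbf{Main steps.} First I would recall the definition: $X$ is $\sigma^2$-sub-Gaussian means $\E\exp(\lambda\langle v, X\rangle) \le \exp(\lambda^2 \sigma^2 \norm{v}^2/2)$ for all $\lambda\in\R$, $v\in\R^d$ (one should double-check whether the paper's convention is centered or includes the mean; here the relevant quantity is the score, whose mean is zero under the appropriate measure, so this is clean — $\E_{P(x,\cdot)}[\nabla \log P(x,\cdot)] = 0$ by the usual integration-by-parts identity, and likewise $\E_{\mu P}[\nabla\log\mu P] = 0$). Second, fix $v$ and $\lambda$, and compute, using the tower property and \Cref{lem:score_mixture},
\[
\E_{Y\sim\mu P}\exp\bigl(\lambda\,\langle v, \nabla\log\mu P(Y)\rangle\bigr)
= \E_{Y}\exp\bigl(\lambda\,\E[\langle v,\nabla\log P(X,\cdot)(Y)\rangle \mid Y]\bigr).
\]
Third, apply Jensen's inequality to the convex function $u\mapsto e^{\lambda u}$ to move the conditional expectation outside: the right-hand side is at most $\E_Y \E[\exp(\lambda\langle v,\nabla\log P(X,\cdot)(Y)\rangle)\mid Y] = \E_{(X,Y)}\exp(\lambda\langle v,\nabla\log P(X,\cdot)(Y)\rangle)$, where the joint law is $X\sim\mu$, $Y\mid X \sim P(X,\cdot)$. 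Fourth, condition on $X$ and apply the assumed sub-Gaussianity of $\nabla\log P(x,\cdot)$ under $P(x,\cdot)$ for each fixed $x$: $\E[\exp(\lambda\langle v,\nabla\log P(X,\cdot)(Y)\rangle)\mid X=x] \le \exp(\lambda^2\sigma^2\norm{v}^2/2)$. Since this bound is uniform in $x$, taking the outer expectation over $X\sim\mu$ preserves it, giving $\E_{\mu P}\exp(\lambda\langle v,\nabla\log\mu P(Y)\rangle)\le\exp(\lambda^2\sigma^2\norm{v}^2/2)$, which is exactly $\sigma^2$-sub-Gaussianity of $\nabla\log\mu P$ under $\mu P$.

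\textbf{Expected obstacle.} The argument is essentially a two-line Jensen-plus-tower computation, so there is no deep obstacle; the only thing requiring care is bookkeeping around which measure each expectation is taken with respect to, and confirming that the sub-Gaussian definition in use (centered vs.\ possibly uncentered) is applied consistently — since all the relevant vectors are scores, they are mean-zero under the matching measures, so a centered definition applies verbatim. A minor point is making sure the measurability/integrability needed to apply the tower property is in force, but this is implicit in the hypothesis that the conditional scores are well-defined and sub-Gaussian. If the paper's sub-Gaussian convention instead allows an additive mean (i.e.\ $\E\exp(\lambda\langle v, X - \E X\rangle)\le\exp(\lambda^2\sigma^2\norm v^2/2)$), the same proof goes through after observing that $\E_{P(x,\cdot)}[\nabla\log P(x,\cdot)] = 0$ for every $x$, hence $\E_{\mu P}[\nabla\log\mu P] = 0$ as well, so the centered and uncentered statements coincide here.
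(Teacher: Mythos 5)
Your proposal is correct and follows essentially the same route as the paper: represent the mixture score via \Cref{lem:score_mixture} as a conditional expectation, apply Jensen's inequality to move it outside the exponential, and then invoke the conditional sub-Gaussianity by first conditioning on $X$. The paper's proof is exactly this two-line Jensen-plus-tower computation (with $\lambda$ absorbed into $v$), and your remark that the scores are mean-zero so the centered and uncentered conventions coincide is a correct, if implicit, point in the paper as well.
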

\begin{proof}
    For any vector $v\in\R^d$, Jensen's inequality implies
    \begin{align*}
        \E\exp{\langle v, \nabla \log \mu P(Y)\rangle}
        = \E\exp{\langle v, \E[\nabla \log P(X,Y) \mid Y]\rangle}
        &\le \E \exp{\langle v, \nabla \log P(X,Y)\rangle} \\
        &\le \exp \frac{\sigma^2\,\|v\|^2}{2}\,,
    \end{align*}
    where the last inequality follows by first conditioning on $X$.
\end{proof}

For example, we use these facts to verify that {Gaussian location {mixture}s} satisfy the assumptions for our reduction (\Cref{fact:GLM:satisfiesLBMAssumptions}).

\subsubsection{Score estimation implies integrated score estimation}\label{ssec:score_to_integrated}

We are now ready to show that there is a polynomial-time reduction from score estimation to integrated score estimation whenever the distribution $P$ has a sub-Gaussian score.

\begin{assumption}\label{asmp:logLipschitz}
    There is a constant $L \ge 1$ such that the distribution $P$ over $\R^d$ has a score function $\nabla \log P$ which is $\sqrt L$-sub-Gaussian under $P$.
\end{assumption}

The main result of this section is the following.

\begin{theorem}
[Score estimation implies integrated score estimation]\label{thm:scoreEstimationtoIntegrated}
    Let $P$ be a distribution on $\R^d$ that satisfies \cref{asmp:logLipschitz} with parameter $L$.
    There is an algorithm that, given  accuracy $\eps \in (0,1)$, constant $L$, terminal time $T \geq 1$, and query access to a score estimation oracle for $P$ with early stopping parameter $\tau$,
    implements an $(\eps,T)$-integrated score estimation oracle for $P$.
    The algorithm makes 
    $N$ calls to the score estimation oracle with accuracy $\eps_*$ for
    \[
        N = \wt{O}\Bigl(
            \frac{(L+T)\, Td^2}{\eps^2}
        \Bigr)
        \qquadand
        \eps_* = \wt{O}\Bigl(\frac{\eps}{\sqrt{d\,(\log L + T)}}\Bigr)\,,
    \]
    and the early stopping parameter $\tau$ of the score estimation oracle is required to satisfy $\tau \lesssim \nicefrac{\varepsilon^2}{Ld^2}$.
\end{theorem}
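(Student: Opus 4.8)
The plan is to implement the integrated oracle by Monte Carlo estimation of the double integral defining $v(x_0)$: query the early‑stopped score oracle at a batch of random times $t_j\in[\tau,T]$ and random Gaussian perturbations, average the resulting single‑sample DDPM integrands, and simply drop the slab $[0,\tau]$ near the origin. Concretely, split $v(x_0)=v_{[0,\tau]}(x_0)+v_{[\tau,T]}(x_0)$ and rewrite the integrand of $v$ in its denoising form $\E\bigl[\|\nabla\log P_t(X_t)\|^2+\tfrac{2}{\sqrt{1-e^{-2t}}}\langle\nabla\log P_t(X_t),Z_t\rangle\bigm\vert X_0=x_0\bigr]$, where $X_t=e^{-t}x_0+\sqrt{1-e^{-2t}}\,Z_t$ and $Z_t\sim\N(0,\mathrm{Id})$. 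Since integrating over $x_0\sim P$ makes $X_t$ exactly $P_t$‑distributed, \cref{lem:score_subG} gives $\E_{P_t}\|\nabla\log P_t\|^2\le L_t d$ with $L_t=\min\{Le^{2t},(1-e^{-2t})^{-1}\}\le 2L$, and in particular $L_t\le 2L$ stays bounded as $t\searrow 0$; combined with Cauchy--Schwarz on the cross term this yields $\int|v_{[0,\tau]}(x_0)|\,P(\d x_0)\lesssim\int_0^\tau\bigl(L_t d+\tfrac{d\sqrt{L_t}}{\sqrt{1-e^{-2t}}}\bigr)\d t\lesssim Ld\,\tau+d\sqrt{L\tau}$, which is $O(\eps)$ precisely under $\tau\lesssim\eps^2/(Ld^2)$. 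Hence it is safe to output only an estimate of $v_{[\tau,T]}(x_0)$.

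For $v_{[\tau,T]}$, I would draw $t_1,\dots,t_m$ uniformly on $[\tau,T]$ and independent $Z^{(1)},\dots,Z^{(m)}\sim\N(0,\mathrm{Id})$, query the oracle at $(t_j,X_{t_j}^{(j)})$ with $X_{t_j}^{(j)}=e^{-t_j}x_0+\sqrt{1-e^{-2t_j}}\,Z^{(j)}$, radially truncate the returned vector at a radius $R_t\asymp\sqrt{L_t d}\cdot\polylog(\tfrac{dT}{\tau\eps})$ (a polylog above the typical scale $\sqrt{L_t d}$) to obtain $\tilde s_t$, and output $\hat v(x_0)=\tfrac{T-\tau}{m}\sum_{j=1}^m\bigl(\|\tilde s_{t_j}(X_{t_j}^{(j)})\|^2+\tfrac{2}{\sqrt{1-e^{-2t_j}}}\langle\tilde s_{t_j}(X_{t_j}^{(j)}),Z^{(j)}\rangle\bigr)$. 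For each fixed $x_0$ a single summand has conditional mean equal to the integrated DDPM objective evaluated at the field $\tilde s_t$, call it $v^{\tilde s}_{[\tau,T]}(x_0)$, so the error splits into (i) the bias $v^{\tilde s}_{[\tau,T]}-v_{[\tau,T]}$, (ii) the error from truncating $\hat s_t$ to $\tilde s_t$, and (iii) the Monte Carlo fluctuation. Each sample costs one oracle call and $\poly(d)$ arithmetic, so the number of oracle calls is $N=m$ and the runtime is $\poly(N)$.

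The main obstacle is (i), because the denoising relation between the DDPM loss and the true‑score $L^2$ error is an identity only \emph{after} averaging over $x_0$, not pointwise. Writing $\eta_t:=\hat s_t-\nabla\log P_t$ and expanding $\|\hat s_t\|^2-\|\nabla\log P_t\|^2$, the pointwise gap at $x_0$ equals $\int\|\eta_t\|^2\,\d Q_{t|0}(\cdot\mid x_0)+2\int\langle\eta_t,\,\nabla\log P_t-\nabla\log Q_{t|0}(\cdot\mid x_0)\rangle\,\d Q_{t|0}(\cdot\mid x_0)$; the second term integrates to zero over $x_0\sim P$ (integrate by parts in $x_t$ against both the $Q_{t|0}$ and the $P_t$ densities) but not in absolute value. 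Bounding it by Cauchy--Schwarz and using $\nabla\log P_t(X_t)=\E[\nabla\log Q_{t|0}(X_t\mid X_0)\mid X_t]$ (\cref{lem:score_mixture}), whence $\E\|\nabla\log P_t(X_t)-\nabla\log Q_{t|0}(X_t\mid X_0)\|^2=\tfrac{d}{1-e^{-2t}}-\E_{P_t}\|\nabla\log P_t\|^2\le\tfrac{d}{1-e^{-2t}}$, the $L^1(P)$‑bias is at most $\eps_*^2+2\sqrt d\int_\tau^T\tfrac{\eps_t}{\sqrt{1-e^{-2t}}}\,\d t\le\eps_*^2+2\sqrt d\,\eps_*\bigl(\int_\tau^T\tfrac{\d t}{1-e^{-2t}}\bigr)^{1/2}$. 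Since $\int_\tau^T(1-e^{-2t})^{-1}\,\d t\lesssim T+\log(1/\tau)$ and $\tau\asymp\eps^2/(Ld^2)$, this is $O(\eps)$ exactly when $\eps_*=\tilde O\bigl(\eps/\sqrt{d(\log L+T)}\bigr)$, matching the claim; part (ii) is absorbed into the same estimate, since radially projecting onto a ball of radius $R_t$ that contains the $\sqrt{L_t}$‑sub‑Gaussian target $\nabla\log P_t$ does not increase the $L^2(P_t)$ error except on a super‑polynomially unlikely event, and keeps $\|\tilde s_t\|\le R_t$.

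Finally, for (iii) the $x_0$‑averaged second moment of one summand is at most $(T-\tau)\int_\tau^T\bigl(2\,\E_{P_t}\|\tilde s_t\|^4+\tfrac{8}{1-e^{-2t}}\,\E[\langle\tilde s_t(X_t),Z_t\rangle^2]\bigr)\d t$; bounding $\|\tilde s_t\|\le R_t$ and invoking the sub‑Gaussianity of $\nabla\log P_t$ gives $\E_{P_t}\|\tilde s_t\|^4=\tilde O((L_t d)^2)$ and $\E[\langle\tilde s_t(X_t),Z_t\rangle^2]\le\sqrt{\E\|\tilde s_t\|^4}\sqrt{\E\|Z_t\|^4}=\tilde O(L_t d^2)$. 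The two time‑integrals $\int_\tau^T L_t^2\,\d t$ and $\int_\tau^T\tfrac{L_t}{1-e^{-2t}}\,\d t$ are both $\tilde O(L+T)$ — this is exactly where \cref{lem:score_subG} is decisive, because $L_t=\min\{Le^{2t},(1-e^{-2t})^{-1}\}$ is $O(L)$ on $t\lesssim 1/L$, decays like $1/t$ on $1/L\lesssim t\lesssim 1$, and is $O(1)$ afterwards. Hence the averaged second moment is $\tilde O\bigl(Td^2(L+T)\bigr)$, and by Chebyshev followed by Markov over $x_0\sim P$, taking $m=N=\tilde O\bigl((L+T)Td^2/\eps^2\bigr)$ samples drives the Monte Carlo error to $O(\eps)$. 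Summing the three $O(\eps)$ contributions and rescaling constants yields the stated guarantees.
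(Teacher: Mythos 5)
Your proposal is correct, and it follows the same overall strategy as the paper's proof of \cref{thm:scoreEstimationtoIntegrated}: the same Monte Carlo estimator over $t\sim\unif([\tau,T])$ and Gaussian noisings with $\tau\asymp\eps^2/(Ld^2)$, the same three sources of error (early-stopping bias on $[0,\tau]$, score-estimation error, sampling fluctuation), the same reliance on \cref{lem:score_subG}, and the same integral estimates, yielding matching choices of $\eps_*$ and $N$. The one substantive difference is how you organize the stochastic analysis: you center the Monte Carlo fluctuation at the plug-in functional $v^{\tilde s}_{[\tau,T]}(x_0)$, so your variance bound needs fourth moments of the estimated score, which the $L^2(P_t)$ oracle guarantee does not provide; you repair this by radially truncating the oracle output at radius $\wt O(\sqrt{L_t d})$, and your justification (projection onto a ball containing the $\sqrt{L_t}$-sub-Gaussian target cannot increase the error off a superpolynomially rare event, on which the error is at most $2\|\nabla\log P_t\|$) is sound. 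The paper avoids truncation altogether by keeping the empirical sums inside the score-error terms (\cref{claim:termI,claim:termII}), bounded via the $L^2(P_t)$ guarantee with no $m$-dependence, and letting the variance term (\cref{claim:termIII}) involve only the true score, whose moments are controlled by sub-Gaussianity; your bias bound also differs slightly in that you use the orthogonality identity $\E\|\nabla\log P_t-\nabla\log Q_{t|0}\|^2\le d/(1-e^{-2t})$ in place of the paper's direct bound $\E_{P_t}\|\nabla\log P_t\|^2\lesssim L_t d$, which is a harmless (arguably cleaner) variation. The trade-off is that your route modifies the estimator (truncation requires knowing $L$, which the algorithm is given) in exchange for a decomposition whose bias step is exact at the level of the plug-in field, while the paper's route keeps the estimator untouched at the cost of a slightly more intertwined term-by-term analysis; both give the stated $N=\wt O((L+T)Td^2/\eps^2)$ and $\eps_*=\wt O(\eps/\sqrt{d(\log L+T)})$.
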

Recall that the integrated score oracle is a primitive that aims at estimating the values 
\[
            v(x_0)\deq \int_{0}^T\int 
                \inbrace{
                    \norm{\nabla \log{P_t}(x_t)}^2
                    - 
                    2\inangle{
                        \nabla{\log{P_t}(x_t)}, 
                        \nabla{\log{Q_{t|0}(x_t \mid x_0)}}
                    }
                }\, Q_{t|0}(\d x_t\mid x_0)\, \d t
        \]
in expectation over $x_0 \sim P$.
We are now ready to define the (randomized) output {$\wh v(x_0)$} of the integrated score estimation oracle on input $x_0 \in \R^d$:
\[
    \wh v(x_0) \deq \frac{T-\tau}{m} \sum_{i \in [m]} \left\{ 
\|\wh s_{t_i}(x^i_{t_i})\|^2 - 2\, \<\wh s_{t_i}(x^i_{t_i}), \nabla \log Q_{t_i | 0}(x^i_{t_i} \mid x_0)\>
\right\}\,.
\]
where for $i=1,\dotsc,m$, the pairs $(t_i, x_{t_i}^i)$ are i.i.d.\ and drawn as follows: first, $t_i \sim \unif([\tau,T])$, and then, conditionally on $t_i$, $x_{t_i}^i \sim Q_{t_i|0}(\cdot \mid x_0)$.
Here, we set $\tau \asymp \nicefrac{\varepsilon^2}{Ld^2}$; even if the score estimation oracle provides score estimates for smaller times, we do not use them.

\begin{proof}[Proof of~\Cref{thm:scoreEstimationtoIntegrated}]
Our goal is to control the error
\begin{align*}
    \int \E|\wh v(x_0) - v(x_0)|\,P(\d x_0)\,,
\end{align*}
where $\E$ denotes the expectation over the randomness of $\wh v$.
Throughout the proof, we repeatedly use the sub-Gaussianity of the score (\Cref{lem:score_subG}).

The first step is to bound this error by (I) $+$ (II) $+$ (III), where
\begin{align*}
    ({\rm I})
    &\deq \frac{T-\tau}{m} \int \E\Bigl\lvert \sum_{i\in [m]} \bigl\{\|\wh s_{t_i}(x_{t_i}^i)\|^2 - \|\nabla \log P_{t_i}(x_{t_i}^i)\|^2 \bigr\}\Bigr\rvert \, P(\d x_0)\,, \\[0.25em]
    ({\rm II})
    &\deq \frac{2\,(T-\tau)}{m} \int \E\Bigl\lvert \sum_{i\in [m]} \langle \wh s_{t_i}(x_{t_i}^i)- \nabla \log P_{t_i}(x_{t_i}^i), \nabla \log Q_{t_i | 0}(x^i_{t_i} \mid x_0)\rangle \Bigr\rvert \, P(\d x_0)\,, \\[0.25em]
    ({\rm III})
    &\deq \int \E|({\rm III}_0)(x_0)|\,P(\d x_0)\,, \\[0.25em]
    ({\rm III}_0)(x_0)
    &\deq \frac{T-\tau}{m} \sum_{i \in [m]} \left\{ 
\|\nabla \log P_{t_i}(x^i_{t_i})\|^2 - 2\, \<\nabla \log P_{t_i}(x^i_{t_i}), \nabla \log Q_{t_i | 0}(x^i_{t_i} \mid x_0)\>
\right\} - v(x_0)\,.
\end{align*}
We also define the quantities
\begin{align*}
    L_* \deq \int_\tau^T L_t\,\d t\,, \qquad L_{*,2} \deq \Bigl(\int_\tau^T L_t^2\,\d t\Bigr)^{1/2}\,, \qquad L_{*,3} \deq \int_\tau^T \frac{L_t}{1-e^{-2t}}\,\d t\,,
\end{align*}
where $L_t$ is the constant from~\Cref{lem:score_subG}.

\paragraph{Control of term I\@.} We start by controlling term I. The error of term I relies on how well the squared norm of the score oracle approximates the squared norm of the actual score.

\begin{claim}[Controlling term (I)]\label{claim:termI}
Let $\varepsilon_*^2 = \int_\tau^T \varepsilon_t^2\, \d t$.
    It holds that 
    \[
    \frac{T-\tau}{m} \int \E\Bigl\lvert \sum_{i\in [m]} \bigl\{\|\wh s_{t_i}(x_{t_i}^i)\|^2 - \|\nabla \log P_{t_i}(x_{t_i}^i)\|^2 \bigr\}\Bigr\rvert \, P(\d x_0) 
    \lesssim \varepsilon_*^2 + \sqrt{L_* d}\,\varepsilon_*\,.
    \]
\end{claim}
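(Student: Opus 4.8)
The plan is to exploit the elementary identity $\|a\|^2 - \|b\|^2 = \langle a-b,\,a+b\rangle = \|a-b\|^2 + 2\langle a-b, b\rangle$ applied with $a = \wh s_{t_i}(x_{t_i}^i)$ and $b = \nabla\log P_{t_i}(x_{t_i}^i)$. This splits the summand into a quadratic term $\|\wh s_{t_i} - \nabla\log P_{t_i}\|^2$ and a cross term $2\langle \wh s_{t_i} - \nabla\log P_{t_i},\, \nabla\log P_{t_i}\rangle$. Taking absolute values and the expectation over $x_0\sim P$, and pulling the expectation over the random draws $(t_i, x_{t_i}^i)$ inside, I would handle the two pieces separately. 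For the quadratic piece, by the triangle inequality it suffices to bound $\frac{T-\tau}{m}\sum_i \E\|\wh s_{t_i}(x_{t_i}^i) - \nabla\log P_{t_i}(x_{t_i}^i)\|^2$; since $t_i\sim\unif([\tau,T])$ and then $x_{t_i}^i\sim Q_{t_i|0}(\cdot\mid x_0)$, averaging over $x_0$ as well gives exactly $\frac{T-\tau}{m}\sum_i \frac{1}{T-\tau}\int_\tau^T \int \|\wh s_t - \nabla\log P_t\|^2\,\d P_t\,\d t = \int_\tau^T \varepsilon_t^2\,\d t = \varepsilon_*^2$, using the score estimation oracle's per-time guarantee. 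This accounts for the $\varepsilon_*^2$ term.

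For the cross term, I would use Cauchy--Schwarz on each summand, $|\langle \wh s_{t_i} - \nabla\log P_{t_i}, \nabla\log P_{t_i}\rangle| \le \|\wh s_{t_i} - \nabla\log P_{t_i}\|\cdot\|\nabla\log P_{t_i}\|$, then apply Cauchy--Schwarz again with respect to the joint expectation over $(x_0, t_i, x_{t_i}^i)$ to get $\frac{T-\tau}{m}\sum_i \big(\E\|\wh s_{t_i} - \nabla\log P_{t_i}\|^2\big)^{1/2}\big(\E\|\nabla\log P_{t_i}\|^2\big)^{1/2}$. The first factor, as above, averages to $\varepsilon_*/\sqrt{T-\tau}$ after accounting for the $t_i$-uniform measure; the second factor is controlled by the sub-Gaussianity from \Cref{lem:score_subG}: a $\sqrt{L_t}$-sub-Gaussian random vector in $\R^d$ has $\E\|\nabla\log P_t(X_t)\|^2 \lesssim L_t d$ (this uses $\E\|\nabla\log P_t\|^2 = \|\E\nabla\log P_t\|^2 + \operatorname{tr}\Cov(\nabla\log P_t)$, with the mean-zero-after-centering structure; actually since the score integrates to zero under $P_t$, $\E\nabla\log P_t(X_t)=0$, so $\E\|\nabla\log P_t\|^2 = \operatorname{tr}\Cov \lesssim L_t d$). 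Integrating $\sqrt{L_t d}$ against the uniform measure on $[\tau,T]$ and combining the normalizations, the cross term contributes $\lesssim \sqrt{L_* d}\,\varepsilon_*$, where $L_* = \int_\tau^T L_t\,\d t$; here I would use $\int_\tau^T \sqrt{L_t d}\,\d t \le \sqrt{d}\,\sqrt{T-\tau}\,(\int_\tau^T L_t\,\d t)^{1/2}$ by Cauchy--Schwarz on $[\tau,T]$, and the $\sqrt{T-\tau}$ cancels against the $1/\sqrt{T-\tau}$ from the first factor.

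The main subtlety — not really an obstacle but the point requiring care — is bookkeeping the normalization constants: the prefactor $(T-\tau)/m$, the sum of $m$ i.i.d. terms, and the fact that $t_i$ is uniform on an interval of length $T-\tau$ must all be tracked so that the $m$'s and $(T-\tau)$'s cancel correctly and one lands on $\varepsilon_*^2 + \sqrt{L_* d}\,\varepsilon_*$ with no leftover $m$- or $T$-dependence. A secondary point is justifying $\E\|\nabla\log P_t(X_t)\|^2 \lesssim L_t d$ from sub-Gaussianity; this is standard (each coordinate is $L_t$-sub-Gaussian hence has second moment $\lesssim L_t$, and summing over $d$ coordinates gives $\lesssim L_t d$), so I would invoke the basic sub-Gaussian moment bound referenced in the notation section rather than reprove it. Everything else is repeated application of the triangle inequality and Cauchy--Schwarz.
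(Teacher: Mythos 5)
Your proposal is correct and follows essentially the same route as the paper's proof: the same splitting of $\|a\|^2-\|b\|^2$ into a quadratic term (bounded by $\varepsilon_*^2$ via the oracle guarantee) and a cross term (bounded by Cauchy--Schwarz together with the sub-Gaussianity bound $\E\|\nabla\log P_t\|^2\lesssim L_t d$ from \cref{lem:score_subG}, then a final Cauchy--Schwarz in $t$ to produce $\varepsilon_*\sqrt{L_* d}$). The only difference is organizational --- you apply one Cauchy--Schwarz over the joint law of $(x_0,t_i,x_{t_i}^i)$ where the paper nests a Cauchy--Schwarz over $x_{t_i}^i$ inside one over $t_i$ --- and the normalizations cancel identically in both versions.
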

\begin{proof}[Proof of \Cref{claim:termI}]
We can bound term I by
\begin{align*}
    ({\rm I})
    &\le \frac{T-\tau}{m} \int \E \sum_{i\in [m]} \|\wh s_{t_i}(x_{t_i}^i)-\nabla \log P_{t_i}(x_{t_i}^i)\|\, \bigl\lvert \|\wh s_{t_i}(x_{t_i}^i)\| + \|\nabla \log P_{t_i}(x_{t_i}^i)\|\bigr\rvert \, P(\d x_0) \\[0.25em]
    &\le \frac{T-\tau}{m} \int \E \sum_{i\in [m]} \|\wh s_{t_i}(x_{t_i}^i)-\nabla \log P_{t_i}(x_{t_i}^i)\|^2 \, P(\d x_0) \\[0.25em]
    &\qquad{} + \frac{2\,(T-\tau)}{m}\int \E \sum_{i\in [m]} \|\wh s_{t_i}(x_{t_i}^i)-\nabla \log P_{t_i}(x_{t_i}^i)\| \,\|\nabla \log P_{t_i}(x_{t_i}^i)\| \, P(\d x_0) \\[0.25em]
    &\le \frac{T-\tau}{m}\, \E \sum_{i\in [m]} \varepsilon_{t_i}^2
    + \frac{2\,(T-\tau)}{m} \,\E\sum_{i\in [m]} \varepsilon_{t_i} \,\Bigl(\int\|\nabla \log P_{t_i}(x_{t_i}^i)\|^2\,P_{t_i}(\d x_{t_i}^i)\Bigr)^{1/2} \\[0.25em]
    &\lesssim \varepsilon_*^2 + \frac{(T-\tau)\,\sqrt d}{m} \,\E\sum_{i\in [m]} \varepsilon_{t_i}\sqrt{L_{t_i}}
    \lesssim \varepsilon_*^2 + \sqrt{L_* d}\,\varepsilon_*\,.
\end{align*}
The second inequality follows by observing that $\|a-b\| \cdot | \|a\| - \|b\| + 2\|b\| | \leq \|a-b\|^2 + 2\,\|a-b\| \, \|b\|$ for any vectors $a,b$, the third inequality follows by Cauchy--Schwarz and the property of the score estimation oracle at times $\{t_i\}_{i \in [m]}$, and the fourth inequality follows by sub-Gaussianity of the score and the definition of the integrated error $\varepsilon_*.$
\end{proof}

\paragraph{Control of term II\@.}
Similarly, we can control term II, which involves again a difference between the score oracle and the actual score function. In contrast to term I which contained the difference of the norms, Term II (roughly speaking) involves the difference of the two vectors in the direction of the associated OU process.
\begin{claim}[Controlling term (II)]\label{claim:termII}
    Let $\varepsilon_*^2 = \int_\tau^T \varepsilon_t^2\, \d t.$
    It holds that 
    \[
    \frac{2\,(T-\tau)}{m} \int \E\Bigl\lvert \sum_{i\in [m]} \langle \wh s_{t_i}(x_{t_i}^i)- \nabla \log P_{t_i}(x_{t_i}^i), \nabla \log Q_{t_i | 0}(x^i_{t_i} \mid x_0)\rangle \Bigr\rvert \, P(\d x_0)
    \lesssim 
    \varepsilon_* \sqrt{d\,\bigl(T+\log\frac{1}{\tau}\bigr)}\,.
    \]
\end{claim}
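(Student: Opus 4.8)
The plan is a direct estimate that exploits the explicit Gaussian form of the OU transition kernel, through a short chain of Cauchy--Schwarz inequalities; notably, this step needs only the $L^2$ score-oracle guarantee and the Gaussian structure of $Q_{t|0}$, and not the sub-Gaussianity of the score that was used in term (I). First I would record the identity $\nabla\log Q_{t|0}(x_t\mid x_0) = -(x_t-e^{-t}x_0)/(1-e^{-2t})$, valid because $Q_{t|0}(\cdot\mid x_0)=\cN(e^{-t}x_0,(1-e^{-2t})\mathrm{Id})$; since $x_t - e^{-t}x_0\sim\cN(0,(1-e^{-2t})\mathrm{Id})$ under $Q_{t|0}(\cdot\mid x_0)$, this gives the exact second moment
\[
\int \norm{\nabla\log Q_{t|0}(x_t\mid x_0)}^2\,Q_{t|0}(\d x_t\mid x_0) = \frac{d}{1-e^{-2t}}\,,
\]
uniformly in $x_0$.

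Next, writing $\Delta(x_t)\deq\wh s_t(x_t)-\nabla\log P_t(x_t)$ and using that the pairs $(t_i,x_{t_i}^i)$ are i.i.d.\ conditionally on $x_0$, I would apply the triangle inequality to collapse the sum over $i\in[m]$: the expectation of the absolute value of the sum is at most $m$ times the single-term expectation, so the prefactor $(T-\tau)/m$ cancels the $m$, and averaging the single-term expectation over $t_1\sim\unif([\tau,T])$ rewrites the whole quantity as the time integral
\[
2\int_\tau^T\int\int \bigl\lvert\inangle{\Delta(x_t),\nabla\log Q_{t|0}(x_t\mid x_0)}\bigr\rvert\,Q_{t|0}(\d x_t\mid x_0)\,P(\d x_0)\,\d t\,.
\]
For each fixed $t$, I would bound the inner double integral by Cauchy--Schwarz on the inner product and then Cauchy--Schwarz on the joint law $Q_{t|0}(\d x_t\mid x_0)\,P(\d x_0)$; since marginalizing $x_t$ over $x_0\sim P$ produces $x_t\sim P_t$, the first factor is $\bigl(\int\norm{\Delta}^2\,\d P_t\bigr)^{1/2}\le\varepsilon_t$ by the score estimation oracle (applicable since $t\ge\tau$), and the second factor is $\sqrt{d/(1-e^{-2t})}$ by the display above. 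Hence the $t$-slice is at most $\varepsilon_t\sqrt{d/(1-e^{-2t})}$, and the whole quantity is at most $2\sqrt d\int_\tau^T \varepsilon_t/\sqrt{1-e^{-2t}}\,\d t$.

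Finally, one more Cauchy--Schwarz in $t$ gives $\int_\tau^T \varepsilon_t/\sqrt{1-e^{-2t}}\,\d t \le \varepsilon_*\,\bigl(\int_\tau^T\d t/(1-e^{-2t})\bigr)^{1/2}$, and an elementary antiderivative (e.g.\ the substitution $u=e^{-2t}$) yields $\int_\tau^T\d t/(1-e^{-2t})\lesssim T+\log(1/\tau)$, where the $\log(1/\tau)$ comes from the singularity $1-e^{-2t}\asymp 2t$ near $t=0$ and the $T$ from the large-$t$ regime. Assembling the pieces bounds term (II) by $\sqrt d\,\varepsilon_*\sqrt{T+\log(1/\tau)}$, as claimed. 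I do not anticipate a genuine obstacle: everything is routine once the Gaussian second-moment identity is in hand; the only slightly delicate point is the bookkeeping on $\int_\tau^T\d t/(1-e^{-2t})$ near $t=0$, which is exactly what forces the early-stopping parameter $\tau$ into the bound, mirroring the analogous role of early stopping in the generation-from-scores analysis.
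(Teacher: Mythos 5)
Your proposal is correct and follows essentially the same route as the paper: triangle inequality to collapse the i.i.d.\ sum, Cauchy--Schwarz on the inner product, Cauchy--Schwarz against the joint law to isolate the oracle error $\varepsilon_t$ and the exact Gaussian second moment $d/(1-e^{-2t})$, and the integral estimate $\int_\tau^T \d t/(1-e^{-2t}) \lesssim T + \log(1/\tau)$. The only (immaterial) difference is that you apply Cauchy--Schwarz slice-by-slice in $t$ and then once more in the time variable, whereas the paper applies it in one shot over the joint randomness including $t_1 \sim \unif([\tau,T])$; both yield the identical bound.
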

\begin{proof}
    [Proof of \Cref{claim:termII}] We have the following for Term II:
\begin{align*}
    ({\rm II})
    &\le \frac{2\,(T-\tau)}{m} \sum_{i\in [m]} \int \E\bigl[\|\wh s_{t_i}(x_{t_i}^i)- \nabla \log P_{t_i}(x_{t_i}^i)\|\,\|\nabla \log Q_{t_i | 0}(x^i_{t_i} \mid x_0) \|\bigr]\,P(\d x_0) \\[0.25em]
    &\lesssim (T-\tau) \, \Bigl( \int \E\bigl[\|\wh s_{t_1}(x_{t_1}^1)- \nabla \log P_{t_1}(x_{t_1}^1)\|^2\bigr]\,P(\d x_0) \int \E\bigl[\|\nabla \log Q_{t_1 | 0}(x^1_{t_1} \mid x_0) \|^2\bigr]\,P(\d x_0)\Bigr)^{1/2} \\[0.25em]
    &\le (T-\tau)\,\Bigl( \E[\varepsilon_{t_1}^2] \,\E\bigl[\frac{d}{1-\exp(-2t_1)}\bigr] \Bigr)^{1/2}
    = \Bigl( \int_\tau^T \varepsilon_t^2\,\d t \int_\tau^T \frac{d}{1-\exp(-2t)}\,\d t \Bigr)^{1/2} \\[0.25em]
    &\lesssim \varepsilon_* \sqrt{d\,\bigl(T+\log\frac{1}{\tau}\bigr)}\,.
\end{align*}
The second inequality follows by Cauchy--Schwarz and the fact that we use i.i.d.\ samples and the third inequality uses the property of the score estimation oracle at time $t_1$ and the sub-Gaussianity of the OU process.
\end{proof}

\paragraph{Control of term III\@.} The last step is to control term (III). 
Recall the quantities
\begin{align*}
    L_{*,2} \deq \Bigl(\int_\tau^T L_t^2\,\d t\Bigr)^{1/2}\,, \qquad L_{*,3} \deq \int_\tau^T \frac{L_t}{1-e^{-2t}}\,\d t\,,
\end{align*}
where $L_t$ is the constant from~\Cref{lem:score_subG}.
\begin{claim}[Controlling term (III)]\label{claim:termIII}
    It holds that
    \begin{align*}
         &\int \E\Bigl\lvert\frac{T-\tau}{m} \sum_{i \in [m]} \left\{ 
\|\nabla \log P_{t_i}(x^i_{t_i})\|^2 - 2\, \<\nabla \log P_{t_i}(x^i_{t_i}), \nabla \log Q_{t_i | 0}(x^i_{t_i} \mid x_0)\>
\right\} - v(x_0)\Bigr\rvert\, P(\d x_0) \\
&\qquad  \lesssim \underbrace{Ld \tau + \sqrt{L} d \sqrt{\tau}}_{\mathrm{early~stopping}} ~ + ~ \underbrace{\frac{L_{*,2}\sqrt T d + d\sqrt{L_{*,3}T}}{\sqrt m}}_{\mathrm{variance}}\,.
    \end{align*}
\end{claim}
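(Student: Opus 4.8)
The plan is to split $({\rm III}_0)(x_0)$ into a deterministic \emph{early-stopping bias} plus a mean-zero \emph{Monte Carlo fluctuation}. Set $g(t,x_t) \deq \|\nabla \log P_t(x_t)\|^2 - 2\,\langle \nabla \log P_t(x_t), \nabla \log Q_{t|0}(x_t \mid x_0)\rangle$ and let $\bar v(x_0) \deq \int_\tau^T \int g(t,x_t)\,Q_{t|0}(\d x_t \mid x_0)\,\d t$ be the truncation of $v(x_0)$ to times in $[\tau,T]$. By construction of the sampling scheme, $\frac{T-\tau}{m}\sum_{i\in[m]} g(t_i,x_{t_i}^i)$ is an unbiased estimator of $\bar v(x_0)$, so the triangle inequality gives $\E|({\rm III}_0)(x_0)| \le |v(x_0)-\bar v(x_0)| + \E\bigl|\frac{T-\tau}{m}\sum_{i\in[m]} g(t_i,x_{t_i}^i) - \bar v(x_0)\bigr|$, and I bound the two pieces separately after integrating against $P(\d x_0)$. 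The only inputs are: (i) for $x_t \sim Q_{t|0}(\cdot\mid x_0)$ one has $\nabla \log Q_{t|0}(x_t\mid x_0) = -Z_t/\sqrt{1-e^{-2t}}$ with $Z_t\sim\cN(0,\mathrm{Id})$, hence $\E_{Q_{t|0}(\cdot\mid x_0)}\|\nabla\log Q_{t|0}(\cdot\mid x_0)\|^2 = d/(1-e^{-2t})$ and $\E_{Q_{t|0}(\cdot\mid x_0)}\|\nabla\log Q_{t|0}(\cdot\mid x_0)\|^4 \lesssim d^2/(1-e^{-2t})^2$; (ii) $\int Q_{t|0}(\d x_t\mid x_0)\,P(\d x_0) = P_t(\d x_t)$; and (iii) the sub-Gaussianity of the score from \Cref{lem:score_subG}, which, since the score has mean zero under $P_t$, yields $\E_{P_t}\|\nabla\log P_t\|^2 \lesssim L_t d$ and $\E_{P_t}\|\nabla\log P_t\|^4 \lesssim L_t^2 d^2$.

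For the early-stopping bias, $\int |v(x_0)-\bar v(x_0)|\,P(\d x_0) = \int\bigl|\int_0^\tau\!\int g(t,x_t)\,Q_{t|0}(\d x_t\mid x_0)\,\d t\bigr|\,P(\d x_0)$; expanding $g$, using (ii) to collapse the $x_0$-integral, and applying Cauchy--Schwarz on the cross term bounds this by $\int_0^\tau \E_{P_t}\|\nabla\log P_t\|^2\,\d t + 2\int_0^\tau(\E_{P_t}\|\nabla\log P_t\|^2)^{1/2}(d/(1-e^{-2t}))^{1/2}\,\d t$. On $(0,\tau]$ I use the small-time branch $L_t \le L e^{2t}$ of \Cref{lem:score_subG} together with $1-e^{-2t}\gtrsim t$, giving $\int_0^\tau L e^{2t} d\,\d t \lesssim Ld\tau$ and $\int_0^\tau \sqrt{Le^{2t}d}\,\sqrt{d/t}\,\d t \lesssim \sqrt L\,d\int_0^\tau t^{-1/2}\,\d t \lesssim \sqrt L\,d\sqrt\tau$; using the $1/(1-e^{-2t})$ branch here instead would fail because it is not integrable at $0$. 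This produces the $Ld\tau + \sqrt L\,d\sqrt\tau$ contribution.

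For the Monte Carlo fluctuation, two nested applications of Jensen's inequality (over the $m$ i.i.d.\ draws for fixed $x_0$, then over $x_0$) give $\int\E\bigl|\frac{T-\tau}{m}\sum_i g(t_i,x_{t_i}^i)-\bar v(x_0)\bigr|\,P(\d x_0) \le \frac{T-\tau}{\sqrt m}\bigl(\int \Var_{(t_1,x_{t_1}^1)}(g)\,P(\d x_0)\bigr)^{1/2}$. I then bound $\Var(g)\le\E g^2\lesssim \E\|\nabla\log P_{t_1}(x_{t_1}^1)\|^4 + \E\langle\nabla\log P_{t_1}(x_{t_1}^1),\nabla\log Q_{t_1|0}(x_{t_1}^1\mid x_0)\rangle^2$. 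Integrating the first term over $x_0$ and over $t_1\sim\unif([\tau,T])$ gives $\frac{1}{T-\tau}\int_\tau^T \E_{P_t}\|\nabla\log P_t\|^4\,\d t \lesssim \frac{d^2}{T-\tau}\int_\tau^T L_t^2\,\d t = \frac{d^2 L_{*,2}^2}{T-\tau}$. For the inner-product term I bound $\langle\nabla\log P_t(x_t),\nabla\log Q_{t|0}(x_t\mid x_0)\rangle^2 \le \|\nabla\log P_t(x_t)\|^2\,\|Z_t\|^2/(1-e^{-2t})$ and apply Cauchy--Schwarz over the joint law of $(x_0, Z_t)$, whose image under $x_t = e^{-t}x_0 + \sqrt{1-e^{-2t}}Z_t$ is exactly $P_t$: $\E_{P_t}[\,\cdot\,] \lesssim \frac{1}{1-e^{-2t}}(\E_{P_t}\|\nabla\log P_t\|^4)^{1/2}(\E\|Z_t\|^4)^{1/2} \lesssim \frac{L_t d^2}{1-e^{-2t}}$, so this term integrates to $\frac{d^2}{T-\tau}\int_\tau^T \frac{L_t}{1-e^{-2t}}\,\d t = \frac{d^2 L_{*,3}}{T-\tau}$. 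Hence $\int\Var(g)\,P(\d x_0) \lesssim \frac{d^2(L_{*,2}^2+L_{*,3})}{T-\tau}$, and plugging in yields the variance term $\frac{\sqrt{T-\tau}\,d\,(L_{*,2}+\sqrt{L_{*,3}})}{\sqrt m}\lesssim \frac{L_{*,2}\sqrt T\,d + d\sqrt{L_{*,3}T}}{\sqrt m}$.

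The step I expect to require the most care is controlling $\E\langle\nabla\log P_t(x_t),\nabla\log Q_{t|0}(x_t\mid x_0)\rangle^2$: since $x_t$ is built from $Z_t$, the score $\nabla\log P_t(x_t)$ and the noise $Z_t$ are not independent, so the expectation does not factor. The fix is to bound the inner product by a product of norms and then apply Cauchy--Schwarz against the \emph{joint} law of $(x_0,Z_t)$, exploiting that the pushforward of this law onto $x_t$ is exactly $P_t$, which is what makes the fourth-moment sub-Gaussian bound of \Cref{lem:score_subG} applicable; the subtlety is to resist conditioning on $x_0$ (where no clean fourth-moment control is available) and instead integrate everything simultaneously. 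A secondary point needing attention is the $t\searrow 0$ behaviour in the early-stopping bias, where only the Lipschitz-type branch $L_t\le Le^{2t}$ of \Cref{lem:score_subG} keeps the integrals finite.
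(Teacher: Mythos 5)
Your proof is correct and follows essentially the same route as the paper's: the same split into an early-stopping bias over $[0,\tau]$ (controlled via $L_t\lesssim L$ for small $t$ and $1-e^{-2t}\gtrsim t$) plus a Monte Carlo fluctuation controlled by the variance of a single draw, with the fourth-moment bounds $\E_{P_t}\|\nabla\log P_t\|^4\lesssim L_t^2d^2$ from \Cref{lem:score_subG} and the Gaussian identity $\nabla\log Q_{t|0}(x_t\mid x_0)=-Z_t/\sqrt{1-e^{-2t}}$ doing the work. The only (harmless) deviation is that you apply Cauchy--Schwarz for the cross term directly over the joint law of $(x_0,Z_{t_1})$ and pull the square root outside the $x_0$-integral from the start, which streamlines the paper's nested conditional Cauchy--Schwarz and $\E\sqrt{X}\le\sqrt{\E X}$ steps while landing on the same bound.
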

For term (III), we perform a bias-variance decomposition. First, the estimator in the above expression is biased since it only integrates from time $\tau$ to $T$, while $v(x_0)$ integrates from time 0. Hence, the bias term corresponds to the error due to early stopping. On the other side, we also have to deal with the variance of the estimator.

\begin{proof}[Proof of \Cref{claim:termIII}]
We bound term (III) by (IV) + (V), where
\begin{align*}
    ({\rm IV})
    &= \int |\E ({\rm III}_0)(x_0)|\,P(\d x_0) \\[0.25em]
    &= \int \Bigl\lvert \int_0^\tau \int \bigl\{\|\nabla \log P_t(x_t)\|^2 - 2\,\langle \nabla \log P_t(x_t),\nabla \log Q_{t|0}(x_t \mid x_0)\rangle\bigr\}\,Q_{t|0}(\d x_t\mid x_0)\,\d t \Bigr\rvert \,P(\d x_0)\,,\\[0.25em]
    ({\rm V})
    &\le \frac{T-\tau}{\sqrt m} \int\sqrt{\Var\bigl(\|\nabla \log P_{t_1}(x_{t_1}^1)\|^2 - 2\,\langle \nabla \log P_{t_1}(x_{t_1}^1), \nabla \log Q_{t|0}(x_{t_1}^1 \mid x_0)\rangle\bigr)}\,P(\d x_0)\,.
\end{align*}
Note that term (IV) is the early stopping error, which we control as follows:
\begin{align*}
    ({\rm IV})
    &\le \int_0^\tau \iint \bigl\{\|\nabla \log P_t(x_t)\|^2 + 2\,\|\nabla \log P_t(x_t)\|\,\|\nabla \log Q_{t|0}(x_t \mid x_0)\|\bigr\}\,Q_{t|0}(\d x_t\mid x_0) \,P(\d x_0)\,\d t \\[0.25em]
    &\lesssim \int_0^\tau \bigl\{Ld + \frac{\sqrt Ld}{\sqrt{1-\exp(-2t)}}\bigr\}\,\d t
    \lesssim Ld\tau + \sqrt Ld\sqrt\tau\,.
\end{align*}
Note that the second inequality follows immediately by noting that $\|\nabla \log P_t\|^2$ is $\sqrt{2Ld}$-sub-Gaussian and that the OU process $\| \nabla \log Q_{t|0}\|$ contributes another $\sqrt{d}/\sqrt{1-e^{-2t}}$ factor.

Finally, for the variance term (V), by the triangle inequality, we bound it by (VI) + (VII), where
\begin{align*}
    ({\rm VI})
    &= \frac{T-\tau}{\sqrt m} \int \sqrt{\Var\bigl(\|\nabla \log P_{t_1}(x_{t_1}^1)\|^2\bigr)}\,P(\d x_0)\,, \\[0.25em]
    ({\rm VII})
    &= \frac{2\,(T-\tau)}{\sqrt m} \int \sqrt{\Var\bigl(\langle \nabla \log P_{t_1}(x_{t_1}^1), \nabla \log Q_{t_1|0}(x_{t_1}^1 \mid x_0)\rangle\bigr)}\,P(\d x_0)\,.
\end{align*}
By sub-Gaussianity of the score,
\begin{align*}
    ({\rm VI})
    &\le \frac{T-\tau}{\sqrt m} \sqrt{\int \E[\|\nabla \log P_{t_1}(x_{t_1}^1)\|^4]\,P(\d x_0)}
    \lesssim \frac{(T-\tau)\,d}{\sqrt m} \sqrt{\E[L_{t_1}^2]}
    \lesssim \frac{L_{*,2}\sqrt T d}{\sqrt m}\,.
\end{align*}
For the last term,
\begin{align*}
    ({\rm VII)}
    &\le \frac{2\,(T-\tau)}{\sqrt m} \int \sqrt{\E\bigl[\|\nabla \log P_{t_1}(x_{t_1}^1)\|^2 \,\|\nabla \log Q_{t|0}(x_{t_1}^1 \mid x_0)\|^2\bigr]}\,P(\d x_0) \\[0.25em]
    &\le \frac{2\,(T-\tau)}{\sqrt m} \sqrt{\int \E\bigl[\|\nabla \log P_{t_1}(x_{t_1}^1)\|^2 \,\|\nabla \log Q_{t|0}(x_{t_1}^1 \mid x_0)\|^2\bigr]\,P(\d x_0)} \\[0.25em]
    &\le \frac{2\,(T-\tau)}{\sqrt m} \,\biggl(\int \E\Bigl[\sqrt{\int \|\nabla \log P_{t_1}(x_{t_1})\|^4\,Q_{t_1|0}(\d x_{t_1} \mid x_0)} \\[0.25em]
    &\qquad\qquad\qquad{}\times \sqrt{\int \|\nabla \log Q_{t_1|0}(x_{t_1}\mid x_0)\|^4\,Q_{t_1|0}(\d x_{t_1} \mid x_0)}\Bigr]\,P(\d x_0)\biggr)^{1/2} \\[0.25em]
    &\lesssim \frac{T-\tau}{\sqrt m} \,\Bigl(\E\Bigl[\frac{d}{1-\exp(-2t_1)} \int\sqrt{\int \|\nabla \log P_{t_1}(x_{t_1})\|^4\,Q_{t_1|0}(\d x_{t_1} \mid x_0)}\,P(\d x_0)\Bigr] \Bigr)^{1/2} \\[0.25em]
    &\le \frac{T-\tau}{\sqrt m} \,\Bigl(\E\Bigl[\frac{d}{1-\exp(-2t_1)} \sqrt{\iint \|\nabla \log P_{t_1}(x_{t_1})\|^4\,Q_{t_1|0}(\d x_{t_1} \mid x_0)\,P(\d x_0)}\Bigr] \Bigr)^{1/2} \\[0.25em]
    &\lesssim \frac{(T-\tau)\,d}{\sqrt m} \,\Bigl(\E\Bigl[\frac{L_{t_1}}{1-\exp(-2t_1)} \Bigr] \Bigr)^{1/2}
    \lesssim \frac{d\sqrt{L_{*,3}T}}{\sqrt m}\,.
\end{align*}
In the above, the second and the fifth inequality use that $\E \sqrt{X} \leq \sqrt{\E X}$.
The third inequality follows by Cauchy--Schwarz, the fourth by properties of the OU process, and the sixth one by sub-Gaussianity of the score.
\end{proof}

\paragraph{Putting everything together.}
We now apply the integral estimates from~\Cref{lem:integrals}, combine the bounds of \Cref{claim:termI,claim:termII,claim:termIII}, and simplify to obtain
\begin{align*}
    \int \E|\wh v(x_0) - v(x_0)|\,P(\d x_0)
    &\lesssim {\underbrace{Ld\tau + \sqrt Ld\sqrt\tau}_{\text{early stopping}}} + {\underbrace{\varepsilon_*^2 + \varepsilon_*\sqrt{d\,\bigl(T + \log\frac{1}{\tau} \bigr)}}_{\text{score error}}} + {\underbrace{\frac{d\sqrt{T\,(T+L\log\nfrac{1}{L\tau})}}{\sqrt m}}_{\text{sampling error}}}\,.
\end{align*}
To make the overall error at most $\varepsilon$, it suffices to take
\begin{align*}
    \tau &\asymp \frac{\varepsilon^2}{Ld^2}\,, \qquad \varepsilon_* \lesssim \frac{\varepsilon}{\sqrt{d\,(T + \log(\nfrac{Ld^2}{\varepsilon^2}))}}\,, 
    \qquadand m \gtrsim \frac{d^2\,(T^2+ LT\log(\nfrac{d^2}{\varepsilon^2}))}{\varepsilon^2}\,. \qedhere
\end{align*}
\end{proof}

\subsubsection{Integrated score estimation implies PAC density estimation}\label{sec:computational:integrated_to_density}

    In this section, we reduce the problem of PAC density estimation to that of implementing the integrated score estimation oracle introduced in \cref{def:integratedOracle}.
    
We need the following mild assumption on the density $P$.

\begin{assumption}[Second moment bound]\label{assumption:density}
There exists $M_2 > 0$ such that
the density $P$ has second moment {at most} $M_2$, \ie{}, $\int \|x\|^2\, P(\d x) \leq M_2$.
\end{assumption}

In this section, we prove the following.

\begin{theorem}[Integrated score estimation implies PAC density estimation]\label{thm:integratedscoretodensity} 
Consider a density $P$ satisfying \Cref{assumption:density} with parameter $M_2$.
For any $\eps\in (0,1)$, if there is an efficient $\epsilon$-integrated score estimation oracle for $P$ with terminal time $T\ge \frac{1}{2}\log(1+\nfrac{2M_2}{\varepsilon})$, then, there exists an efficient algorithm outputting a function $\wh P\colon \R^d\to\R_+$ (as an evaluation oracle) such that
\begin{align*}
    \int \E\Bigl\lvert \log \frac{\wh P(x_0)}{P(x_0)}\Bigr\rvert \,P(\d x_0) \le 2\varepsilon\,.
\end{align*}
\end{theorem}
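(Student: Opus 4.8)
The plan is to build the density estimator by exponentiating the negative of the integrated score estimation oracle's output, after subtracting the closed-form pieces that \Cref{lem:identity} isolates. Concretely, the algorithm I propose is: on input $x_0$, call the $\varepsilon$-integrated score estimation oracle to obtain $\wh v(x_0)$, and return
\[
    \wh P(x_0) \deq \exp\!\Bigl(-\ent\bigl(Q_{T|0}(\cdot\mid x_0)\bigr) - dT - \wh v(x_0)\Bigr)\,,
\]
where $\ent\bigl(Q_{T|0}(\cdot\mid x_0)\bigr) = \tfrac d2\log\bigl(2\pi e\,(1-e^{-2T})\bigr)$ is the (exactly computable, in fact $x_0$-independent) differential entropy of the Gaussian transition kernel $Q_{T|0}(\cdot\mid x_0) = \cN(e^{-T} x_0,\,(1-e^{-2T})\,\mathrm{Id})$. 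This $\wh P$ is manifestly $\R_+$-valued; producing it is immediate, and each evaluation costs one oracle call plus $\poly(d)$ arithmetic, so the procedure inherits efficiency from the oracle.

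For the analysis, I would start from \Cref{lem:identity}, rearranged to $-\log P(x_0) = v(x_0) + dT - \int \log P_T\,\d Q_{T|0}(\cdot\mid x_0)$ for every $x_0$, and then split the last term via the cross-entropy identity $-\int \log P_T\,\d Q_{T|0}(\cdot\mid x_0) = \ent\bigl(Q_{T|0}(\cdot\mid x_0)\bigr) + \KL\bigl(Q_{T|0}(\cdot\mid x_0)\mmid P_T\bigr)$ (valid for $P$-a.e.\ $x_0$, since both measures have full support and the relevant integrals are finite under \Cref{assumption:density}). Substituting and subtracting $\log \wh P(x_0)$, the closed-form terms ($\ent$ and $dT$) cancel and the $v$-terms combine, leaving
\[
    \log \wh P(x_0) - \log P(x_0) = \bigl(v(x_0) - \wh v(x_0)\bigr) + \KL\bigl(Q_{T|0}(\cdot\mid x_0)\mmid P_T\bigr)\,.
\]
Because the $\KL$ term is nonnegative and does not depend on the oracle's randomness, I would take absolute values, then $\E$ over the oracle, then integrate against $P$: the oracle guarantee $\int \E\lvert v-\wh v\rvert\,\d P\le\varepsilon$ handles the first term, while the second contributes $\int \KL\bigl(Q_{T|0}(\cdot\mid x_0)\mmid P_T\bigr)\,P(\d x_0)$, which is exactly the mutual information $I(X_0;X_T)$ between the endpoints of the OU process started at $P$.

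The remaining --- and genuinely the only substantive --- step is to show $I(X_0;X_T)\le\varepsilon$ under the hypothesis $T\ge\tfrac12\log(1+\nfrac{2M_2}{\varepsilon})$; this is the role of \Cref{lem:err_term}. The mechanism: $Y \deq X_T/\sqrt{1-e^{-2T}}$ is an invertible function of $X_T$ and equals $\tfrac{e^{-T}}{\sqrt{1-e^{-2T}}}\,X_0 + Z$ with $Z\sim\cN(0,\mathrm{Id})$ independent of $X_0$, so the maximum-entropy bound for Gaussian channels gives
\[
    I(X_0;X_T) = I(X_0;Y) \le \tfrac12\log\det\!\Bigl(\tfrac{e^{-2T}}{1-e^{-2T}}\,\Cov(X_0) + \mathrm{Id}\Bigr) \le \tfrac{e^{-2T}}{2\,(1-e^{-2T})}\,\Tr\Cov(X_0) \le \tfrac{e^{-2T}}{2\,(1-e^{-2T})}\,M_2\,,
\]
which is $\le\varepsilon$ once $e^{2T}-1\ge M_2/(2\varepsilon)$, in particular under the stated threshold. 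Combining the two displays gives $\int \E\lvert\log(\wh P/P)\rvert\,\d P\le 2\varepsilon$. I expect the routine parts to be checking finiteness so that the cross-entropy identity and the nested $\E$/$\int$ are legitimate, and tracking the exact constant in the $T$-threshold. The conceptual crux is the estimate on $I(X_0;X_T)$: it is a \emph{dimension-free}, exponentially decaying quantity, and that is precisely why the required $T$ depends only on $M_2$ and $\varepsilon$ and not on $d$.
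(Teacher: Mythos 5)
Your proposal is correct, and it uses the same estimator and the same decomposition as the paper: you set $\wh P = \exp(-\wh v - dT - \tfrac d2\log(2\pi e\,(1-e^{-2T})))$, invoke \Cref{lem:identity} plus the Gaussian cross-entropy identity to get $\log\wh P - \log P = (v-\wh v) + \KL(Q_{T|0}(\cdot\mid x_0)\mmid P_T)$, and split the error into the oracle term and the KL term. Where you genuinely diverge is in bounding the KL term. The paper's \Cref{lem:err_term} gives a \emph{pointwise-in-$x_0$} bound, $\KL(Q_{T|0}(\cdot\mid x_0)\mmid P_T) \le (e^{2T}-1)^{-1}(\|x_0\|^2 + M_2)$, via the dimension-free log-Harnack inequality, and then integrates against $P$. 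You instead bound only the $P$-average, identifying $\int \KL(Q_{T|0}(\cdot\mid x_0)\mmid P_T)\,P(\d x_0)$ as the mutual information $I(X_0;X_T)$ of the OU channel and controlling it by the max-entropy/Gaussian-channel estimate $\tfrac12\log\det\bigl(\tfrac{e^{-2T}}{1-e^{-2T}}\Cov(X_0)+\mathrm{Id}\bigr) \le \tfrac{M_2}{2(e^{2T}-1)}$; the algebra here checks out (including $\Tr\Cov(X_0)\le M_2$ and $\log\det(\mathrm{Id}+A)\le\Tr A$), and it even improves the constant, so the stated threshold on $T$ is more than sufficient. The trade-offs: your argument is elementary and self-contained (no log-Harnack input) and suffices for this theorem, since only the averaged error enters the guarantee; the paper's per-point bound is stronger and is reused elsewhere (e.g., in the proof of \Cref{thm:efficiency}, and wherever one wants control conditional on a given $x_0$ or under a measure other than $P$). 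Both routes yield the same dimension-free, exponentially-in-$T$ decaying error, which is the conceptual point you correctly identify.
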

We remark that while the output $\wh P$ is non-negative, it does not necessarily integrate to $1$ and is therefore not a valid probability density.
Thus, despite appearances,~\Cref{thm:integratedscoretodensity} does not provide a KL divergence guarantee.

\begin{proof}[Proof of~\Cref{thm:integratedscoretodensity}]
    By the existence of the integrated score estimation oracle, we know that there is an efficient algorithm that outputs a function $\wh{v}\colon \R^d\to \R$ such that
    \begin{align*}
        \int \E|\wh v(x_0) - v(x_0)|\,P(\d x_0) \le \varepsilon\,,
    \end{align*}
    where $v(\cdot)$ is given in~\cref{def:integratedOracle}.
    Since \cref{assumption:density} holds, $P$ has finite second moment, due to which \Cref{lem:identity,lem:err_term,eq:identity} are applicable.
    First, from \Cref{lem:identity}, we know that for all $x_0\in\R^d$:
    \begin{align*}
        &\int \log P_{T}\,\d Q_{T|0}(\cdot \mid x_0) - \log P(x_0) \\
        &\qquad = \underbrace{\int_0^T \int \bigl\{\|\nabla \log P_{t}\|^2 - 2\,\langle \nabla \log P_{t},\nabla \log Q_{t|0}(\cdot \mid x_0)\rangle\bigr\}\,\d Q_{t|0}(\cdot\mid x_0) \,\d t}_{=v(x_0)} + dT\,.
    \end{align*}
    This means that we can use the integrated score oracle as an estimator for the negative log-likelihood $-\log P$ by defining the function $\ell\colon \R^d \to \R$
    with
    \begin{align*}
        \ell(x_0) 
        & \coloneqq  \wh{v}(x_0) + d\,\bigl(T + \frac{1}{2}\log(2\pi e\,(1-\exp(-2T))) \bigr)\,.
    \end{align*}
    Next, from \cref{eq:identity}, we know that for any $x_0$,
    \[
        \abs{-\log{P(x_0)} - \ell(x_0) }
        = 
        \KL(Q_{T|0}(\cdot\mid x_0) \mmid P_{T})\,.  
    \]
    It remains to show that for $T$ sufficiently large the above error is negligible.
    {Toward this, we use the following bound from \cref{lem:err_term}:}
    \[
    \KL(Q_{T|0}(\cdot\mid x_0) \mmid P_{T}) \leq \frac{1}{\exp(2T)-1} \,\Bigl(\|x_0\|^2 + \int \|x\|^2\,P(\d x)\Bigr)\,.  
    \]
    It follows from~\Cref{assumption:density} that
    \begin{align*}
        \int |{-\log P(x_0)} - \ell(x_0)| \, P(\d x_0)
        &\le \frac{2M_2}{\exp(2T)-1}\,.
    \end{align*}
    This is made at most $\epsilon$ if $T \ge \frac{1}{2} \log(1+\nfrac{2M_2}{\eps})$.
    So, if we let $\wh P \deq \exp(-\ell)$, we have shown that
    \begin{align*}
        \int \E\bigl\lvert \log \frac{\wh P(x_0)}{P(x_0)}\bigr\rvert \,P(\d x_0) &\le 2\varepsilon\,. \qedhere
    \end{align*}
\end{proof}

\begin{remark}
    To explain why the theorem implies PAC density estimation, suppose that the score estimation oracle is implemented on the basis of samples and yields score estimates satisfying $\E\int_\tau^T \|s_t-\nabla \log P_t\|_{L^2(P_t)}^2\,\d t \le \eps_*^2$.
    By Markov's inequality, with probability at least $\nfrac{9}{10}$ over the samples, it holds that $\int_\tau^T \|s_t-\nabla \log P_t\|_{L^2(P_t)}^2\,\d t \le 10\eps_*^2$.
    Conditioned on this event, we can apply~\Cref{thm:integratedscoretodensity} and Markov's inequality to deduce that
    \begin{align*}
        \E P\{x\in\R^d : \wh P(x) \notin [e^{-2\eps/\delta}\,P(x),\, e^{2\eps/\delta}\,P(x)]\} \le \delta\,,
    \end{align*}
    \ie{}, it yields a {$(2\eps/\delta, \delta)$}-PAC density estimator.
\end{remark}

\subsubsection{{Completing the reduction}}

    {Combining \cref{thm:scoreEstimationtoIntegrated,thm:integratedscoretodensity}, we obtain the following result.
    
    \begin{theorem}[Reduction from score estimation to PAC density estimation]\label{thm:reduction}
        Let $P$ be a distribution on $\R^d$ that satisfies \cref{asmp:logLipschitz} with parameter $L$ and \Cref{assumption:density} with parameter $M_2\geq 1$.
        There is an efficient algorithm that, given access to a score estimation oracle for $P$, outputs a function $\wh P\colon \R^d\to \R_+$ (as an evaluation oracle) such that 
        \begin{align*}
            \int \E\bigl\lvert \log \frac{\wh P(x_0)}{P(x_0)}\bigr\rvert \,P(\d x_0) \le 2\varepsilon\,.
        \end{align*}
        
        The algorithm makes $N$ calls to the score estimation oracle with accuracy $\eps_*$ for
        \[
            N = \wt{O}\Bigl(
                \frac{Ld^2\log^2(M_2)}{\eps^2}
            \Bigr)
            \qquadand
            \eps_* = \wt{O}\Bigl(\frac{\eps}{\sqrt{d\log(LM_2)}}\Bigr)\,,
        \]
        and the early stopping parameter $\tau$ of the score estimation oracle is required to satisfy $\tau \lesssim \nicefrac{\varepsilon^2}{Ld^2}$.
        Moreover, the algorithm takes $\poly(N)$ time.
    \end{theorem}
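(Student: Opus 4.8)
The plan is to obtain \cref{thm:reduction} by composing the two reductions already in hand: \cref{thm:scoreEstimationtoIntegrated} (score estimation implies integrated score estimation) and \cref{thm:integratedscoretodensity} (integrated score estimation implies PAC density estimation). Since $P$ satisfies \cref{asmp:logLipschitz} with parameter $L$ and \cref{assumption:density} with parameter $M_2 \ge 1$, both are applicable. Concretely, I would first fix the terminal time
\[
    T \deq \max\Bigl\{1,\ \tfrac12 \log\bigl(1 + \tfrac{2M_2}{\varepsilon}\bigr)\Bigr\} = \Theta\bigl(1 + \log(M_2/\varepsilon)\bigr)\,,
\]
chosen so that it simultaneously meets the requirement $T \ge 1$ of \cref{thm:scoreEstimationtoIntegrated} and the requirement $T \ge \tfrac12\log(1+2M_2/\varepsilon)$ of \cref{thm:integratedscoretodensity}. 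Invoking \cref{thm:scoreEstimationtoIntegrated} with accuracy $\varepsilon$, constant $L$, terminal time $T$, and query access to the given score estimation oracle (whose early-stopping parameter $\tau$ must satisfy $\tau \lesssim \varepsilon^2/(Ld^2)$) produces an efficient $(\varepsilon, T)$-integrated score estimation oracle $\wh v$ for $P$ that makes $N = \wt O\bigl((L+T)Td^2/\varepsilon^2\bigr)$ calls to the score oracle, each at accuracy $\varepsilon_* = \wt O\bigl(\varepsilon/\sqrt{d(\log L + T)}\bigr)$, and runs in $\poly(N)$ time.

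Next I would feed $\wh v$ into \cref{thm:integratedscoretodensity}: its hypothesis (an efficient $\varepsilon$-integrated score estimation oracle with terminal time at least $\tfrac12\log(1+2M_2/\varepsilon)$) holds by construction, so it returns an evaluation oracle $\wh P \deq \exp(-\ell)$, where $\ell(x_0) = \wh v(x_0) + d\,(T + \tfrac12\log(2\pi e\,(1-e^{-2T})))$, satisfying $\int \E\bigl\lvert \log \tfrac{\wh P(x_0)}{P(x_0)}\bigr\rvert\, P(\d x_0) \le 2\varepsilon$, which is exactly the claimed bound. Evaluating $\wh P$ at a point costs one run of $\wh v$ plus $O(d)$ arithmetic, so the entire pipeline --- both producing $\wh P$ and evaluating it at inference time --- runs in $\poly(N)$ time, i.e.\ is efficient.

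It then remains to simplify the parameter bounds into the stated form. Substituting $T = \Theta(1 + \log(M_2/\varepsilon))$ and using $L \ge 1$, $M_2 \ge 1$, $\varepsilon \in (0,1)$, one has $(L+T)T \lesssim LT + T^2 \lesssim L\,(1+\log(M_2/\varepsilon))^2$; since $\log(1/\varepsilon)$ is polylogarithmic in the final argument $Ld^2\log^2(M_2)/\varepsilon^2$, it is absorbed by $\wt O(\cdot)$, yielding $N = \wt O\bigl(Ld^2\log^2(M_2)/\varepsilon^2\bigr)$, and similarly $\log L + T = \wt O(\log(LM_2))$ gives $\varepsilon_* = \wt O\bigl(\varepsilon/\sqrt{d\log(LM_2)}\bigr)$; the constraint $\tau \lesssim \varepsilon^2/(Ld^2)$ carries over verbatim from \cref{thm:scoreEstimationtoIntegrated}. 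There is no genuine obstacle in this proof --- it is a bookkeeping composition of the two main reductions --- and the only points that need attention are (i) selecting a single terminal time $T$ that satisfies the lower bounds demanded by both theorems, (ii) verifying that the early-stopping constraint on $\tau$ is identical in both, and (iii) the routine polylogarithmic manipulations needed to match the advertised expressions for $N$ and $\varepsilon_*$.
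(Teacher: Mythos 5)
Your proposal is correct and is essentially the paper's own proof: the paper likewise sets $T\asymp \log(M_2/\eps^2)$ (your choice $T\asymp\log(M_2/\eps)$ differs only inside a logarithm) and then composes \cref{thm:scoreEstimationtoIntegrated} with \cref{thm:integratedscoretodensity}, with the remaining work being exactly the parameter bookkeeping you describe. No gaps.
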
}
\begin{proof}
    We set $T\asymp \log\nfrac{M_2}{\varepsilon^2}$.
    A more precise choice of parameters, obtained from the proof of~\Cref{thm:scoreEstimationtoIntegrated}, is given by
    \begin{align*}
        \eps_* &\lesssim \frac{\eps}{\sqrt{d\,(\log\nfrac{M_2}{\eps^2} + \log \nfrac{Ld^2}{\eps^2})}}\qquadand N \gtrsim \frac{d^2\,(\log^2(\nfrac{M_2}{\eps^2}) + L\log(\nfrac{M_2}{\eps^2})\log(\nfrac{d^2}{\eps^2}))}{\eps^2}\,. \qedhere
    \end{align*}
\end{proof}

{
\begin{remark}
    It is interesting to compare this reduction with the one for generation.
    If we ignore algorithmic considerations and solely focus on how the score estimation error $\eps_*$ translates into the error for distribution learning, then existing works on sampling from diffusion models (or simply Girsanov's theorem) imply the following statement.
    If $\wh P_{\msf{gen}}$ denotes the law of the output of the diffusion model with estimated scores, then
    \begin{align*}
        2\tv{\wh P_{\msf{gen}}}{P}^2 \le \KL(P \mmid \wh P_{\msf{gen}}) \lesssim \eps_*^2\,.
    \end{align*}
    Thus, $\eps_*$ score estimation error leads to $\eps_*$ error in total variation.
    On the other hand, our reduction shows that for density estimation,
    \begin{align*}
        \int \E\bigl\lvert \log \frac{\wh P}{P}\bigr\rvert\,\d P
        \lesssim \eps_*^2 + \wt O(\eps_*\sqrt d)\,.
    \end{align*}
    Since this performance metric greatly resembles a KL divergence, it is natural to wonder if the extra term $\wt O(\eps_*\sqrt d)$ is superfluous.
    Perhaps surprisingly, the answer is no: the right-hand side must contain a term scaling linearly with $\eps_*$, or else it would violate minimax lower bounds for density estimation; see~\Cref{sec:smoothDensityEstimation}.
\end{remark}
}

\subsection{Early stopping}\label{sec:earlyStop}

The reduction in~\Cref{sec:reductionScore} requires the assumption that the initial distribution $P$ has a sub-Gaussian score.
In this section, we show that even if this assumption is removed, we can still output an estimate of the density $P_\tau$ with early stopping.
This is used for our result on density estimation over H\"older classes in~\Cref{sec:smoothDensityEstimation}.

\begin{theorem}[PAC density estimation with early stopping]\label{thm:early_stopping}
    Let $P$ be a distribution on $\R^d$ that satisfies~\Cref{assumption:density} with parameter $M_2 \ge 1$.
    Let $0 < \tau, \varepsilon < 1$.
    Then, given access to a score estimation oracle with early stopping $\tau$, there is an algorithm that outputs a function $\wh P_\tau \colon \R^d\to\R_+$ (as an evaluation oracle) such that
    \begin{align*}
        \int \E\bigl\lvert \log \frac{\wh P_\tau(x_0)}{P_\tau(x_0)}\bigr\rvert \,P_\tau(\d x_0)
        \le 2\varepsilon\,.
    \end{align*}
    The algorithm makes $N$ calls to the score estimation oracle with accuracy $\eps_*$ for
    \[
        N \asymp 
        \frac{d^2\,(\log^2(\nfrac{M_2}{\eps^2}) + \tau^{-1}\log(\nfrac{M_2}{\eps^2})\log(\nfrac{d^2}{\eps^2}))}{\eps^2}\qquadand
        \eps_* \asymp \frac{\eps}{\sqrt{d\,(\log\nfrac{M_2}{\eps^2} + \log \nfrac{d^2}{\tau\eps^2})}}\,,
    \]
    Moreover, the algorithm takes $\poly(N)$ time.
\end{theorem}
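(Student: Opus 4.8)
The plan is to deduce the statement from the reduction for distributions with a sub-Gaussian score, \Cref{thm:reduction}, applied to the smoothed distribution $\widetilde P \deq P_\tau$ in place of $P$. Once two structural facts are in place, the result is essentially a black-box corollary, and the only genuine work is parameter bookkeeping.

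The first ingredient is that $P_\tau$ always has a sub-Gaussian score, with parameter $\Theta(\nfrac1\tau)$, no matter how irregular $P$ is. Writing $X_\tau = e^{-\tau}X_0 + \sqrt{1-e^{-2\tau}}\,Z$ with $X_0\sim P$ and $Z\sim\normal{0}{\mathrm{Id}}$ independent, I would view $P_\tau$ as the mixture $\mu K$ with $\mu = P$ and Gaussian kernel $K(x_0,\cdot) = \normal{e^{-\tau}x_0}{(1-e^{-2\tau})\,\mathrm{Id}}$. For every $x_0$ and $v$, $\langle v,\nabla\log K(x_0,\cdot)\rangle$ is centered Gaussian with variance $\|v\|^2/(1-e^{-2\tau})$ under $K(x_0,\cdot)$, hence $\tfrac{\|v\|^2}{1-e^{-2\tau}}$-sub-Gaussian, so \Cref{lem:subG_score_mixture} gives that $\nabla\log P_\tau$ is $\tfrac1{1-e^{-2\tau}}$-sub-Gaussian under $P_\tau$; that is, $\widetilde P$ satisfies \Cref{asmp:logLipschitz} with $\widetilde L \deq \tfrac1{1-e^{-2\tau}}$. (Equivalently, this is the ``large-$t$'' bound from the proof of \Cref{lem:score_subG} specialized to $t=\tau$, which used no hypothesis on $P$.) Since $1-e^{-2\tau}\gtrsim\tau$ for $\tau\in(0,1)$, we get $\widetilde L\asymp\nfrac1\tau$ and $\widetilde L\ge1$. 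Moreover, by independence of $X_0$ and $Z$, $\int\|y\|^2\,P_\tau(\d y)= e^{-2\tau}\int\|x\|^2\,P(\d x)+(1-e^{-2\tau})\,d\le M_2+d$, so \Cref{assumption:density} holds for $\widetilde P$ with $\widetilde M_2\deq M_2+d\ge1$.

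The second ingredient is that a score estimation oracle for $P$ with early stopping $\tau$ doubles, up to a time shift, as a score estimation oracle for $\widetilde P$ with \emph{no} early stopping. Indeed, by the semigroup property of the OU process, running it from $P_\tau$ for an additional time $s$ produces $P_{\tau+s}$, so $(\widetilde P)_s = P_{\tau+s}$ and $\nabla\log(\widetilde P)_s = \nabla\log P_{\tau+s}$. Thus I would answer a $\widetilde P$-query at time $s\ge0$ by calling the given $P$-oracle at time $\tau+s\ge\tau$: the laws and scores coincide, so the $L^2$ bound transfers verbatim, and the integrated error over $[0,T']$ equals $\int_{\tau}^{\tau+T'}\varepsilon_t^2\,\d t\le\varepsilon_*^2$. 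In particular, \Cref{thm:scoreEstimationtoIntegrated} requires the $\widetilde P$-oracle's early-stopping parameter to be $\lesssim\varepsilon^2/(\widetilde L d^2)$, which is satisfied since ours is $0$.

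With these in hand, I would invoke \Cref{thm:reduction} on $\widetilde P$ with $L\mapsto\widetilde L\asymp\nfrac1\tau$ and $M_2\mapsto\widetilde M_2=M_2+d$. It produces, in $\poly(N)$ time, an evaluation oracle $\wh P_\tau\colon\R^d\to\R_+$ with $\int\E\abs{\log(\wh P_\tau/P_\tau)}\,P_\tau(\d x_0)\le 2\varepsilon$, making $N$ calls to the $\widetilde P$-oracle---hence $N$ calls to the given $P$-oracle---at accuracy $\varepsilon_*$, where the algorithm queries times up to $\tau+T'$ with $T'\asymp\log(\nfrac{M_2}{\varepsilon^2})$ (the terminal time demanded by \Cref{thm:integratedscoretodensity}). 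Substituting $\widetilde L\asymp\nfrac1\tau$ and $\widetilde M_2\lesssim M_2+d$ into the refined parameter choices from the proof of \Cref{thm:scoreEstimationtoIntegrated} then yields exactly the advertised $N$ and $\varepsilon_*$. I expect this final bookkeeping to be the only delicate point: one must check that the $\Theta(\nfrac1\tau)$ sub-Gaussian constant of $P_\tau$ shows up precisely as the $\tau^{-1}$ factor in $N$ and the $\log(\nfrac{d^2}{\tau\varepsilon^2})$ term inside $\varepsilon_*$, and that the chosen terminal time $\tau+T'$ stays within the range of times for which the $P$-oracle provides estimates; everything else is a direct, black-box application of \Cref{thm:reduction}.
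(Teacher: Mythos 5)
Your proposal is correct and follows essentially the same route as the paper's (very terse) proof, which simply applies the reduction of \Cref{thm:reduction} with $P$ replaced by $P_\tau$ and notes that $P_\tau$ has a $O(\nfrac{1}{\tau})$-sub-Gaussian score. Your write-up additionally spells out the details the paper leaves implicit---the mixture/Gaussian-kernel justification of the sub-Gaussian score of $P_\tau$ (avoiding the initial-score hypothesis of \Cref{lem:score_subG}), the second-moment bound $M_2+d$, and the time-shift that turns the early-stopped oracle for $P$ into an un-stopped oracle for $P_\tau$---which is a faithful elaboration rather than a different argument.
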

\begin{proof}
    We apply the reduction in~\Cref{sec:reductionScore}, {after} replacing $P$ with $P_\tau$.
    By~\Cref{lem:score_subG}, $P_\tau$ has a sub-Gaussian score with parameter $L = 1/(1-e^{-2\tau}) = O(1/\tau)$.
\end{proof}

\subsection{Application to estimating the differential entropy}

In this paper, we focus on applications of our reduction from PAC density estimation. However, here we briefly mention that our guarantee immediately implies that a score estimation oracle can be used to estimate the differential entropy of the distribution $P$, which is also a well-studied problem (see \cite{han2020differentialEntropy} and references therein).

To see how to estimate differential entropy using our tools, assume that we have access to $\wh P$ satisfying the guarantee of~\Cref{thm:integratedscoretodensity}. The estimator is simply defined by drawing $n$ i.i.d.\ samples $X_0^{(1)},\dotsc,X_0^{(n)}$ from $P$ and outputting the median of the values $\{-\log \wh P(X_0^{(i)})\}_{i\in [n]}$. By the guarantee for $\wh P$,
\begin{align*}
    \E\Bigl\lvert \frac{1}{n} \sum_{i=1}^n \log \wh P(X_0^{(i)}) - \int \log P\,\d P\Bigr\rvert
    &= \E\Bigl\lvert \frac{1}{n} \sum_{i=1}^n \log \frac{\wh P(X_0^{(i)})}{P(X_0^{(i)})} + \sum_{i=1}^n \log P(X_0^{(i)}) - \int \log P\,\d P\Bigr\rvert \\[0.25em]
    &\le 2\eps + \sqrt{\frac{\Var_P \log P}{n}}\,.
\end{align*}

\section{DDPM is an asymptotically efficient parameter estimator}\label{sec:efficiency}

    {In this section, we study the use of DDPM score estimation for estimating parameters over a parametric family $P \in \hyP = \{P_\theta : \theta \in\Theta\}$.
    We consider the following idealized DDPM estimator, which is equivalent to selecting the parameter that minimizes the DDPM risk in \cref{lem:identity} over samples $x_0$ from $P$.}
    
    \ddpmEstimator*
    
    \noindent The main result of this section is that, under mild regularity assumptions on the distribution family $\hyP$ (essentially the same conditions needed for the asymptotic normality of the MLE, see~\Cref{ass:mle}) and by choosing the terminal time $T = T_n$ to grow sufficiently rapidly with the number of samples $n$ (namely, $T_n - \frac{1}{2}\log n \to \infty$), 
    the DDPM estimator $\DDPM$ converges in distribution to a Gaussian centered at $\theta^\star$ with covariance \emph{exactly} equal to the inverse Fisher information.

     \subsection{Implications {of the likelihood identity} for parameter estimation}

       {We begin by noting the following immediate consequence of ~\Cref{lem:identity}}: When specialized to a parametric family $P \in \hyP = \{P_\theta : \theta \in\Theta\}$, set $x_0 = X_0^{(i)}$, and sum over $i\in [n]$, where $X_0^{(1)},\dotsc,X_0^{(n)}\simiid P_{\theta^\star}$, \Cref{lem:identity} implies that the empirical risk for the maximum likelihood estimator (MLE) coincides with the empirical score matching loss, up to a known constant and a vanishing error.
    More precisely:

      \begin{proposition}[Tight connection between DDPM and MLE]\label{infthm}
        The {DDPM} objective $\hat{\cR}_n^{\rm DDPM}$ and the maximum likelihood objective $\hat{\cR}_n^{\rm MLE}$ satisfy:
        \begin{align*}
            \hat{\cR}_n^{\rm MLE}(\theta)  
            = 
            \hat{\cR}_n^{\rm DDPM}(\theta) + C_{d,T} + 
            \frac{1}{n} \sum_{i=1}^n \KL\bigl(Q_{T|0}(\cdot \mmid X_0^{(i)}) \bigm\Vert P_{\theta,T}\bigr)
        \end{align*}
        where $\hat{\cR}_n^{\rm MLE}(\theta) \coloneqq - \frac{1}{n} \sum_{i=1}^n \log P_\theta(X_0^{(i)})$, $\hat{\cR}_n^{\rm DDPM}$ is given in \Cref{def:ddpmMain},
        and $C_{d,T} = d\,(T+\frac{1}{2}\log(2\pi e\,(1-e^{-2T})))$ is a fixed constant.
    \end{proposition}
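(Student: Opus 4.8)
The plan is to obtain the identity as a direct, pointwise consequence of the likelihood identity (\Cref{lem:identity}), applied within the parametric family at each data point. First I would specialize \Cref{lem:identity} by taking $P = P_\theta$ (so that the ``$P_t$'' appearing there is $P_{\theta,t}$ in the notation of \Cref{def:ddpmMain}) and $x_0 = X_0^{(i)}$. This gives, for every $i\in[n]$,
\begin{align*}
    \int \log P_{\theta,T}\,\d Q_{T|0}(\cdot\mid X_0^{(i)}) - \log P_\theta(X_0^{(i)}) = v_\theta(X_0^{(i)}) + dT\,,
\end{align*}
where $v_\theta(x_0)$ is the integrated DDPM score matching objective at $x_0$ for the family $P_\theta$, i.e., the right-hand-side double integral of \Cref{lem:identity} with $P_t$ replaced by $P_{\theta,t}$. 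Rearranging and averaging over $i$, and using $\hat{\cR}_n^{\rm MLE}(\theta) = -\frac1n\sum_i \log P_\theta(X_0^{(i)})$, yields
\begin{align*}
    \hat{\cR}_n^{\rm MLE}(\theta) = \frac1n\sum_{i=1}^n v_\theta(X_0^{(i)}) + dT - \frac1n\sum_{i=1}^n \int \log P_{\theta,T}\,\d Q_{T|0}(\cdot\mid X_0^{(i)})\,.
\end{align*}

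The second step is to identify $\frac1n\sum_i v_\theta(X_0^{(i)})$ with $\hat{\cR}_n^{\rm DDPM}(\theta)$. This uses the explicit Gaussian form of the OU transition kernel: since $Q_{t|0}(\cdot\mid x_0) = \normal{e^{-t}x_0}{(1-e^{-2t})\,\mathrm{Id}}$, we have $\nabla\log Q_{t|0}(x_t\mid x_0) = -(x_t - e^{-t}x_0)/(1-e^{-2t})$, and substituting $x_t = X_t^{(i)} = e^{-t}X_0^{(i)} + \sqrt{1-e^{-2t}}\,Z_t^{(i)}$ gives $\nabla\log Q_{t|0}(X_t^{(i)}\mid X_0^{(i)}) = -Z_t^{(i)}/\sqrt{1-e^{-2t}}$, so that $-2\langle \nabla\log P_{\theta,t}, \nabla\log Q_{t|0}(\cdot\mid X_0^{(i)})\rangle$ becomes $\langle \nabla\log P_{\theta,t}(X_t^{(i)}), 2Z_t^{(i)}/\sqrt{1-e^{-2t}}\rangle$. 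Since the law of $X_t^{(i)}$ given $X_0^{(i)}$ is precisely $Q_{t|0}(\cdot\mid X_0^{(i)})$, integrating the two terms over $x_t$ against $Q_{t|0}(\cdot\mid X_0^{(i)})$ is the same as the conditional expectation $\E[\,\cdot\mid X_0^{(i)}]$ in \Cref{def:ddpmMain}; integrating over $t\in[0,T]$ then shows $v_\theta(X_0^{(i)})$ equals the $i$-th summand of $\hat{\cR}_n^{\rm DDPM}(\theta)$, hence $\frac1n\sum_i v_\theta(X_0^{(i)}) = \hat{\cR}_n^{\rm DDPM}(\theta)$ exactly, with no residual error term.

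The final step is to rewrite the terminal term as a KL divergence plus a known constant. Writing $\KL(Q_{T|0}(\cdot\mid x_0)\mmid P_{\theta,T}) = \int \log Q_{T|0}(\cdot\mid x_0)\,\d Q_{T|0}(\cdot\mid x_0) - \int \log P_{\theta,T}\,\d Q_{T|0}(\cdot\mid x_0)$, and noting that $Q_{T|0}(\cdot\mid x_0)$ is a Gaussian with covariance $(1-e^{-2T})\,\mathrm{Id}$, whose differential entropy is $\frac d2\log(2\pi e\,(1-e^{-2T}))$, we get $-\int \log P_{\theta,T}\,\d Q_{T|0}(\cdot\mid x_0) = \KL(Q_{T|0}(\cdot\mid x_0)\mmid P_{\theta,T}) + \frac d2\log(2\pi e\,(1-e^{-2T}))$. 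Substituting this and the previous step into the display above and collecting $dT + \frac d2\log(2\pi e\,(1-e^{-2T})) = C_{d,T}$ gives, after averaging over $i$, the stated identity. I do not anticipate a genuine obstacle: the statement is essentially a bookkeeping reformulation of \Cref{lem:identity}, and the only care needed is to ensure all integrals are finite (guaranteed because each $P_\theta$ is a continuous density with finite second moment, as in the hypotheses accompanying the asymptotic-normality setup) and to track the additive constants correctly, where the differential-entropy-of-a-Gaussian computation is the single place an off-by-a-factor slip could occur.
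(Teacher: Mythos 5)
Your proposal is correct and follows essentially the same route as the paper: specialize the likelihood identity (\Cref{lem:identity}) to $P_\theta$ at each sample, identify the conditional-expectation form of $\hat{\cR}_n^{\rm DDPM}$ with the $Q_{t|0}$-integral via $\nabla\log Q_{t|0}(X_t^{(i)}\mid X_0^{(i)}) = -Z_t^{(i)}/\sqrt{1-e^{-2t}}$ (the paper's Equation \eqref{eq:ddpm-emp}, derived in \Cref{appendix:ddpm}), and rewrite the terminal cross-entropy term as the KL divergence plus the Gaussian differential entropy $\frac{d}{2}\log(2\pi e\,(1-e^{-2T}))$. Your constant bookkeeping matches $C_{d,T}$ exactly, so no changes are needed.
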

    
    Since we show later in this section (\Cref{lem:err_term}) that the final term above decays as $\exp(-2T)$, it is intuitive from~\Cref{infthm} that the DDPM estimator inherits the favorable properties of the MLE\@, including its statistical efficiency.
     We make this precise in~\Cref{sec:proof-limit} under the assumption that the family $\{P_\theta : \theta\in\Theta\}$ is differentiable in quadratic mean, which is essentially the weakest regularity condition under which the Fisher information is well-defined.

    \subsection{The proof of Informal Theorem~\ref{thm:normalityEfficiency}}\label{sec:proof-limit}

    We now proceed with the proof of \Cref{thm:normalityEfficiency} regarding the asymptotic efficiency of DDPM score matching.
    Let $Q_{t|0}(\cdot \mid x_0)$ denote the transition density of the OU process run until time $t$ started at time $0$ at $x_0.$ 
    
\paragraph{{Step 1 (Likelihood identity).}} As a first step,~\Cref{lem:identity} implies that for any $\theta\in\Theta$,
    \begin{align*}
        &\int \log P_{\theta,T}\,\d Q_{T|0}(\cdot \mid x_0) - \log P_\theta(x_0) \\
        &\qquad = \int_0^T \int \bigl\{\|\nabla \log P_{\theta,t}\|^2 - 2\,\langle \nabla \log P_{\theta,t},\nabla \log Q_{t|0}(\cdot \mid x_0)\rangle\bigr\}\,\d Q_{t|0}(\cdot\mid x_0) \,\d t + dT\,.
    \end{align*}

\paragraph{{Step 2} (Relating MLE and DDPM).}
Therefore, if we consider the one-sample empirical risks (where Equation \eqref{eq:ddpm-emp} is proved in \Cref{appendix:ddpm}),
\begin{align}
    \widehat{\cR}^{\rm MLE}(\theta)
    &= -\log P_\theta(x_0)\,, \nonumber \\
    \widehat{\cR}^{\rm DDPM}(\theta)
    &= \int_0^T \int \{\|\nabla \log P_{\theta,t}\|^2 - 2\,\langle \nabla \log P_{\theta,t}, \nabla \log Q_{t|0}(\cdot \mid x_0)\rangle\}\,\d Q_{t|0}(\cdot \mid x_0)\,\d t \,,
    \label{eq:ddpm-emp}
\end{align}
we can rewrite the identity above as
\begin{align}
    \widehat{\cR}^{\rm MLE}(\theta)
    &= \widehat{\cR}^{\rm DDPM}(\theta) + dT - \int \log P_{\theta,T}\,\d Q_{T|0}(\cdot \mid x_0)\nonumber \\
    &= \widehat{\cR}^{\rm DDPM}(\theta) + d\,\bigl(T + \frac{1}{2}\log(2\pi e\,(1-\exp(-2T))) \bigr) + \KL(Q_{T|0}(\cdot\mid x_0) \mmid P_{\theta,T})\,,\label{eq:identity}
\end{align}
where the last line follows by adding and subtracting $\int \log Q_{T|0}(\cdot \mid x_0)\,\d Q_{T|0}(\cdot \mid x_0)$ and using the formula for the differential entropy of a Gaussian.
This proves~\Cref{infthm}.

\paragraph{{Step 3} (Exponential decay of KL).}
We observe that the last term in \eqref{eq:identity} is controlled by the following lemma.

\begin{lemma}\label{lem:err_term}
    For any probability measure $P$ with finite second moment and any $x_0 \in \R^d$,
    \begin{align*}
        \KL(Q_{T|0}(\cdot\mid x_0) \mmid P_T)
        &\le \frac{1}{\exp(2T)-1}\,\Bigl(\|x_0\|^2 + \int \|x\|^2\,P(\d x)\Bigr)\,. 
    \end{align*}
\end{lemma}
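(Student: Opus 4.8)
The plan is to exploit the explicit Gaussian structure of both measures. Recall that $Q_{T|0}(\cdot\mid x_0)=\cN\bigl(e^{-T}x_0,\,(1-e^{-2T})\,\mathrm{Id}\bigr)$, and that by the very definition of the OU process, $P_T$ is the Gaussian location mixture $P_T=\int Q_{T|0}(\cdot\mid x)\,P(\d x)$ — a mixture whose components all share the same covariance $(1-e^{-2T})\,\mathrm{Id}$. This observation is what reduces the whole statement to an elementary computation.

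First, I would use the convexity of $\nu\mapsto\KL(\mu\mmid\nu)$ for fixed $\mu$, together with Jensen's inequality (equivalently, the data-processing inequality applied to the map that forgets the mixture label), to peel off the mixture:
\[
\KL\bigl(Q_{T|0}(\cdot\mid x_0)\mmid P_T\bigr)
=\KL\Bigl(Q_{T|0}(\cdot\mid x_0)\Bigm\| \textstyle\int Q_{T|0}(\cdot\mid x)\,P(\d x)\Bigr)
\le\int\KL\bigl(Q_{T|0}(\cdot\mid x_0)\mmid Q_{T|0}(\cdot\mid x)\bigr)\,P(\d x)\,.
\]
Next, since $Q_{T|0}(\cdot\mid x_0)$ and $Q_{T|0}(\cdot\mid x)$ are Gaussians with a common covariance and means $e^{-T}x_0$ and $e^{-T}x$, the KL between them is the standard quadratic expression
\[
\KL\bigl(Q_{T|0}(\cdot\mid x_0)\mmid Q_{T|0}(\cdot\mid x)\bigr)
=\frac{\|e^{-T}x_0-e^{-T}x\|^2}{2\,(1-e^{-2T})}
=\frac{\|x_0-x\|^2}{2\,(e^{2T}-1)}\,.
\]

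Finally, I would bound $\|x_0-x\|^2\le 2\|x_0\|^2+2\|x\|^2$ pointwise and integrate against $P$ (which is legitimate precisely because $P$ has finite second moment), giving
\[
\KL\bigl(Q_{T|0}(\cdot\mid x_0)\mmid P_T\bigr)
\le\frac{1}{2\,(e^{2T}-1)}\Bigl(2\|x_0\|^2+2\!\int\!\|x\|^2\,P(\d x)\Bigr)
=\frac{\|x_0\|^2+\int\|x\|^2\,P(\d x)}{e^{2T}-1}\,,
\]
which is exactly the claimed inequality. There is no substantial obstacle here; the argument is short and self-contained. The only step that deserves an explicit word of justification is the first inequality, namely the joint convexity of relative entropy (or, phrased differently, the data-processing inequality for the coarsening that discards the mixture component), and the finite-second-moment hypothesis is precisely what guarantees the right-hand side is finite.
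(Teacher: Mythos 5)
Your proof is correct, and it reaches the stated bound by a genuinely different route than the paper. The paper's proof is a two-line black-box argument: it invokes the dimension-free log-Harnack inequality to get $\KL(Q_{T|0}(\cdot\mid x_0) \mmid P_T) \le W_2^2(\delta_{x_0}, P)/\{2\,(e^{2T}-1)\}$ and then applies the triangle inequality for $W_2$. Your argument replaces the log-Harnack inequality with two elementary facts: convexity of $\nu \mapsto \KL(\mu \mmid \nu)$ applied to the mixture representation $P_T = \int Q_{T|0}(\cdot\mid x)\,P(\d x)$, and the closed-form KL between Gaussians sharing the covariance $(1-e^{-2T})\,\mathrm{Id}$. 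It is worth noting that the two routes produce the \emph{same} intermediate quantity, since $W_2^2(\delta_{x_0},P) = \int \|x_0-x\|^2\,P(\d x)$ when one marginal is a point mass, and your final step $\|x_0-x\|^2 \le 2\,\|x_0\|^2 + 2\,\|x\|^2$ is exactly the squared $W_2$ triangle inequality in disguise. What your approach buys is self-containedness: no external functional inequality is needed, because the Gaussian structure of the OU transition kernel does all the work once the first argument of the KL is a single kernel $Q_{T|0}(\cdot\mid x_0)$. What the paper's approach buys is generality: the log-Harnack inequality controls $\KL(\mu Q_T \mmid \nu Q_T)$ for arbitrary initial measures $\mu$, $\nu$, not just a Dirac versus $P$, and extends to diffusions beyond OU. For the lemma as stated, your argument is complete; the only step needing explicit justification is the convexity (or data-processing) inequality, which you have correctly flagged.
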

\begin{proof}[Proof of \cref{lem:err_term}]
    By the dimension-free log-Harnack inequality~\citep[see, \eg{},][]{BobGenLed01Hyper, Wang06HarnackApplications, AltChe24SCI}, $\KL(Q_{T|0}(\cdot \mid x_0), P_T) \le W_2^2(\delta_{x_0}, P)/\{2\,(\exp(2T)-1)\}$.
    The result follows from the triangle inequality for $W_2$.
\end{proof}

\paragraph{{Statement and proof of \Cref{thm:normalityEfficiency}.}} 
Given steps I--III, we are now ready to prove \Cref{thm:normalityEfficiency}.
To state our asymptotic normality result for the DDPM score matching estimator, we build on the following standard conditions for asymptotic normality of the MLE\@.
Note that it is implicitly assumed that the MLE exists for sufficiently large $n$.\footnote{This assumption could also be relaxed.}

\begin{assumption}[Conditions for asymptotic normality of MLE~\citep{Vaart98Asymptotic}]\label{ass:mle}
    The family ${\{P_\theta\}}_{\theta\in\Theta}$ is differentiable in quadratic mean (DQM) at an interior point $\theta^\star \in \Theta \subseteq \R^p$.
    Furthermore, there exists a function $L$ such that for all $\theta$, $\theta'$ in a neighborhood of $\Theta$, $\abs{\log P_\theta - \log P_{\theta'}} \le L\norm{\theta-\theta'}$ with $\int L^2\,\d P_{\theta^\star} < \infty$.
    The Fisher information matrix $I_{\theta^\star}$ is positive definite.
    Finally, the MLE $\MLE$ is consistent: $\MLE \to \theta^\star$ in probability as $n\to\infty$.
\end{assumption}

Here, the DQM condition weakens the classical assumptions for asymptotic normality of the MLE, which require the existence of a third derivative of $\theta \mapsto \log P_\theta$, and instead asks for the existence of a derivative of $\theta \mapsto \sqrt{P_\theta}$ at $\theta^\star$ in $L^2(P_{\theta^\star})$.
This covers non-differentiable examples such as the two-sided exponential location family.
Under~\Cref{ass:mle}, it is shown in~\cite[Theorem 5.39]{Vaart98Asymptotic} that $\sqrt n\,(\MLE - \theta^\star) \todist{} \cN(0, {I(\theta^\star)}^{-1})$.
We prove the following result.

\begin{theorem}[Asymptotic normality of the DDPM estimator]\label{thm:efficiency}
    Adopt~\Cref{ass:mle}.
    Consider the DDPM estimator $\DDPM$ where the time $T_n$ of the diffusion satisfies $T_n - \frac{1}{2}\log n \to \infty$.
    Assume also that for some neighborhood $\Theta'$ of $\theta^\star$, it holds that $\sup_{\theta \in \Theta'} \int \|x\|^2\,P_\theta(\d x) < \infty$, and that the DDPM estimator is consistent.
    Then, the DDPM estimator is asymptotically efficient: $\sqrt n\,(\DDPM - \theta^\star) \todist{} \cN(0, {I(\theta_\star)}^{-1})$.
\end{theorem}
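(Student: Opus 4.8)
The plan is to treat $\DDPM$ as the minimizer of a vanishingly small perturbation of the (normalized) negative log-likelihood and to transport the classical asymptotics of the MLE under differentiability in quadratic mean (\Cref{ass:mle}; see \cite[Theorems~5.39 and~7.12]{Vaart98Asymptotic}) onto it. Write $M_n(\theta) \deq \hat{\cR}_n^{\rm MLE}(\theta) = -\frac1n\sum_{i=1}^n \log P_\theta(X_0^{(i)})$ and $R_n(\theta) \deq \frac1n\sum_{i=1}^n \KL(Q_{T_n|0}(\cdot\mid X_0^{(i)}) \mmid P_{\theta,T_n}) \ge 0$. Then \Cref{infthm} (which applies since each $P_\theta$ with $\theta$ near $\theta^\star$ is a continuous density with finite second moment) gives $\hat{\cR}_n^{\rm DDPM}(\theta) = M_n(\theta) - R_n(\theta) - C_{d,T_n}$ with $C_{d,T_n}$ not depending on $\theta$, so $\DDPM = \argmin_{\theta\in\Theta}\{M_n(\theta) - R_n(\theta)\}$. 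Thus it suffices to show that the additive perturbation $R_n$ is negligible at the scale $1/n$ on a neighborhood of $\theta^\star$, and then to run the standard M-estimation argument on $M_n$.

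\emph{Step 1 --- the perturbation is $o_P(1/n)$ uniformly near $\theta^\star$.} Let $\Theta'$ be the neighborhood from the hypothesis, on which $M_2' \deq \sup_{\theta\in\Theta'}\int\|x\|^2\,P_\theta(\d x) < \infty$. Applying \Cref{lem:err_term} to $P_\theta$ for each $\theta\in\Theta'$ and setting $\rho_n \deq \frac{1}{e^{2T_n}-1}(\tfrac1n\textstyle\sum_{i=1}^n\|X_0^{(i)}\|^2 + M_2')$, we get $0 \le \sup_{\theta\in\Theta'} R_n(\theta) \le \rho_n$. Since $\theta^\star\in\Theta'$, $\int\|x\|^2 P_{\theta^\star}(\d x) < \infty$, so the law of large numbers yields $\tfrac1n\sum_i\|X_0^{(i)}\|^2 = O_P(1)$; together with the hypothesis $T_n - \tfrac12\log n\to\infty$ (equivalently $n e^{-2T_n}\to 0$) this gives $n\rho_n = o_P(1)$, hence $\sup_{\theta\in\Theta'}n\,R_n(\theta) = o_P(1)$.

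\emph{Steps 2 and 3 --- rate, localization, and the argmin theorem.} By the assumed consistency $\DDPM\to\theta^\star$ in probability, eventually $\DDPM\in\Theta'$, and optimality of $\DDPM$ for $M_n - R_n$ together with $R_n\ge 0$ gives $M_n(\DDPM) - M_n(\theta^\star) \le R_n(\DDPM) - R_n(\theta^\star) \le \rho_n = o_P(1/n)$. The standard curvature argument for M-estimators (the rate argument behind \cite[Theorems~5.52 and~5.39]{Vaart98Asymptotic}), which converts an $o_P(1/n)$ excess value of $M_n$ into an $O_P(1/\sqrt n)$ parameter error using positive-definiteness of $I(\theta^\star)$ and the Lipschitz condition of \Cref{ass:mle}, then yields $\hat h_n \deq \sqrt n\,(\DDPM - \theta^\star) = O_P(1)$. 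Now fix a compact set $K$ and let $Z_n \deq \frac{1}{\sqrt n}\sum_{i=1}^n \dot\ell_{\theta^\star}(X_0^{(i)})$, which satisfies $Z_n\todist{}\cN(0, I(\theta^\star))$. The DQM expansion \cite[Theorem~7.12]{Vaart98Asymptotic} gives, uniformly over $h\in K$,
\[
    n\,\bigl(M_n(\theta^\star + h/\sqrt n) - M_n(\theta^\star)\bigr) = -h^\top Z_n + \tfrac12\,h^\top I(\theta^\star)\,h + o_P(1)\,,
\]
while Step 1 gives $n\,R_n(\theta^\star + h/\sqrt n) = o_P(1)$ uniformly over $h\in K$ (the bound $n\rho_n$ does not depend on $h$). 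Therefore, with the $h$-free quantity $c_n \deq n\,M_n(\theta^\star) - n\,C_{d,T_n}$,
\[
    n\,\hat{\cR}_n^{\rm DDPM}(\theta^\star + h/\sqrt n) - c_n = -h^\top Z_n + \tfrac12\,h^\top I(\theta^\star)\,h + o_P(1)
\]
uniformly over $h\in K$. Since $Z_n\todist{}\cN(0,I(\theta^\star))$ and $I(\theta^\star)$ is deterministic and positive definite, the right-hand side converges in $\ell^\infty(K)$ to the parabola $h\mapsto -h^\top Z + \tfrac12 h^\top I(\theta^\star) h$ with $Z\sim\cN(0,I(\theta^\star))$, whose unique minimizer is $I(\theta^\star)^{-1}Z$; subtracting the $h$-free $c_n$ does not change the minimizer. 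Invoking the argmin version of the argmax continuous-mapping theorem (see \cite[Chapter~5]{Vaart98Asymptotic}), together with $\hat h_n = O_P(1)$, gives $\hat h_n\todist{} I(\theta^\star)^{-1}Z \sim \cN(0, I(\theta^\star)^{-1})$, which is the claim.

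\emph{Main obstacle.} The only point specific to the OU construction rather than textbook M-estimation is Step~1: the KL correction must vanish even after multiplication by $n$ and even on a $1/\sqrt n$-shrinking neighborhood of $\theta^\star$, and this rests jointly on the exponential-in-$T$ decay of \Cref{lem:err_term}, the uniform second-moment bound $M_2' < \infty$ near $\theta^\star$, and the rate hypothesis $T_n - \tfrac12\log n\to\infty$ — it is precisely this choice of $T_n$ that makes $n\rho_n = o_P(1)$. Everything else is classical once $R_n$ is controlled; in particular, the assumed consistency of $\DDPM$ can itself be recovered by a Wald-type argument, since up to the constant $C_{d,T_n}$ and the vanishing term the DDPM population risk coincides with the MLE population risk (the cross-entropy), which is uniquely minimized at $\theta^\star$ under \Cref{ass:mle}.
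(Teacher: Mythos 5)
Your proof is correct and follows essentially the same route as the paper: both rest on the likelihood identity of \Cref{infthm}, the bound of \Cref{lem:err_term} combined with consistency of $\DDPM$, the uniform second-moment bound on $\Theta'$, and the growth condition $T_n - \frac{1}{2}\log n \to \infty$ to conclude that the KL correction is $o_P(1/n)$, after which the classical near-maximizer asymptotics under DQM \citep[Theorems 5.23 and 5.39]{Vaart98Asymptotic} take over. The only cosmetic differences are that the paper compares $\DDPM$ against $\MLE$ and invokes Theorem 5.23 as a black box, whereas you compare against $\theta^\star$ and unpack the localization and argmin continuous-mapping steps yourself; both are valid.
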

\begin{proof}[Proof of \Cref{thm:efficiency}]
    We modify the proof of~\cite[Theorem 5.39]{Vaart98Asymptotic}, which relies on Theorem 5.23 therein.
    For $\theta\in\Theta$, let $m_\theta \deq \log p_\theta$ and $\bbP_n \deq \inparen{\nfrac{1}{n}}\sum_{i=1}^n \delta_{X_i}$.
    In order to invoke Theorem 5.23, it suffices to show that $\bbP_n m_{\DDPM} \ge \bbP_n m_{\MLE} - o_{P_{\theta^\star}}(n^{-1})$.
    By~\eqref{eq:identity},
    \begin{align*}
        -\bbP_n m_{\DDPM}
        &= \hat{\cR}_n^{\rm DDPM}(\DDPM) + c_{d,T} + \bbP_n\err(\DDPM) \\
        &\le \hat{\cR}_n^{\rm DDPM}(\MLE) + c_{d,T} + \bbP_n\err(\DDPM) \\
        &= -\bbP_n m_{\MLE} + \bbP_n[\err(\DDPM) - \err(\MLE)]\,,
    \end{align*}
    where $c_{d,T}$ is a constant and $\err(\theta, x) \deq \KL(Q_{T|0}(\cdot \mid x) \mmid P_{\theta,T})$.
    Since $\err$ is non-negative, it yields $\bbP_n m_{\DDPM} \ge \bbP_n m_{\MLE} - \bbP_n \err(\DDPM)$.
    Since the DDPM estimator is consistent,~\Cref{lem:err_term} and our assumptions imply $P_{\theta_\star} \err(\DDPM) \le 2\,{(\exp(2T)-1)}^{-1} \sup_{\theta'\in \Theta} \int \|x\|^2\,P_\theta(\d x) = o(\nfrac{1}{n})$.
    By Markov's inequality, $\bbP_n \err(\DDPM) = o_{P_{\theta^\star}}(\nfrac{1}{n})$.
    The rest of the proof is unchanged.
\end{proof}

\noindent The assumption of consistency for the MLE and the DDPM estimators is typically mild and can be handled by standard tools, \eg{},~\cite[\S 5.2]{Vaart98Asymptotic}.

\section{Minimax optimal density estimation over the H\"older class}\label{sec:smoothDensityEstimation}

Recently, many works have studied the statistical rates of estimation over non-parametric classes of densities, both for the score function (along the OU process) and its implications for learning a sampler, together with matching minimax lower bounds. 
For example,~\cite{DBLP:journals/corr/abs-2002-00107} obtained rates for estimating score functions based on Rademacher complexity, and~\cite{wibisono2024optimalscoreestimationempirical} established the minimax rate for estimating a Lipschitz score for a sub-Gaussian density.
The works~\cite{oko2023minimalDiffusion, dou2024optimalscorematchingoptimal, Zha+24MinimaxScore} showed that DDPM score estimation can lead to minimax optimal rates for distribution learning.
However, we emphasize that these prior works only showed that one can learn a sampler using the existing reduction (\Cref{ssec:related}), whereas our goal is to show that DDPM score estimation leads to density estimators.

In this subsection, we start with a representative result on score estimation from the literature, namely the result of~\citet{dou2024optimalscorematchingoptimal}.
Their work considered the following H\"older class of densities.

\begin{definition}[H\"older class]\label{defn:holder}
    For $C > 2$ and $s, L > 0$, let $\hyH_s(C, L)$ denote the class of probability densities $P$ supported on $[-1,1]$ with the following properties:
    \begin{itemize}
        \item $P$ is continuous on $[-1,1]$, admits $\lfloor s\rfloor$ derivatives on $(-1,1)$, and
        \begin{align*}
            |D^{\lfloor s\rfloor} P(x) - D^{\lfloor s \rfloor} P(y)| \le L\,|x-y|^{s-\lfloor s\rfloor}\,, \qquad\text{for all}~x,y\in (-1,1)\,.
        \end{align*}
        \item On {the domain} $[-1,1]$, {the density} $P$ is bounded away from $0$ and $\infty$, \ie{}, $C^{-1} \le P \le C$.
    \end{itemize}
\end{definition}
We note that the restriction to one dimension is purely for ease of exposition (as is common in the literature).
All of the ideas below can be adapted to the higher-dimensional case by replacing the rate $n^{-s/(2s+1)}$ with $n^{-s/(2s+d)}$.

For $P \in \hyH_s(C, L)$,~\cite{dou2024optimalscorematchingoptimal} proved that the score $\nabla \log P_t$ can be estimated in $L^2(P_t)$ at a certain rate (see~\Cref{rmk:dou_score_rate} below).
Our goal is to convert this into a result for density estimation.
Although this is ultimately a consequence of our framework in~\Cref{sec:framework}, the main effort here is to provide a common setting in which we can apply the reductions in~\Cref{sec:framework}, the score estimation rates of~\cite{dou2024optimalscorematchingoptimal}, and the lower bounds in the density estimation literature so that the final result is \textbf{minimax optimal}.
This is not entirely trivial.
For example, since the densities in~\Cref{defn:holder} are compactly supported, they do not have globally Lipschitz scores, and for $s < 2$ they do not even have Lipschitz scores in the interior $(-1,1)$.
Hence, to apply our reductions, we utilize early stopping.
Moreover, since our approach leads to PAC density estimation, which is a relatively weak solution concept, it is not immediately clear that it is compatible with {existing notions of risk in the literature on density estimation.}

Our notion of risk is defined as follows.
Given an estimator $\wh P$ using $n$ samples and a probability density $P$, {we define the $L^1$ risk 
\begin{align*}
    \cR_n(\wh P, P)
    \deq \int_{[-1,1]} \E_P|\wh P(x_0) - P(x_0)|\,\d x_0\,.
\end{align*}
Note that if $\wh P$ were a probability density on $[-1,1]$, this would correspond to twice the total variation distance.}
Here, we use the subscript on $\E_P$ to indicate that the estimator is based on $n$ i.i.d.\ samples from $P$.
Henceforth, all of the asymptotic notation (\eg{}, $\lesssim$, $O(\cdot)$, \ldots) suppresses constants which do not depend on $n$.

Our main result {of this section} is stated below.

\begin{theorem}[Density estimation for H\"older classes]\label{thm:holder}
    Let $C > 2$, $s, L > 0$.
    {
    \begin{enumerate}
        \item The following minimax lower bound holds:
        \begin{align*}
            \inf_{\wh P} \sup_{P\in \hyH_s(C,L)} \cR_n(\wh P, P) \gtrsim n^{-s/(2s+1)}\,.
        \end{align*}
        \item There is an estimator $\wh P$ based on DDPM score estimation such that
        \begin{align*}
            \sup_{P\in \hyH_s(C,L)} \cR_n(\wh P, P) \lesssim n^{-s/(2s+1)}\sqrt{\log n}\,.
        \end{align*}
    \end{enumerate}
    }
\end{theorem}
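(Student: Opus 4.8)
The plan is to prove the two parts separately. Part~1 is a standard nonparametric lower bound, which I would obtain via Assouad's lemma applied to a family of bump perturbations of the uniform density on $[-1,1]$. Part~2 combines the score estimation rates of~\citet{dou2024optimalscorematchingoptimal} with our reduction from PAC density estimation to score estimation with early stopping (\Cref{thm:early_stopping}), followed by a conversion of the resulting log-ratio guarantee into the stated $L^1$ risk.

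\emph{Lower bound.} I would fix $\psi\in C^\infty(\R)$ supported on $[0,1]$ with $\int\psi=0$, $\int\lvert\psi\rvert>0$, and all derivatives vanishing at the endpoints. Put $h=\kappa\,n^{-1/(2s+1)}$ ($\kappa$ a small constant), $m=\lfloor 1/h\rfloor$, and for $\omega\in\{0,1\}^m$ set $P_\omega\deq\tfrac12+c\,h^s\sum_{j=1}^m\omega_j\,\psi(\tfrac{\,\cdot\,+1-(j-1)h}{h})$ on $[-1,1]$, with $c=c(L,\psi)$ small. Because the bumps have disjoint supports and $\psi$ is flat at its endpoints, $P_\omega$ integrates to $1$, is smooth on $(-1,1)$, stays in $[C^{-1},C]$ once $h$ is small, and has $\lfloor s\rfloor$-th derivative whose Hölder-$(s-\lfloor s\rfloor)$ seminorm equals $c\,[\psi^{(\lfloor s\rfloor)}]_{s-\lfloor s\rfloor}\le L$; hence $P_\omega\in\hyH_s(C,L)$. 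One computes $\lVert P_\omega-P_{\omega'}\rVert_{L^1}=c\lVert\psi\rVert_1\,h^{s+1}\,\rho(\omega,\omega')$ (with $\rho$ the Hamming distance), and, using $P_{\omega'}\ge C^{-1}$, $\KL(P_\omega^{\otimes n}\mmid P_{\omega'}^{\otimes n})\le nC\lVert P_\omega-P_{\omega'}\rVert_2^2=nCc^2\lVert\psi\rVert_2^2\,h^{2s+1}\,\rho(\omega,\omega')\le\tfrac12\rho(\omega,\omega')$ for $\kappa$ small. Assouad's lemma then gives $\inf_{\wh P}\sup_{P\in\hyH_s(C,L)}\cR_n(\wh P,P)\gtrsim m\cdot c\lVert\psi\rVert_1\,h^{s+1}\asymp h^s\asymp n^{-s/(2s+1)}$.

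\emph{Upper bound.} Choose the early-stopping time $\tau=n^{-\Theta(1)}$ and terminal time $T\asymp\log n$ exactly as in~\citet{dou2024optimalscorematchingoptimal}. Since $P$ is supported on $[-1,1]$, \Cref{assumption:density} holds with $M_2=1$, and $P_\tau$ (the OU marginal, a smooth density) is the object to which \Cref{thm:early_stopping} applies. Instantiate the score estimation oracle of \Cref{def:scoreEstimation} with early stopping $\tau$ by the estimator of~\cite{dou2024optimalscorematchingoptimal}, which, trained on the $n$ available samples, achieves aggregate error $\eps_*\lesssim n^{-s/(2s+1)}$ in expectation over the data (see~\Cref{rmk:dou_score_rate}); the $N$ calls to the oracle inside \Cref{thm:early_stopping} consume no further samples, since the trained score network can be evaluated at arbitrary $(t,x_t)$. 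Plugging $d=1$, $M_2=1$, $\tau=n^{-\Theta(1)}$, and $\eps_*\lesssim n^{-s/(2s+1)}$ into \Cref{thm:early_stopping}, the reduction's logarithmic overhead is $\sqrt{\log(1/(\tau\eps^2))}\asymp\sqrt{\log n}$, so (carrying the expectation over the data through the reduction by linearity) we obtain an evaluator $\wh P_\tau$ with
\[
\int\E\Bigl\lvert\log\tfrac{\wh P_\tau(x_0)}{P_\tau(x_0)}\Bigr\rvert\,P_\tau(\d x_0)\;\lesssim\;\eps_*\sqrt{\log n}\;\lesssim\;n^{-s/(2s+1)}\sqrt{\log n}\,.
\]

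\emph{Conversion to $L^1$ risk, and the main obstacle.} The delicate step is to turn the above log-ratio bound into a genuine $L^1$ bound, since $\wh P_\tau$ need not integrate to $1$ and its log-ratio to $P_\tau$ can be large where the score estimate is poor, so naive exponentiation fails. The fix uses that Hölder-class densities are bounded: on $[-1,1]$ the density $P_\tau$ lies in $[c_1,C]$ for a constant $c_1=c_1(C)>0$ and all small $\tau$ (its value at any $x\in[-1,1]$ is an average of $P\ge C^{-1}$ against a Gaussian), so clipping $\wh P_\tau$ to $[c_1,C]$ cannot increase $\lvert\log(\wh P_\tau/P_\tau)\rvert$ on $[-1,1]$; and for $a,b\in[c_1,C]$ one has $\lvert a-b\rvert\le C\lvert\log(a/b)\rvert$. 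Since moreover $\d P_\tau\ge c_1\,\d x$ on $[-1,1]$, this yields $\int_{[-1,1]}\E\lvert\wh P_\tau(x)-P_\tau(x)\rvert\,\d x\le\tfrac{C}{c_1}\int\E\lvert\log(\wh P_\tau/P_\tau)\rvert\,P_\tau(\d x)\lesssim n^{-s/(2s+1)}\sqrt{\log n}$. Finally $\cR_n(\wh P_\tau,P)\le\int_{[-1,1]}\E\lvert\wh P_\tau-P_\tau\rvert\,\d x+\lVert P_\tau-P\rVert_{L^1([-1,1])}$, and the early-stopping bias $\lVert P_\tau-P\rVert_{L^1([-1,1])}\lesssim\tau^{(s\wedge 2)/2}$ (a standard convolution estimate from the Hölder smoothness of $P$) is $\lesssim n^{-s/(2s+1)}$ for the chosen $\tau$ --- this is precisely the bias that~\cite{dou2024optimalscorematchingoptimal} already balance against their score rate, so no new tuning is needed. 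Besides this conversion, the only genuinely new loss over the known generative guarantee is the $\sqrt{\log n}$ factor coming from the score-to-density reduction (the $\sqrt{d(T+\log(1/\tau))}$ overhead with $d=1$); checking that the admissible window for $\tau$ --- small enough for the bias, large enough for score estimation --- is nonempty is the remaining bookkeeping, inherited from~\cite{dou2024optimalscorematchingoptimal}.
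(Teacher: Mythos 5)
Your proposal follows essentially the same route as the paper's proof: the upper bound instantiates the early-stopped reduction (\Cref{thm:early_stopping}) with the score estimator of \cite{dou2024optimalscorematchingoptimal} (via \Cref{rmk:dou_score_rate}), clips the output using the two-sided bounds on $P_\tau$ over $[-1,1]$, and converts the log-ratio guarantee to $L^1$ risk exactly as we do; for the lower bound you spell out an Assouad bump construction where the paper simply cites the classical result, and your construction is correct.

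One slip worth flagging: the early-stopping bias bound $\|P_\tau - P\|_{L^1([-1,1])} \lesssim \tau^{(s\wedge 2)/2}$ is not a ``standard convolution estimate'' here, because $P$ is compactly supported and drops from $\ge C^{-1}$ to $0$ at $\pm 1$. Smoothing that jump over the Gaussian scale $\sqrt{\tau}$ already contributes an $L^1$ error of order $\sqrt{\tau\log(1/\tau)}$ from the boundary region, independently of $s$; the correct bound (as computed in the paper) is $\tau^{(s\wedge 1)/2}\sqrt{\log(1/\tau)}$. This does not break your argument --- the score-estimation side tolerates arbitrarily small $\tau$ (the aggregate error $\int_\tau^T \eps_t^2\,\d t$ stays bounded by $n^{-2s/(2s+1)}$ and the reduction's overhead is only $\sqrt{\log(1/\tau)}$), so one simply takes $\tau$ small enough that $\sqrt{\tau\log(1/\tau)} \lesssim n^{-s/(2s+1)}$ --- but the exponent as you stated it is wrong, and the ``no new tuning is needed'' remark glosses over exactly the boundary bookkeeping that the paper's proof carries out explicitly.
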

Before proceeding to the proof, we need the following remark.

\begin{remark}\label{rmk:dou_score_rate}
    The paper~\cite{dou2024optimalscorematchingoptimal} actually considers estimation of the score function along the heat flow, rather than the OU process: $\wt P_t \deq P * \cN(0,t\,\mathrm{Id})$.
    From Tweedie's identity~\cite{Rob1956EmpBayes},
    \begin{align*}
        -(1-e^{-2t})\,\nabla \log P_t(x_t)
        &= x_t - \E[X_0 \mid e^{-t}\,X_0 + \sqrt{1-e^{-2t}}\,Z = x_t]\,, \\
        -t\,\nabla \log \wt P_t(\wt x_t)
        &= x_t - \E[X_0 \mid X_0 + \sqrt t\,Z = \wt x_t]\,,
    \end{align*}
    one can relate the two score functions:
    \begin{align*}
        \nabla \log P_t(x_t)
        &= \frac{e^t-1}{1-e^{-2t}}\,x_t + e^{2t}\,\nabla \log \wt P_{e^{2t}-1}(e^t x_t)\,.
    \end{align*}
    Hence, if $\wt s_t$ is an estimator for $\nabla \log \wt P_t$ and we set $s_t(x_t) \deq \frac{e^t-1}{1-e^{-2t}}\,x_t + e^{2t}\,\wt s_t(e^t x_t)$, then
    \begin{align*}
        \|s_t - \nabla \log P_t\|_{L^2(P_t)}
        \le e^{2t}\,\|\wt s_t - \nabla \log \wt P_{e^{2t}-1}\|_{L^2(\wt P_t)}\,.
    \end{align*}
    Applying this to the score estimator of~\cite{dou2024optimalscorematchingoptimal} for the class $\hyH_s(C, L)$, we obtain
    \begin{align*}
        \|s_t - \nabla \log P_t\|_{L^2(P_t)}^2
        &\lesssim \frac{1}{n} \wedge \frac{1}{nt^{3/2}} \wedge (n^{-2(s-1)/(2s+1)} + t^{s-1})\,.
    \end{align*}
\end{remark}

\begin{proof}[Proof of~\Cref{thm:holder}]
    {We divide the proof into two parts corresponding to the lower bound and the upper bound.}
    
    \paragraph{Lower bound.}
    The lower bound is classical, {see, \eg{},~\citet{YanBar1999Minimax} (which also considers the more general Besov classes, among others).}

    \paragraph{Upper bound.}
    For the upper bound, we apply~\Cref{thm:early_stopping}  with the score estimator in~\Cref{rmk:dou_score_rate}.
    Let $\phi_{\sigma^2}$ denote the density of the Gaussian $\cN(0, \sigma^2)$.
    For $x_0 \in [-1, 1]$ and $\tau\lesssim 1$, since $P$ is H\"older continuous on $[-1,1]$ with exponent $s \wedge 1 \coloneqq \min\{s,1\}$ (see~\Cref{lem:holder_lip}) and the density $P$ is upper bounded by $C$,
    \begin{align*}
        &|P_\tau(x_0) - P(x_0)|\\
        &\qquad = \Bigl\lvert \int e^\tau P(e^\tau x)\,\phi_{1-e^{-2\tau}}(x_0 - x)\,\d x - P(x_0)\Bigr\rvert \\
        &\qquad \le C\,(e^\tau-1) + \int |P(e^\tau x) - P(x_0)|\,\phi_{1-e^{-2\tau}}(x_0-x)\,\d x \\
        &\qquad \le C\,(e^\tau-1) + \int_{|x|\le e^{-\tau}} |P(e^\tau x) - P(x_0)|\,\phi_{1-e^{-2\tau}}(x_0-x)\,\d x + C \int_{|x|\ge e^{-\tau}}\phi_{1-e^{-2\tau}}(x_0-x)\,\d x  \\[0.25em]
        &\qquad \lesssim \tau + \int |e^\tau x-x_0|^{s\wedge 1}\,\phi_{1-e^{-2\tau}}(x_0-x)\,\d x + C \int_{|x|\ge e^{-\tau}}\phi_{1-e^{-2\tau}}(x_0-x)\,\d x \\[0.25em]
        &\qquad \lesssim \tau^{s\wedge 1} + \int |x-x_0|^{s\wedge 1}\,\phi_{1-e^{-2\tau}}(x_0-x)\,\d x + C \int_{|x|\ge e^{-\tau}}\phi_{1-e^{-2\tau}}(x_0-x)\,\d x \\[0.25em]
        &\qquad \lesssim \tau^{(s\wedge 1)/2} + C \int_{|x|\ge e^{-\tau}}\phi_{1-e^{-2\tau}}(x_0-x)\,\d x\,.
    \end{align*}
    In the above, the first inequality follows by adding and subtracting $\int P(e^\tau x)\, \phi_{1-e^{-2\tau}}(x_0-x)\, \d x$ and the triangle inequality, and the third inequality by H\"older continuity.
The last inequality follows by standard tail bounds on Gaussian random variables with variance $1-e^{-2\tau}$ (\cref{fact:momentSubGaussian}).
    {The last term is bounded by the probability that a centered Gaussian with variance $1-e^{-2\tau}$ exceeds $1-e^{-\tau}-|x_0|$.
    By standard Gaussian tail estimates,
    \begin{align*}
        \int_{|x|\ge e^{-\tau}}\phi_{1-e^{-2\tau}}(x_0-x)\,\d x
        &\lesssim \begin{cases}
            \tau\,, & |x_0| \le 1-\Omega(\sqrt{\tau\log \nfrac{1}{\tau}})\,, \\
            1\,, & \text{otherwise}\,.
        \end{cases}
    \end{align*}
    This leads to the integrated estimate
    \begin{align*}
        \int_{[-1,1]} |P_\tau(x_0) - P(x_0)|\,\d x_0
        \lesssim \tau^{(s\wedge 1)/2} + \tau + \sqrt{\tau\log\nfrac{1}{\tau}}
        \lesssim \tau^{(s\wedge 1)/2}\sqrt{\log\nfrac{1}{\tau}}\,.
    \end{align*}
    We choose $\tau$ so that this quantity is bounded by $n^{-s/(2s+1)}$, so that $\log\nfrac{1}{\tau} \lesssim \log n$.

    Our estimator is a clipped version of the early stopped PAC density estimator, i.e., we set
    \begin{align*}
        \wh P \deq \max\{1/C', \min\{\wh P_\tau, C'\}\}\,.
    \end{align*}
    Here, $C' > 0$ is a constant not depending on $n$, $\wh P_\tau$ is the output of early stopping (see~\Cref{thm:early_stopping}), and $\wh P$ is not to be confused with the PAC density estimator without early stopping.
    We choose $C'$ so that $1/C' \le P_\tau \le C'$ on $[-1,1]$; such a constant exists by~\cite[Lemma 11]{dou2024optimalscorematchingoptimal}.

    From~\Cref{thm:early_stopping}, since $\eps_*\asymp r_n^* \deq n^{-s/(2s+1)}$, we obtain $\eps \asymp \eps_* \sqrt{\log(\nfrac{1}{\eps_*})}$, \ie{}, $\eps \asymp r_n^*\sqrt{\log n}$.
    Hence, we have the guarantee
    \begin{align*}
        \int \E_P\bigl\lvert \log \frac{\wh P_\tau(x_0)}{P_\tau(x_0)}\bigr\rvert \,P_\tau(\d x_0)
        \lesssim r_n^*\sqrt{\log n}\,.
    \end{align*}
    Recall that $\E_P$ corresponds to the expectation over the $n$ i.i.d.\ samples used for the estimator $\wh P_\tau$.
    Since $P_\tau \gtrsim 1$ on $[-1, 1]$ (see~\cite[Lemma 11]{dou2024optimalscorematchingoptimal}), it implies
    \begin{align*}
        \int_{[-1,1]} \E_P\bigl\lvert \log \frac{\wh P_\tau(x_0)}{P_\tau(x_0)}\bigr\rvert \,\d x_0
        \lesssim r_n^*\sqrt{\log n}\,.
    \end{align*}
    Since $1/C' \le P_\tau \le C'$ on $[-1,1]$ and $\wh P$ is $\wh P_\tau$ clipped to $[1/C', C']$, it follows that
    \begin{align*}
        \int_{[-1,1]} \E_P\bigl\lvert \log \frac{\wh P(x_0)}{P_\tau(x_0)}\bigr\rvert \,\d x_0
        \lesssim r_n^*\sqrt{\log n}\,.
    \end{align*}
    Now, since $\wh P/P_\tau$ is bounded away from $0$ and $\infty$, Taylor expansion of the logarithm shows that
    \begin{align*}
        \int_{[-1,1]} \E_P\bigl\lvert \frac{\wh P(x_0)}{P_\tau(x_0)} - 1\bigr\rvert \,\d x_0
        \lesssim r_n^*\sqrt{\log n}\,.
    \end{align*}
    Finally, since $P_\tau$ is lower bounded on $[-1,1]$, it implies
    \begin{align*}
        \int_{[-1,1]} \E_P|\wh P(x_0) - P_\tau(x_0)| \,\d x_0
        \lesssim r_n^*\sqrt{\log n}\,.
    \end{align*}
    Combining this with the upper bound on $\int_{[-1,1]} |P_\tau(x_0) - P(x_0)|\,\d x_0$ finishes the proof.
    }
\end{proof}
    
\noindent Some more remarks are in order.

\begin{remark}
    Regarding the computational cost of our estimator, once the score estimates have been computed, the number of evaluations of the estimated scores is {$\wt O(n^{4s/(2s+1)}) \le \wt O(n^2)$}.
\end{remark}

\begin{remark}
    We believe that the same strategy yields density estimators for other settings, such as the one considered in~\cite{YakPuc25Score}.
    For brevity, we do not pursue such results here.
\end{remark}

\section{PAC density estimation for Gaussian location mixtures}\label{sec:GLM}

    In this section, we study density estimation for the {classical} family of Gaussian location mixtures.
    This family is parameterized by a distribution $Q$ (of the means):
    given $Q$, the corresponding Gaussian Location Mixture (GLM) is 
    \[
        \cM = Q~*~\normal{0}{\sigma^2\,\mathrm{Id}}\,.
        \tag{Gaussian location mixture}
    \]
    This is a (possibly) continuous mixture of spherical Gaussians where $Q$ is the distribution of means.
    In the special case where $Q$ is discrete, say $Q=\sum_{i=1}^k w_i \delta_{\mu_i}$, then the above is exactly a mixture of $k$ spherical-covariance Gaussians with means $\mu_1,\dots,\mu_k$.
    However, in general, $Q$ can be continuous, and then the above spherical GLM family is non-parametric.
    
    This family can also be seen as the smoothening of an underlying family of distributions.
    Smoothness is a very natural property of real-world distributions (which are subject to independent errors) and a huge body of work in theoretical computer science studies algorithms in the presence of smoothed data; a partial list is \cite{spielman2004smooth,haghtalab2020smooth,pmlr-v247-chandrasekaran24a,haghtalab2022efficientSmooth,haghtalab2020smooth,haghtalab2024smoothJACM,block2024performanceempiricalriskminimization} and we refer the reader to \cite[Chapter 13]{roughgarden2021beyond} and \cite{beier2004typical} for an overview of these works.
    
    Apart from computer science, this family has also appeared in the statistics literature {at least as early as the work of \citet{kiefer1956consistency}}, where it is called the Gaussian location mixture \cite{kimGuntuboyina2022minimax,sahaGuntuboyina2020gaussian,}.
    These works consider arbitrary mixing measures $Q$ and focus on the sample complexity (without computational considerations). \citet{sahaGuntuboyina2020gaussian} studied the finite sample complexity bounds for non-parametric maximum likelihood estimation in squared Hellinger distance, while \citet{kimGuntuboyina2022minimax} {gave} a minimax bound for estimation in squared Hellinger distance using kernel density estimation.
    
    Recently, \citet{gatmiry2024learning}, in a surprising result developed a quasi-polynomial time generator for a subset of this family satisfying the following locality assumption which restricts the choice of $Q$; but still allows it to be continuous and non-parametric.
    
    \begin{definition}[$(k,R,D,w_{\min})$-locality \cite{gatmiry2024learning}]\label{asmp:GLM:locality}
        Given parameters $R\geq 1$, $D >0$, $k\in \mathbb{N}$, and $w_{\min}\in (0,\nfrac{1}{k})$, the GLM with distribution $Q$ and variance $\sigma^2$ is said to be $(k,R,D,w_{\min})$-local if the following hold:
        \begin{itemize}
            \item For every point $x$ in the support of $Q$, $Q\inparen{B(x,R)}\geq w_{\min}$.
            \item There exist points $x_1,x_2,\dots,x_k$ such that the support of $P$ is a subset of $\bigcup_{i=1}^k B(x_i, R)$.
            \item $Q(B(0,D))=1$.
        \end{itemize}
    \end{definition}
    However, they left the problem of obtaining a density estimation algorithm for this family open.

    The main result of this section is a quasi-polynomial time PAC density estimator for this family.
    
    \begin{theorem}[PAC density estimator for Gaussian location mixtures] \label{thm:GLM:DE}
    Let $\calM$ be a $(k,R,D,w_{\min})$-local GLM with variance ${\sigma^2} \in (0,1]$. 
    Fix $\eps\leq\min\inbrace{\nfrac{1}{2}, \nfrac{\sigma}{R}, \nfrac{1}{D}, \nfrac{1}{d}, w_{\min}}$.
    Define 
    \[
        N =
        \Bigl(
            d\log{\frac{1}{\eps}}
        \Bigr)^{
            O\bigl(
                (\log{\frac{1}{\eps}})^7
                +
                (
                    \frac{R}{\sigma}
                    \log{\frac{1}{\eps}}
                )^4
            \bigr)
        }\,.
    \]
    There is an algorithm that, given 
        accuracy parameter $\epsilon$, 
        instance parameters $(\sigma,k,R,D,w_{\min})$, and 
        sample access to $\cM$, 
    draws $N$ \iid{} samples from $\calM$, runs in {$\poly(N)$} time, and  returns an $(\nfrac{\eps}{\delta},\delta)$-PAC density estimator for $\cM$ for any coverage parameter $\delta\in (0,1)$.
    \end{theorem}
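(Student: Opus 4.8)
The plan is to compose the reduction from score estimation to PAC density estimation (\Cref{thm:reduction}) with the score–learning engine of \citet{gatmiry2024learning} that underlies their generator for this family. First I would verify that a $(k,R,D,w_{\min})$-local GLM $\cM = Q * \normal{0}{\sigma^2\mathrm{Id}}$ satisfies the two hypotheses of \Cref{thm:reduction}. Writing a sample of $\cM$ as $\mu + \sigma Z$ with $\mu \sim Q$ supported in $B(0,D)$ (by \Cref{asmp:GLM:locality}) and $Z \sim \normal{0}{\mathrm{Id}}$ independent, we get $\int \|x\|^2\,\cM(\d x) = \E_{\mu\sim Q}\|\mu\|^2 + \sigma^2 d \le D^2 + d$, so \Cref{assumption:density} holds with $M_2 \deq \max\{D^2+d,1\}$. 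For \Cref{asmp:logLipschitz}: each mixture component $\normal{\mu}{\sigma^2\mathrm{Id}}$ has score $x\mapsto -(x-\mu)/\sigma^2$, which is $\sigma^{-2}$-sub-Gaussian under that component; since $\cM$ is the $Q$-mixture of these components, \Cref{lem:score_mixture,lem:subG_score_mixture} give that $\nabla\log\cM$ is $\sigma^{-2}$-sub-Gaussian under $\cM$, so \Cref{asmp:logLipschitz} holds with $L = \sigma^{-2}\ge 1$ (this is \Cref{fact:GLM:satisfiesLBMAssumptions}).

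With these two facts, \Cref{thm:reduction} applied to $P=\cM$ (after rescaling $\eps$ by a constant) yields an evaluation oracle $\wh P$ with $\int\E\bigl\lvert\log(\wh P/\cM)\bigr\rvert\,\d\cM\lesssim\eps$, provided we can supply a score estimation oracle for $\cM$ with early stopping $\tau\lesssim \eps^2/(Ld^2)=\eps^2\sigma^2/d^2$ and integrated accuracy $\eps_* = \wt\Theta\bigl(\eps/\sqrt{d\log(LM_2)}\bigr)$; the reduction then makes $N_{\mathrm{calls}} = \wt O\bigl(Ld^2\log^2 M_2/\eps^2\bigr)$ queries and runs in $\poly(N_{\mathrm{calls}})$ further time. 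In the assumed regime $\eps \le \min\{1/2,\nfrac{\sigma}{R},1/D,1/d,w_{\min}\}$ we have $M_2 \le D^2+d \le 2/\eps^2$ and $L=\sigma^{-2}$, so $T\asymp\log(M_2/\eps^2)\asymp\log(1/\eps)$, the quantity $1/\tau \le d^2(R/\sigma)^2/\eps^2$ is $\poly(d,1/\eps,R/\sigma)$, and $\eps_*$ is of order $\eps/(\sqrt d\,\polylog(1/\eps))$, whence (crucially) $\log(1/\eps_*) = O(\log(d/\eps)) = O(\log(1/\eps))$ since $d\le 1/\eps$. Finally, the Markov argument from the Remark after \Cref{thm:integratedscoretodensity} — one Markov step over the training sample to put the score error within a constant factor of its expectation, and one over $x_0\sim\cM$ inside the definition — upgrades this in-expectation bound to an $(\eps/\delta,\delta)$-PAC density estimator with success probability $\tfrac{9}{10}$, for every $\delta\in(0,1)$, as in \Cref{def:PACdensityEstimation}.

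It then remains to instantiate the score estimation oracle within that budget. The algorithm of \citet{gatmiry2024learning}, whose generator is built from exactly such a subroutine via the generation-to-score reduction, learns the score functions $\{\nabla\log\cM_t\}$ of a $(k,R,D,w_{\min})$-local GLM to $L^2(\cM_t)$ error $\zeta$ at every noise level $t\ge\tau$, with output queryable at any $(t,x_t)$, using $d^{\,\poly(\log(1/\zeta),\,R/\sigma)}$ i.i.d.\ samples and $\poly$ of that in time. Choosing $\zeta\asymp\eps_*/\sqrt T$ makes $\int_\tau^T\zeta^2\,\d t\le\eps_*^2$ as required; since $\log(1/\zeta)=O(\log(1/\eps))$ and the dependence on $R/\sigma$ already sits in the exponent (so paying the extra $\poly(1/\tau)=\poly(d,1/\eps,R/\sigma)$ factors changes nothing qualitatively), tracking constants gives the stated total sample complexity
\[
    N = \Bigl(d\log\tfrac{1}{\eps}\Bigr)^{O\bigl((\log\tfrac{1}{\eps})^{7} + (\tfrac{R}{\sigma}\log\tfrac{1}{\eps})^{4}\bigr)}\,,
\]
with running time $\poly(N)$ — i.e.\ exactly the sample complexity and runtime of \citet{gatmiry2024learning} for \emph{generation}. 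Composing the two reductions with the Markov step of the previous paragraph produces the claimed PAC density estimator.

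The composition itself is immediate; the substance is the bookkeeping. The delicate point is that our reduction asks for a score accuracy $\eps_*$ that is a $\sqrt d$-factor smaller than what \citet{gatmiry2024learning} need for generation, and this must not inflate their exponent — it does not, precisely because in the regime $\eps\le 1/d$ one has $\log(1/\eps_*) = O(\log(1/\eps))$, making $\polylog(1/\eps_*)$ and $\polylog(1/\eps)$ interchangeable. A second point is that \Cref{thm:scoreEstimationtoIntegrated} evaluates the score at a polynomially small early-stopping time $\tau\asymp\eps^2\sigma^2/d^2$ and at uniformly random times in $[\tau,T]$, so one must confirm that their score-learning subroutine can be exposed as a genuine oracle valid over this entire range (this is unproblematic for a GLM, whose score is uniformly $\sigma^{-2}$-regular, so one may take the early-stopping time as small as needed at polynomial cost), rather than only at the internal noise schedule their sampler happens to use. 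Both are routine given their guarantees, but this is where essentially all the technical attention goes.
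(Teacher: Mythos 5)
Your proposal is correct and follows essentially the same route as the paper: verify that the GLM has a sub-Gaussian score and bounded second moment (the paper's \Cref{fact:GLM:satisfiesLBMAssumptions}), feed the score-estimation subroutine implicit in \S 5.5 of \citet{gatmiry2024learning} into \Cref{thm:reduction} with $\zeta$ chosen so the integrated error is $\eps_*$, check the early-stopping/time-range compatibility, and use the Markov-inequality remark to convert the in-expectation guarantee into an $(\eps/\delta,\delta)$-PAC estimator. The only cosmetic difference is your sub-Gaussianity parameter $L=\sigma^{-2}$ versus the paper's stated $L=\nfrac{1}{\sigma}$ (your value is the one that actually follows from \Cref{lem:lip_score_implies_subG}); since $\sigma\le 1$ this only affects polynomial factors and leaves the quasi-polynomial bound on $N$ unchanged.
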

    Note that due to the requirement on $\eps$, the exponent has an implicit dependence on $\log{(dk)}$ (since {$\eps \le w_{\min}\leq \nfrac{1}{k}$}).
    
    To the best of our knowledge, this is the first sub-exponential PAC density estimation algorithm for such a general and non-parametric family of distributions.

    To gain some intuition about the above result, consider the special case of spherical Gaussian mixture models with $k$ components, which have significantly more structure than the (continuous) general Gaussian location mixture.
    In this case, we can improve the dependence on $d$ to $\min\inbrace{d,k}$ by using an SVD-based pre-processing scheme by incurring an additive cost of $\poly(d)$ in the running time (see \cite{vempala2004spectral}).
    This results in a time and sample complexity $\poly(\nfrac{dk}{\eps})+ k^{\polylog(dk)}$, which comes very close to the state-of-the-art running time of $\poly(\nfrac{dk}{\eps})+\inparen{\nfrac{k}{\eps}}^{O(\log^2 k)}$ by \cite{diakonikolas2020small}.
    While the result has poorer polynomial-dependence in the exponent, the power of the result comes by its generality in extending beyond  Gaussian mixture models -- which prohibits the use of specialized algebraic tools.\footnote{As noted in \cite{gatmiry2024learning}, the algorithm of \cite{diakonikolas2020small} relies on finding $\eps$-covers of plausible parameters. Since the $\eps$-cover of even a constant radius ball has size exponential in the dimension, it seems unlikely that their methods will extend to the more general spherical GLMs that we study.}

    Finally, as mentioned before, the guarantees of \cref{thm:GLM:DE} go beyond spherical Gaussian mixture models by allowing $Q$ to be a continuous distribution, provided it satisfies \cref{asmp:GLM:locality}.
    In particular, as a corollary, we obtain a PAC density estimator for the family of distributions satisfying a weak manifold assumption introduced by \cite{gatmiry2024learning}.
        In particular, this assumption requires the support $S$ of the distribution to be coverable by $C^\ell$ $\ell_2$-balls of radius $R$ where $\ell>0$ is a parameter controlling the sample complexity.
        
    \begin{corollary}
        Fix GLM parameters $\sigma=1$ and $C, R>1$ and accuracy parameter $0<\eps<\nfrac{1}{2}$.
        Suppose $Q$ belongs to the family of distributions satisfying the following: Each distribution is supported on some set $S_Q$ such that $S_Q$ has radius $D$, $S_Q$ can be covered with $C^\ell$ $\ell_2$-balls of radius $R$, and every point $\mu\in S$ satisfies $Q(B(\mu,R))\geq \eps\cdot C^{-\ell}$.
        Let $\cM$ be the resulting $d$-dimensional spherical Gaussian location mixture, \ie{}, $\cM=Q*\cN(0,\sigma^2\,\mathrm{Id}).$
        
        Then, there is an algorithm that, given $\eps,C,\ell,R,D$ and sample access to $\cM$, draws $N=\inparen{d}^{O(\ell)+O(\log{\nfrac{(dD)}{\eps}})^7}$ 
        samples from $P$, runs in $\poly(N)$ time, and outputs an $(\nfrac{\eps}{\delta},\delta)$-PAC density estimator for $\cM$ for any coverage parameter $\delta\in (0,1)$.
    \end{corollary}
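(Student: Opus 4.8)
The plan is to derive the corollary directly from \Cref{thm:GLM:DE} by checking that the stated weak-manifold family is a subfamily of the $(k,R,D,w_{\min})$-local Gaussian location mixtures of \Cref{asmp:GLM:locality}, and then simplifying the sample-complexity bound. First I would translate $Q$ so that its support $S_Q$ lies inside $B(0,D)$ (this loses nothing, since density estimation is translation-equivariant: an estimator for the translated mixture yields one for $\cM$ by composing with the inverse shift). Taking $k \deq \lceil C^{\ell}\rceil$, the three conditions of \Cref{asmp:GLM:locality} then hold verbatim: the cover of $S_Q$ by $C^{\ell}$ balls of radius $R$ is the covering condition; the hypothesis $Q(B(\mu,R)) \ge \eps\,C^{-\ell}$ for all $\mu \in S_Q$ is the mass condition with $w_{\min} \deq \eps\,C^{-\ell}$, and $w_{\min} \in (0,\nfrac{1}{k})$ because $\eps < \nfrac{1}{2}$; and $Q(B(0,D)) = 1$ is the radius condition. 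Since also $\sigma^{2} = 1 \in (0,1]$ and $R > 1$, this certifies that $\cM$ is $(k,R,D,w_{\min})$-local with these parameters.

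Next I would apply \Cref{thm:GLM:DE} with accuracy parameter $\eps' \deq \min\{\nfrac{1}{2},\, \nfrac{\sigma}{R},\, \nfrac{1}{D},\, \nfrac{1}{d},\, w_{\min}\}$, which by construction meets the theorem's requirement on the accuracy. The theorem then produces, for every $\delta \in (0,1)$, an $(\nfrac{\eps'}{\delta},\delta)$-PAC density estimator for $\cM$ from $N'$ i.i.d.\ samples in $\poly(N')$ time, where $N'$ is the bound of \Cref{thm:GLM:DE} evaluated at $\eps'$; since $\eps' \le \eps$, this is in particular an $(\nfrac{\eps}{\delta},\delta)$-PAC density estimator, and undoing the translation from the first step costs nothing. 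It remains to simplify $N'$: treating the fixed quantities $C, R$ (and $\sigma = 1$) as absolute constants, one has $\log \nfrac{1}{\eps'} = O(\log\nfrac{dD}{\eps} + \ell)$; plugging this into $N' = (d\log\nfrac{1}{\eps'})^{O((\log\nfrac{1}{\eps'})^{7} + (\nfrac{R}{\sigma}\log\nfrac{1}{\eps'})^{4})}$ and collecting terms yields a bound of the form $d^{\,O(\log\nfrac{dD}{\eps})^{7} + P(\ell)}$, where $P(\ell)$ is the exponent contribution of the covering number $k = C^{\ell}$ entering through $\log\nfrac{1}{w_{\min}} = \Theta(\ell)$; matching the advertised $d^{\,O(\ell) + O(\log\nfrac{dD}{\eps})^{7}}$ amounts to tracking this $k$-dependence so that it enters the exponent only linearly in $\ell$.

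Since every step is either a definitional verification or a substitution, I do not anticipate a real obstacle. The one delicate point is exactly this last bookkeeping: one has to route the $\eps$-dependent mass lower bound of the manifold hypothesis through the $w_{\min}$ slot of \Cref{asmp:GLM:locality}, choose $\eps'$ small enough to satisfy all of the $\min$-constraints of \Cref{thm:GLM:DE} simultaneously without inflating the $\log\nfrac{dD}{\eps}$ term, and verify that the covering number $k = C^{\ell}$ propagates into the final exponent only through an additive $\Theta(\ell)$ contribution rather than through a higher power of $\ell$.
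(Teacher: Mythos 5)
Your reduction to \Cref{thm:GLM:DE} is exactly the route the paper takes (the corollary is stated there without proof, as an instantiation of that theorem), and your verification of the locality conditions is correct: taking $k=\lceil C^{\ell}\rceil$ and $w_{\min}=\eps C^{-\ell}<\nfrac{1}{k}$, translating so that $S_Q\subseteq B(0,D)$, and observing that an $(\nfrac{\eps'}{\delta},\delta)$-PAC estimator with $\eps'\le\eps$ is a fortiori an $(\nfrac{\eps}{\delta},\delta)$-PAC estimator are all fine.

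The gap is the one you flag in your last sentence, and it is not mere bookkeeping: it cannot be closed from \Cref{thm:GLM:DE} as stated. The theorem forces its accuracy parameter to satisfy $\eps'\le w_{\min}=\eps C^{-\ell}$, so every admissible choice has $\log\nfrac{1}{\eps'}\ge \ell\log C+\log\nfrac{1}{\eps}=\Omega(\ell)$, and the exponent in the theorem's sample bound is $O\bigl((\log\nfrac{1}{\eps'})^{7}+(\nfrac{R}{\sigma}\log\nfrac{1}{\eps'})^{4}\bigr)$. Consequently your argument yields $N=d^{\,O(\ell^{7})+O(\log\nfrac{dD}{\eps})^{7}}$, not the advertised $d^{\,O(\ell)+O(\log\nfrac{dD}{\eps})^{7}}$, and no choice of $\eps'$ avoids this, since the $\Omega(\ell)$ lower bound on $\log\nfrac{1}{\eps'}$ is forced by the constraint $\eps'\le w_{\min}$ itself. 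To get an additive $O(\ell)$ in the exponent one must bypass the black-box form of \Cref{thm:GLM:DE} and return to the underlying score-estimation analysis of Gatmiry--Kelner--Lee, in which the cluster-mass parameter enters the sample complexity polynomially, as $\poly(\nfrac{1}{w_{\min}})=C^{O(\ell)}\poly(\nfrac{1}{\eps})\le d^{\,O(\ell)}\poly(\nfrac{1}{\eps})$ for constant $C$, rather than through the accuracy slot; the paper's restatement in \Cref{thm:GLM:score} folds that dependence into the requirement $\zeta\le w_{\min}$, which is lossy precisely in the regime of this corollary. So either the corollary's exponent should be read as $O(\ell^{7})$, or its proof requires this finer-grained appeal to the original result; in either case the derivation as you have written it does not deliver the stated bound.
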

    
    As noted in \cite{gatmiry2024learning}, this is a family of distributions for which diffusion models can perform density estimation while standard methods (such as binning or kernel density estimation) do not work.
    In particular, while binning can learn the distribution $Q$, the above algorithm gives a method to learn $Q$ convolved with a spherical Gaussian, a more challenging problem.

    \subsection{Proof of Theorem~\ref{thm:GLM:DE}}

        The algorithm in \Cref{thm:GLM:DE} combines our results {reducing density estimation to score estimation (which, in turn, utilizes the connection between DDPM score estimation and MLE)} (see \Cref{sec:reductionScore}) with the following score estimation algorithm that is implicit in \S 5.5 of \cite{gatmiry2024learning}.

        \begin{theorem}[{General Gaussian mixture score estimator;} implicit in \S 5.5 in \cite{gatmiry2024learning}]\label{thm:GLM:score}
            Let $\calM$ be an $(k,R,D,w_{\min})$-local spherical GLM with variance $\sigma\in (0,1]$. 
            For $\zeta\leq\min\inbrace{\nfrac{1}{2}, \nfrac{\sigma}{R}, \nfrac{1}{D}, \nfrac{1}{d}, w_{\min}}$ and $\eta\in (0,1)$, define
                \[
                    N_{\zeta,\eta} =
                    \Bigl(
                        d\log{\frac{1}{\eta}}
                    \Bigr)^{
                        O\bigl(
                            (\log{\frac{1}{\zeta}})^7
                            +
                            (
                                \frac{R}{\sigma}
                                \log{\frac{1}{\zeta}}
                            )^4
                        \bigr)
                    }\,.
                \]
        There is an algorithm that, given 
            accuracy and confidence parameters $(\zeta,\eta)$, 
            time $\zeta\sigma^2\lesssim t \lesssim \log\inparen{\nfrac{{(d+D)}}{\zeta}}$,\footnote{The guarantee in \cite{gatmiry2024learning} holds for a larger range of times $t$. We restrict to this range as it is sufficient for our use and simplifies presentation.}
            instance parameters $(\sigma,k,R,D,w_{\min})$, and 
            sample access to $\cM$, 
        draws $N_{\zeta,\eta}$ i.i.d.\ samples from $\calM$, runs in {$\poly(N_{\zeta,\eta})$} time and, returns a score function $\wh s_t$ that satisfies 
        \[
            \|\wh s_t - \nabla \log \calM_t\|_{L^2(\cM_t)}^2 
            ~~\leq~~
            \wt{O}\Bigl(\frac{\zeta^2\,\inparen{1 + t}}{\log(d+D)}\Bigr)
            ~~\Stackrel{}{\leq}~~ 
            \wt{O}\inparen{\zeta^2}
            \,,
        \]
         with probability $1-\eta$ over the samples generated from the mixture $\calM$.
        Here, $\cM_t$ is the distribution obtained by running the OU process for time $t$ starting from $\cM_0=\cM$.
        \end{theorem}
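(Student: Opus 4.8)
The plan is to reconstruct the score estimator whose analysis is carried out in \S 5.5 of \cite{gatmiry2024learning} and to repackage its guarantee in the $L^2(\calM_t)$ form stated above, restricting to the time window where the degree parameter stays bounded. The starting point is the denoising score-matching identity \eqref{eq:sm_identity}: estimating $\nabla \log \calM_t$ in $L^2(\calM_t)$ is equivalent, up to an additive constant independent of the hypothesis, to minimizing the population objective $\E[\|s_t(X_t)\|^2 + \tfrac{2}{\sqrt{1-e^{-2t}}}\langle s_t(X_t), Z_t\rangle]$, whose empirical version is computable from i.i.d.\ samples $X_0^{(i)}\sim\calM$ together with fresh Gaussian noise $Z_t^{(i)}$. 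So the algorithm is empirical risk minimization over a hypothesis class $\mathcal{F}_t$ of candidate vector fields; I would take $\mathcal{F}_t$ to be the class of piecewise-polynomial vector fields of coordinate-degree at most $\Delta$, where the $x$-space is partitioned into $O(k)$ regions according to which of the $k$ covering balls of \cref{asmp:GLM:locality} is nearest to the query point (after rescaling by the effective noise level), and on each region the vector field is an arbitrary $\R^d$-valued polynomial of degree $\le \Delta$. Since ERM over this class is an unconstrained least-squares problem in $O(k\cdot d^\Delta)$ coefficients, both fitting $\wh s_t$ and evaluating it later take $\poly(N_{\zeta,\eta})$ time.

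Two ingredients then complete the argument. The first is an \emph{approximation lemma}: for a $(k,R,D,w_{\min})$-local GLM and for $t$ in the window $\zeta\sigma^2\lesssim t\lesssim\log((d+D)/\zeta)$, there is $\bar s_t\in\mathcal{F}_t$ with $\|\bar s_t-\nabla\log\calM_t\|_{L^2(\calM_t)}^2 \le \wt O(\zeta^2(1+t)/\log(d+D))$ provided $\Delta = \Theta\bigl((\log\tfrac1\zeta)^7 + (\tfrac R\sigma\log\tfrac1\zeta)^4\bigr)$. The mechanism is Tweedie's identity, by which $\nabla\log\calM_t(x)$ is an affine function of the posterior mean $\E[X_0\mid X_t=x]$; locality forces this posterior, for $x$ in a region of $\calM_t$-mass $1-\poly(\zeta)$, to be supported up to negligible mass within a single $O(R)$-radius ball, on which the posterior log-density is a smooth perturbation of a quadratic, and a Chebyshev expansion of that smooth perturbation truncated at degree $\Delta$ yields the polynomial approximant. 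The two exponents come respectively from how the truncation error depends on $\log(1/\zeta)$ (effective support radius and number of expansion terms) and on $R/\sigma$ (the separation of the mixture in units of noise); the lower cutoff $t\gtrsim\zeta\sigma^2$ keeps the regression targets' variance and the score's irregularity under control, and the upper cutoff keeps the effective radius, hence $\Delta$, bounded. The second ingredient is a \emph{uniform-convergence bound}: the targets $Z_t/\sqrt{1-e^{-2t}}$ are sub-Gaussian and $\calM_t$ has sub-Gaussian tails inherited from boundedness of $Q$ plus Gaussian smoothing, so clipping both the targets and the domain at radius $O(\sqrt{\log(N_{\zeta,\eta}/\eta)})$ introduces only $\poly(\zeta)$ bias, while on the clipped problem the pseudo-dimension of $\mathcal{F}_t$ is $d^{O(\Delta)}$; standard localization then shows that with $N_{\zeta,\eta} = (d\log\tfrac1\eta)^{O(\Delta)}$ samples the ERM has excess $L^2$ risk $\le \wt O(\zeta^2(1+t)/\log(d+D))$ with probability $1-\eta$ (here $k\le 1/w_{\min}\le 1/\zeta$, along with the $1/\zeta^2$ and $\polylog$ factors, is absorbed into the exponent, since $\Delta\ge(\log\tfrac1\zeta)^7$). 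Combining the two ingredients by the triangle inequality gives the claimed bound, and the weaker $\wt O(\zeta^2)$ follows since $1+t\le 1+\log((d+D)/\zeta)=\wt O(\log(d+D))$.

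I expect the main obstacle to be the approximation lemma: showing that the noised-GLM score is captured by a degree-$\Delta$ piecewise polynomial with $\Delta$ only $\polylog(1/\zeta)+\poly(R/\sigma)$ and in particular independent of $d$ and of $k$. This requires quantifying the posterior concentration afforded by locality, bounding the cross-cluster contributions so the piecewise structure is legitimate, and choosing the expansion basis on the effective support so that the truncation error decays at precisely the rate dictated by the target exponents $7$ and $4$. A secondary point is ensuring the clipping radius used in the statistical step is compatible with the high-probability region on which the approximation lemma holds, so that the approximation and generalization errors compose without extra loss; this is routine given the sub-Gaussian tail control of $\calM_t$ but must be tracked carefully to keep the exponent of $N_{\zeta,\eta}$ as stated.
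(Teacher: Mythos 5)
Your proposal takes a genuinely different route from the paper, and in doing so it opens gaps that the paper's argument avoids. The paper does not reprove the score estimator at all: Theorem~\ref{thm:GLM:score} is explicitly billed as implicit in \S~5.5 of \cite{gatmiry2024learning}, and the paper's proof simply instantiates that guarantee — bounding $M_2 \lesssim D^2 + d$ from locality, and constructing a time grid satisfying the start/end condition P1 and the recurrence P2 so that the prescribed time $t$ appears as some $t_i$ in the grid. You instead attempt to rebuild the algorithm and its analysis from scratch, and the first genuine gap is algorithmic: your hypothesis class is not available to the learner. You define the pieces of your piecewise-polynomial class by which of the $k$ covering balls of \cref{asmp:GLM:locality} is nearest, but the centers of those balls are part of the unknown mixing distribution $Q$. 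The algorithm of \cite{gatmiry2024learning} must build its partition from data, and it does so via a clustering constructed recursively across an entire sequence of noise levels $t_1<\dots<t_N$: the clustering at level $t_i$ requires the score estimate (hence clustering) at $t_{i-1}$, which is exactly why the paper emphasizes, right after the theorem, that the score is computed along a whole grid of times containing $t$ rather than at $t$ in isolation. With the partition unknown, ERM over your class is no longer an unconstrained least-squares problem, so the claimed $\poly(N_{\zeta,\eta})$ runtime does not follow from your construction.

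The second gap is that the central technical claim — the approximation lemma asserting that the noised-GLM score is within $\wt O\bigl(\zeta^2\,(1+t)/\log(d+D)\bigr)$ in $L^2(\calM_t)$ of a piecewise polynomial of degree $\Delta = \Theta\bigl((\log\tfrac1\zeta)^7 + (\tfrac R\sigma\log\tfrac1\zeta)^4\bigr)$ — is only asserted with the target exponents, not proved; as you yourself note, this is where essentially all of the difficulty of \cite{gatmiry2024learning} lies (posterior concentration under locality, control of cross-cluster contributions, and the quantitative degree bounds producing the exponents $7$ and $4$). As written, the proposal is a plausible outline of the cited work rather than a proof of the theorem; the short correct argument for the statement as given is the paper's citation-plus-parameter-check: invoke the guarantee of \cite{gatmiry2024learning} as a black box and verify that a time sequence satisfying P1--P2 and containing $t$ exists, with $M_2\lesssim D^2+d$.
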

        Some remarks are in order.
        First, the above result can be extended to times $t$ larger than $\log(\nfrac{(d+D)}{\zeta})$, although we do not require this guarantee and, hence, to simplify presentation, we omit this.
        Second, the specific polynomial dependence of the score estimation error on $\zeta$ is not very important as it only affects constant factors in the exponent.
        Further, the exponent in the definition of $N_{\zeta,\eta}$ has an implicit dependence on $\log{d}$, $\log{D}$, and $\log{\nfrac{1}{w_{\min}}}$, since we require $\zeta \leq\min\inbrace{\nfrac{1}{2}, \nfrac{\sigma}{R}, \nfrac{1}{D}, \nfrac{1}{d}, w_{\min}}$.
        Further, it also has an implicit dependence on $\log{k}$ since $w_{\min}\leq \nfrac{1}{k}$.
        
        Next, while the above result only provides a bound on the score estimation error at time $t$, the algorithm used computes the score on a list of times $t_1,\dotsc,t_N$, which contains $t$.
        This is necessary because \citet{gatmiry2024learning}'s algorithm iterates over times $t_1,\dots,t_N$ and to estimate the score at time $t_i$, it requires the clustering at time $t_{i-1}$, which, in turn, requires an estimate of the score at time $t_{i-1}$.
        
        \begin{proof}[Proof of 
            \cref{thm:GLM:score}]
            Let $M_2\coloneqq \int\norm{x}^2\,\cM(\d x)$. 
            Since $Q$ satisfies \cref{asmp:GLM:locality} and $\cM$ is a convolution of $Q$ by a spherical Gaussian with variance $\sigma^2\leq 1$, $M_2\lesssim D^2+d$.
            In \S 5.5 of \cite{gatmiry2024learning}, they showed that for any sequence of times $0<t_1<\dots<t_N$ satisfying Properties P1 and P2 below, their algorithm satisfies the following score estimation guarantee:
            for each $1\leq i\leq N$,
            the score $s_{t_i}$ computed for the $i$-th noise level is $\zeta_i$-accurate in $L^2(\cM_{t_i})$ for $\zeta_i^2=\frac{\zeta^2\,(\sigma^2+t+1)}{\log(t_N+1)}$, \ie{}, 
            \[
                \norm{s_{t_i}-\nabla\log{\cM_{t_i}}}_{L^2(\cM_{t_i})}^2
                \leq \zeta_i^2\,.
            \]
            They needed the following properties on the time sequence $(t_1,\dots,t_N)$:
            \begin{enumerate}
                \item \textit{P1 (Start and end points):}~~ $t_1\asymp \frac{\zeta^2\sigma^2}{2\sqrt{d}}$ and $t_N\asymp \frac{d+M_2}{\zeta^2}$.
                \item  \textit{P2 (Recurrence):}~~ For each $1\leq i\leq N-1$,
                \[
                    t_k+1 = \inparen{t_{k+1} +1}\cdot \max\inbrace{e^{-2\alpha}, \inparen{t_{k+1} +1}^{-\alpha}}
                    \quadwhere 
                    \alpha \coloneqq \frac{\zeta^2}{M_2 + d\log{T+1}}\,.
                \]
            \end{enumerate}
            The above theorem follows by constructing a sequence $0<t_1<\dots<t_N$ satisfying the above properties and containing the noise scale $t$ provided in the theorem (\ie{}, ensuring there is an index $i$ with $t_i=t$).
        \end{proof}
        
        {We are now ready to employ our polynomial-time reduction to produce a PAC density estimator.}
        To do this, we first have to verify that the well-conditioned model $\calM$ satisfies {the} mild assumptions {(see \cref{asmp:logLipschitz,assumption:density}) required by our reduction from {PAC density estimation to score estimation}}.
        Recall that \cref{asmp:logLipschitz,assumption:density} are parameterized by constants $L$ (sub-Gaussian score) and $M_2$ (second moment bound).
        To bound these, we use the following lemma.
        
        \begin{restatable}[]{lemma}{gaussianMixtureSatifyAssumptions}\label{fact:GLM:satisfiesLBMAssumptions}
                Let $\calM$ be an $(k,R,D, w_{\min})$-local GLM with variance $\sigma\in (0,1]$. 
                Let $L$ and $M_2$ be the following constants:
                \[
                    L = \frac{1}{\sigma}\quadand
                    M_2 = D^2 + \sigma^2 d\,.
                \]
                Then, $\nabla \log \cM$ is $\sqrt L$-sub-Gaussian under $\cM$, and $\int \|x\|^2\,\cM(\d x) \le M_2$.
                Hence, the mixture $\calM$ satisfies \cref{asmp:logLipschitz,assumption:density} with the constants above.
            \end{restatable}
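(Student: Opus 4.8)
The lemma has two parts: (i) the sub-Gaussianity of $\nabla \log \cM$ with parameter $\sqrt{L}$ where $L = 1/\sigma$ — wait, we should double-check, since $\cM = Q * \cN(0,\sigma^2 \mathrm{Id})$ and a pure Gaussian $\cN(0,\sigma^2\mathrm{Id})$ has score $-x/\sigma^2$, which is $(1/\sigma^2)$-Lipschitz, hence $(1/\sigma)$-sub-Gaussian by \Cref{lem:lip_score_implies_subG}; so the claim $L = 1/\sigma$ should read as sub-Gaussian parameter $\sqrt{L} = 1/\sigma$, i.e. $L = 1/\sigma^2$. In any case, the plan is the same. The key structural observation is that $\cM$ is a mixture, in the sense of \Cref{lem:score_mixture,lem:subG_score_mixture}: take $\mu = Q$ on $\cX = \R^d$, and let the Markov kernel be $P(x,\cdot) = \cN(x, \sigma^2 \mathrm{Id})$. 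Then $\mu P = Q * \cN(0,\sigma^2\mathrm{Id}) = \cM$. For each fixed mean $x$, the conditional density $P(x,\cdot) = \cN(x,\sigma^2\mathrm{Id})$ has score $\nabla \log P(x,y) = -(y-x)/\sigma^2$, which is $(1/\sigma^2)$-Lipschitz in $y$; by \Cref{lem:lip_score_implies_subG} it is therefore $(1/\sigma^2)$-sub-Gaussian (i.e. $\sigma^2$-parameter $1/\sigma^2$) under $P(x,\cdot)$. Applying \Cref{lem:subG_score_mixture} with $\sigma^2_{\mathrm{subG}} = 1/\sigma^2$ uniformly over $x$, we conclude $\nabla \log \cM$ is $(1/\sigma^2)$-sub-Gaussian under $\cM$, i.e. $\sqrt{L}$-sub-Gaussian with $\sqrt{L} = 1/\sigma$. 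This verifies \Cref{asmp:logLipschitz}.

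For part (ii), the second moment bound, I would again use the mixture representation. Write $X \sim \cM$ as $X = \mu + \sigma Z$ where $\mu \sim Q$ and $Z \sim \cN(0,\mathrm{Id})$ independently. Then $\E\|X\|^2 = \E\|\mu\|^2 + \sigma^2 \E\|Z\|^2 = \E_Q\|\mu\|^2 + \sigma^2 d$ (the cross term vanishes by independence and $\E Z = 0$). By the third bullet of \Cref{asmp:GLM:locality}, $Q(B(0,D)) = 1$, so $\|\mu\| \le D$ almost surely, giving $\E_Q\|\mu\|^2 \le D^2$. Hence $\int \|x\|^2 \, \cM(\d x) \le D^2 + \sigma^2 d = M_2$, which is exactly \Cref{assumption:density} with the stated parameter. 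Since $\sigma \le 1$ this is $\lesssim D^2 + d$, matching the bound used elsewhere.

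The proof is almost entirely bookkeeping: the only place requiring care is ensuring the hypotheses of \Cref{lem:score_mixture,lem:subG_score_mixture} are literally met — in particular that $P(x,\cdot)$ is a genuine Markov kernel (clear, it is the Gaussian shift kernel) and that the per-component sub-Gaussianity constant $1/\sigma^2$ is uniform in $x$ (also clear, since the covariance $\sigma^2\mathrm{Id}$ does not depend on the mean). There is no real obstacle; the main subtlety, if any, is a potential off-by-$\sigma^2$ discrepancy between "the score is $\sqrt{L}$-sub-Gaussian" and "the parameter $L$ equals $1/\sigma$" in the statement, which should be read in the normalization of \Cref{lem:score_subG} (where "$\sqrt{L}$-sub-Gaussian" means the parameter in the exponent is $L$), so the intended reading is $\sqrt{L} = 1/\sigma$. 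I would state the proof in exactly three short paragraphs — mixture setup, sub-Gaussianity via \Cref{lem:lip_score_implies_subG,lem:subG_score_mixture}, and the elementary second-moment computation — and conclude that both \Cref{asmp:logLipschitz,assumption:density} hold with the claimed constants, so that the reduction of \Cref{sec:reductionScore} (specifically \Cref{thm:reduction}) applies to $\cM$.
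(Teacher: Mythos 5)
Your proposal is correct and follows essentially the same route as the paper: decompose $\cM$ as the mixture $Q * \cN(0,\sigma^2\,\mathrm{Id})$, apply \Cref{lem:lip_score_implies_subG} to each component and \Cref{lem:subG_score_mixture} to the mixture for the sub-Gaussianity, and compute the second moment directly using $Q(B(0,D))=1$. Your side remark about the constant is also well taken: since the component score $-(y-x)/\sigma^2$ is $\sigma^{-2}$-Lipschitz, the conclusion is that $\nabla\log\cM$ is $\sigma^{-1}$-sub-Gaussian, i.e., $\sqrt{L}$-sub-Gaussian with $L=1/\sigma^2$ rather than the stated $L=1/\sigma$ (for $\sigma<1$ the literal statement would be too strong); this is exactly what the paper's own proof establishes, and the discrepancy only affects polynomial factors in the downstream application.
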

            
        \begin{proof}[Proof of \cref{fact:GLM:satisfiesLBMAssumptions}]
            Recall that $\cM=Q*\normal{0}{\sigma^2\, \mathrm{Id}}$.
            We divide the proof into two parts.
    
            \paragraph{Sub-Gaussian score.}
                By \Cref{lem:subG_score_mixture}, it suffices to show that $N_{\sigma^2}=\normal{0}{\sigma^2\, \mathrm{Id}}$ has a $\sigma^{-1}$-sub-Gaussian score, and this follows from \Cref{lem:lip_score_implies_subG} since $\nabla \log{N_{\sigma^2}(x)}=\sigma^{-2}x$ is $\sigma^{-2}$-Lipschitz.
    
            \paragraph{Bound on the second moment.}
                Since $\cM$ is a convex combination of $\normal{\mu}{\sigma^2\,\mathrm{Id}}$ for $\mu$ in the support of $Q$, it suffices to bound $\int \|\cdot\|^2\,\d\normal{\mu}{\sigma^2\, \mathrm{Id}} = \|\mu\|^2 + \sigma^2 d \le D^2 + \sigma^2 d$.
        \end{proof}
        
        Now, we are ready to prove \cref{thm:GLM:DE}.
        
        \begin{proof}[Proof of \cref{thm:GLM:DE}]   
            \cref{thm:reduction} implies that to obtain an $(\nfrac{\eps}{\delta},\delta)$ PAC density estimator for any coverage probability $\delta>0$, it is sufficient to make $C=C(\eps,L,M_2)$ calls to a score estimation oracle with aggregated error $\eps_*=\eps_*(\eps,L,M_2)$, terminal time $T\asymp \log(1+\nfrac{2M_2}{\eps})$, where using the values of $L$ and $M_2$ from \cref{fact:GLM:satisfiesLBMAssumptions}, we have 
            \[
                C = \tilde{O}\binparen{\frac{d^2\log^2{D}}{\sigma\eps^2}}\,,\quad 
                \eps_* = \tilde{O}\binparen{
                    \frac{\eps}{\sqrt{d\log{\nfrac{D}{\sigma}}}}
                }
                \,,\quadand 
                T \asymp \log{\binparen{\frac{D+d}{\eps}}}
                \,.
                \yesnum\label{eq:GLM:reductionParam}
            \]
            Moreover, the calls to the score estimation oracle are made at times $t$ satisfying $\tau\leq t\leq T$, where the starting time $\tau$ is $\tau \asymp \frac{\eps^2}{Ld^2}$ and, substituting $L=\nfrac{1}{\sigma}$, is 
            \[
                \tau \asymp \frac{\sigma\eps^2}{d^2}\,.
                \yesnum\label{eq:GLM:valueOfTau}
            \] 
            It remains to show that \cref{thm:GLM:score} with a suitably small $\zeta$ implies an efficient score estimation oracle with the desired aggregate error.
            First, note that to obtain an aggregate error $\int_\delta^T\eps^2_t\, \d t\leq \eps_*^2$, it suffices to let accuracy $\zeta\leq \wt{O}(\nfrac{\eps_*}{\sqrt{T}})$ for each time $t$ it is queried.
            Next, the reduction in \cref{thm:reduction} makes $N$ queries with timescale $\tau \leq t\leq T$ (for $\tau$ as in \eqref{eq:GLM:valueOfTau}), the requirement on timescales is satisfied for 
            \[
                \poly\bigl(\frac{\eps}{d+D}\bigr)
                \lesssim 
                \zeta\lesssim \frac{\sigma\eps^2}{d^2}\,,
            \]
            The above observations allow us to use the score estimation oracle in \cref{thm:GLM:score}  at these times.
            For each call $1\leq i\leq C$, we query the score estimation oracle in \cref{thm:GLM:score} with 
            \[
                \zeta = \min\Bigl\{
                    \frac{\sigma\eps^2}{d^2},~ \wt{O}\bigl(\frac{\eps}{\sqrt{d\,\log{\nfrac{D}{\sigma}}}}\bigr)
                \Bigr\}\,,
            \]
            and confidence $\nfrac{\delta}{N}$.
            Observe that this choice of $\zeta$ satisfies the requirements that $\poly(\nfrac{\eps}{(d+D)}) \lesssim \zeta\lesssim \nfrac{\eps^2}{(Ld^2)}$ and $\zeta\leq \wt{O}(\nfrac{\eps_*}{\sqrt{T}})$ mentioned above; in fact, it satisfies $\zeta\geq \Omega(\eps^5)$ as $L=\nfrac{1}{\sigma}$ and $\eps\leq \min\inbrace{\nfrac{1}{d},\nfrac{\sigma}{R},\nfrac{1}{D}}$ (\cref{fact:GLM:satisfiesLBMAssumptions}).
           
            Taking a union bound over all $C$ calls implies that with probability $1 - \delta$, the construction in \cref{thm:scoreEstimationtoIntegrated} outputs a valid $(\eps,T)$-integrated score estimation oracle as required.
            It remains to bound the total number of samples from $\cM$ used.
            Since each call to the score estimation in \cref{thm:GLM:score}
            requires 
            \[
                \Bigl(
                        d\log{\frac{N}{\delta}}
                    \Bigr)^{
                        O\bigl(
                            (\log{\frac{1}{\zeta}})^7
                            +
                            (
                                \frac{R}{\sigma}
                                \log{\frac{1}{\zeta}}
                            )^4
                        \bigr) 
                    } 
                \qquadtext{samples from $\cM$\,,}
            \]
            and we ensured $\zeta\geq \Omega(\eps^5)$, $\eps\leq \min\inbrace{\nfrac{\sigma}{R},\nfrac{1}{d},\nfrac{1}{D}}$, and \cref{eq:GLM:reductionParam}, and
            the total samples used are
            \begin{align*}
                C\,\Bigl(
                        d\log{\frac{C}{\delta}}
                \Bigr)^{
                        O\bigl(
                            (\log{\frac{1}{\eps}})^7
                            +
                            (
                                \frac{R}{\sigma}
                                \log{\frac{1}{\eps}}
                            )^4
                        \bigr) 
                    } 
                ~~~&=~~~
                    \wt O\bigl(\frac{d^2\log^2{D}}{\sigma\eps^2}\bigr)\,
                    \Bigl(
                        d\log{\frac{C}{\delta}}
                    \Bigr)^{
                        O\bigl(
                            (\log{\frac{1}{\eps}})^7
                            +
                           (
                                \frac{R}{\sigma}
                                \log{\frac{1}{\eps}}
                            )^4
                        \bigr) 
                    } \\
                &=~~~
                    \Bigl(
                        d\log{\frac{1}{\delta\eps}}
                    \Bigr)^{
                        O\bigl(
                            (\log{\frac{1}{\eps}})^7
                            +
                            (
                                \frac{R}{\sigma}
                                \log{\frac{1}{\eps}}
                            )^4
                        \bigr) 
                    } \,.
            \end{align*}
            The running time follows since the score estimation oracle and the reduction in \cref{thm:scoreEstimationtoIntegrated} run in sample-polynomial time.
        \end{proof}

\section{Cryptographic lower bounds for score estimation}\label{sec:hardnessScoreEstimation}

In this section, we discuss how our connections between score estimation and PAC density estimation imply computational bottlenecks for score estimation.
In \Cref{sec:clwe-pac-density}, we show that an algorithm that performs PAC density estimation for Gaussian mixtures implies an algorithm for distinguishing homogeneous Continuous Learning with Errors (\hCLWE{}). In \Cref{sec:lwe}, using standard reductions from \cite{bruna2021continuous,gupte2022continuous}, we show that this implies an algorithm for the {Continuous Learning with Errors} (\CLWE{}) and {Learning with Errors} (\LWE{}) problems. Combined with our reduction from {PAC density estimation to score estimation}, {these} establish cryptographic hardness results for score estimation.
See \cref{fig:crypto} for a summary of the reductions.

\begin{remark}[SQ lower bounds] 
A natural question is whether {it is meaningful to derive} lower bounds for score estimation in {a} restricted {computational model}, such as the Statistical Query (SQ) model~\cite{kearns1998efficient}.
{Notably, existing SQ lower bounds for density estimation, specifically, lower bounds for evaluators, directly imply SQ lower bounds for score estimation.}
    {This follows from the fact that, information theoretically, generators and evaluators for densities (in the sense of \cite{kearns1994learnability}) are equivalent objects and, since the SQ model permits arbitrary additional computation beyond the SQ queries themselves, this equivalence holds within the SQ framework as well.}
   {Consequently, known reductions from generation to score estimation already yield SQ hardness results; that is, SQ lower bounds for density evaluation (\eg{}, \cite{diakonikolas2017statistical}) imply corresponding SQ lower bounds for score estimation.} 
    Therefore, we restrict our attention to lower bounds based on complexity theory. 
    To obtain such computational lower bounds, it seems to be more difficult to use the existing works on generation because generators and evaluators are \emph{not} computationally equivalent. 
    However, as we will see, our framework can be used in a principled way to derive computational bottlenecks for score estimation (based on standard complexity-theoretic assumptions).
\end{remark}

\begin{figure}[bht]
    \centering
    \vspace{-2.5mm}
    \includegraphics[width=0.75\linewidth]{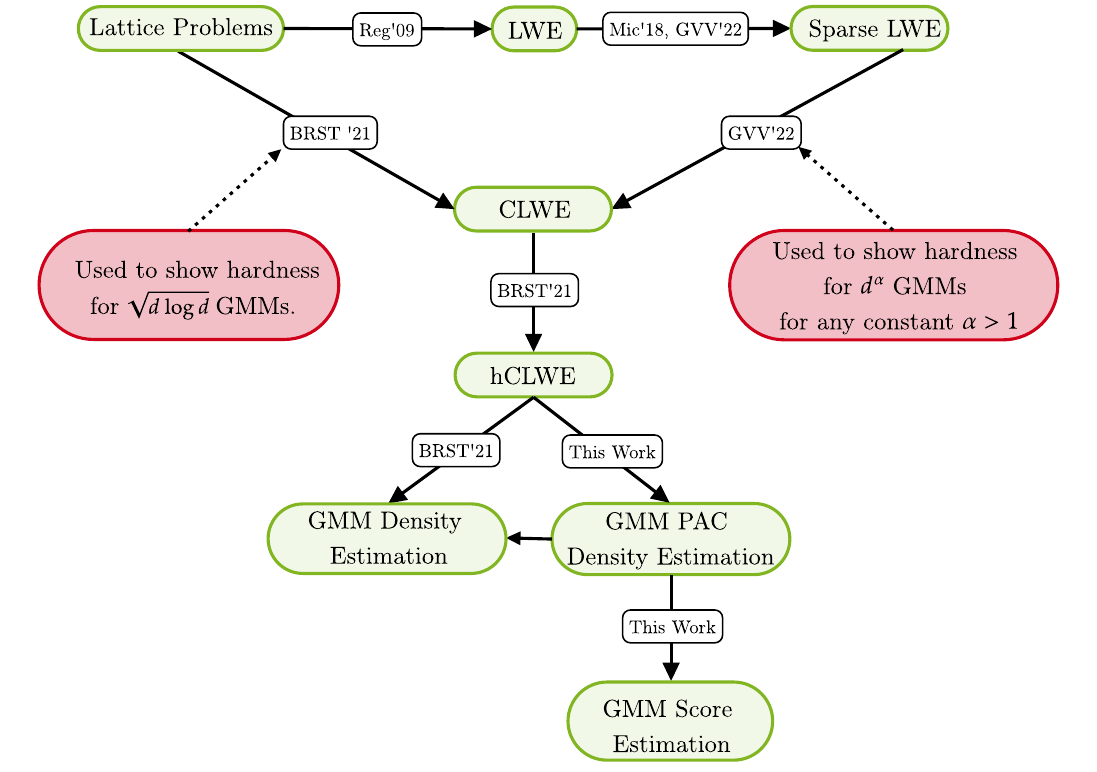}
    \caption{Illustration of reductions between different cryptographic problems, density estimation, PAC density estimation, and score estimation. We use these reductions to obtain the hardness of GMM score estimation (\cref{infthm:scoreCrypto,thm:cryptoHardnessGMM}).
    }
    \label{fig:crypto}
\end{figure}

\subsection{PAC density estimation for GMMs implies homogeneous CLWE}\label{sec:clwe-pac-density}

Our starting point is the {seminal} work of \citet{bruna2021continuous}, which introduced \CLWE{}. One of {their} most important ideas is a reduction from \hCLWE{} (\ie{}, homogeneous \CLWE{}) to GMM density estimation.
For background on \CLWE{}, we refer {the reader} to \Cref{CLWE}.

In this section, we modify this reduction and show that PAC density estimation  {also} implies an algorithm for \hCLWE{}{; note that this is not immediate from the reduction in \cite{bruna2021continuous} since PAC density estimation is an easier task than density estimation}.

We say that an algorithm $A$ is a $(c, \epsilon,\delta)$-PAC density estimator of the density $P$ if, with probability at least $c$ over $S \sim P^n$, the output $\wh P \sim A(S)$ satisfies
\[
    \E\,P\{x\in\R^d : e^{-\epsilon} P(x) \leq \wh P(x) \leq e^\epsilon P(x)\} \geq 1-\delta\,.
\]
Originally, in~\Cref{def:PACdensityEstimation}, we stated this definition with $c = \nfrac{9}{10}$ because this parameter can be boosted, but it is convenient to keep $c$ as a free parameter for some of the results in this section.
A model $\wh P$ that satisfies the above guarantee is simply called a $(c,\epsilon,\delta)$-PAC density estimator for $P.$

Next, we show that an algorithm that solves PAC density estimation for $d$-dimensional GMMs with $k$ components can be used to solve $\hCLWE{}_{\beta,\gamma}$ better than {the naïve algorithm that selects one option uniformly at random} for some choice of $k,\beta$, and $\gamma$.

Before stating the result, let us shortly recall $\hCLWE{}$. For details, see \Cref{CLWE}.

\begin{definition}[\hCLWE{} \cite{bruna2021continuous}]
For parameters $\beta, \gamma > 0$, the average-case decision problem $\hCLWE{}_{\beta, \gamma}$ is to distinguish with probability $>1/2$ the following two distributions over $\mathbb{R}^d$: 
\begin{enumerate}
    \item[$(\mathsf H_0)$] The Gaussian distribution in $d$ dimensions with mean $0$ and covariance $\mathrm{Id}/(2\pi).$
    \item[$(\mathsf H_1)$] the \hCLWE{} distribution $H_{w, \beta, \gamma}$ (see \Cref{def:hclwe}) for some uniformly random unit vector $w \in \mathbb{R}^d$ (which is fixed for all samples).
\end{enumerate}
\end{definition}
\noindent 
This distinguishing problem is considered to be computationally hard for some parameters $\beta(d), \gamma(d)$ even for advantage $1/\poly(d)$. For instance, the main result of \citet{bruna2021continuous} is that when $\beta(d) \in (0,1)$ and $\gamma(d) \geq 2 \sqrt{d}$ (such that the ratio $\gamma/\beta$ is polynomially bounded), then there is a polynomial-time quantum reduction from standard lattice problems such as $\mathrm{GapSVP}_{\alpha}$ \cite{regev2009lattices} (for some approximation factor $\alpha \approx d/\beta)$ to $\hCLWE{}_{\beta, \gamma}$. In other words, an efficient algorithm for $\hCLWE{}$ would imply an efficient quantum algorithm that approximates worst-case lattice problems 
within polynomial factors.
We prove the following reduction from $\hCLWE{}$ to PAC density estimation for Gaussian mixture models.

\begin{proposition}[PAC density estimation for GMMs implies \hCLWE{}]\label{prop:hCLWE-PAC}
Let $\beta = \beta(d) \in (0,\nfrac{1}{32})$, $\gamma = \gamma(d) \geq 1$, and $g(d) \geq 4\pi$.
For $k \gtrsim \gamma \sqrt{g(d)}$,
if there is an $\exp(g(d))$-time algorithm that solves $(\nfrac{9}{10},\epsilon,
\delta)$-PAC density estimation for mixtures of $2k+1$ Gaussians in $d$ dimensions for sufficiently small absolute constants $ \epsilon,\delta$, then there is a
$O(\exp(g(d)))$-time algorithm that solves $\hCLWE{}_{\beta, \gamma}$.
\end{proposition}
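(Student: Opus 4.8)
\textbf{Overall plan and Step 1 (obtain a PAC oracle for the unknown $P\in\{\mathsf H_0,\mathsf H_1\}$).} I would adapt the reduction of \citet{bruna2021continuous} from $\hCLWE{}$ to Gaussian‑mixture density estimation; the obstacle is that that reduction compares the learned density to the known null in total variation, whereas a PAC density estimator only controls a multiplicative error on a $(1-\delta)$‑fraction of the mass. The fix is to replace the total‑variation comparison by a single \emph{bounded, clipped} test statistic evaluated on held‑out samples, whose expectation differs by a constant factor between the two hypotheses and is therefore insensitive to the $\delta$‑fraction on which $\wh P$ may misbehave. Concretely: under $\mathsf H_0$ the input is a single Gaussian; under $\mathsf H_1$ the density is $H_{w,\beta,\gamma}(y)\propto e^{-\pi\|y\|^2}\,\mathrm{comb}(\gamma\langle y,w\rangle)$ with $\mathrm{comb}(u)\deq\sum_{j\in\Z}e^{-\pi(u-j)^2/\beta^2}$, i.e.\ a Gaussian mixture with one thin ``pancake'' component per integer $j$. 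Since the marginal of $\langle y,w\rangle$ has a Gaussian envelope, all but an $e^{-\Omega(g(d))}$ fraction of the mass sits on $|j|\lesssim\gamma\sqrt{g(d)}$, so for $k\gtrsim\gamma\sqrt{g(d)}$ (large enough constant) $H_{w,\beta,\gamma}$ is $e^{-\Omega(g(d))}$‑close in total variation to a mixture $\widetilde H_1$ of $2k+1$ Gaussians. Running the hypothesized $\exp(g(d))$‑time PAC estimator on our (at most $\poly(d)$, since $\epsilon,\delta$ are constants) samples, and using that the law of $\poly(d)$ i.i.d.\ draws from $H_{w,\beta,\gamma}$ and from $\widetilde H_1$ are $o(1)$‑close, yields with probability $\ge\frac9{10}-o(1)$ an evaluation oracle $\wh P$ with $P\{x:e^{-\epsilon}P(x)\le\wh P(x)\le e^{\epsilon}P(x)\}\ge 1-O(\delta)$ (the $O(\delta)$ absorbing a Markov step for the internal randomness of $\wh P$).

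\textbf{Step 2 (a bounded statistic with an $\Omega(1/\beta)$ expectation gap).} Since $H_0(y)=e^{-\pi\|y\|^2}$ is known, define
\[
  \psi(y)\;\deq\;\min\!\Bigl(\wh P(y)\,e^{\pi\|y\|^2},\ \tfrac{2}{\beta}\Bigr)\in\bigl[0,\tfrac2\beta\bigr].
\]
A short Gaussian‑integral computation (Poisson summation) gives the normalizer $Z\asymp\beta$ for $\gamma\ge1$, hence $H_{w,\beta,\gamma}\le 1.01/\beta$ everywhere (and $H_0\le1$), so on the good set the clip is inactive. Thus for a fresh $X\sim P$ with bad set $B$ ($P(B)\le O(\delta)$), $\E[\psi(X)]=\E[\psi(X)\,\ind_{X\notin B}]\pm O(\delta/\beta)$. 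If $P=\mathsf H_0$, on the good set $\psi\approx e^{\pm\epsilon}$, so $\E_{\mathsf H_0}[\psi]\le e^{\epsilon}+O(\delta/\beta)$. If $P=\mathsf H_1$, on the good set $\psi(y)\approx e^{\pm\epsilon}\,\mathrm{comb}(\gamma\langle y,w\rangle)/Z$, and the same style of computation gives $\E_{\mathsf H_1}[\mathrm{comb}]=Z^{-1}\!\int e^{-\pi\|y\|^2}\mathrm{comb}(\gamma\langle y,w\rangle)^2\,\mathrm dy\asymp1$ (off‑diagonal terms being $e^{-\Omega(1/\beta^2)}$‑small), so $\E_{\mathsf H_1}[\psi]\gtrsim1/\beta$. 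Hence for sufficiently small absolute constants $\epsilon,\delta$ one has $\E_{\mathsf H_1}[\psi]\ge 2\,\E_{\mathsf H_0}[\psi]$, uniformly over $\beta\in(0,\tfrac1{32})$ and $\gamma\ge1$.

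\textbf{Step 3 (the distinguisher).} Draw $m=O(1)$ fresh samples, form $T=\frac1m\sum_i\psi(x_i)$, and output ``$\mathsf H_1$'' iff $T>\theta$ with $\theta\deq 0.3/\beta$ (which lies strictly between the two expectations for $\beta<\tfrac1{32}$). Since $\psi\le 2/\beta$, Hoeffding gives $|T-\E[\psi]|\le 0.1/\beta$ with probability $\ge\frac{99}{100}$ once $m$ is a large enough constant; conditioned on the success of Step 1 the answer is then correct with probability $\ge\frac{99}{100}$, so overall the procedure distinguishes $\mathsf H_0$ from $\mathsf H_1$ with probability $\ge\frac45>\frac12$, in time $\exp(g(d))+\poly(d)=O(\exp(g(d)))$. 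This solves $\hCLWE{}_{\beta,\gamma}$.

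\textbf{Main obstacle.} The technical heart is Step 2: because a PAC oracle can be arbitrary garbage on a $\delta$‑fraction of the mass, the reduction must rest on a \emph{bounded} statistic, and the expectation gap survives only if two quantitative facts hold simultaneously and uniformly in $\beta\in(0,\tfrac1{32})$ — (i) the a priori upper bound $H_{w,\beta,\gamma}\lesssim1/\beta$, which pins the clip level at $2/\beta$ so the clip is inactive on the good set, and (ii) the matching lower bound $\E_{\mathsf H_1}[\psi]\gtrsim1/\beta$, which reduces to the estimates $Z\asymp\beta$ and $\E_{\mathsf H_1}[\mathrm{comb}]\asymp1$ — so that both the $\mathsf H_1$‑signal and the bad‑set error scale like $1/\beta$, with the former dominating once $\delta$ is a small enough absolute constant. (A more hands‑on alternative would be to recover an estimate $\wh w$ of $w$ from $\wh P$ — e.g.\ as the most anisotropic eigendirection of a finite‑difference estimate of $\nabla^2\log\wh P$ at a typical point — and then run the standard CLWE statistic $\frac1m\sum_i\cos(2\pi\gamma\langle x_i,\wh w\rangle)$; but proving accuracy of $\wh w$ to within $O(1/\gamma)$, robustly to the PAC error and the finite‑difference step size, seems more delicate.) The remaining pieces — the pancake‑truncation bound in Step 1 and the Hoeffding step — are routine.
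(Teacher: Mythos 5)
Your proposal is correct in substance but routes the final distinguishing step quite differently from the paper. Both proofs share Step 1: truncate $H_{w,\beta,\gamma}$ to its $2k+1$ central components, use $\tv{H_{w,\beta,\gamma}}{H^{(k)}}\le 2\exp(-\pi k^2/(2\gamma^2))$ to argue the PAC estimator still works, and transfer the multiplicative guarantee back to the untruncated distribution via a Markov argument on the density ratio (the paper isolates this as a separate transfer lemma; you compress it, and your claim that the sample laws are ``$o(1)$-close'' should really be ``closer than a small absolute constant,'' using that the number of samples is at most $\exp(g(d))$ so the product-law TV is $\le 2e^{-g(d)}\le 2e^{-4\pi}$ --- with $g(d)=O(1)$ you do not get $o(1)$, but a small constant loss in the success probability is all that is needed). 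Where you diverge is the tester. The paper proves a generic simple-vs-composite testing lemma: draw $m=O(1)$ fresh samples and reject the null as soon as any sample has $\wh P(x)/P_0(x)\notin[e^{-\eps},e^{\eps}]$; correctness under the alternative uses only that $\tv{P_0}{P}\ge\nfrac{1}{10}$, via the existence of a set of constant $P$-mass on which $P>e^{1/80}P_0$. This needs no quantitative information about the hCLWE density beyond its constant TV separation from the Gaussian, and it applies verbatim to any family $\msf H_1$ at constant TV distance from a known null. Your tester instead averages the bounded statistic $\psi=\min(\wh P\,e^{\pi\|\cdot\|^2},\,2/\beta)$ and thresholds at $0.3/\beta$; its correctness rests on the hCLWE-specific estimates $Z\asymp\beta$, $H_{w,\beta,\gamma}\le 1.001/\beta$, and the diagonal Poisson-summation bound giving $\E_{\msf H_1}[\psi]\gtrsim 1/\beta$ versus $\E_{\msf H_0}[\psi]\le e^{\eps}+O(\delta/\beta)$ --- all of which check out for $\beta<\nfrac{1}{32}$, $\gamma\ge 1$. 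What your approach buys is a quantitatively large (factor $\asymp 1/\beta$) expectation gap and robustness via averaging rather than a one-bad-sample rule; what it costs is generality and several pages of comb computations that the paper's TV-based lemma avoids entirely.
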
 
Since the reduction of \cite{bruna2021continuous} from lattice-based problems to $\hCLWE{}$ holds in the regime where $\beta \in (0,1)$ and $\gamma \geq 2\sqrt{d}$, the above result (taking $g(d) = O(\log d)$), implies that $(\nicefrac{9}{10}, \epsilon,\delta)$-PAC density estimation for GMMs with $\Omega(\sqrt{d \log d})$ components requires super-polynomial time for some absolute constants $\epsilon,\delta$; otherwise there is a polynomial-time quantum algorithm for standard lattice problems.
Before proving this result, we need the following two intermediate lemmas.

\begin{lemma}[PAC density estimation solves simple vs.\ composite testing]\label{lemma:tester}
Fix $\epsilon,\delta \in (0,1)$.
Let $\mathsf H_0 = \{P_0\}$ and $\msf H_1$ be families of probability distributions.
Let $\wh P$ be a $(c,\eps,\delta)$-PAC density estimator over $\msf H_0 \cup \msf H_1$.
Then, there is a tester that draws $m$ (fresh) i.i.d.\ samples from $P \in \msf H_0 \cup \msf H_1$, makes $m$ queries to $\wh P$, and has the following properties: 
\begin{enumerate}
    \item If $P = P_0$, then the tester outputs $\msf H_0$ with probability at least $c\,{(1-\delta)}^m$. 
    \item If $P \in \msf H_1$, $\tv{P_0}{P} \geq \nfrac{1}{10}$, $\eps \le \nfrac{1}{160}$, and $\delta \le \nfrac{1}{80}$, then the tester outputs $\msf H_1$ with probability at least $c\,(1-(\nfrac{79}{80})^m)$.
\end{enumerate}
In particular, if $c > \nfrac{1}{2}$ and we take $m$, $\nfrac{1}{\eps}$, and $\nfrac{1}{\delta}$ to be sufficiently large absolute constants, then in both cases the probability of success is {strictly larger than $\nfrac{1}{2}$}.
\end{lemma}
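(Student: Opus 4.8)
The plan is to use the PAC density estimator $\wh P$ itself as the test statistic. The tester draws $m$ fresh i.i.d.\ samples $X_1,\dots,X_m$ from the unknown $P$, queries the (possibly randomized) oracle at each $X_i$, and outputs $\msf H_0$ if \emph{every} query satisfies the containment $e^{-\eps}P_0(X_i)\le \wh P(X_i)\le e^{\eps}P_0(X_i)$ — this is checkable because $P_0$ is a single, fixed, known distribution — and outputs $\msf H_1$ otherwise. This uses exactly $m$ fresh samples and $m$ oracle queries, as required.

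For the case $P=P_0$, I would argue as follows: by the PAC guarantee, with probability at least $c$ over the training set used to build $\wh P$, the oracle is $(\eps,\delta)$-accurate for $P_0$, so a fresh $X\sim P_0$ together with the oracle's internal randomness satisfies the containment with probability at least $1-\delta$. Since the $m$ samples and the per-query randomness are independent, conditioning on the good training event the tester outputs $\msf H_0$ with probability at least $(1-\delta)^m$, giving claim~1.

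For the case $P\in\msf H_1$ with $\tv{P_0}{P}\ge \nfrac{1}{10}$, the key step is to show that the ``over-estimation region'' $A\deq\{x: P(x)>e^{2\eps}P_0(x)\}$ carries $\Omega(1)$ mass under $P$. Writing $\tv{P_0}{P}=\int (P-P_0)_+$ and bounding the part of this integral over $\{P_0<P\le e^{2\eps}P_0\}$ by $e^{2\eps}-1$ yields $P(A)\ge \int_A (P-P_0)\ge \nfrac{1}{10}-(e^{2\eps}-1)\ge \nfrac{1}{20}$ once $\eps\le\nfrac{1}{160}$. Then, with probability at least $c$ over the training set the oracle is $(\eps,\delta)$-accurate for $P$; on that event a fresh query $\wh P(X)$ with $X\sim P$ lies in $[e^{-\eps}P(X),e^{\eps}P(X)]$ with probability at least $1-\delta$, and if simultaneously $X\in A$ then $\wh P(X)\ge e^{-\eps}P(X)>e^{-\eps}e^{2\eps}P_0(X)=e^{\eps}P_0(X)$, so the containment against $P_0$ fails and that query ``flags''. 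Hence a single query flags with probability at least $P(A)-\delta\ge \nfrac{1}{20}-\nfrac{1}{80}>\nfrac{1}{80}$ (using $\delta\le\nfrac{1}{80}$); since the $m$ queries are independent trials, conditioning on the good training event the probability that none flags is at most $(\nfrac{79}{80})^m$, so the tester outputs $\msf H_1$ with probability at least $c\,(1-(\nfrac{79}{80})^m)$, which is claim~2. The final ``in particular'' statement then follows by first choosing $m$ large enough that $c\,(1-(\nfrac{79}{80})^m)>\nfrac12$ (possible since $c>\nfrac12$), then, with $m$ fixed, choosing $\delta\le\nfrac{1}{80}$ small enough that $c\,(1-\delta)^m>\nfrac12$, and finally $\eps=\nfrac{1}{160}$.

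The calculations are mostly routine; the one step that requires genuine thought is the passage from the \emph{global} total-variation lower bound to a lower bound on the \emph{$P$-mass} of the over-estimation region $A=\{P>e^{2\eps}P_0\}$ — this is exactly the region where samples drawn from $P$ let the oracle's multiplicative accuracy expose the discrepancy with $P_0$, and it is crucial that the discrepancy surface on a set of non-negligible mass under the sampling distribution $P$ rather than merely under $P_0$. Apart from that, the main care needed is the randomness bookkeeping — separating the probability $c$ over the training set, the independence of the $m$ fresh samples, and the fresh per-query randomness of a possibly randomized oracle — together with tracking absolute constants so that $\eps\le\nfrac{1}{160}$ and $\delta\le\nfrac{1}{80}$ indeed suffice.
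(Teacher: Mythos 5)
Your proposal is correct and follows essentially the same route as the paper: the identical tester (flag any sample where $\wh P$ leaves the multiplicative band around $P_0$), the same handling of Case 1, and in Case 2 the same key step of converting the TV gap into an $\Omega(1)$ $P$-mass region where $P$ exceeds $P_0$ by a factor beating the oracle's accuracy (you use the threshold $e^{2\eps}$ where the paper uses $e^{1/80}$, but the arithmetic and conclusions coincide).
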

\begin{proof}
    The testing algorithm draws $m$ points $x_1,\dots,x_m$ i.i.d.\ from $P$ and evaluates each one of them using the model $\wh P$. Then it outputs $\msf H_1$ if there exists an $1\leq i\leq m$ such that
    \begin{align}\label{eq:good_pt}
        \frac{\wh P(x_i)}{P_0(x_i)} \notin [e^{-\epsilon}, e^{\epsilon}]\,.
    \end{align}
    Otherwise, it outputs $\msf H_0$.
    Throughout, we always work conditionally on the event of probability at least $c$ that the PAC density estimation succeeds.
\begin{enumerate}
    \item Let us assume that $P = P_0$.
    By the guarantee of $\wh P$, for each $i$, the probability that condition~\eqref{eq:good_pt} is satisfied is at least $1-\delta$.
    Thus, the probability that the tester outputs $\msf H_0$ is at least ${(1-\delta)}^m$.
    
    \item Now, let us assume that $P \in \msf H_1$ and $\tv{P_0}{P} > \nfrac{1}{10}$.
        Let $S = \{x\in\R^d : P(x) > P_0(x)\}$. By the definition of total variation distance, $P(S) - P_0(S) {\,>\,} \nfrac{1}{20}$. For $\eta \in (0,1)$, consider the set $T = \{x \in S : P(x)/P_0(x) > e^{\eta}\}$. We show that $P(T) \geq \nfrac{1}{40}$ provided that $\eta \le \nfrac{1}{80}$. 

    For any $x \in S \setminus T$, we have $P(x) \leq e^\eta P_0(x)$, which means that $P(S \setminus T) - P_0(S \setminus T) \leq (e^\eta-1)\, P_0(S \setminus T) \leq 2\eta$.
    Hence, we can write
    \[
    \frac{1}{20} \le P(S)-P_0(S) = P(T) + P(S \setminus T) - P_0(T) - P_0(S \setminus T)
    \le P(T) + 2\eta\,.
    \]
    Choosing $\eta = \nfrac{1}{80}$, we obtain $P(T) \ge \nfrac{1}{40}$.
    
    Next, consider the event $G \deq \{\wh P(x_1)/P(x_1) \ge e^{-\eps}\}$.
    Note that on the event $G \cap \{x_1 \in T\}$,
    \begin{align*}
        \wh P(x_1)
        \ge e^{-\eps} P(x_1)
        \ge e^{\nfrac{1}{80}-\eps} P_0(x_1)
        \ge e^{\eps} P_0(x_1)\,,
    \end{align*}
    provided $\eps \le \nfrac{1}{160}$, and in this case the tester outputs $\msf H_1$.
    Also, by the PAC guarantee, the event $G \cap \{x_1\in T\}$ has probability at least $\nfrac{1}{40}-\delta \ge \nfrac{1}{80}$, provided $\delta \le \nfrac{1}{80}$.
    The probability that the tester fails to reject on any of the $m$ samples is therefore bounded by $(\nfrac{79}{80})^m$.
\end{enumerate}
\end{proof}

\noindent The next lemma is used to show that PAC density estimation over mixtures of finitely many Gaussians is enough to perform PAC density estimation over $\hCLWE{}$ distributions, which technically are Gaussian mixtures {with infinitely many components}.

\begin{lemma}[\hCLWE{} from mixtures of Gaussians]\label{lemma:transfer}
Assume that there is a $(\nfrac{9}{10},\epsilon,\delta)$-PAC density estimation algorithm for mixtures of $2k+1$ Gaussians in $d$ dimensions that uses $\exp(g(d))$ samples for sufficiently small absolute constants $\epsilon,\delta$. Then, there is a $(\nfrac{9}{10} - 2e^{-4\pi},\epsilon+4e^{-4\pi}/\delta, 2\delta + 2e^{-4\pi})$-PAC density estimation algorithm for the distribution $H_{w,\beta,\gamma}$ that uses $\exp(g(d))$ samples.
\end{lemma}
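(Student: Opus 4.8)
The plan is to realize $H_{w,\beta,\gamma}$ as an explicit infinite mixture of Gaussians, truncate it to its $2k+1$ heaviest components to get a mixture $\tilde H$ that the assumed algorithm can handle, feed it the samples drawn from $H$, and finally transfer the PAC guarantee for $\tilde H$ back to $H$. Recall (see \Cref{def:hclwe}) that $H \deq H_{w,\beta,\gamma}$ is, by construction, a mixture $H = \sum_{j\in\mathbb Z} w_j\,\mathcal N_j$ of Gaussians $\mathcal N_j$ with a common covariance and means along the line $\mathbb R w$, with weights of Gaussian decay $w_j \propto e^{-\pi j^2/(\beta^2+\gamma^2)}$. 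Let $\eta \deq \sum_{|j|>k} w_j$ be the tail mass and let $\tilde H \deq \frac{1}{1-\eta}\sum_{|j|\le k} w_j\,\mathcal N_j$ be the renormalized truncation, which is a mixture of $2k+1$ Gaussians in $\R^d$. A standard Gaussian tail bound gives that $\eta$ is exponentially small; using $k \gtrsim \gamma\sqrt{g(d)}$ (with a suitable hidden constant), $g(d)\ge 4\pi$, $\gamma\ge 1$, and $\beta\le \nfrac1{32}$, we get $\eta \le e^{-4\pi}\,e^{-g(d)}$. Moreover $H = (1-\eta)\,\tilde H + \eta\,H_{>k}$, where $H_{>k}\deq \frac1\eta\sum_{|j|>k} w_j\mathcal N_j$ is the normalized tail mixture, so in particular $\tv{H}{\tilde H}\le \eta$.

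Next, the sampling transfer. Run the assumed $(\nfrac9{10},\epsilon,\delta)$-PAC density estimation algorithm $A$ for mixtures of $2k+1$ Gaussians on the $n \deq \exp(g(d))$ i.i.d.\ samples drawn from $H$ (rather than from $\tilde H$). Since $\tv{H^{\otimes n}}{\tilde H^{\otimes n}} \le n\,\tv{H}{\tilde H} \le n\eta \le e^{-4\pi}$, there is a coupling of the two sample vectors agreeing with probability $\ge 1-e^{-4\pi}$; hence, with probability $\ge \nfrac9{10}-e^{-4\pi}$ over the samples from $H$, the output $\wh P \sim A(\cdot)$ satisfies the PAC guarantee relative to $\tilde H$, i.e.\ $\E_{\wh P}\,\tilde H(B)\ge 1-\delta$, where $B\deq \{x : e^{-\epsilon}\tilde H(x)\le \wh P(x)\le e^{\epsilon}\tilde H(x)\}$.

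It remains to convert this $\tilde H$-guarantee into an $H$-guarantee. From $H = (1-\eta)\tilde H + \eta H_{>k}$ we get $\tilde H \le \tfrac1{1-\eta}H \le e^{2\eta}H$ pointwise, so on $B$ the upper bound $\wh P(x)\le e^{\epsilon+2\eta}H(x)$ holds. For the lower bound, set $\lambda\deq \eta/\delta$ and $A^* \deq \{x : \eta\,H_{>k}(x) > \lambda\,H(x)\}$; then $\lambda H(A^*) < \eta\int H_{>k} = \eta$, so $H(A^*) < \delta$, while off $A^*$ we have $(1-\eta)\tilde H(x) = H(x)-\eta H_{>k}(x)\ge (1-\lambda)H(x)$, hence $\tilde H(x)\ge (1-\lambda)H(x)\ge e^{-2\lambda}H(x)$, and so $\wh P(x)\ge e^{-\epsilon}\tilde H(x) \ge e^{-\epsilon-2\lambda}H(x)$ on $B\setminus A^*$. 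Thus on $B\setminus A^*$ we have $\wh P(x)\in [e^{-\epsilon'}H(x),\,e^{\epsilon'}H(x)]$ with $\epsilon'\deq \epsilon + 2\lambda = \epsilon + 2\eta/\delta \le \epsilon + 4e^{-4\pi}/\delta$, and, taking expectations over $\wh P$, $\E_{\wh P}\,H(B\setminus A^*) \ge 1 - \E_{\wh P}H(B^c) - H(A^*) \ge 1 - (\delta + \tv{H}{\tilde H}) - \delta \ge 1 - (2\delta + 2e^{-4\pi})$. Combined with the sampling step, this exhibits $\wh P$ as a $(\nfrac9{10}-2e^{-4\pi},\,\epsilon + 4e^{-4\pi}/\delta,\,2\delta + 2e^{-4\pi})$-PAC density estimator for $H_{w,\beta,\gamma}$ using $\exp(g(d))$ samples, as claimed.

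The routine ingredients are the explicit Gaussian-mixture form of $H_{w,\beta,\gamma}$, the Gaussian tail bound controlling $\eta$, and the union/coupling bound $\tv{H^{\otimes n}}{\tilde H^{\otimes n}}\le n\eta$. The step requiring genuine care is the last one, transferring the \emph{multiplicative} density-approximation guarantee from $\tilde H$ to $H$: the truncated density can grossly underestimate $H$ on the set $A^*$ of large relative tail mass, and one must absorb this via the Markov-type bound $H(A^*)\le\eta/\lambda$, which is also precisely the source of the (unavoidable) $1/\delta$ blow-up of the accuracy parameter.
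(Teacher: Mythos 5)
Your proof is correct and follows essentially the same route as the paper's: truncate $H_{w,\beta,\gamma}$ to its $2k+1$ central components, bound the total variation between the $n$-fold product measures to transfer the sampling guarantee, and use a Markov-type inequality with threshold proportional to $1/\delta$ to transfer the multiplicative density guarantee (which is exactly where the $e^{-4\pi}/\delta$ degradation of the accuracy arises). The only cosmetic differences are that you derive the truncation error from the Gaussian tail directly rather than citing \cite[Proposition 5.2]{bruna2021continuous}, and you split the ratio transfer into a pointwise upper bound plus a one-sided Markov bound instead of a single Markov bound on $|H^{(k)}/H-1|$.
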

We recall that the success probability above can be increased from $\nfrac{9}{10} - 2e^{-4\pi}$ via boosting (by, \eg{}, computing the median of several PAC density estimators).

\begin{proof}
The idea for this lemma follows from the work of \citet{bruna2021continuous}. From \cite[Proposition 5.2]{bruna2021continuous}, we know that if we truncate the distribution $H_{w,\beta,\gamma}$ to its first $2k+1$ central mixture components to form a GMM $H^{(k)}$ with $2k+1$ components, it holds that 
\[
\tv{H_{w,\beta,\gamma}}{H^{(k)}} \leq 2 \exp(-\pi k^2/ (2\gamma^2))\,,
\]
when $\beta = \beta(d) \in (0,1)$ and $\gamma = \gamma(d) \geq 1.$

The total variation distance between the joint distribution of $\exp(g(d))$ samples from $H_{w, \beta, \gamma}$ and
that of $\exp(g(d))$ samples from $H^{(k)}$
is bounded by
\[
\exp(g(d)) \cdot 2 \exp(-\pi k^2/ (2\gamma^2))\,.
\]
By picking $k = 2\gamma\sqrt{g(d)/\pi}$ and since $g(d) \geq 4\pi$, we get that the total variation is of order
\[
2\exp(-g(d)) \leq 2 \exp(-4\pi)\,.
\]
Hence, we can condition on the event that the $\exp(g(d))$ samples are drawn from $H^{(k)}$ by reducing the success of the algorithm to $\nfrac{9}{10} - 2e^{-4\pi}$. We now have to deal with the density ratio. Let us assume that we have an $(\epsilon,\delta)$-PAC density estimator $\wh P$ for $H^{(k)}$ and write
\[
\frac{\wh P(x)}{H_{w,\beta,\gamma}(x)}
= 
\frac{\wh P(x)}{H^{(k)}(x)} \cdot 
\frac{H^{(k)}(x)}{H_{w,\beta,\gamma}(x)}\,.
\]
Using the above calculations, we know that
\[
\tv{H^{(k)}}{H_{w,\beta,\gamma}} \leq 2\exp(-g(d)) \leq 2\exp(-4\pi)\,.
\]
Let $\epsilon_0 \in (0,1)$ and $B \deq \{x\in\R^d : |H^{(k)}(x)/H_{w,\beta,\gamma}(x) - 1| \leq \epsilon_0\}$. We know that
\begin{align*}
    \int \bigl\lvert\frac{H^{(k)}(x)}{H_{w,\beta,\gamma}(x)} - 1\bigr\rvert\,H_{w,\beta,\gamma}(\d x) \leq 2\exp(-4\pi)\,.
\end{align*}
Hence, by Markov's inequality, $H_{w,\beta,\gamma}(B^c) = H_{w,\beta,\gamma}\{|H^{(k)}/H_{w,\beta,\gamma} - 1| > \epsilon_0\} \leq \nfrac{2\exp(-4\pi)}{\epsilon_0}$. Hence, there exists a set $B$ with mass at least $1-\nfrac{2\exp(-4\pi)}{\epsilon_0}$ such that for any $x \in B$, the density ratio is in the interval $[1-\epsilon_0,1+\epsilon_0]$. {Let us take $\epsilon_0 = 2\exp(-4\pi)/\delta$.
Since $\wh P$ is an $(\eps,\delta)$-PAC density estimator for $H^{(k)}$, we obtain
\[
    \E H^{(k)}\Bigl\{x\in\R^d : e^{-\eps-\eps_0} \leq \frac{\wh P(x)}{H_{w,\beta,\gamma}(x)} \leq e^{\eps+2\eps_0}\Bigr\} \geq 1-2\delta\,.
\]
This further implies that
\[
    \E H_{w,\beta,\gamma}\Bigl\{x\in\R^d : e^{-\eps-\eps_0} \leq \frac{\wh P(x)}{H_{w,\beta,\gamma}(x)} \leq e^{\eps+2\eps_0}\Bigr\} \geq 1-2\delta-2e^{-4\pi}\,.
\]
Hence, $\wh P$ is a $(\nfrac{9}{10}-2e^{-4\pi}, \epsilon + 2\eps_0, 2\delta + 2e^{-4\pi})$-PAC density estimator for $H_{w,\beta,\gamma}$.}
\end{proof}

\noindent We are now ready to complete the reduction from \hCLWE{} to PAC density estimation.

\begin{proof}[Proof of \Cref{prop:hCLWE-PAC}]
We apply the PAC density estimation algorithm to the unknown given distribution $P$, which is either $D_{\R^d}$ or $H_{w,\beta,\gamma}$.
Let $\wh P$ be the output of the $(\eps,\delta)$-PAC learner using $\exp(g(d))$ samples from $P$.
{
We claim that $\wh P$ is a PAC density estimator for $P$.
Since the guarantee of the algorithm is that it is a $(\nfrac{9}{10}, \epsilon,\delta)$-PAC density estimator for mixtures of $2k+1$ Gaussians in $d$ dimensions, we directly get the guarantee when $P = D_{\R^d}$. When $P = H_{w,\beta,\gamma}$, we instead apply~\Cref{lemma:transfer}, which implies that $\wh P$ is a $(\nfrac{9}{10} - 2e^{-4\pi},\epsilon_0,\delta_0)$-PAC density estimator for $H_{w,\beta,\gamma}$ with slightly worsened parameters $\eps_0$, $\delta_0$.

Next, to obtain a tester for \hCLWE{}, we apply~\Cref{lemma:tester}, where $P_0 = D_{\R^d}$ and $\msf H_1 = \{H_{w,\beta,\gamma} : w \in \R^d\,,\;\|w\| = 1\}$.
It is known that for every $H_{w,\beta,\gamma} \in \msf H_1$, $\tv{D_{\R^d}}{H_{w,\beta,\gamma}} > 1/2$.
We check the numerical constants to ensure that the probability of success for the tester is $>1/2$ for both cases.

For the case where $P = H_{w,\beta,\gamma}$, since $c = \nfrac{9}{10} - 2e^{-4\pi} > 0.89$, it suffices to take $m = 66$.
And for the case where $P = D_{\R^d}$, it suffices to have $ce^{-\delta_0 m} > 1/2$, so we require $\delta_0 \le 0.0087$ (which satisfies $\delta_0 \le \nfrac{1}{80}$).
Since $\delta_0 = 2\delta + 2e^{-4\pi}$, we take $\delta = 0.0043$.
This leads to $\eps_0 = \eps + 4e^{-4\pi}/\delta \le \nfrac{1}{160}$ provided that $\eps \le 0.003$.
All of the conditions of~\Cref{lemma:tester} are thus met.
}
\end{proof}

\subsection{Lower bounds for score estimation for GMMs}\label{sec:lwe}

    In this section, we use our reduction from PAC density estimation to score estimation (\Cref{sec:reductionScore}) in combination with our reduction from \hCLWE{} to GMM PAC density estimation to prove computational hardness results for score estimation for GMMs. The root of computational intractability will be problems like \CLWE{} and \LWE{}, which are assumed to be intractable due to reductions from standard problems on lattices \cite{regev2009lattices,bruna2021continuous,gupte2022continuous}. For the required background on \LWE, we refer the reader to \Cref{LWE}, for basic properties on lattices to \Cref{Lattice}, and for details on \CLWE, we refer to \Cref{CLWE}.
    
To illustrate the flavor of our hardness reductions, using the results of \citet{bruna2021continuous} on \CLWE{}, combined with our reduction, we can exclude score estimation for Gaussian mixtures with $k \geq \sqrt{d \log d}$ components in $\poly(d,k)$ time with error smaller than $O(1/\sqrt{d \log d})$.
 
This result follows because we can efficiently transform a score estimation oracle to a PAC density estimator and, using \Cref{sec:clwe-pac-density}, such a PAC density estimator for GMMs can be used to efficiently solve the \CLWE{} problem. However, from \cite{bruna2021continuous}, we know that there is a polynomial-time quantum reduction from standard lattice problems such as \GapSVP{} and \textsf{SIVP} \cite{regev2009lattices}\footnote{\GapSVP{} and \textsf{SIVP} are among the main computational problems
on lattices and are believed to be computationally hard (even with quantum computation) for
polynomial approximation factors.} to \CLWE{}. Note that we can only exclude $\wt{O}(1/\sqrt{d})$-accurate score estimation oracles for GMMs and this is inherent to our {density-to-score estimation} reduction (since getting an $(\epsilon,\delta)$-PAC density estimator for absolute constants $\epsilon,\delta$ in $\R^d$ requires calling the score estimation oracle with accuracy $\nicefrac{1}{\sqrt{d\log d}}$).

In the above reduction, we use the hard instance of \citet{bruna2021continuous}. To make the above chain of reductions fully rigorous, we also have to verify that the hard GMM satisfies the assumptions of our {density-to-score} estimation reduction.

As we will see later, we can use the follow-up work of \cite{gupte2022continuous} and 
improve the above result showing that score estimation is computationally intractable for a smaller number of components in the GMM (see \Cref{sec:LWE-hardness}).

\subsubsection{Score estimation for GMMs implies (homogeneous) CLWE}

First, we combine the reduction of \Cref{sec:clwe-pac-density} with our {PAC density to score estimation reduction} to show:

\begin{theorem}[GMM Score estimation implies \hCLWE{}]\label{thm:score-crypto-GMM}
Let $\beta = \beta(d) \in (0,1/32)$, $\gamma = \gamma(d) \geq 1$, and $g(d) \geq 4\pi$.
For $k \gtrsim \gamma \sqrt{g(d)}$,
if there is an $\exp(g(d))$-time algorithm that returns an $\epsilon_*$-score estimation oracle for $2k+1$ mixtures of Gaussians in $d$ dimensions that satisfy \Cref{asmp:logLipschitz,assumption:density} with parameters $\max\{L,M_2\} = \poly(d)$  for some $\epsilon_* \leq C/\sqrt{d \log d}$ for some absolute constant $C>0$, then there is a
$\poly(d) \cdot \exp(g(d))$-time algorithm that solves $\hCLWE{}_{\beta, \gamma}$.
\end{theorem}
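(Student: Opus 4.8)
The plan is to compose two reductions already established in this paper: the reduction from score estimation to PAC density estimation (\Cref{thm:reduction}), and the reduction from PAC density estimation for mixtures of Gaussians to $\hCLWE$ (\Cref{prop:hCLWE-PAC}). All of the work is in checking that accuracy, coverage, running time, and sample counts line up along this chain, and the one place that needs genuine care is the first step: \Cref{thm:reduction} only bounds the \emph{average} log-density ratio $\int\E\lvert\log(\wh P/P)\rvert\,\d P$, which becomes an $(\eps,\delta)$-PAC guarantee only after a Markov step that inflates the accuracy by a factor $1/\delta$, so the internal accuracy, the coverage, and the constant $C$ must be chosen jointly.

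\emph{Step 1 (score oracle $\Rightarrow$ PAC density estimator).} By hypothesis there is an algorithm $\mathcal A$ running in time $\exp(g(d))$ that, for any mixture $P$ of $2k+1$ Gaussians satisfying \Cref{asmp:logLipschitz,assumption:density} with $\max\{L,M_2\}\le\poly(d)$, produces (with probability $\ge\nfrac{9}{10}$ over its sample) a score estimation oracle whose aggregate error over $t\in[\tau,T]$ is $\eps_*^2\le C^2/(d\log d)$, for some early stopping $\tau\lesssim 1/\poly(d)$. I would feed this oracle into \Cref{thm:reduction}. Since $L$ and $M_2$ are polynomial in $d$, every logarithmic factor in that reduction is $O(\log d)$ and its terminal time is $T=O(\log d)$; hence a score oracle of accuracy $\eps_*\lesssim a/\sqrt{d\log d}$ suffices to produce, in $\poly(d)$ oracle queries and $\poly(d)$ additional time and drawing \emph{no} further samples from $P$ (the randomness in \Cref{thm:scoreEstimationtoIntegrated} comes from the known OU kernels $Q_{t|0}(\cdot\mid x_0)$), an evaluator $\wh P$ with $\int\E\lvert\log(\wh P/P)\rvert\,\d P\lesssim a$. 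Choosing the absolute constant $C$ small relative to a target constant $a$, the hypothesis $\eps_*\le C/\sqrt{d\log d}$ delivers such a $\wh P$; conditioning on the probability-$\ge\nfrac{9}{10}$ event that the realized aggregate error is $O(\eps_*^2)$ and applying Markov to the mass of poorly approximated points then yields, as in the remark after \Cref{thm:integratedscoretodensity}, a $(\nfrac{9}{10},\,2a/b,\,b)$-PAC density estimator for any constant $b\in(0,1)$, running in time $\poly(d)\cdot\exp(g(d))$ and using $\exp(g(d))$ samples; no union bound over the $\poly(d)$ queries is needed since they all use the single oracle produced by $\mathcal A$.

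\emph{Step 2 (PAC density estimator $\Rightarrow$ $\hCLWE$).} The only two distributions on which \Cref{prop:hCLWE-PAC} actually runs a PAC density estimator (through \Cref{lemma:tester,lemma:transfer}) are $D_{\R^d}=\cN(0,\mathrm{Id}/2\pi)$, with $2\pi$-Lipschitz (hence $\sqrt{2\pi}$-sub-Gaussian, by \Cref{lem:lip_score_implies_subG}) score and second moment $d/2\pi$, and the truncated $\hCLWE$ mixture $H^{(k)}$ with $2k+1$ components, which in the present parameter regime also satisfies \Cref{asmp:logLipschitz,assumption:density} with $\poly(d)$ constants; so both lie in the class on which the estimator of Step 1 is valid. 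Fixing $b$ and then $a$ so that $b$ and $2a/b$ fall below the small explicit thresholds demanded inside \Cref{prop:hCLWE-PAC} (in particular by \Cref{lemma:tester}), and setting $C=\Theta(a)$ accordingly, I would apply \Cref{prop:hCLWE-PAC} with $k\gtrsim\gamma\sqrt{g(d)}$. Since our estimator runs in $\poly(d)\cdot\exp(g(d))=\exp(g(d)+O(\log d))$ time rather than exactly $\exp(g(d))$, one formally invokes \Cref{prop:hCLWE-PAC} with $g$ replaced by $g+O(\log d)$, which is immaterial once $g(d)=\Omega(\log d)$ --- the regime of the $\hCLWE$ hard instances --- and in any case produces an algorithm solving $\hCLWE_{\beta,\gamma}$ in time $\poly(d)\cdot\exp(g(d))$, as claimed. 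The main obstacle throughout is the constant-threading just described; the two quantitative hypotheses ($\max\{L,M_2\}=\poly(d)$ and $\eps_*\le C/\sqrt{d\log d}$ with $C$ small) are precisely what make it go through.
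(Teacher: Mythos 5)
Your proposal is correct and follows essentially the same route as the paper: compose the score-to-PAC-density reduction (\Cref{thm:reduction}, with the Markov step from the remark after \Cref{thm:integratedscoretodensity}) with the PAC-density-to-\hCLWE{} reduction (\Cref{prop:hCLWE-PAC}), noting that $\max\{L,M_2\}=\poly(d)$ keeps the oracle accuracy at $\Theta(1/\sqrt{d\log d})$ and the overhead at $\poly(d)$. Your additional check that the only distributions actually fed to the estimator ($D_{\R^d}$ and the truncated \hCLWE{} mixture) lie in the assumed class is the content the paper defers to \Cref{prop:LWE} immediately after the theorem, so the two arguments match.
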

Given this result, we can invoke known hardness results for \hCLWE{} \cite{bruna2021continuous,gupte2022continuous} (see \eg{}, \Cref{CLWE-hard}) to show the hardness of score estimation for mixtures of Gaussians.

\begin{proof}
Assume that there is an $\exp(g(d))$-time algorithm that returns an $\epsilon$-score estimation oracle for $2k+1$ mixtures of Gaussians in $d$ dimensions for $\varepsilon_* \leq C/\sqrt{d \log d}$. Since the target family satisfies our assumptions with $\max\{L,M_2\} = \poly(d)$, we can employ the {PAC} density estimation to score estimation reduction: For any sufficiently small constants $\varepsilon_1, \delta_1$, this reduction implies that there is an $(\varepsilon_1, \delta_1)$-PAC density estimator for $2k+1$ mixtures of Gaussians in dimension $d$ with $N = \wt{O}(\varepsilon_1^{-2}{L d^2 \log^2(M_2)}{}) = \poly(d)$ calls to the score estimation oracle with accuracy ${O}(\varepsilon_1/\sqrt{d 
\log(L M_2)}) = C/\sqrt{d \log d}$ for some absolute constant $C>0$. This means that we can get, in time $\poly(d)\, e^{g(d)}$, a $(9/10, \varepsilon_1,\delta_1)$-PAC density estimation algorithm for $2k+1$ mixtures of Gaussians in $d$ dimensions for some sufficiently small absolute constants $\varepsilon_1,
\delta_1$. Since $k \gtrsim \gamma\sqrt{d}$, we can use \Cref{prop:hCLWE-PAC} to complete the proof.
\end{proof}

\noindent Let us now take $\gamma(d) = 2\sqrt{d}$ and $g(d) = O(\log d)$. Corollary 4.2 in \cite{bruna2021continuous} shows that $\hCLWE{}_{\beta,\gamma}$ is hard for that choice of $\gamma$   assuming hardness for worst-case lattice problems. Then, under the same hardness assumption, \Cref{thm:score-crypto-GMM} excludes a $\poly(d,k)$-time algorithm for GMM score estimation when $k \gtrsim \sqrt{d \log d}$ and the error of the oracle is as small as $\nfrac{1}{\sqrt{d \log d}}$.

The last step in order
to complete the proof, is to show that the instance that realizes the above reduction satisfies the assumptions of our density-to-score reduction with parameters $L,M_2$ that are $\poly(d,k).$ The key observation of
\citet{bruna2021continuous} is that \hCLWE{} has a natural
interpretation as an instance of mixtures of Gaussians. 
This mixture has \emph{infinitely} many components, but they manage to reduce \hCLWE{} to a \emph{truncated} version of \hCLWE{}, which contains only the first $2k+1$ central ones \cite[Proposition 5.2]{bruna2021continuous}. Finally, using this truncation, they show that any {distinguisher} between the \hCLWE{} distribution and the standard multivariate Gaussian can be used to solve \CLWE{}.
Therefore, an algorithm for density estimation for Gaussian mixtures implies a solver for \CLWE{}. We now explicitly compute the parameters of our assumptions for the truncated \hCLWE{} distribution with parameters $\beta,\gamma$.

\begin{definition}[Truncated \hCLWE{} distribution]\label{def:truncatedhCLWE}
Let $s$ be the hidden direction (with $\|s\|=1$).
The density of the truncated \hCLWE{} distribution with parameters $\beta,\gamma$ at $x$ is {proportional to}
\[
\sum_{i = - k }^{ k } \rho_{\sqrt{\beta^2 + \gamma^2}}(i) \cdot
\rho(x^{\perp s}) \cdot 
\rho_{\beta/\sqrt{\beta^2 + \gamma^2}}\Bigl(\<s,x\> - \frac{\gamma}{{\beta^2 + \gamma^2}}\,i \Bigr)\,,
\]
where $\rho_r(x) = \exp(-\pi \|x\|^2/r^2)$ and $\rho = \rho_1 = \cN(0,\mathrm{Id}/(2\pi))$.
\end{definition}
The above distribution is a mixture of Gaussians with $2k+1$ components of width $\beta/\sqrt{\beta^2+\gamma^2}$ in the secret direction and unit width in the orthogonal space. We next compute the parameters $L,M_2$ of our assumptions as a function of $\beta$, $\gamma$, and $d$.

\begin{proposition}
   The truncated \hCLWE{} distribution with parameters $\beta, \gamma$ has a 
   {$\sqrt L$-sub-Gaussian score} and second moment bounded by $M_2$ with
   $
       L =  1 + \nfrac{\gamma}{\beta}
       $ and $ 
       M_2 = d+ \nfrac{k^2}{\gamma^2}\,.
   $ \label{prop:LWE}
\end{proposition}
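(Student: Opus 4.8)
The plan is to analyze the truncated \hCLWE{} distribution by viewing it as a mixture (over the discrete index $i \in \{-k,\dots,k\}$) of product Gaussians, and then apply the mixture tools developed earlier in the paper, namely \Cref{lem:subG_score_mixture} for the score bound and a direct computation for the second moment. First I would set up coordinates: writing $x = \langle s, x\rangle\, s + x^{\perp s}$, \Cref{def:truncatedhCLWE} shows that conditionally on the component index $i$, the distribution is $\cN(0, \tfrac{1}{2\pi}\mathrm{Id})$ in the $(d-1)$-dimensional orthogonal complement $s^\perp$, independently of a one-dimensional Gaussian in the $s$-direction with mean $\mu_i \deq \tfrac{\gamma}{\beta^2+\gamma^2}\, i$ and variance $\tfrac{1}{2\pi}\cdot\tfrac{\beta^2}{\beta^2+\gamma^2}$ (reading off from $\rho_{\beta/\sqrt{\beta^2+\gamma^2}}$, since $\rho_r$ has variance $r^2/(2\pi)$). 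The mixing weights over $i$ are proportional to $\rho_{\sqrt{\beta^2+\gamma^2}}(i)$; call this discrete distribution $Q$ on the means.

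For the \textbf{sub-Gaussian score bound}: each component is a product of two Gaussians, so its score is linear, hence Lipschitz; the orthogonal block has score $-2\pi\, x^{\perp s}$ which is $2\pi$-Lipschitz, and the $s$-direction block has score $-\tfrac{2\pi(\beta^2+\gamma^2)}{\beta^2}(\langle s,x\rangle - \mu_i)$, which is $\tfrac{2\pi(\beta^2+\gamma^2)}{\beta^2}$-Lipschitz. By \Cref{lem:lip_score_implies_subG}, each component's score is $\sqrt{L'}$-sub-Gaussian under that component for $L' = \tfrac{2\pi(\beta^2+\gamma^2)}{\beta^2} = 2\pi(1 + \gamma^2/\beta^2)$ (the max of the two Lipschitz constants, since $\beta \le \gamma$, wait — I should double-check: $\beta \in (0,1/32)$, $\gamma \ge 1$, so indeed $\beta^2/\gamma^2 < 1$ and the $s$-direction Lipschitz constant dominates). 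Then \Cref{lem:subG_score_mixture} (with $\mu = Q$ and the Markov kernel being the product-Gaussian family) gives that the mixture's score is $\sqrt{L'}$-sub-Gaussian. Up to the absolute constant $2\pi$ (which the paper's $\lesssim$-style conventions and the normalization of sub-Gaussianity absorb, or one can rescale $\rho$ by $\sqrt{2\pi}$), this yields $L = 1 + \gamma/\beta$ as claimed — note $1 + \gamma^2/\beta^2 \le (1+\gamma/\beta)^2$, so stating the parameter as $1+\gamma/\beta$ (which is what gets squared in the sub-Gaussian exponent) is consistent.

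For the \textbf{second moment bound}: $\E\|X\|^2 = \E\|X^{\perp s}\|^2 + \E\langle s, X\rangle^2$. The orthogonal part contributes $(d-1)\cdot \tfrac{1}{2\pi} \le d$. For the $s$-component, condition on $i$: $\E[\langle s,X\rangle^2 \mid i] = \mu_i^2 + \tfrac{1}{2\pi}\cdot\tfrac{\beta^2}{\beta^2+\gamma^2} \le \mu_i^2 + 1$. Since $|i| \le k$ and $\mu_i = \tfrac{\gamma i}{\beta^2+\gamma^2}$, we have $\mu_i^2 \le \tfrac{\gamma^2 k^2}{(\beta^2+\gamma^2)^2} \le \tfrac{k^2}{\gamma^2}$ (using $\beta^2+\gamma^2 \ge \gamma^2$). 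Averaging over $i \sim Q$ and combining gives $M_2 \le d + k^2/\gamma^2$, as claimed. I would not need to use any property of $Q$ beyond its support being contained in $\{-k,\dots,k\}$, so the mixing weights $\rho_{\sqrt{\beta^2+\gamma^2}}(i)$ play no role in the bound.

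The main obstacle — really a bookkeeping point rather than a conceptual one — is tracking the $2\pi$ normalization factors in $\rho_r$ consistently across both the score-Lipschitz constant and the variances, and confirming that $\beta \le \gamma$ in the stated parameter regime so that the $s$-direction term dominates the orthogonal term in the Lipschitz bound; both are immediate from $\beta < 1/32 < 1 \le \gamma$. One should also note that truncating the infinite mixture to $2k+1$ components only removes tail components and does not increase either the per-component Lipschitz constant or the range of the means, so no additional care is needed there.
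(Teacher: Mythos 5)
Your proposal is correct and follows essentially the same route as the paper: decompose the truncated \hCLWE{} density as a $(2k+1)$-component Gaussian mixture, use \cref{lem:lip_score_implies_subG} together with \cref{lem:subG_score_mixture} for the sub-Gaussian score, and bound the per-component mean and trace terms directly for $M_2$. The only differences are cosmetic: you track the $2\pi$ normalization of $\rho$ explicitly (the paper silently drops it, which is harmless since only $\poly(d)$ bounds on $L$ and $M_2$ are used downstream), and the mismatch you flag between the derived variance proxy $1+\gamma^2/\beta^2$ and the stated $L=1+\nfrac{\gamma}{\beta}$ is present in the paper's own proof as well.
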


\begin{proof}
    First, let us state the mean and covariance of each component of the GMM in \cref{def:truncatedhCLWE}.
    Consider the $i$-th component, which has density 
    \[
        \propto \rho(x^{\bot s}) \cdot \rho_{\beta/\sqrt{\beta^2+\gamma^2}}\Bigl(\inangle{s,x} - \frac{\gamma}{\beta^2+\gamma^2}\, i\Bigr)\,.
    \]
    Here, the first term is the standard Gaussian distribution in the directions orthogonal to $s,$ and the second component is a Gaussian in the direction $s$, with mean $\frac{\gamma}{\beta^2+\gamma^2}\cdot i$ and variance $\frac{\beta}{\beta^2+\gamma^2}$.
    Therefore, the $i$-th component itself is also a Gaussian with mean and covariance
    \[
        \mu_i = \frac{\gamma\cdot i}{\beta^2 + \gamma^2}\, s 
        \qquadand
        \Sigma_i = \begin{bmatrix}
                \mathrm{Id} & 0\\
                0 & \frac{\beta^2}{\beta^2 + \gamma^2}
        \end{bmatrix}
        = \mathrm{Id} - \Bigl(\frac{\beta^2}{\beta^2 + \gamma^2} - 1\Bigr)\, ss^\top\,.
    \]
    Observe the following inequality, which will be useful in the subsequent proof:
    \[
        \norm{\Sigma_i^{-1}} = \lambda_{\min}(\Sigma_i) \leq 1 + \frac{\gamma^2}{\beta^2}\,.
        \yesnum\label{eq:crypto:assumption}
    \]
    Now, we are ready to bound $L$ and $M_2.$
    
    \paragraph{Sub-Gaussian score.}
        By \Cref{lem:subG_score_mixture}, it suffices to show each component has a {$\sqrt{1 + \nfrac{\gamma^2}{\beta^2}}$}-sub-Gaussian score, and this follows from \Cref{lem:lip_score_implies_subG} since the $i$-th component has distribution $\cN_{\mu_i,\Sigma_i}=\normal{\mu_i}{\Sigma_i}$ and its score is $\nabla \log{\cN_{\mu_i,\Sigma_i}(x)}= -\Sigma_i^{-1}(x-\mu_i)$ is $(1+\nfrac{\gamma^2}{\beta^2})$-Lipschitz (see \cref{eq:crypto:assumption}).
    
    \paragraph{Bound on the second moment.}
        Observe that any $-k\leq i\leq k$, we have the following bound:
        \begin{align*}
            \int \|\cdot\|^2\,\d\normal{\mu_i}{\Sigma_i} 
            &= \|\mu_i\|^2 + \Tr(\Sigma_i) 
            \le 
                \frac{(\gamma\cdot i)^2}{(\beta^2+\gamma^2)^2} 
                + (d-1) + \frac{\beta^2}{\beta^2+\gamma^2}
            \leq \frac{(\gamma\cdot i)^2}{(\beta^2+\gamma^2)^2}+d \\
            &~~\Stackrel{\abs{i}\leq k}{\leq}~~ \frac{k^2}{\beta^2+\gamma^2} + d\,.
        \end{align*}
        Since $\gamma\geq 1$ and $\beta\in (0,1)$, this further simplifies to 
        \[
            \int \|\cdot\|^2\,\d\normal{\mu_i}{\Sigma_i} \leq \frac{k^2}{\gamma^2} + d\,.
        \]
        Since the truncated \hCLWE{} distribution is a convex combination of $\inbrace{\normal{\mu_i}{\Sigma_i}: \abs i \le k}$, we conclude that $M_2\leq d+ \frac{k^2}{\gamma^2}.$
\end{proof}

\subsubsection{LWE-hardness of score estimation for GMMs}\label{sec:LWE-hardness}

The work of \citet{bruna2021continuous} managed to show that $\hCLWE{}_{\beta,\gamma}$ is hard when $\gamma \geq 2\sqrt{d}$ and $\beta$ is a small constant (and the ratio $\gamma/\beta$ is polynomially bounded). \Cref{prop:LWE} implies that the truncated \hCLWE{} distribution (which is a GMM) with these parameters satisfies our assumptions with $\poly(d)$-bounded parameters.
Hence, our \Cref{thm:score-crypto-GMM} then implies that score estimation for Gaussian mixtures with {$\wt{\Omega}(\sqrt{d})$} components and {error smaller than $\nfrac{1}{\sqrt{d\log d}}$} would imply the existence of an efficient quantum algorithm that approximates worst-case lattice problems within polynomial factors, which is believed to be hard.

The follow-up work of \citet{gupte2022continuous} showed a direct reduction from \LWE{} to \CLWE{} (and hence \hCLWE{}), allowing us to extend the above hardness result to smaller values of $\gamma(d)$ (and hence fewer Gaussian components)\footnote{Moreover, the result of \citet{gupte2022continuous} provides hardness of \CLWE{} under the \emph{classical} (instead of quantum)
worst-case hardness of \GapSVP{}.}. \citet{gupte2022continuous} show that assuming the polynomial-hardness assumption on \LWE{}, we get the hardness of \CLWE{} for $\gamma(d) = d^{\epsilon}$ for any arbitrary small $\epsilon > 0$ (see \cite[Corollary 1,2]{gupte2022continuous} and $\beta = o(1)$. This can be used to show the following:

\begin{theorem}[Cryptographic hardness of score estimation for general GMMs]\label{thm:cryptoHardnessGMM}
Let $\alpha > 1$ be an absolute constant. 
Let $\ell$ be the dimension of \LWE{}, $d$ be the dimension of the GMMs, and let $d = \ell^\alpha$. Fix some modulus $q = \ell^2$ and error parameter $\sigma = \sqrt{\ell}$.
Let also $m$ be such that $\ell \leq m \leq \poly(\ell).$
Assuming that the $\LWE{}_{q,\sigma}$ problem (see \Cref{def:lwe}) is hard to distinguish for $\poly(\ell)$-time algorithms with advantage $\Omega(1/m^3)$ and $d$ samples, then there exists no algorithm implementing the $c$-score estimation oracle for $k$ Gaussians over $\R^d$ in $\poly(d)$ time with $m$ samples for $k = d^{1/(2\alpha)} \log d$ and accuracy $c \leq O(1/\sqrt{d \log d})$.
\end{theorem}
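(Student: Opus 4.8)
The plan is to assemble \cref{thm:cryptoHardnessGMM} by composing reductions already established: score estimation $\Rightarrow$ PAC density estimation (\cref{thm:reduction}); PAC density estimation for GMMs $\Rightarrow$ $\hCLWE$ (\cref{prop:hCLWE-PAC}, packaged as \cref{thm:score-crypto-GMM}); and $\hCLWE \Rightarrow \CLWE \Rightarrow \LWE$ via \cite{bruna2021continuous,gupte2022continuous}. Suppose, toward a contradiction, that some $\poly(d)$-time, $m$-sample algorithm implements a $c$-accurate score estimation oracle for $k$-component GMMs over $\R^d$ with $k = d^{1/(2\alpha)}\log d$ and $c \le O(1/\sqrt{d\log d})$. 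We will feed this oracle the truncated $\hCLWE$ distribution of \cref{def:truncatedhCLWE} with parameters $\beta, \gamma$ chosen below; by \cref{prop:LWE} this family has a $\sqrt L$-sub-Gaussian score and second moment at most $M_2$ with $L = 1+\gamma/\beta$ and $M_2 = d + k^2/\gamma^2$, so it satisfies \cref{asmp:logLipschitz,assumption:density}.

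First I would fix the cryptographic parameters so that all the interface constraints line up. Take $g(d) = \Theta(\log d)$ (so that the ``$\exp(g(d))$-time'' regime of \cref{thm:score-crypto-GMM} is exactly ``$\poly(d)$-time''), $\gamma = \Theta(d^{1/(2\alpha)})$ up to $\sqrt{\log d}$ factors (so that the truncation to $2k'+1 \le k$ central components is valid, i.e.\ $k' \gtrsim \gamma\sqrt{g(d)}$ as demanded by \cref{prop:hCLWE-PAC}, with $k = d^{1/(2\alpha)}\log d$), and $\beta = o(1)$ but bounded below by $1/\poly(d)$, chosen in the regime where the classical $\LWE$-to-$\CLWE$ reduction of \cite{gupte2022continuous} applies to $\LWE_{q,\sigma}$ with $q = \ell^2$, $\sigma = \sqrt\ell$. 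With these choices $L = 1 + \gamma/\beta = \poly(d)$ and, since $k \asymp \gamma\sqrt{\log d}$, $M_2 = d + O(\log d) = \poly(d)$, so \cref{prop:LWE} supplies the hypothesis $\max\{L, M_2\} = \poly(d)$ needed downstream.

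Next I would run the reductions in order. Applying \cref{thm:reduction} to the assumed oracle --- with terminal time $T \asymp \log(M_2/\varepsilon_1^2)$, early-stopping time $\tau \asymp \varepsilon_1^2/(L d^2) = 1/\poly(d)$, and $N = \wt O(\varepsilon_1^{-2} L d^2 \log^2 M_2) = \poly(d)$ oracle queries --- produces, for any small absolute constants $\varepsilon_1, \delta_1$, an $(\varepsilon_1, \delta_1)$-PAC density estimator for the $(2k'+1)$-component GMM family in $\poly(d)$ time; crucially, the required oracle accuracy is $\wt O(\varepsilon_1/\sqrt{d\log(L M_2)}) = \Omega(1/\sqrt{d\log d})$, which is precisely why the theorem only rules out $c \le O(1/\sqrt{d\log d})$, and one also checks the smallest queried time $\asymp \tau$ is admissible for the construction in \cref{thm:scoreEstimationtoIntegrated}. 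Boosting the success probability to $9/10$ and invoking \cref{prop:hCLWE-PAC} then yields a $\poly(d)$-time, $\poly(d)$-sample distinguisher for $\hCLWE_{\beta,\gamma}$ over $\R^d$ with advantage $\Omega(1)$. Finally, composing the $\CLWE$-to-$\hCLWE$ reduction of \cite{bruna2021continuous} with the classical $\LWE$-to-$\CLWE$ reduction of \cite{gupte2022continuous} (and the standard Gaussian-padding embedding lifting an $\ell$-dimensional instance to dimension $d = \ell^\alpha$ without altering $\beta, \gamma$) gives a $\poly(\ell)$-time algorithm distinguishing $\LWE_{q,\sigma}$ with $q = \ell^2$, $\sigma = \sqrt\ell$, using a number of samples polynomial in $\ell$ and with advantage at least $\Omega(1)/\poly(\ell) \ge \Omega(1/m^3)$ (using $m \ge \ell$). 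This contradicts the assumed hardness of $\LWE_{q,\sigma}$.

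The main obstacle is not conceptual but the simultaneous bookkeeping at the four interfaces: choosing a single $(\beta, \gamma, g(d))$ that at once (i) keeps $k' \gtrsim \gamma\sqrt{g(d)}$ while $k = d^{1/(2\alpha)}\log d$ is as small as claimed, (ii) keeps $\max\{L, M_2\} = \poly(d)$ via \cref{prop:LWE}, (iii) keeps the accuracy $\wt O(\varepsilon_1/\sqrt{d\log(L M_2)})$ demanded by \cref{thm:reduction} no smaller than the allowed $c \asymp 1/\sqrt{d\log d}$, and (iv) lands $(\beta, \gamma)$ in the valid parameter regime of \cite{gupte2022continuous}'s reduction from $\LWE_{\ell^2, \sqrt\ell}$ --- together with tracking how the $\Omega(1)$ distinguishing advantage and the sample count degrade through the $\hCLWE \to \CLWE \to \LWE$ steps (in particular that the polynomial advantage loss is absorbed by the $\Omega(1/m^3)$ slack and that the LWE sample count stays within budget). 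Each ingredient is already available; the content of the theorem is essentially that a consistent choice of parameters exists.
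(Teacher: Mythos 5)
Your overall architecture matches the paper's: assume a $c$-accurate score oracle, convert it to a PAC density estimator via \cref{thm:reduction} (which is where the $c \le O(1/\sqrt{d\log d})$ threshold and the $\poly(d)$ query count come from), use \cref{prop:hCLWE-PAC} together with \cref{prop:LWE} to turn that into an \hCLWE{} distinguisher, and then trace the hardness back to \LWE{}. Up to that last step your proposal is faithful to the paper. The divergence — and the gap — is in how you get from $\LWE_{q,\sigma}$ in dimension $\ell$ to an \hCLWE{} instance in dimension $d=\ell^\alpha$ with $\gamma$ as small as $\approx d^{1/(2\alpha)}=\sqrt{\ell}$. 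The paper does this with the \emph{sparse-secret} \LWE{}-to-\CLWE{} reduction of \citet{gupte2022continuous}: the $d$ LWE samples (this is exactly where the ``$d$ samples'' hypothesis is consumed) are used to build a $d$-dimensional \CLWE{} instance whose secret has only $\Delta$ nonzero coordinates, subject to the entropy condition $\Delta\log_2(d/\Delta)=(1+\Theta(1))\,\ell\log_2 q$; setting $\Delta=4\ell/(\alpha-1)$ yields $k=O(\sqrt{\Delta\log m\log d})=d^{1/(2\alpha)}\log d$ components. The sparsity is the entire mechanism by which $\gamma$ (hence $k$) is decoupled from $\sqrt{d}$.

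Your substitute mechanism — run the ``classical'' (dense) reduction in low dimension and then Gaussian-pad up to dimension $d$ — is not available as you state it. The dense reduction from $\ell$-dimensional \LWE{} produces a \CLWE{} instance in dimension equal to the number of LWE samples used, and it requires that dimension to be $\gg \ell\log_2 q$; there is no ``$\ell$-dimensional instance'' with $\gamma\approx\sqrt\ell$ to lift, only one in dimension $\Theta(\ell\log\ell)$ with $\gamma=\wt O(\sqrt{\ell\log\ell})$, which degrades $k$ by further polylog factors and no longer matches the claimed $k=d^{1/(2\alpha)}\log d$. The padding-plus-random-rotation step itself also needs an argument (it is folklore but not in the paper, and you must check it preserves the uniformly-random-direction formulation of \hCLWE{} and the advantage). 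Finally, your closing remark that ``the content of the theorem is essentially that a consistent choice of parameters exists'' defers exactly the part that constitutes the proof: the paper's intermediate lemma lists the five explicit compatibility conditions on $(m,d,\ell,\sigma,q,\Delta)$ and verifies them for $q=\ell^2$, $\sigma=\sqrt\ell$, $\Delta=4\ell/(\alpha-1)$, and that calibration — not the composition of reductions — is what the theorem is actually asserting.
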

The constant in the exponent of $k = d^{1/(2\alpha)}\log d$, which controls the number of Gaussian components we want to exclude, leads to some specific regime for \LWE{}.
For the exact details on the polynomial factors in the above statement, we refer to \cite[Corollary 9]{gupte2022continuous}. 

The key idea of \citet{gupte2022continuous} is a reduction from \LWE{} to a \emph{sparse}\footnote{Sparsity with parameter $\Delta$ means that the secret vector of \LWE{} or \CLWE{} has exactly $\Delta$ non-zero entries.} version of \CLWE{} (which is enabled by a reduction from \LWE{} to \LWE{} with sparse secrets \cite{micciancio2018hardness,gupte2022continuous}). This sparsity parameter, which we denote by $\Delta$ below, will allow us to show hardness for $\CLWE{}_{\beta,\gamma}$, roughly speaking even for smaller values of $\gamma$ compared to the range $\gamma(d) \geq 2\sqrt{d}$, shown by \citet{bruna2021continuous} (essentially the sparsity level $\Delta$ will appear in the right-hand side of the inequality).
We will now sketch the ideas for showing \Cref{thm:cryptoHardnessGMM}, which can be proved using the reductions from \LWE{} to sparse-\CLWE{} \cite[Lemma 20]{gupte2022continuous}, from \CLWE{} to \hCLWE{}  \cite{bruna2021continuous}, our reduction from \hCLWE{} to GMM PAC density estimation, and finally, our reduction from PAC density estimation to score estimation. 

Using the first three reductions, one can show the following result for PAC density estimation.
\begin{lemma}
[Analogue of Corollary 7 in \citet{gupte2022continuous}]
Assume that the following conditions hold for the parameters $m,d,\ell,\sigma,q,\Delta$:
\begin{enumerate}
    \item $10 \sqrt{\ln m + \ln d} \leq \sigma$,
    \item $\omega(\sigma \sqrt{\Delta}) \leq q \leq \poly(\ell)$,
    \item $\Delta \log_2(d/\Delta) = (1+\Theta(1))\, \ell \log_2 q $,
    \item $q \leq m^2$,
    \item $m \leq \poly(d)$.
\end{enumerate}
Assuming that the $\LWE_{q,\sigma}$ problem in $\ell$ dimensions is hard to distinguish for $\inparen{T(\ell) + \poly(d)}$-time algorithms with advantage $\Omega(\nfrac{1}{m^3})$, there is no algorithm solving PAC density estimation for Gaussian mixtures in $d$ dimensions with $m$ samples for $k = O(\sqrt{\Delta \log m \log d})$ components.
\end{lemma}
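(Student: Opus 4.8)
The proof follows the template of \cite[Corollary 7]{gupte2022continuous}, but with our reduction from PAC density estimation to $\hCLWE$ (\cref{prop:hCLWE-PAC}) substituted in place of their hardness-of-density-estimation step. Concretely, I would compose three reductions, $\LWE_{q,\sigma} \to \text{sparse-}\CLWE_{\beta,\gamma} \to \text{sparse-}\hCLWE_{\beta,\gamma} \to \text{PAC density estimation for }d\text{-dimensional GMMs}$, and then read off the parameters.

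First, I would invoke the reduction of \citet{gupte2022continuous} (their Lemma 20, which itself routes through the $\LWE$-to-sparse-secret-$\LWE$ reduction of \cite{micciancio2018hardness}) from $\LWE_{q,\sigma}$ in $\ell$ dimensions to the sparse variant of $\CLWE_{\beta,\gamma}$ in $d$ dimensions with sparsity parameter $\Delta$. Conditions (1)--(4) are exactly the hypotheses this reduction needs: (1) and (2) guarantee the Gaussian error width and the admissible modulus range, (3) calibrates the sparsity so the sparse secret carries $(1+\Theta(1))\,\ell\log_2 q$ bits of entropy, and (4) ties $q$ to the sample budget $m$. The reduction outputs a $\CLWE$ instance with $\beta = o(1)$ (so $\beta < \nfrac{1}{32}$ for $\ell$, hence $d$, large enough) and $\gamma = O(\sqrt{\Delta\log(\nfrac{d}{\Delta})}) = O(\sqrt{\Delta\log d})$, runs in $\poly(d)$ time on top of the $\LWE$ distinguisher, and loses at most a $\poly(m)$ factor in advantage --- this is precisely where the $\Omega(\nfrac{1}{m^3})$ advantage assumed for $\LWE$ is consumed.

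Second, the homogenization reduction of \citet{bruna2021continuous} from $\CLWE$ to $\hCLWE$ applies verbatim to the sparse setting, preserving $\beta$ and $\gamma$ up to constants and the advantage up to a constant factor; it produces a sparse-$\hCLWE_{\beta,\gamma}$ instance whose density is a countable Gaussian mixture supported along the (sparse) secret direction. Truncating to the central $2k+1$ components incurs total-variation error $2\exp(-\pi k^2/(2\gamma^2))$ per sample (\cite[Proposition 5.2]{bruna2021continuous}), so taking $g(d) \deq \ln m$ and $k \asymp \gamma\sqrt{g(d)}$ makes the $m$-sample truncated and untruncated laws $\exp(-\Omega(g(d)))$-close; with $\gamma = O(\sqrt{\Delta\log d})$ this yields $k = O(\sqrt{\Delta\log m\,\log d})$, matching the claimed component count, and the truncated instance is a bona fide mixture of $2k+1$ Gaussians in $\R^d$. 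I would then apply \cref{prop:hCLWE-PAC}: its proof (via \cref{lemma:transfer} and \cref{lemma:tester}) uses only that every hypothesis distribution in the family has total-variation distance $> \nfrac12$ from the standard Gaussian and that the family can be truncated to $2k+1$ components, both of which hold for the sparse-$\hCLWE$ family; note that here we need only the ``easy'' direction (PAC density estimation, not score estimation), so no Lipschitz-score or second-moment hypotheses on the mixture enter. Thus a $(\nfrac{9}{10},\epsilon,\delta)$-PAC density estimator for mixtures of $2k+1$ Gaussians in $d$ dimensions using $\exp(g(d)) = m$ samples (for small absolute constants $\epsilon,\delta$) and running in $\poly(m)$ time yields an $O(m)$-time algorithm distinguishing the two sparse-$\hCLWE$ hypotheses with advantage $\Omega(1)$.

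Composing the three reductions and using $m \le \poly(d)$ from Condition (5): such a PAC density estimator would give a $(T(\ell) + \poly(d))$-time algorithm distinguishing $\LWE_{q,\sigma}$ in $\ell$ dimensions with $d$ samples and advantage $\Omega(\nfrac{1}{m^3})$, contradicting the hypothesis. I expect the main obstacle to be the parameter bookkeeping through \citet{gupte2022continuous}'s sparse-$\CLWE$ machinery: one must check that Conditions (1)--(4) are exactly what makes $\gamma$ come out as $O(\sqrt{\Delta\log d})$ (so that $k = O(\sqrt{\Delta\log m\,\log d})$) and that the cumulative advantage degradation across all three steps is only $\poly(m)$, so that $\Omega(\nfrac{1}{m^3})$-hardness of $\LWE$ suffices; everything else is a direct citation of \cref{prop:hCLWE-PAC} and \cite{bruna2021continuous, gupte2022continuous}.
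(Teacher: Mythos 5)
Your proposal is correct and follows essentially the same route as the paper, which establishes this lemma exactly by composing the $\LWE{}\to$ sparse-$\CLWE{}$ reduction of \citet{gupte2022continuous} (their Lemma 20), the $\CLWE{}\to\hCLWE{}$ homogenization of \citet{bruna2021continuous}, and the paper's own $\hCLWE{}\to$ PAC-density-estimation reduction (\cref{prop:hCLWE-PAC}, via \cref{lemma:transfer,lemma:tester}); your parameter bookkeeping ($\gamma = O(\sqrt{\Delta\log d})$, $g(d)=\ln m$, hence $k = O(\sqrt{\Delta\log m\,\log d})$, with the $\Omega(\nfrac{1}{m^3})$ advantage absorbed by the poly$(m)$ loss in the sparse-$\CLWE{}$ step) matches, and is in fact spelled out in more detail than the paper's sketch.
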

Combining the above with our reduction from PAC density estimation to score estimation and setting 
$\Delta = 4\ell/(\alpha -1) = 4d^{1/\alpha}/(\alpha-1)$ for $\alpha > 1$, $\ell \leq m \leq \poly(d)$, $q = \ell^2$, $\sigma = \sqrt{\ell},$ and $T(\ell) = \poly(\ell)$
gives \Cref{thm:cryptoHardnessGMM} with the number of components $k = d^{1/(2\alpha)} \log d$.

\section*{Acknowledgments}
    We thank Harrison H.\ Zhou for numerous helpful conversations.
    Alkis Kalavasis was supported by the Institute for Foundations of Data Science at Yale.

\printbibliography 
\appendix
\newpage

\section{Background on denoising diffusion probabilistic modeling}\label{appendix:ddpm}

    In this section, we provide standard background on denoising diffusion probabilistic models (DDPMs); see~\citet*{chen2023sampling} for further details on sampling.
    
    DDPM employs two Markov chains.
    The first Markov chain iteratively adds noise to the data and the second Markov chain reverses this process, converting noise back to the original data.
    The first Markov chain is usually handcrafted, the most prevalent choice being the addition of standard Gaussian noise.
    The second Markov chain, \ie{}, the reverse process, is parameterized by learned neural networks.
    Below, we present the continuous time extension of DDPM with standard Gaussian noise, which corresponds to the Ornstein--Uhlenbeck (OU) process in continuous time.
    
    \paragraph{Forward process arising from OU.}
    The forward process arising from the OU process is the following stochastic differential equation (SDE):
    \begin{equation}
        \d X_t = -X_t\, \d t + \sqrt{2}\, \d B_t\,,\qquad X_0 \sim  P\,,\label{eq:Forward}
    \end{equation}
    where $(B_t)_{t \geq 0}$ is a standard Brownian motion in $\R^d$. The forward process transforms samples from the data distribution $P$ into
    {standard Gaussian noise} $\cN(0,1)$. Namely, if we denote by $P_{t}$ the law of $X_t$, then $\lim_{t\to \infty}  P_{t} = \cN(0,1)$. In fact, the convergence is exponentially fast in many metrics and divergences \citep{BGL14}.

    \paragraph{Reversing the OU process.}
    The ultimate goal of generative modeling is to generate samples from $P.$ To this end, we must reverse the process~\eqref{eq:Forward} in time, which yields the reverse process that transforms pure noise back to samples from the target distribution.
    
    Fix a terminal time $T > 0.$ Denote
    \[
    X_t^\leftarrow = X_{T-t}\,, \qquad t\in [0,T]\,.
    \]
    It turns out that the time reversal of~\eqref{eq:Forward} is
    \begin{equation}\label{eq:Reverse}
        \d X_t^\leftarrow
        =
        \{X_t^\leftarrow + 2\,\nabla \log  P_{T-t}(X_t^\leftarrow)\}\, \d t 
        + 
        \sqrt{2}\, \d B_t\,, 
        \qquad 
        X_0^\leftarrow \sim P_{T}\,,
    \end{equation}
    where now $(B_t)_{t \geq 0}$
is the reversed Brownian motion.

   \subsubsection*{Denoising score matching with DDPM}

   To implement the reverse process, one needs to learn the unknown score functions $\nabla \log P_{T-t}$ for $t \in [0, T]$.
   This is where denoising score matching (DSM) is utilized in the DSM--DDPM estimator \cite{jonathan2020ddpm,song2020denoising}.
   We start with the objective
   \begin{align}\label{eq:ddpm_original}
       \argmin_{s = {\{s_t\}}_{t\in [0,T]} \in \cF} \int_0^T \|s_t - \nabla \log P_{t}\|_{L^2(P_{t})}^2\,\d t\,.
   \end{align}
   This objective is not amenable to empirical risk minimization since it involves the unknown score function $\nabla \log P_{t}$.
   Instead, we note that for fixed $t \in (0,T]$,
   \begin{align*}
       &\int \|s_t - \nabla \log P_{t}\|^2\,\d P_{t} \\
       &\qquad = \int \|s_t\|^2\,\d P_{t} - 2\int \langle s_t, \nabla \log P_{t}\rangle\,\d P_{t} + \textsc{constant}\,.
   \end{align*}
   In this derivation, $\textsc{constant}$ refers to any term that does not depend on the optimization variable $s_t$.
   Continuing, the second term above can be written
   \begin{align*}
       &- 2\int \langle s_t, \nabla \log P_{t}\rangle\,\d P_{t}
       = -2\int \langle s_t(x_t), \nabla P_{t}(x_t)\rangle\,\d x_t \\
       &\qquad = -2\int \Bigl\langle s_t(x_t), \nabla_{x_t} \int Q_{t|0}(x_t \mid x_0)\,P(\d x_0)\Bigr\rangle\,\d x_t \\
       &\qquad = -2\iint \langle s_t(x_t), \nabla Q_{t|0}(x_t \mid x_0)\rangle\,P(\d x_0)\,\d x_t \\
       &\qquad = -2\iint \langle s_t(x_t), \nabla \log Q_{t|0}(x_t \mid x_0)\rangle\, Q_{t|0}(\d x_t \mid x_0)\,P(\d x_0)\,.
   \end{align*}
   We deduce that the original problem~\eqref{eq:ddpm_original} is equivalent to
   \begin{align*}
       \argmin_{s \in \cF} \int \ell^{\rm DDPM}(s; x_0)\,P(\d x_0)\,,
   \end{align*}
   where
   \begin{align*}
       \ell^{\rm DDPM}(s; x_0)
       &\deq \int_0^T\int \bigl\{\|s_t\|^2 - 2\,\langle s_t, \nabla \log Q_{t|0}(\cdot \mid x_0)\rangle\bigr\}\,\d Q_{t|0}(\cdot \mid x_0)\,\d t\,.
   \end{align*}
   This is now amenable to empirical risk minimization, and hence we define the empirical version
   \begin{align*}
       \hat{\cR}_n^{\rm DDPM}(s)
       &\deq \frac{1}{n} \sum_{i=1}^n \ell^{\rm DDPM}(s; x_0^{(i)})
   \end{align*}
   where $x_0^{(1)},\dotsc,x_0^{(n)}$ are samples.

   Finally, to see that this is equivalent to~\Cref{def:ddpmMain}, we take $\cF = \{ \{\nabla \log P_{\theta,t}\}_{t\in [0,T]} : \theta \in \Theta\}$ and we note that by well-known properties of the OU semigroup, if $z_t \sim \cN(0,\mathrm{Id})$ is independent of $x_0$ and $x_t \deq \exp(-t)\,x_0 + \sqrt{1-\exp(-2t)}\,z_t$, then $x_t \sim Q_{t|0}(\cdot \mid x_0)$.
   Then, we can write
   \begin{align*}
       \nabla \log Q_{t|0}(x_t \mid x_0)
       &= - \frac{x_t - \exp(-t)\,x_0}{1-\exp(-2t)}
       = -\frac{z_t}{\sqrt{1-\exp(-2t)}}\,.
   \end{align*}

\section{Background on LWE and Continuous LWE}

    \subsection{Learning with errors}\label{LWE}

    The Learning with Errors (\LWE{}) problem, introduced by \citet{regev2009lattices}, is a fundamental problem in lattice-based cryptography and theoretical computer science. The problem is inspired by the classical learning parity with noise (\textsf{LPN}) problem and is believed to be computationally hard, even for quantum computers.

    Let $d$ and $q$ be positive integers, and $\sigma > 0$ be the error parameter. 
    We denote the quotient ring of integers modulo $q$ as $\mathbb{Z}_q = \mathbb{Z}/q\mathbb{Z}$. 
    Central to LWE is the following distribution.

\begin{definition}[\LWE{} distribution] 
For dimension $d$, modulus $q \ge 2$, and secret vector $s \in \mathbb{Z}^d$, the \emph{\LWE{} distribution} $A_{s, \sigma}$ over $\mathbb{Z}_q^d \times \R/q\Z$ is sampled by independently choosing uniformly random
$a \in \mathbb{Z}_q^d$ and $e \sim \cN(0,\sigma^2)$, and outputting $(a, (\langle a, s \rangle + e) \bmod q)$.
\end{definition}
In words, \LWE{} asks to efficiently distinguish (from polynomially many samples) between the \LWE{} distribution and the uniform distribution over $\Z_q^d \times \R/q\Z$ with strict advantage over random guessing.

\begin{definition}[\LWE{}]\label{def:lwe}
Let $\epsilon > 0$ be the advantage\footnote{ For a distinguisher $A$ running on two distributions $P,Q$, we say
that $A$ has advantage $\epsilon$ if $|\Pr_{x \sim P, A}[A(x) = 1] - \Pr_{x \sim Q,A}[A(x) =1]| = \epsilon.$}.
For dimension $d$, number of samples $m$, modulus $q = q(d) \geq 2$, and error parameter $\sigma = \sigma(d) > 0$, the average-case decision problem $\LWE_{q,\sigma}$ is to distinguish with advantage $\epsilon$ the following two distributions over $\mathbb{Z}_q^d \times \R/q\Z$ using $m$ i.i.d. samples:
(1) the \LWE{} distribution $A_{s, \sigma}$ for some uniformly random $s \in \mathbb{Z}^d$ (which is fixed for all samples), or (2) the uniform distribution over $\Z_q^d \times \mathbb R/q\Z$.
\end{definition}
\LWE{} is conjectured to be computationally hard even for error $\sigma = \poly(d)$, modulus $q = \poly(d)$, and non-negligible advantage, i.e., $\epsilon$ should be at least $d^{-c}$ for some $c > 0$~\cite{regev2009lattices}. Worst-case hardness reductions show that breaking \LWE{} is at least as hard as solving certain lattice problems in the worst case \cite{regev2009lattices}. \LWE{} has since become a cornerstone of post-quantum cryptography, as its hardness remains robust even against quantum adversaries.

\subsection{Background on lattices and discrete Gaussians}\label{Lattice}

A \emph{lattice} is a discrete additive subgroup of $\mathbb{R}^d$. We also define the discrete Gaussian supported on a lattice $L$ as follows.

\begin{definition}[Discrete Gaussian] For lattice $L \subset \mathbb{R}^d$, vector $y \in \mathbb{R}^d$, and parameter $r > 0$, the \emph{discrete Gaussian distribution} $D_{y+L,r}$ on coset $y+L$ with width $r$ is defined to have support $y+L$ and probability mass function proportional to $\rho_r$, where $\rho_r(y) \deq \exp(-\pi \|y\|^2/r^2)$.
\end{definition}
For $y = 0$, we simply denote the discrete Gaussian distribution on lattice $L$ with width $r$ by $D_{L,r}$.

An important lattice parameter induced by discrete Gaussians is the \emph{smoothing parameter}, defined as follows.

\begin{definition}[Smoothing parameter]
For lattice $L$ and real $\eps > 0$, we define the \emph{smoothing parameter} $\eta_{\eps}(L)$ as
\begin{align*}
    \eta_{\eps}(L) = \inf \{s > 0 \mid \rho_{1/s}(L^* \setminus \{0\}) \leq \eps\}\,,
\end{align*}
where $L^*$ corresponds to the dual lattice of $L$, defined as $L^* \deq \{ y \in \mathbb{R}^d \mid \langle x, y \rangle \in \mathbb{Z} \text{ for all } x \in L\}
    \; .$
\end{definition}
Intuitively, this parameter is the width beyond which the discrete Gaussian distribution behaves like a continuous Gaussian \cite{regev2009lattices}.

\subsubsection{Discrete Gaussian sampling problem}

The hardness for \CLWE{} proved in \citet{bruna2021continuous} is based on the hardness of the following standard problem, which asks for sampling from a discrete Gaussian.

\begin{definition}[\DGS{}]
For a function $\varphi$ that maps lattices to non-negative reals, an instance of $\DGS{}_\varphi$ is given by a lattice $L$ and a parameter $r \geq \varphi(L)$. The goal is to output an independent sample whose distribution is within a negligible statistical distance of $D_{L,r}$.
\end{definition}
This problem becomes interesting when one relates $\varphi$ with the smoothness parameter of the lattice. 
It is well-known that the main computational problems on lattices, namely \GapSVP{} and \textsf{SIVP}, which are believed to be computationally hard (even with quantum computation) for polynomial approximation factor, can be reduced to \DGS{} (see \citet[Section 3.3]{regev2009lattices}) for some choice of $\varphi$. Roughly speaking, if $\eps < \nfrac{1}{10}$ and $\varphi(L) \geq \sqrt{2} \eta_\epsilon(L)$, there is a polynomial-time reduction from (a generalization of) the shortest independent vector problem to $\DGS{}_\varphi$.

\subsection{Continuous LWE}\label{CLWE}

Our cryptographic hardness result for score estimation requires some background on continuous \LWE{}.
The exposition follows the work of \citet{bruna2021continuous}. The Learning with Errors (\LWE{}) problem has served as a foundation for many lattice-based cryptographic schemes. \citet{bruna2021continuous} introduced continuous \LWE{} (\CLWE{}), the continuous analogue of \LWE{}. In what follows, we let
\begin{equation}
    \rho_\sigma (x) = \exp(-\pi \|x\|^2/\sigma^2)\,.
\end{equation}
Following \citet{bruna2021continuous}, we also let $\rho(x) = \exp(-\pi \|x\|^2).$

\subsubsection{CLWE distributions}

We first define the \CLWE{} distribution.

\begin{definition}[\CLWE{} distribution \cite{bruna2021continuous}]
For unit vector $w \in \mathbb{R}^d$ and parameters $\beta, \gamma > 0$, define the \emph{\CLWE{} distribution} $A_{{w}, \beta, \gamma}$ over $\mathbb{R}^{d+1}$ to have density at $(y,z)$ proportional to
\begin{align*}
    \rho(y) \cdot \sum_{i \in \mathbb{Z}} \rho_\beta(z+ i -\gamma\,\langle w,y \rangle)
    \; .
\end{align*}
\end{definition}
The next key distribution is the homogeneous \CLWE{} (\hCLWE{}) distribution, which, roughly speaking, can be obtained from \CLWE{} by essentially conditioning on
$z \approx 0$.

\begin{definition}[Homogeneous \CLWE{} distribution \cite{bruna2021continuous}]\label{def:hclwe}
For unit vector $w \in \mathbb{R}^d$ and parameters $\beta, \gamma > 0$, define the \emph{\hCLWE{} distribution} $H_{w, \beta, \gamma}$ over $\mathbb{R}^d$ to have density at $y$ proportional to
\begin{align}\label{eqn:hclwe-def}
    \rho(y) \cdot \sum_{i \in \mathbb{Z}} \rho_\beta(i-\gamma\,\langle w, y \rangle)
    \; .
\end{align}
\end{definition}
The \hCLWE{} distribution can be equivalently defined as a mixture of Gaussians.
To see this, notice that \Cref{eqn:hclwe-def} is equal to
\begin{align}\label{eqn:hclwe-mixture-def}
    \sum_{i \in \mathbb{Z}} \rho_{\sqrt{\beta^2+\gamma^2}}(i) \cdot 
    \rho(y^{\perp w}) \cdot \rho_{\beta/\sqrt{\beta^2+\gamma^2}}\Bigl(\langle w, y \rangle -\frac{\gamma}{\beta^2+\gamma^2}\, i \Bigr) \; ,
\end{align}
where $y^{\perp w}$ denotes the projection of $y$ on the orthogonal space to $w$.
Hence, $H_{w, \beta, \gamma}$ can be viewed as a mixture of (infinitely many) Gaussian components of width 
$\beta/\sqrt{\beta^2+\gamma^2}$ (which is roughly $\beta/\gamma$ for $\beta \ll \gamma$) in the secret direction, and width $1$ in the orthogonal space. The components are equally spaced, with a separation of $\gamma/(\beta^2+\gamma^2)$ between them (which is roughly $1/\gamma$ for $\beta \ll \gamma$).

\subsubsection{CLWE decision problems}\label{hCLWE-Decision}\label{CLWE-Decision}

Given the above distributions, we can introduce the decision problems which, under standard assumptions and for some regime of $(\beta,\gamma)$, will be the basis of our density estimation hardness results. We state the average-case version of these problems but it can be shown that the average case is as hard as the worst-case decision problem \cite{bruna2021continuous}.

Following \cite{bruna2021continuous}, we will denote by
\begin{itemize}
    \item $D_{\R^d}$ the Gaussian distribution in $d$ dimensions with 0 mean and covariance $\mathrm{Id}/(2\pi)$.
    \item $\mathbb T$ is the quotient group of reals modulo
the integers, \ie{}, $\mathbb T = \reals / \ints = [0,1)$.
    \item $U$ is the uniform distribution on $\mathbb T$.
\end{itemize}
Given the above, we are now ready to define the decision version of \CLWE{}.

\begin{definition}[Decision \CLWE{}~\cite{bruna2021continuous}]
For parameters $\beta, \gamma > 0$, the average-case decision problem $\CLWE_{\beta, \gamma}$ is to distinguish with probability $>1/2$ the following two distributions over $\mathbb{R}^d \times \mathbb{T}$: 
\begin{enumerate}
    \item[$(\mathsf H_0)$] $D_{\mathbb{R}^d} \times U$, or
    \item[$(\mathsf H_1)$] the \CLWE{} distribution $A_{w, \beta, \gamma}$ for some uniformly random unit vector $w \in \mathbb{R}^d$ (which is fixed for all samples).
\end{enumerate}
\end{definition}
Similarly, we define the \hCLWE{} problem as: 

\begin{definition}[Decision \hCLWE{} \cite{bruna2021continuous}]
For parameters $\beta, \gamma > 0$, the average-case decision problem $\hCLWE{}_{\beta, \gamma}$ is to distinguish with probability $>1/2$ the following two distributions over $\mathbb{R}^d$: 
\begin{enumerate}
    \item[$(\mathsf H_0)$] $D_{\mathbb{R}^d}$, or
    \item[$(\mathsf H_1)$] the \hCLWE{} distribution $H_{w, \beta, \gamma}$ for some uniformly random unit vector $w \in \mathbb{R}^d$ (which is fixed for all samples).
\end{enumerate}
\end{definition}

\subsubsection{Hardness of CLWE}\label{CLWE-hard}

Now, we present the main hardness results of \cite{bruna2021continuous} for \CLWE{} and \hCLWE{}. The hardness results apply in the regime where $\gamma \geq 2\sqrt{d}$ and $\beta(d)$ is a small constant (but the ratio $\gamma/\beta$ is polynomially bounded). This allows \cite{bruna2021continuous} to exclude $\poly(d,k)$-time algorithms for density estimation of $d$-dimensional Gaussian mixtures with $k \gtrsim \sqrt{d \log d}$ components, assuming the hardness of standard lattice-based problems.

\begin{theorem}[Hardness of \CLWE{} \cite{bruna2021continuous}]
Let $\beta = \beta(d) \in (0,1)$ and $\gamma = \gamma(d) \geq 2\sqrt{d}$ such that $\gamma/\beta$ is polynomially bounded. Then, there is a polynomial-time quantum reduction from $\DGS{}_{2\sqrt{d}\eta_\epsilon(L)/\beta}$ to $\CLWE{}_{\beta,\gamma}$.
\end{theorem}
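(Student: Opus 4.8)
The plan is to follow the template of Regev's quantum reduction from worst-case lattice problems to \LWE{}, as adapted to the continuous setting by \citet{bruna2021continuous}. Concretely, one reduces $\DGS{}_{\varphi}$ with $\varphi(L) = 2\sqrt d\,\eta_\epsilon(L)/\beta$ to $\CLWE{}_{\beta,\gamma}$ via an iterative procedure producing discrete Gaussian samples $D_{L,r}$ of geometrically decreasing width. The base case is easy: for $r$ sufficiently large (exponential in the input bit-length) one samples from $D_{L,r}$ classically and efficiently. The inductive step is a \emph{width-reduction routine} that, using polynomially many samples from $D_{L,r}$ together with an oracle for $\CLWE{}_{\beta,\gamma}$, outputs samples from $D_{L,r'}$ with $r' = r/2$, valid as long as $r$ stays above the threshold $\varphi(L)$. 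Iterating polynomially many times brings $r$ down to $\approx \varphi(L)$, which solves $\DGS{}_\varphi$.

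The heart of the argument is the width-reduction step, which splits into a classical part and a quantum part. For the \textbf{classical part}, given samples $\mathbf y_1,\dots,\mathbf y_m \sim D_{L,r}$ and a target $\mathbf t$ within bounded distance of the dual lattice $L^*$, one forms $\CLWE{}$-type samples whose hidden direction $w$ encodes the direction of the displacement of $\mathbf t$ from $L^*$; this uses Poisson summation and the Fourier self-duality of the Gaussian to argue that $\langle \mathbf y_i,\mathbf t\rangle \bmod 1$, rescaled by $\gamma$, is distributed like the second coordinate of a \CLWE{} sample with the appropriate $\beta$, provided $r \gtrsim \eta_\epsilon(L)$ and $\gamma \geq 2\sqrt d$ makes the periodization of the narrow Gaussian along $w$ statistically meaningful. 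Feeding these into the \CLWE{} oracle (after amplifying its advantage from $1/2 + 1/\poly(d)$ to $1 - 2^{-d}$ by majority voting over independent runs) lets one recover $w$ accurately enough to solve the bounded-distance-decoding instance on $L^*$. For the \textbf{quantum part}, one uses this $\textsc{BDD}_{L^*}$ solver exactly as in Regev: prepare a superposition with Gaussian amplitudes $\sum_{\mathbf x}\rho_{1/r}(\mathbf x)\,|\mathbf x\rangle$ supported near $L^*$ (enabled by the ability to decode small perturbations), apply the quantum Fourier transform over $\R^d/L^*$, and measure; the standard duality computation shows the outcome is distributed as $D_{L,c/r}$ for an appropriate constant $c$, i.e.\ a discrete Gaussian of the narrower width, closing the induction.

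Finally, one tracks parameters to obtain the stated regime: $\gamma \geq 2\sqrt d$ is needed so the hidden direction is detectable (the components of the \CLWE{} mixture along $w$ are well-separated relative to the unit-width orthogonal Gaussian), $\beta \in (0,1)$ keeps the per-sample noise below the level at which $A_{w,\beta,\gamma}$ collapses to uniform, and the polynomial bound on $\gamma/\beta$ keeps the number of \CLWE{} samples (hence iterations and amplification overhead) polynomial; the output width $2\sqrt d\,\eta_\epsilon(L)/\beta$ is what the smoothing requirement in the classical part forces. \textbf{The main obstacle} is the quantum width-reduction analysis: one must show that the state produced after \emph{imperfect} $\textsc{BDD}$ decoding (the oracle has residual failure probability, and the samples feeding it are only statistically close to $D_{L,r}$) is, after the quantum Fourier transform and measurement, within negligible statistical distance of $D_{L,r'}$. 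This requires careful bookkeeping of (i) the error from the oracle's residual failure, (ii) smoothing-parameter conditions ensuring the relevant discrete Gaussians behave like continuous ones under Poisson summation, and (iii) a hybrid/tensoring argument that propagates closeness through all $m$ samples and all iterations without the errors compounding super-polynomially; the genuinely new ingredient over Regev's \LWE{} proof is the continuous-setting sample-generation step in the classical part, for which the \CLWE{} density must be matched to the periodized-Gaussian structure arising from the dual lattice.
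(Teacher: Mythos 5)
This statement is not proved in the paper at all: it is imported verbatim from \citet{bruna2021continuous} as a black-box ingredient for the hardness pipeline, so there is no internal proof to compare your attempt against; the relevant benchmark is the proof in that cited work. Your outline does track the actual architecture of that proof (Regev's iterative DGS reduction: a classical step that turns discrete Gaussian samples plus a BDD instance on $L^*$ into \CLWE{} samples, and a quantum step that uses the resulting BDD solver, a Gaussian-amplitude superposition, and the quantum Fourier transform to produce discrete Gaussian samples of smaller width), so as a plan it is pointed in the right direction.

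However, as a proof it has genuine gaps, and you flag the largest one yourself. First, the classical sample-generation step is only asserted: you never carry out the Poisson-summation computation showing that $\gamma\,\langle \mathbf y_i,\mathbf t\rangle \bmod 1$ (together with the sample $\mathbf y_i$ suitably normalized) is statistically close to the \CLWE{} distribution $A_{w,\beta,\gamma}$, nor do you derive how the effective noise width $\beta$ and the parameter $\gamma$ come out of $r$, the BDD distance, and $\eta_\epsilon(L)$ --- which is precisely where the threshold $2\sqrt d\,\eta_\epsilon(L)/\beta$ and the condition $\gamma\ge 2\sqrt d$ are forced, rather than merely ``tracked.'' Second, you say the amplified decision oracle ``lets one recover $w$ accurately enough to solve BDD,'' but turning a \emph{decision} \CLWE{} oracle with small advantage into a BDD solver is itself a nontrivial search-type argument (in the original work this requires its own reduction, and the polynomial bound on $\gamma/\beta$ enters there); a majority vote only amplifies a distinguisher, it does not by itself locate the hidden direction or the dual-lattice offset. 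Third, the quantum width-reduction analysis with imperfect decoding --- which you correctly identify as ``the main obstacle'' --- is exactly the part that must be done to close the induction, and leaving it as an acknowledged obstacle means the induction is not closed. So the proposal is a correct high-level reconstruction of the cited proof's strategy, but it does not constitute a proof of the theorem.
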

\citet{bruna2021continuous} also show the hardness of \hCLWE{} by reducing from \CLWE{}. The idea of the reduction is to transform \CLWE{} samples to \hCLWE{} samples using rejection sampling.

\begin{theorem}[Hardness of \hCLWE{} \cite{bruna2021continuous}]
\label{sec:hardhCLWE}
For any $\beta = \beta(d) \in (0,1)$ and $\gamma = \gamma(d) \geq 2\sqrt{d}$ such that $\gamma/\beta$ is polynomially bounded,
there is a polynomial-time quantum reduction from $\DGS{}_{2\sqrt{2d}\eta_\eps(L)/\beta}$ to $\hCLWE{}_{\beta,\gamma}$.
\end{theorem}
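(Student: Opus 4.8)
The plan is to prove this by \emph{composing} two reductions: the quantum hardness of (inhomogeneous) \CLWE{} recorded in the preceding theorem, namely a polynomial-time quantum reduction from $\DGS{}_{2\sqrt d\,\eta_\epsilon(L)/\beta}$ to $\CLWE{}_{\beta,\gamma}$, together with a purely \emph{classical}, sample-preserving-up-to-$\poly$ reduction from $\CLWE{}_{\beta,\gamma}$ to $\hCLWE{}_{\beta,\gamma}$. Since only the first leg is quantum, the composition is again a polynomial-time quantum reduction, and the extra $\sqrt 2$ appearing in the target \DGS{} width (the passage from $2\sqrt d$ to $2\sqrt{2d}$) will be precisely the price paid for the Gaussian-conditioning step in the second leg. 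So essentially all the work is in the classical reduction $\CLWE{}\to\hCLWE{}$, and I would state and prove that as a standalone lemma before composing.

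For the classical reduction, recall that a \CLWE{} sample is a pair $(y,z)\in\R^d\times\mathbb T$ with $y\sim D_{\R^d}$ (density $\propto\rho$) and, conditionally on $y$, $z\equiv\gamma\langle w,y\rangle+e\pmod 1$ for a Gaussian $e$ of width $\beta$, whereas an \hCLWE{} sample is a vector $y$ with density $\propto\rho(y)\sum_{i\in\Z}\rho_\beta(i-\gamma\langle w,y\rangle)$, i.e.\ morally the \CLWE{} law conditioned on $z\approx 0$ (this is the mixture-of-Gaussians identity \eqref{eqn:hclwe-mixture-def}). The reduction therefore performs \emph{approximate conditioning on $z=0$}: given a batch of \CLWE{} samples $\{(y^{(j)},z^{(j)})\}_j$, accept the $j$-th one with probability proportional to a narrow periodized Gaussian bump in $z^{(j)}$ centered at $0$ (summing over integer shifts to respect the torus), and on acceptance output $y^{(j)}$, possibly after a small shear $y^{(j)}\mapsto y^{(j)}-(z^{(j)}/\gamma)\,w$ to absorb the residual. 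Under $\mathsf H_1$ (true \CLWE{}), a Gaussian-convolution computation shows the accepted marginal of $y$ is within negligible total variation of $H_{w,\beta,\gamma}$ after the appropriate $\sqrt2$-type rescaling of the rejection-bump width relative to $\beta$; under $\mathsf H_0$ ($D_{\R^d}\times U$) the uniform $z$ makes acceptance independent of $y$, so the output is exactly $D_{\R^d}$, the null of \hCLWE{}. Consequently any distinguisher for $\hCLWE{}_{\beta,\gamma}$ yields one for $\CLWE{}_{\beta,\gamma}$ with the same advantage up to the negligible error, after consuming a $\poly(d)$ factor more input samples to compensate for the acceptance probability.

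The main obstacle I anticipate is the bookkeeping of Gaussian parameters so that everything matches exactly. Three points need care: (i) the acceptance probability must be $\ge 1/\poly(d)$, which is where the hypothesis that $\gamma/\beta$ is polynomially bounded enters, keeping the sample blow-up polynomial; (ii) the total variation between the rejection-sampled law and the exact \hCLWE{} distribution must be negligible, which requires smoothing-parameter estimates for the one-dimensional integer lattice governing the shifts in $z$, and this is why the condition $\gamma\ge 2\sqrt d$ (equivalently the \DGS{} width exceeding $\sqrt2\,\eta_\epsilon(L)$) must be threaded through carefully; and (iii) after the shear and rescaling the parameters must land exactly at $(\beta,\gamma)$ while the \DGS{} width inherited from the composed chain is exactly $2\sqrt{2d}\,\eta_\epsilon(L)/\beta$. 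None of these steps is conceptually deep, but getting the constants to line up (rather than ending up off by a different radical or by stray $\poly$ factors) is delicate; I would handle it by writing out the periodized-Gaussian product explicitly and completing the square, mirroring the identity already used in \eqref{eqn:hclwe-mixture-def}.
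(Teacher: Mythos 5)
Your proposal follows exactly the route the paper (and the cited source \cite{bruna2021continuous}) takes: compose the quantum reduction $\DGS{}\to\CLWE{}_{\beta,\gamma}$ from the preceding theorem with a classical rejection-sampling reduction $\CLWE{}_{\beta,\gamma}\to\hCLWE{}_{\beta,\gamma}$ that conditions on $z\approx 0$, with the $\sqrt2$ in the \DGS{} width accounting for the Gaussian-conditioning step. The paper itself only cites this result and sketches it in one sentence ("transform \CLWE{} samples to \hCLWE{} samples using rejection sampling"), so your expansion is consistent with, and more detailed than, what appears here.
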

Finally, using standard reductions from \GapSVP{} to \DGS{}, one can obtain reductions from \GapSVP{} to \CLWE{} (and hence \hCLWE{}).

Follow-up work by \citet{gupte2022continuous} has shown improved hardness results for \CLWE{} by giving a direct reduction from \LWE. For instance, for $\gamma = \wt{O}(\sqrt{d})$ and $\beta = O(\sigma \sqrt{d}/q)$, there is a polynomial in $d$ time reduction from \LWE{} in dimension $\ell$, with $d$ samples, modulus $q$ and error parameter $\sigma$ to $\CLWE_{\beta, \gamma}$ in dimension $d$, as long as $d \gg \ell \log_2 q$ and $\sigma \gg 1.$

\section{Auxiliary lemmas}

\subsection{Standard facts about sub-Gaussianity}\label{sec:facts:subGaussian}
    
    In this section, we define sub-Gaussianity and state standard facts about sub-Gaussian random variables that are used in our proofs.
    
    \begin{definition}[Sub-Gaussian random variable]
        Given a constant $\sigma > 0$, a random variable \(X\in \R\) is said to be $\sigma$-sub-Gaussian if for all $\lambda \in\R$,
        \[
            \E e^{\lambda \,(X-\E X)} \le e^{\lambda^2 \sigma^2/2}\,.
        \]
    \end{definition}
    We note that there are several definitions of sub-Gaussian random variables which are all equivalent up to constant factors.
    In particular, this equivalence implies the following fact.
    
    \begin{fact}\label{fact:momentSubGaussian}
        Given a constant $\sigma > 0$ and a \emph{$\sigma$-sub-Gaussian} random variable \(X\in \R\), for any $p\geq 1$,
        \[
            \E[|X|^p]^{1/p}\lesssim \sigma\sqrt{p}\,.
        \]
    \end{fact}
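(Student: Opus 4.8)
The plan is to derive the moment bound from the standard sub-Gaussian tail estimate together with the layer-cake (integration-by-parts) formula for $\E|Y|^p$. Set $Y \deq X - \E X$, which by hypothesis satisfies $\E e^{\lambda Y} \le e^{\lambda^2\sigma^2/2}$ for all $\lambda\in\R$. First I would record the two-sided tail bound: for $t\ge 0$ and $\lambda>0$, Markov's inequality applied to $e^{\lambda Y}$ gives $\Pr(Y\ge t)\le e^{-\lambda t}\,\E e^{\lambda Y}\le e^{\lambda^2\sigma^2/2-\lambda t}$, and optimizing at $\lambda=t/\sigma^2$ yields $\Pr(Y\ge t)\le e^{-t^2/(2\sigma^2)}$; running the same argument for $-Y$ (which is also $\sigma$-sub-Gaussian) and taking a union bound gives $\Pr(|Y|\ge t)\le 2e^{-t^2/(2\sigma^2)}$.

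Next I would integrate this tail. By the layer-cake identity,
\[
    \E|Y|^p=\int_0^\infty p\,t^{p-1}\,\Pr(|Y|\ge t)\,\d t\le 2p\int_0^\infty t^{p-1}e^{-t^2/(2\sigma^2)}\,\d t=p\,(2\sigma^2)^{p/2}\,\Gamma(p/2)\,,
\]
where the last equality is the substitution $u=t^2/(2\sigma^2)$. Taking $p$-th roots, $\E[|Y|^p]^{1/p}\le (2\sigma^2)^{1/2}\,p^{1/p}\,\Gamma(p/2)^{1/p}$; since $p^{1/p}\le e^{1/e}=O(1)$ and $\Gamma(p/2)\le (p/2)^{p/2}$ (so $\Gamma(p/2)^{1/p}\le\sqrt{p/2}$), this gives $\E[|Y|^p]^{1/p}\lesssim\sigma\sqrt p$ with an absolute implied constant uniform in $p\ge 1$. (One may sharpen $\Gamma(p/2)\le(p/2)^{p/2}$ via Stirling if the optimal constant is wanted; either suffices.)

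Finally, to pass from $Y$ back to $X$ I would use Minkowski's inequality, $\E[|X|^p]^{1/p}\le|\E X|+\E[|Y|^p]^{1/p}$. In every place this Fact is invoked---scores $\nabla\log P_t$, which are centered, and the Gaussian tail computations in \Cref{sec:smoothDensityEstimation}---the relevant random variable has zero mean, so the first term drops; equivalently, one may simply take the moment bound itself as one of the equivalent characterizations of $\sigma$-sub-Gaussianity, as remarked in \Cref{sec:facts:subGaussian}. I do not expect a genuine obstacle here: this is a textbook estimate. The only points needing slight care are (i) the centering convention just discussed, and (ii) extracting the $\sqrt p$ rate from $\Gamma(p/2)^{1/p}$ with a constant that stays bounded as $p\to\infty$, which is precisely the elementary estimate $\Gamma(p/2)^{1/p}=O(\sqrt p)$ used above.
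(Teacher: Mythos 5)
Your proof is correct and is exactly the standard argument; the paper itself gives no proof of this fact, simply citing the equivalence of the usual sub-Gaussian characterizations, and your Chernoff-tail-plus-layer-cake derivation is precisely the computation underlying that equivalence. One microscopic quibble: the inequality $\Gamma(p/2)\le (p/2)^{p/2}$ fails for $1\le p<2$ (e.g., $\Gamma(1/2)=\sqrt{\pi}>2^{-1/2}$), but on that range $\Gamma(p/2)^{1/p}\le \Gamma(1/2)=O(1)=O(\sqrt p)$, so the conclusion and the uniformity of the implied constant are unaffected.
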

    Next, we define a sub-Gaussian random vector.
    
    \begin{definition}[Sub-Gaussian random vector]
        For constant $\sigma > 0$, a random vector \(X \in \R^d\) is said to be $\sigma$-sub-Gaussian if, for every unit vector \( v \in \mathbb{R}^d\), $\langle v, X \rangle$ is $\sigma$-sub-Gaussian.
    \end{definition}
    Hence, in particular, if $X\in \R^d$ is $\sigma$-sub-Gaussian, then each coordinate of $X$ is $\sigma$-sub-Gaussian.
    Further, the Euclidean norm $\norm{X}$ is also sub-Gaussian.
    
    \begin{fact}[Sub-Gaussianity of the Euclidean norm]\label{fact:normSubGaussian}
        For any $\sigma >0$ and a random vector $X\in \R^d$ which is $\sigma$-sub-Gaussian (with possibly dependent coordinates), the Euclidean norm of $X$, \ie{},  $\norm{X},$ is $O(\sigma\sqrt{d})$-sub-Gaussian.
    \end{fact}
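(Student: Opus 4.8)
The plan is to pass through moment bounds and invoke the moment characterization of sub-Gaussianity (the same equivalence underlying \cref{fact:momentSubGaussian}). Concretely, it suffices to show that $\E\bigl[\,\abs{\norm{X} - \E\norm{X}}^p\,\bigr]^{1/p} \lesssim \sigma\sqrt{dp}$ for every $p \ge 1$, since a real random variable all of whose centered absolute moments grow like $\tau\sqrt p$ is $O(\tau)$-sub-Gaussian. By the triangle inequality in $L^p$ together with $\E\norm{X} \le \E[\norm X^2]^{1/2} = (\sum_i \E[X_i^2])^{1/2} \lesssim \sigma\sqrt d$, this in turn reduces to the single estimate $\E[\norm X^p]^{1/p} \lesssim \sigma\sqrt{dp}$ for all $p \ge 1$.

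To prove the latter I would write $\norm X^2 = \sum_{i=1}^d X_i^2$. Since $X$ is $\sigma$-sub-Gaussian, each coordinate $X_i$ is a $\sigma$-sub-Gaussian scalar, and hence $X_i^2$ is sub-exponential with Orlicz $\psi_1$-norm $O(\sigma^2)$. Crucially, the triangle inequality for the $\psi_1$-norm holds with \emph{no} independence assumption, so $\bignorm{\norm X^2}_{\psi_1} \le \sum_i \bignorm{X_i^2}_{\psi_1} \lesssim d\sigma^2$. Translating this Orlicz bound back into moments gives $\E[(\norm X^2)^q]^{1/q} \lesssim q\,d\sigma^2$ for all $q \ge 1$; taking $q = p/2$ (and bounding the range $p\in[1,2]$ by the $p=2$ case via monotonicity of $L^p$-norms) yields $\E[\norm X^p]^{1/p} \lesssim \sigma\sqrt{dp}$, as needed. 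An equivalent route avoiding Orlicz machinery is the standard net argument: take a $\tfrac12$-net $\cN$ of the unit sphere with $\abs{\cN} \le 5^d$, use $\norm X \le 2\max_{v\in\cN}\inangle{v,X}$, and estimate $\E[\norm X^p] \le 2^p\abs{\cN}\max_v \E[\abs{\inangle{v,X}}^p] \le 2^p 5^d (C\sigma\sqrt p)^p$ by applying \cref{fact:momentSubGaussian} to each $\sigma$-sub-Gaussian projection $\inangle{v,X}$; taking $p$-th roots gives $\E[\norm X^p]^{1/p} \lesssim \sigma(\sqrt p + \sqrt d)$ (the factor $5^{d/p}$ is $O(1)$ once $p\ge d$, while for $p<d$ one bounds the $L^p$-norm of $\norm X$ by its $L^d$-norm), and $\sqrt p + \sqrt d \le 2\sqrt{dp}$ since $d,p\ge 1$.

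The only mild subtlety — and the reason the statement is not completely immediate — is that the coordinates of $X$ are allowed to be dependent, so one cannot simply quote a $\chi^2$-type Bernstein concentration inequality. Both routes above sidestep exactly this point: the first via the dependence-free triangle inequality for Orlicz norms, the second via the union bound over the net. Beyond tracking absolute constants, I do not anticipate any genuine obstacle.
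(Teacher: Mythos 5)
The paper states \cref{fact:normSubGaussian} without proof, as a standard fact, so there is no in-paper argument to compare against; your job was therefore to supply the standard argument, and you have done so correctly. The reduction to the moment growth $\E[\norm{X}^p]^{1/p}\lesssim \sigma\sqrt{dp}$ followed by the moment characterization of sub-Gaussianity is exactly the right skeleton, and both of your routes for the key estimate are sound: the Orlicz route works because the triangle inequality for the $\psi_1$-norm requires no independence, and the $1/2$-net route replaces this with a union bound over $5^d$ directions. The net route even yields the slightly sharper $\sigma(\sqrt{p}+\sqrt{d})$, which makes no difference for the stated $O(\sigma\sqrt{d})$ conclusion but correctly identifies where the $\sqrt{d}$ loss is unavoidable for dependent coordinates.

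One caveat worth recording: the paper's scalar definition bounds the \emph{centered} moment generating function and places no constraint on $\E X$, so your steps $\E\norm{X}\le(\sum_i \E[X_i^2])^{1/2}\lesssim\sigma\sqrt{d}$ and $\norm{X_i^2}_{\psi_1}\lesssim\sigma^2$ implicitly assume the coordinates are centered (or have means $O(\sigma)$). This is harmless: in general one writes $X=\E X+Z$ and uses $|\norm{X}-\E\norm{X}|\le\norm{Z}+\E\norm{Z}$, so the deviation moments of $\norm{X}$ are controlled by the moments of $\norm{Z}$, to which your argument applies verbatim. Moreover, the paper's own \cref{fact:momentSubGaussian} makes the same implicit centering assumption (as literally stated, with $\E[|X|^p]$ rather than $\E[|X-\E X|^p]$, it fails for a large deterministic constant), so you are simply matching the paper's conventions; this is not a gap in your argument.
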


\subsection{Integral estimates}

\begin{lemma}[Integral estimates]\label{lem:integrals}
    Let $L \ge 1$ and $0 < \tau \le 1 \le T$.
    For each $t > 0$, let $L_t$ be as in~\Cref{lem:score_subG}.
    Then, the following estimates hold.
    \begin{enumerate}
        \item $\int_\tau^T L_t \,\d t \lesssim T + \log \nfrac{1}{(\tau+1/L)}$.
        \item $\int_\tau^T L_t^2 \,\d t \lesssim T + \nfrac{1}{(\tau+1/L)}$.
        \item $\int_\tau^T \nfrac{L_t}{(1-e^{-2t})}\,\d t \lesssim T + \log_+(\nfrac{1}{L\tau})/(\tau+\nfrac{1}{L})$.
    \end{enumerate}
\end{lemma}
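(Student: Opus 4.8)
The plan is to locate the crossover time between the two branches of $L_t = \min\{Le^{2t},\,(1-e^{-2t})^{-1}\}$ and split each of the three integrals there. Solving $Le^{2t}=(1-e^{-2t})^{-1}$, equivalently $L(e^{2t}-1)=1$, gives $t_\star \deq \tfrac12\log(1+\tfrac1L)$, so that $L_t = Le^{2t}$ on $[0,t_\star]$ and $L_t=(1-e^{-2t})^{-1}$ on $[t_\star,\infty)$. Using $\tfrac1{L+1}\le\log(1+\tfrac1L)\le\tfrac1L$ for $L\ge1$, I would first record that $\tfrac1{4L}\le t_\star\le\tfrac1{2L}$, hence $t_\star\asymp1/L$ and $s\deq\max\{\tau,t_\star\}\asymp\tau+1/L$, with $0<s\le1$ since $\tau\le1$ and $t_\star\le\tfrac12$. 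Then I would split $\int_\tau^T=\int_\tau^s+\int_s^T$ in all three cases, the first piece being vacuous unless $\tau<t_\star$.

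On the small-time piece, where $L_t=Le^{2t}$ and $e^{2t}-1\le e^{2t_\star}-1=1/L$ (so also $e^{2t_\star}=1+\tfrac1L\le2$ and $e^{4t_\star}-1=\tfrac2L+\tfrac1{L^2}\le\tfrac3L$), the estimates are direct: $\int_\tau^{t_\star}Le^{2t}\,\d t=\tfrac L2(e^{2t_\star}-e^{2\tau})\le\tfrac L2(e^{2t_\star}-1)=\tfrac12$ handles (1); $\int_\tau^{t_\star}L^2e^{4t}\,\d t\le\tfrac{L^2}4(e^{4t_\star}-1)\le\tfrac{3L}4\lesssim\tfrac1{\tau+1/L}$ handles (2), since $\tau<t_\star$ forces $\tau+1/L\asymp1/L$; and for (3), bounding $e^{2t}-1\ge2t$, $e^{4t}\le e^{4t_\star}\le4$ and $t_\star\le\tfrac1{2L}$,
\begin{align*}
\int_\tau^{t_\star}\frac{Le^{2t}}{1-e^{-2t}}\,\d t
&=\int_\tau^{t_\star}\frac{Le^{4t}}{e^{2t}-1}\,\d t
\le 4L\int_\tau^{t_\star}\frac{\d t}{2t}
=2L\log\tfrac{t_\star}\tau \\
&\le 2L\log\tfrac1{2L\tau}
\;\lesssim\; \frac{\log_+(1/(L\tau))}{\tau+1/L}\,,
\end{align*}
again using $\tau+1/L\asymp1/L$ and $L\tau<\tfrac12$ in this regime.

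On the large-time piece $\int_s^T$, every integrand is a power of $(1-e^{-2t})^{-1}$, and I would integrate explicitly via $\int\frac{\d t}{1-e^{-2t}}=t+\tfrac12\log(1-e^{-2t})$ and $\int\frac{\d t}{(1-e^{-2t})^2}=t+\tfrac12\log(1-e^{-2t})-\tfrac12(1-e^{-2t})^{-1}$, both obtained by the substitution $v=1-e^{-2t}$ and partial fractions ($\tfrac1{v(1-v)}=\tfrac1v+\tfrac1{1-v}$, resp.\ $\tfrac1{v^2(1-v)}=\tfrac1v+\tfrac1{v^2}+\tfrac1{1-v}$). Evaluating between $s$ and $T$, discarding the favorably-signed endpoint terms at $T$ ($\log(1-e^{-2T})\le0$ and $(1-e^{-2T})^{-1}\ge1$) and using $1-e^{-2s}\asymp s\asymp\tau+1/L$ together with $\log\tfrac1{1-e^{-2s}}\le\tfrac1{e^{2s}-1}\le\tfrac1{1-e^{-2s}}$, one gets $\int_s^T(1-e^{-2t})^{-1}\,\d t\lesssim T+\log\tfrac1{\tau+1/L}$ (absorbing an $O(1)$ slack into $T\ge1$), which with the small-time piece proves (1); and $\int_s^T(1-e^{-2t})^{-2}\,\d t\lesssim T+\tfrac1{\tau+1/L}$, which proves (2), and also the large-time half of (3) since $\tfrac{L_t}{1-e^{-2t}}=(1-e^{-2t})^{-2}$ there. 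Summing the two pieces in each case yields the three stated bounds.

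The computations are routine; the one genuinely load-bearing step is the placement of $t_\star$ — one must know $t_\star\asymp1/L$ so that the single quantity $\tau+1/L$ governs the geometry in \emph{both} regimes $\tau<t_\star$ and $\tau\ge t_\star$, which is exactly what the elementary inequalities $\tfrac1{L+1}\le\log(1+\tfrac1L)\le\tfrac1L$ supply. The only other nuisance is sign bookkeeping when $1-e^{-2s}$ or $\tau+1/L$ is near $1$; this is harmless because $T\ge1$ dominates any bounded additive error (and, read with the convention $\log_+x=\max\{1,\log x\}$ that we use, makes the $\log_+$ term in (3) legitimate).
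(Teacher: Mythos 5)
Your proof is correct and follows essentially the same route as the paper's: both split each integral at the crossover time $t_\star = \tfrac12\log(1+\tfrac1L) \asymp \tfrac1L$, bound $L_t$ by $Le^{2t}$ below it and by $(1-e^{-2t})^{-1}$ above it, and integrate (the paper simply uses $L_t \le 2L$ on the small-time piece and states the resulting estimates rather than writing out the antiderivatives). Your remark that part (3) must be read with $\log_+$ bounded below by a constant (e.g.\ $\log_+ x = \max\{1,\log x\}$) is apt, since otherwise the bound can fail when $L\tau \ge 1$ and $\tau + \tfrac1L \ll \tfrac1T$; the paper's own final step in that case implicitly relies on the same convention.
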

\begin{proof}
    Let $t_* \deq \log(1+1/L)/2$ and note that $L_t \le 1/(1-e^{-2t})$ for $t\ge t_*$.
    For the first integral,
    \begin{align*}
        \int_\tau^T L_t\,\d t
        &\le 2L\,(t_* - \tau)_+ + \int_{t_* \vee \tau}^T \frac{1}{1-e^{-2t}}\,\d t
        = 2L\,(t_* - \tau)_+ + \frac{1}{2} \log \frac{e^{2T}-1}{e^{2\,(t_* \vee \tau)} - 1} \\
        &\lesssim T + \log \frac{1}{\tau+1/L}\,.
    \end{align*}
    For the second integral,
    \begin{align*}
        \int_\tau^T L_t^2\,\d t
        &\le 4L^2\,(t_* - \tau)_+ + \int_{t_*\vee \tau}^T \frac{1}{{(1-e^{-2t})}^2}\,\d t
        \lesssim T + \frac{1}{\tau + 1/L}\,.
    \end{align*}
    For the third integral,
    \begin{align*}
        \int_\tau^T \frac{L_t}{1-e^{-2t}}\,\d t
        &\lesssim L \log_+ \frac{t_*}{\tau} + \int_{t_* \vee \tau}^T \frac{1}{{(1-e^{-2t})}^2}\,\d t
        \lesssim T+ \frac{\log_+\nfrac{1}{L\tau}}{\tau + 1/L}\,. \qedhere
    \end{align*}
\end{proof}

\subsection{Facts about the H\"older class}

The following fact is classical, but we prove it for completeness.

\begin{lemma}[Lipschitz estimate for the H\"older class]\label{lem:holder_lip}
    Let $P \in \hyH_s(C,L)$ with $s > 1$.
    Then, $P$ is Lipschitz continuous on $[-1, 1]$ with a constant that only depends on $s$, $C$, and $L$.
\end{lemma}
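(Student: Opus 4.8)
The goal is to show that a density $P$ in the H\"older class $\hyH_s(C,L)$ with $s>1$ is Lipschitz on $[-1,1]$, with a Lipschitz constant depending only on $s$, $C$, and $L$. The key point is that the H\"older assumption only directly controls the top-order derivative $D^{\lfloor s\rfloor}P$, so we must bootstrap down from that to a first-order Lipschitz bound, and the only additional quantitative input we have is the uniform bound $C^{-1}\le P\le C$ on $[-1,1]$.

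The plan is to argue by induction (or a finite descent) on the order of the derivative. First I would observe that since $s>1$, we have $\lfloor s\rfloor\ge 1$, and $D^{\lfloor s\rfloor}P$ is $(s-\lfloor s\rfloor)$-H\"older continuous on $(-1,1)$, hence in particular bounded on any compact subinterval --- but we need a bound up to the endpoints that is uniform in the choice of $P$. The mechanism for this is the following standard interpolation-type fact: if a function $g$ on an interval of length $h$ satisfies $|g|\le A$ (an $L^\infty$ bound) and $g''$ (or the relevant higher derivative) is controlled, then $g'$ at any interior point can be bounded by $O(A/h) + O(h\,\|g''\|_\infty)$, by writing $g'(x)$ via a first-order Taylor expansion of $g$ at two nearby points and using the mean value / integral form of the remainder. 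Concretely, for any $x$ and $x+h$ in the domain, $g(x+h) = g(x) + h g'(x) + \int_x^{x+h}(x+h-t)\,g''(t)\,\d t$, so $|g'(x)| \le \frac{|g(x+h)|+|g(x)|}{h} + \frac{h}{2}\,\|g''\|_{\infty}$. Iterating this from $j=\lfloor s\rfloor$ down to $j=1$: at each stage I bound $\|D^{j-1}P\|_\infty$ on a slightly shrunk interval in terms of $\|D^{j}P\|_\infty$ and the already-known bound on $\|D^{j-1}P\|_\infty$ is not needed --- rather I go the other direction, bounding $\|D^{j}P\|_\infty$ in terms of $\|D^{j-1}P\|_\infty$ and $\|D^{j+1}P\|_\infty$ is the wrong recursion. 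Let me restate: the correct direction is a Landau--Kolmogorov / Gagliardo--Nirenberg interpolation inequality, $\|D^j P\|_\infty \lesssim \|P\|_\infty^{1-j/m}\,\|D^m P\|_\infty^{j/m}$ plus lower-order corrections on a bounded interval, where $m=\lfloor s\rfloor$ (and $\|D^m P\|_\infty$ is finite because of H\"older continuity of $D^mP$ combined with, say, the fact that $D^mP$ must have a zero or be controlled via the fundamental theorem of calculus from the bound on $P$). Applying this with $j=1$ gives the desired bound on $\|D^1 P\|_\infty = \|P'\|_\infty$ on $[-1,1]$, which is exactly the Lipschitz constant.

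More carefully, the cleanest route: (i) From $C^{-1}\le P\le C$ on $[-1,1]$ and the fundamental theorem of calculus, $P$ has total variation at most $2C$, hence $\int_{-1}^1|P'|\le 2C$, so $P'$ cannot be uniformly large; more precisely $\|P'\|_{L^1([-1,1])}\le 2C$. (ii) Iterating, $D^jP$ for $j\le \lfloor s\rfloor$ has controlled $L^1$ norm on successively shrunk intervals, and $D^{\lfloor s\rfloor}P$ is moreover H\"older-$(s-\lfloor s\rfloor)$ with constant $L$, so it is bounded: if $\|D^{\lfloor s\rfloor}P\|_{L^1}\le K$ on an interval of length $\ell$, then there is a point where $|D^{\lfloor s\rfloor}P|\le K/\ell$, and H\"older continuity propagates this to $\|D^{\lfloor s\rfloor}P\|_\infty \le K/\ell + L\ell^{s-\lfloor s\rfloor}$ on that interval. (iii) Now descend: given a bound on $\|D^{j}P\|_\infty$ on an interval and a bound on $\|D^{j-1}P\|_{L^1}$, the Taylor-remainder argument above upgrades this to $\|D^{j-1}P\|_\infty$ on a slightly smaller interval. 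Running this from $j=\lfloor s\rfloor$ down to $j=1$ yields a bound on $\|P'\|_\infty$ on $[-1,1]$ (after shrinking finitely many times, one still covers $[-1,1]$ by translating; actually one should set up the intervals to be $[-1,1]$ throughout and absorb the shrinkage by working on $[-1,1]$ directly and using that the number of descent steps $\lfloor s\rfloor$ depends only on $s$). All constants that appear are explicit functions of $s$, $C$, $L$ only.

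\textbf{Main obstacle.} The only real subtlety is the boundary behavior: the H\"older condition in \Cref{defn:holder} is stated on the open interval $(-1,1)$, while Lipschitzness is claimed on the closed interval $[-1,1]$. One must check that the bounds obtained in the interior extend continuously (or that continuity of $P$ on $[-1,1]$, also assumed, lets us pass to the closure). Since $P$ is assumed continuous on $[-1,1]$ and the $L^\infty$ bounds on $P'$ derived in the interior are uniform up to the endpoints (the interpolation constants do not blow up as we approach $\pm 1$, because $P$ itself stays bounded by $C$ there), the Lipschitz estimate extends to the closed interval by a standard limiting argument. I expect the bookkeeping of the shrinking intervals versus keeping everything on $[-1,1]$ to be the fiddliest part, but it is routine; one can always just prove the bound on, say, $[-1+\eta,1-\eta]$ with constant independent of $\eta$ and then let $\eta\to 0$.
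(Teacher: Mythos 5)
Your high-level plan---bootstrap from the top-order H\"older control and the uniform bound $C^{-1}\le P\le C$ down to a bound on $P'$ via mean-value/Taylor arguments---is the same idea as the paper's proof, which runs a forward induction producing, on every subinterval of fixed length, a point where $|D^kP|$ is bounded, and then a backward induction turning these pointwise bounds into sup bounds using the H\"older/Lipschitz continuity of the next-higher derivative. However, the concrete route you settle on contains a step that fails: the claim that $C^{-1}\le P\le C$ together with the fundamental theorem of calculus gives total variation at most $2C$, i.e.\ $\int_{-1}^1|P'|\le 2C$. The fundamental theorem only controls $\int_{-1}^1 P' = P(1)-P(-1)$, not $\int_{-1}^1 |P'|$; an $L^\infty$ bound on $P$ says nothing about its total variation (a bounded smooth density can oscillate with arbitrarily large variation---inside the class $\hyH_s(C,L)$ the oscillation is indeed limited, but only because of the derivative bound you are trying to prove, so invoking it at the start is circular). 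Since steps (ii)--(iii) of your ``cleanest route'' iterate exactly this $L^1$ control through the higher derivatives, the main quantitative engine of that argument collapses.

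Your alternative route via a finite-interval Landau--Kolmogorov/interpolation inequality is sound in principle, but it needs as input a uniform bound on $\|D^{\lfloor s\rfloor}P\|_\infty$, and your justification---``$D^{\lfloor s\rfloor}P$ must have a zero or be controlled via the fundamental theorem of calculus from the bound on $P$''---is not valid as stated: derivatives of a density can be sign-definite on $(-1,1)$ (think of a monotone or convex density), and the fundamental theorem again only relates averages of $D^kP$ to values of $D^{k-1}P$, which is circular unless the lower derivative has already been bounded. The missing ingredient is precisely the paper's forward-induction step: split any interval into thirds, use the inductively known points in the outer thirds where $|D^kP|$ is bounded, and apply the mean value theorem to produce a point where $|D^{k+1}P|$ is bounded; combining this at level $k=\lfloor s\rfloor$ with the H\"older continuity of $D^{\lfloor s\rfloor}P$ (constant $L$) gives the sup bound on the top derivative, after which your Taylor-remainder descent (or the paper's backward induction) does deliver the Lipschitz bound on $[-1,1]$ with a constant depending only on $s$, $C$, $L$.
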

\begin{proof}
    We prove via induction that for all $k \le \lfloor s \rfloor$ and all intervals $[a,a+\ell] \subseteq [-1,1]$, there exists $x \in [a,a+\ell]$ with $|D^k P(x)| \le C'$, where $C'$ denotes a constant which only depends on $s$, $C$, $L$, and $\ell$ and can change from line to line.
    The base case $k=0$ follows from the definition of $C$ in~\Cref{defn:holder}.
    Next, assuming that the statement holds for an integer $k < \lfloor s\rfloor$, we split $[a,a+\ell]$ into three sub-intervals and note that there must exist $x \in [a,a+\ell/3]$, $y \in [a+2\ell/3, a+\ell]$ such that $|D^k P(x)| \vee |D^k P(y)| \le C'$.
    Now, if $|D^{k+1} P|$ is always large on $[a,a+\ell]$, say $D^{k+1} P \ge C''$, this leads to a contradiction for $C'' > 6C'/\ell$, and thus the inductive step holds.

    Now we perform backward induction on $k$ and argue that in fact $|D^k P| \le C'$ on all of $[-1,1]$.
    When $s$ is an integer and $k = s$, this is true by definition.
    Otherwise, for $k=\lfloor s\rfloor$, we know that there exists $x \in [-1,1]$ with $|D^k P(x)| \le C'$, and that $D^k$ is $(s-k)$-H\"older continuous, so $|D^k P| \le C'$ on $[-1,1]$.
    This verifies the base case.
    Similarly, assuming that the statement holds for an integer $k > 1$, we know that there exists $x \in [-1,1]$ such that $|D^{k-1} P(x)| \le C'$ and that $D^{k-1} P$ is $C'$-Lipschitz by the inductive hypothesis; hence, $|D^{k-1} P| \le C'$ on $[-1,1]$.
\end{proof}

\section{Further related work}\label{appendix:relatedwork}

    In this section, we present further related work.

    \paragraph{Statistical guarantees for diffusion models for learning.} Recent works have established rigorous statistical guarantees for diffusion models and related score matching estimators. For example, \citet{oko2023minimalDiffusion} bounded the estimation error of the empirical risk minimizer over a neural network class and demonstrated that diffusion models are nearly minimax-optimal generators in both the total variation and the Wasserstein distance of order one, provided that the target density belongs to the Besov space. 
    \citet{cui2024analysis} employed a two-layer neural network to learn score functions and, in the special case where the target distribution is a mixture of two Gaussians, they established an error guarantee of $\Theta(\nfrac{1}{n})$ for the estimated mean. 
    Further, \citet{wibisono2024optimalscoreestimationempirical} considered sub-Gaussian densities with Lipschitz-continuous score functions and provided optimal rates for estimating the scores in the $L_2$-norm. 
    In a related direction,~\citet{koehler2024efficiently} showed that pseudo-likelihood methods can be used to learn low-rank Ising models, which is an example of using score matching for designing learning methods with provable statistical guarantees.
    Also, \citet{mei2023deep} studied the statistical efficiency of neural networks for approximating score functions, focusing on the setting of graphical models and
variational inference algorithms.
    Moreover, \citet{dou2024optimalscorematchingoptimal} studied the score matching (SM) estimator in detail and established the sharp minimax rate of score estimation for smooth, compactly supported densities using sophisticated techniques.

    \paragraph{Computational properties of {the DDPM estimator}.} 
        Beyond the immense practical success of DDPM estimators and the growing interest from statisticians, surprisingly, DDPM estimators are also leading to new \textit{provably} efficient algorithms for sampling and distribution learning; see \eg{}, the works of \citet{shah2023learning,chen2024learninggeneralgaussianmixtures,gatmiry2024learning}. 
    
    \paragraph{Sampling guarantees for diffusion models.}
    Finally, a {rapidly} growing body of work establishes sampling guarantees for SGMs under the assumption that the score functions are accurately estimated.
    Since the literature is vast and orthogonal to the statistical concerns in this paper, we do not survey it here; see, however,~\Cref{ssec:related}.

    \subsection{Importance of density estimation}

        Density estimation is a foundational problem in statistics \cite{pearson1894contributions} and computer science \cite{kearns1994learnability}, {and has been extensively studied for a number of parametric and non-parametric families}. 
        Gaussian mixture models (GMMs) are one of the most well-studied {parametric distribution families for density estimation in particular and, also} in statistics {more broadly}, with a history going back to the work of Pearson \cite{pearson1894contributions} {(also see the survey by \citet{titterington1985statistical} for applications for GMMs in the sciences).}
        The study of {statistically and computationally} efficient algorithms for {estimating} GMMs goes back to the seminal {works} of \citet{redner1984mixtures,dasgupta1999learning,lindsay1995mixtures} and has {since} attracted significant interest from theoretical computer scientists \cite{vempala2002spectral,kannan2005spectral,brubaker2008isotropic,feldman2006pac,kalai2010efficiently,moitra2010settling,hopkins2018mixture,bakshi2020outlier,diakonikolas2020robustly,liu2022clustering,liu2022learning,buhai2023beyond,anderson2024dimension,li2017robust,diakonikolas2020small,bakshi2022robustly}.

    \subsection{History of evaluators and generators for GMMs}\label{sec:GMM-history}

    A special case of our Gaussian location mixtures is the fundamental problem of learning spherical Gaussian mixtures.
    We are interested in learning Gaussian mixture models from samples {without any separability assumptions on the components. This makes parameter recovery impossible and the goal is to} output a hypothesis {(\ie{}, the representation of a distribution)} that is close to the target GMM in total variation distance \cite{feldman2006pac,kalai2010efficiently,moitra2010settling,suresh2014near,daskalakis2014faster,diakonikolas2019robust,li2017robust,ashtiani2018nearly,diakonikolas2020small,bakshi2022robustly, chen2024learninggeneralgaussianmixtures, koehler2024efficiently, gatmiry2024learning}.
    {In this setting, the} sample complexity of estimating a mixture of $k$ $d$-dimensional Gaussians {was} {settled by} the work of \citet*{ashtiani2018nearly} {which} showed that $\wt{\Theta}(\nfrac{k d^2}{\epsilon^2})$ samples are both necessary and sufficient. 
    {This} result is information-theoretic and the algorithm {performs a brute-force search over the parameters resulting in an} exponential dependence on $k$ and $d$ {in the running time}.
    
    In contrast to sample complexity, the computational complexity of the problem is still far from well-understood and there is strong evidence that the problem exhibits a statistical-to-computational  trade-off. While the sample complexity of the problem is $\poly(d,k,\nfrac{1}{\eps})$, all known algorithms for the density estimation problem require time $d^{g(k)}(\nfrac{1}{\eps})^{f(k)}$ for $\max\{g(k),f(k)\} \geq k$ {and, in fact, without further assumptions $f(k)\geq \Omega(k^{k^2})$.}
        
    \paragraph{Algorithms for GMM density estimation.}
    In the special case of spherical Gaussians, the best-known density estimation algorithm is quasi-polynomial in the number of components $k$: {this algorithm is by} \citet{diakonikolas2020small} {and its} sample and time complexity scale as $\poly(d)\, (\nfrac{k}{\eps})^{\log^2 k}$.
The runtime of this result is an improvement on the work of
\citet{suresh2014near} who gave an algorithm for learning mixtures of spherical Gaussians with $\poly(d,k,\nfrac{1}{\eps})$ samples {and} $\poly(d)\,(\nfrac{k}{\eps})^k$ runtime. 

When the covariance matrix is arbitrary and the number of components $k$ is constant, the breakthrough results of \citet{kalai2010efficiently}, \citet{moitra2010settling}, and \citet{belkin2010polynomial} provided a polynomial-time density estimation algorithm for Gaussian mixtures. 
The case where $k$ is not a constant, is studied by an important line of works \cite{feldman2006pac,moitra2010settling,kalai2010efficiently,belkin2010polynomial,bakshi2022robustly} culminating in the state-of-the-art $(\nfrac{d}{\eps})^k\, (\nfrac{1}{\eps})^{k^{k^2}}$-time algorithm by \citet*{bakshi2022robustly}.
{This} runtime scales polynomially in $d$ and $\eps$ for constant $k$ but otherwise has a doubly exponential {dependence on} the number of components $k$. 
Their algorithm is a result of a series of works in the coalescence of a {large toolkit of algorithms based on method of moments and sum-of-squares for
(1) learning mixtures models and (2) robust statistics} (see \eg{}, the book by \citet{DiakonikolasKane2023} and the works of \cite{kalai2010efficiently,moitra2010settling,hopkins2018mixture,bakshi2020outlier,diakonikolas2020robustly,liu2022clustering,liu2022learning,buhai2023beyond,anderson2024dimension}) and, as a result, is robust to a small number of outliers.

\paragraph{Lower bounds for GMM density estimation.}
Regarding lower bounds,
\citet{diakonikolas2017statistical} showed that density estimation of Gaussian mixtures incurs a super-polynomial lower bound in the restricted statistical query (SQ) model \cite{kearns1998efficient,feldman2017statistical}. In particular, they show that any SQ algorithm giving density estimates requires ${d}^{\Omega(k)}$ queries to an SQ oracle of precision ${d}^{-O(k)}$; this leads to a super-polynomial runtime as long as $k$ is {not a constant}. The work of \citet{bruna2021continuous} proved the computational hardness of density estimation for Gaussian mixtures based on lattice problems. They showed that a $\poly(d,k)$-time 
density estimation algorithm for mixtures of $k \gtrsim \sqrt{d \log d}$ Gaussians implies a quantum polynomial-time algorithm
for worst-case lattice problems. Follow-up work by \citet{gupte2022continuous} extended this lower bound for values of $k$ between $\Theta_d(1)$ (for which there exist polynomial time algorithms \cite{moitra2010settling}) and $k ~{\asymp}~ \sqrt{d}.$  
They showed that, assuming the polynomial-time hardness of \LWE{}, there is no {$\poly(d)$} density estimation algorithm for $k = d^{{\gamma}}$ components in $d$ dimensions for any ${{\gamma}} > 0.$ This strengthens the bound of \cite{bruna2021continuous} based on the standard quantum reduction from worst-case lattice problems to \LWE{}. Moreover, under the stronger yet standard assumption of sub-exponential hardness of \LWE{}, they showed that density estimation of {mixtures of} $k = (\log d)^{1 + {\gamma}}$ Gaussians cannot be done in polynomial time given a polynomial number of samples
(where ${\gamma} > 0$ is an arbitrarily small constant). Finally, they showed the computational hardness of density estimation for {mixtures of}
$k = (\log d)^{1/2 + {\gamma}}$ Gaussians \emph{given} $\poly(\log d)$ samples (where ${\gamma} > 0$ is an arbitrary constant).\footnote{The hard instance for this lower bound is learnable with sample complexity poly-logarithmic in $d$. This is also the case for the hard SQ instance of \cite{diakonikolas2017statistical}.}

\paragraph{Generators for GMMs.}
{All of the aforementioned works studied the complexity of density estimation for GMMs, \ie{}, computing an \emph{evaluator} \cite{kearns1994learnability} that induces a density which is $\eps$-close in TV-distance to the target GMM (although, possibly very far in parameter distance).
}
{For the problem of generation (\ie{}, learning a \emph{generator} \cite{kearns1994learnability}), the recent} work by \citet{chen2024learninggeneralgaussianmixtures} gave an algorithm for Gaussian mixtures with general covariance matrices under mild condition number bounds. 
The proposed algorithm, which departs from the moment-based and sum-of-squares-based blueprints of prior works and is based on diffusion models, runs in time $d^{\poly(k/\epsilon)}$, and returns a \emph{generator} for the target GMM, {significantly advancing} the state-of-the-art for the task {of} learning to {generate} from GMMs. {Concurrently and independently}, the work of \citet{gatmiry2024learning} provided a generator for the special case of {mixtures with} identity covariance {matrices} that importantly extends to the more general family of Gaussian location mixtures (we make use of their score estimation result in order to get our PAC density estimator for GLMs); this family and the results of \citet{gatmiry2024learning} are further discussed in \cref{sec:GLM}.

\end{document}